\documentclass[11pt]{article}
\usepackage{geometry}
\usepackage[utf8]{inputenc}
\usepackage[T1]{fontenc}
\usepackage[numbers]{natbib}
\usepackage{hyperref}
\usepackage{url}
\usepackage{booktabs}
\usepackage{amsmath}
\usepackage{amssymb}
\usepackage{amsfonts}
\usepackage{bm}
\usepackage{mathtools}
\usepackage{amsthm}
\usepackage{microtype}
\usepackage[table]{xcolor}
\usepackage{xspace}
\usepackage{graphicx}
\usepackage{wrapfig}
\usepackage{multicol}
\usepackage{multirow}
\usepackage{csquotes}
\usepackage{longtable}
\usepackage{array}
\usepackage{enumitem}
\usepackage{float}
\usepackage{placeins}
\usepackage{algorithm}
\usepackage{algpseudocode}
\usepackage{caption}
\usepackage{subcaption}
\usepackage{tikz}
\usepackage{pgfplots}
\usepackage{pgfplotstable}

\usepackage[nameinlink,capitalize]{cleveref}

\usepackage{xparse}

\let\MATCHAorigfigure\figure
\let\MATCHAendorigfigure\endfigure
\let\MATCHAorigtable\table
\let\MATCHAendorigtable\endtable
\let\MATCHAorigsection\section
\let\MATCHAorigparagraph\paragraph
\let\MATCHAorigsubsection\subsection

\newcounter{matchaFigureCall}
\newcounter{matchaTableCall}

\newcommand{\MATCHApairedstored}{}
\newsavebox{\MATCHAtableTwoBox}

\newcommand{\MATCHAsectionstored}{}
\newcommand{\MATCHAparagraphstored}{}
\newcommand{\MATCHAparagraphside}{r}
\newcommand{\MATCHAparagraphwidth}{0.40}
\newif\ifMATCHAsectionpending
\newif\ifMATCHAparagraphpending
\MATCHAsectionpendingfalse
\MATCHAparagraphpendingfalse

\newcommand{\MATCHAsetlocalwidth}{%
  \setlength{\columnwidth}{\linewidth}%
  \setlength{\textwidth}{\linewidth}%
}

\newcommand{\MATCHApairedfigures}[2]{%
  \MATCHAorigfigure[t]
  \centering
  \begin{minipage}[t]{0.47\textwidth}
    \MATCHAsetlocalwidth
    \centering
    #1
  \end{minipage}\hfill
  \begin{minipage}[t]{0.47\textwidth}
    \MATCHAsetlocalwidth
    \centering
    #2
  \end{minipage}
  \MATCHAendorigfigure
}

\newcommand{\MATCHAACLFigureBox}[1]{%
  \begin{minipage}[t]{7.7cm}
    \setlength{\hsize}{7.7cm}%
    \setlength{\linewidth}{7.7cm}%
    \setlength{\columnwidth}{7.7cm}%
    \setlength{\textwidth}{16cm}%
    \fontfamily{ptm}\selectfont
    \captionsetup{font=small,skip=3pt,width=7.7cm}%
    \centering
    #1
  \end{minipage}%
}

\newcommand{\MATCHApairedfiguresACLGeometry}[2]{%
  \MATCHAorigfigure[t]
  \centering
  \MATCHAACLFigureBox{#1}\hfill
  \MATCHAACLFigureBox{#2}
  \MATCHAendorigfigure
}

\newcommand{\MATCHAstoreTableTwo}[1]{%
  \global\setbox\MATCHAtableTwoBox=\hbox{%
    \begin{minipage}[c]{0.485\textwidth}
      \MATCHAsetlocalwidth
      \centering
      \scriptsize
      \setlength{\tabcolsep}{2.7pt}%
      \renewcommand{\arraystretch}{0.98}%
      \captionsetup{type=table,font=small,skip=3pt,width=\linewidth}
      #1
    \end{minipage}%
  }%
}

\newcommand{\MATCHATableTwoTableThreeRow}[1]{%
  \MATCHAorigfigure[t]
  \centering
  \usebox{\MATCHAtableTwoBox}\hfill
  \begin{minipage}[c]{0.485\textwidth}
    \MATCHAsetlocalwidth
    \centering
    \scriptsize
    \setlength{\tabcolsep}{2.7pt}%
    \renewcommand{\arraystretch}{0.98}%
    \captionsetup{type=table,font=small,skip=3pt,width=\linewidth}
    #1
  \end{minipage}
  \MATCHAendorigfigure
}

\newcommand{\MATCHAdefaultfigure}[2]{%
  \MATCHAorigfigure[#1]
    \centering
    #2
  \MATCHAendorigfigure
}

\newcommand{\MATCHAdefaulttable}[2]{%
  \MATCHAorigtable[#1]
    \centering
    #2
  \MATCHAendorigtable
}

\newcommand{\MATCHAstoreparagraphfigure}[3]{%
  \global\long\def\MATCHAparagraphstored{#3}%
  \gdef\MATCHAparagraphside{#1}%
  \gdef\MATCHAparagraphwidth{#2}%
  \global\MATCHAparagraphpendingtrue
}

\newcommand{\MATCHAstoreparagraphtable}[3]{%
  \global\long\def\MATCHAparagraphstored{%
    \begin{wraptable}{#1}{#2\textwidth}
      \vspace{-0.45\baselineskip}
      \MATCHAsetlocalwidth
      \centering
      \scriptsize
      \renewcommand{\arraystretch}{0.94}%
      #3
      \vspace{-0.25\baselineskip}
    \end{wraptable}%
  }%
  \global\MATCHAparagraphpendingtrue
}

\newcommand{\MATCHAreleaseparagraphfigure}{%
  \begin{wrapfigure}{\MATCHAparagraphside}{\MATCHAparagraphwidth\textwidth}
    \vspace{-0.45\baselineskip}
    \MATCHAsetlocalwidth
    \centering
    \MATCHAparagraphstored
    \vspace{-0.25\baselineskip}
  \end{wrapfigure}%
}

\newcommand{\MATCHAFigureFiveInFiveOne}{%
  \begin{wrapfigure}[12]{r}{0.33\textwidth}
    \vspace{-0.35\baselineskip}
    \MATCHAsetlocalwidth
    \centering
    \captionsetup{font=small,skip=2pt}%
    \includegraphics[width=\linewidth]{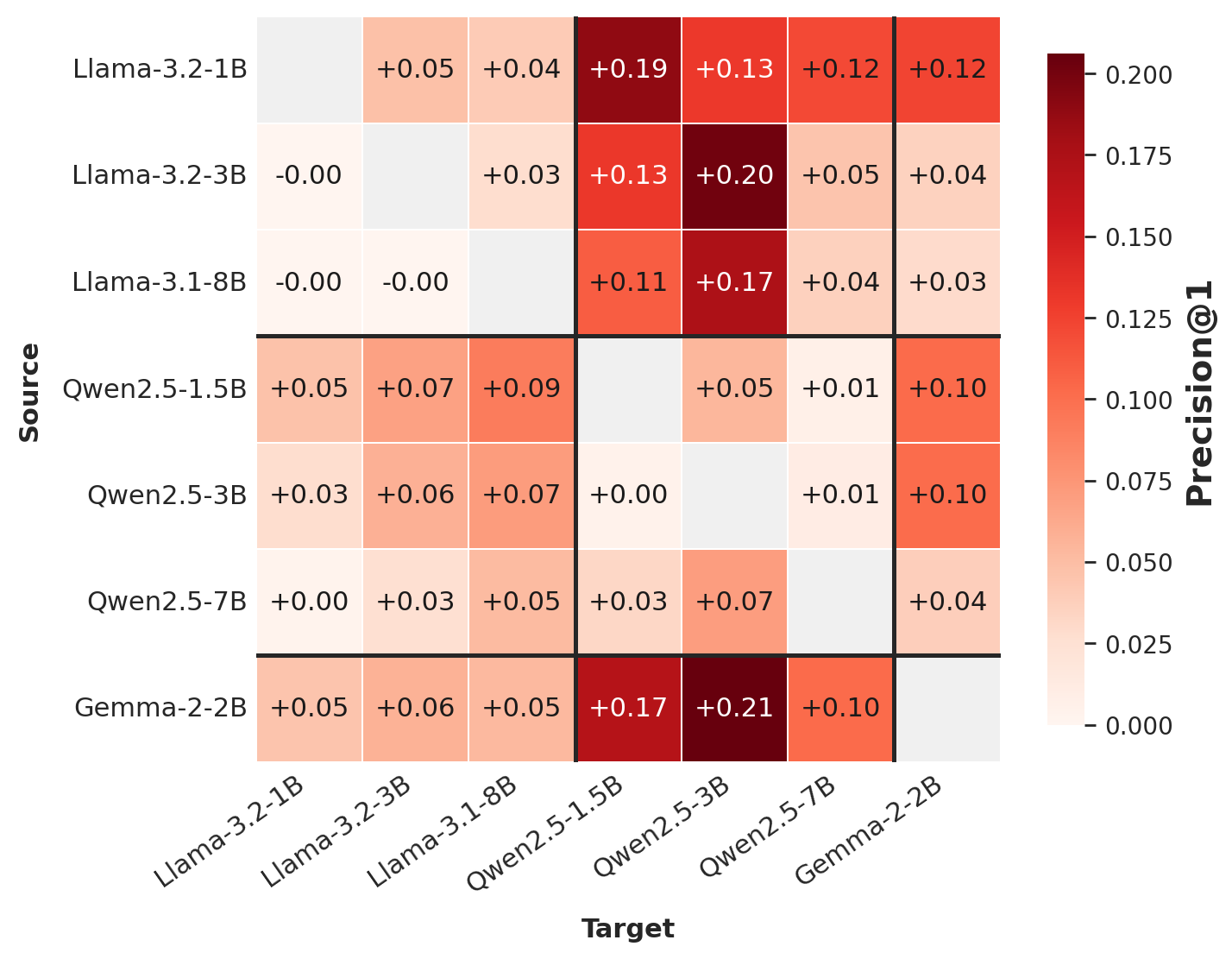}%
    \caption{Per-pair Precision@1 difference (MATCHA $-$ relative depth).}%
    \label{fig:wk-vs-linear-main}%
    \vspace{-0.25\baselineskip}
  \end{wrapfigure}%
}

\newcommand{\MATCHAFigureSixInFiveTwo}{%
  \begin{wrapfigure}[18]{r}{0.43\textwidth}
    \vspace{-0.45\baselineskip}
    \MATCHAsetlocalwidth
    \centering
    \captionsetup{font=small,skip=2pt}%
    \includegraphics[width=\linewidth]{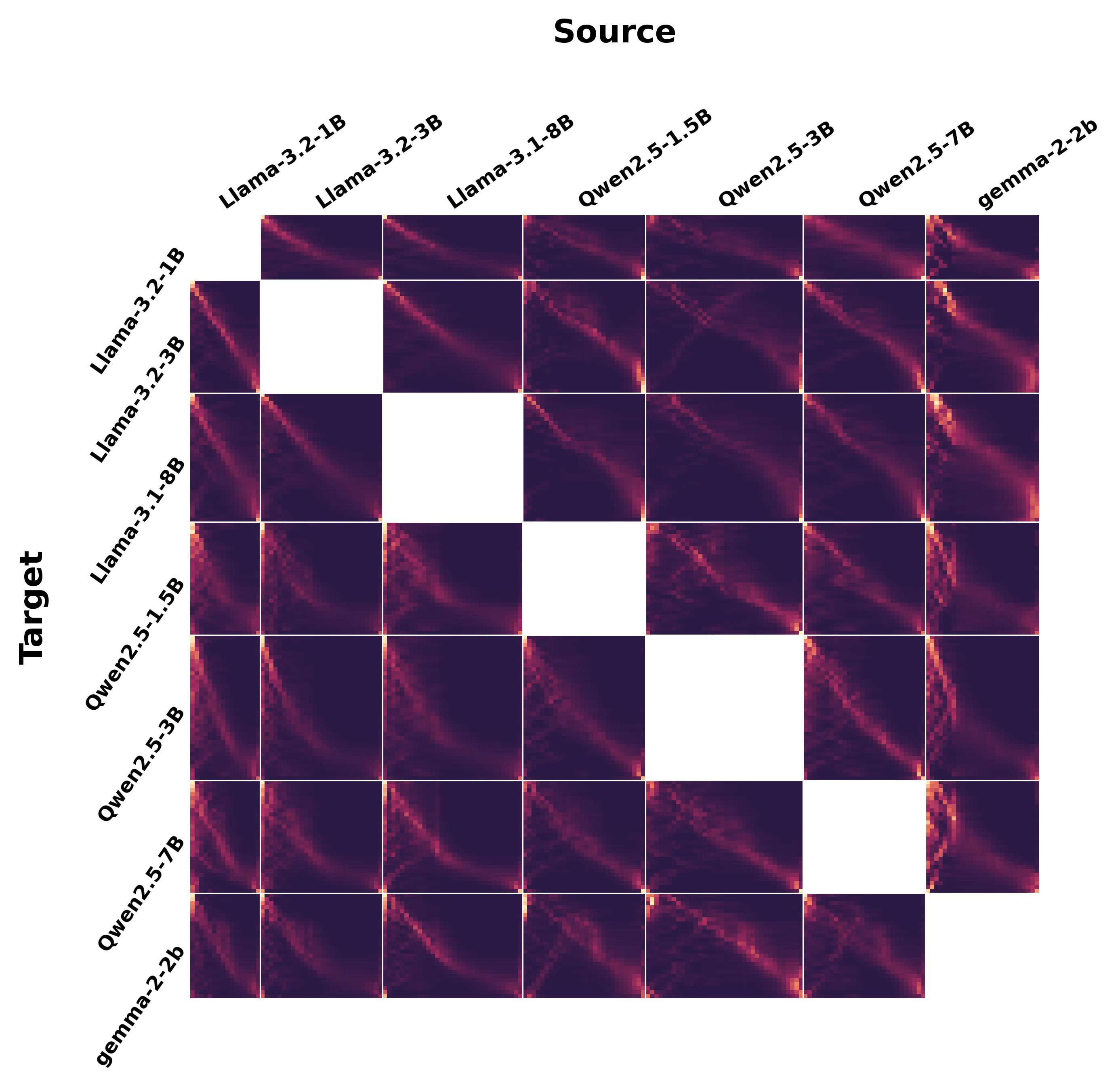}%
    \caption{The final, learned $\alpha$ obtained by our method.}%
    \vspace{-0.25\baselineskip}
  \end{wrapfigure}%
}

\newcommand{\MATCHAregisterFigureSixLabel}[1]{%
  \gdef\MATCHAFigureSixSourceLabel{}%
  \begingroup
    \renewcommand{\label}[1]{\gdef\MATCHAFigureSixSourceLabel{##1}}%
    \renewcommand{\caption}[2][]{}
    \renewcommand{\includegraphics}[2][]{}
    \setbox0=\vbox{\hsize=\linewidth #1}%
  \endgroup
  \ifx\MATCHAFigureSixSourceLabel\empty\else
    \label{\MATCHAFigureSixSourceLabel}%
  \fi
}

\newcommand{\MATCHAfigureDispatch}[2]{%
  \stepcounter{matchaFigureCall}%
  \ifcase\value{matchaFigureCall}\relax
  \or
    \global\long\def\MATCHApairedstored{#2}%
  \or
    \MATCHApairedfiguresACLGeometry{\MATCHApairedstored}{#2}%
  \or
    \global\long\def\MATCHAsectionstored{#2}%
    \global\MATCHAsectionpendingtrue
  \or
    \MATCHAstoreparagraphfigure{r}{0.38}{#2}%
  \or
    \relax
  \or
    \MATCHAregisterFigureSixLabel{#2}%
  \or
    \global\long\def\MATCHApairedstored{#2}%
  \or
    \MATCHApairedfigures{\MATCHApairedstored}{#2}%
  \or
    \MATCHAstoreparagraphfigure{r}{0.42}{#2}%
  \else
    \MATCHAdefaultfigure{#1}{#2}%
  \fi
}

\RenewDocumentEnvironment{figure}{O{!htbp}+b}
  {\MATCHAfigureDispatch{#1}{#2}}
  {}
\RenewDocumentEnvironment{figure*}{O{!htbp}+b}
  {\MATCHAfigureDispatch{#1}{#2}}
  {}

\newcommand{\MATCHAtableDispatch}[2]{%
  \stepcounter{matchaTableCall}%
  \ifcase\value{matchaTableCall}\relax
  \or
    \MATCHAdefaulttable{t}{#2}%
  \or
    \MATCHAstoreTableTwo{#2}%
  \or
    \MATCHATableTwoTableThreeRow{#2}%
  \else
    \MATCHAdefaulttable{#1}{#2}%
  \fi
}

\RenewDocumentEnvironment{table}{O{!htbp}+b}
  {\MATCHAtableDispatch{#1}{#2}}
  {}
\RenewDocumentEnvironment{table*}{O{!htbp}+b}
  {\MATCHAtableDispatch{#1}{#2}}
  {}

\RenewDocumentCommand{\section}{s o m}{%
  \IfBooleanTF{#1}
    {\MATCHAorigsection*{#3}}
    {\IfNoValueTF{#2}{\MATCHAorigsection{#3}}{\MATCHAorigsection[#2]{#3}}}%
  \ifMATCHAsectionpending
    \global\MATCHAsectionpendingfalse
    \par
    \begin{wrapfigure}{r}{0.37\textwidth}
      \vspace{-0.45\baselineskip}
      \MATCHAsetlocalwidth
      \centering
      \MATCHAsectionstored
      \vspace{-0.25\baselineskip}
    \end{wrapfigure}%
  \fi
}

\RenewDocumentCommand{\subsection}{s o m}{%
  \IfBooleanTF{#1}
    {\MATCHAorigsubsection*{#3}}
    {\IfNoValueTF{#2}{\MATCHAorigsubsection{#3}}{\MATCHAorigsubsection[#2]{#3}}}%
  \ifnum\pdfstrcmp{\detokenize{#3}}{Learned Layer Correspondences}=0\relax
    \par\MATCHAFigureSixInFiveTwo
  \fi
}

\RenewDocumentCommand{\paragraph}{s o m}{%
  \ifMATCHAparagraphpending
    \par
    \global\MATCHAparagraphpendingfalse
    \ifx\MATCHAparagraphkind\MATCHAtablekind
      \MATCHAparagraphstored
    \else
      \MATCHAreleaseparagraphfigure
    \fi
  \fi
  \ifnum\pdfstrcmp{\detokenize{#3}}{Results}=0\relax
    \par\MATCHAFigureFiveInFiveOne
  \else
    \ifnum\pdfstrcmp{\detokenize{#3}}{Results.}=0\relax
      \par\MATCHAFigureFiveInFiveOne
    \fi
  \fi
  \IfBooleanTF{#1}
    {\MATCHAorigparagraph*{#3}}
    {\IfNoValueTF{#2}{\MATCHAorigparagraph{#3}}{\MATCHAorigparagraph[#2]{#3}}}%
}

\newcommand{\MATCHAfigurekind}{F}
\newcommand{\MATCHAtablekind}{T}
\let\MATCHAparagraphkind\MATCHAfigurekind

\renewcommand{\MATCHAstoreparagraphfigure}[3]{%
  \global\long\def\MATCHAparagraphstored{#3}%
  \gdef\MATCHAparagraphside{#1}%
  \gdef\MATCHAparagraphwidth{#2}%
  \global\let\MATCHAparagraphkind\MATCHAfigurekind
  \global\MATCHAparagraphpendingtrue
}
\renewcommand{\MATCHAstoreparagraphtable}[3]{%
  \global\long\def\MATCHAparagraphstored{%
    \begin{wraptable}{#1}{#2\textwidth}
      \vspace{-0.45\baselineskip}
      \MATCHAsetlocalwidth
      \centering
      \scriptsize
      \renewcommand{\arraystretch}{0.94}%
      #3
      \vspace{-0.25\baselineskip}
    \end{wraptable}%
  }%
  \global\let\MATCHAparagraphkind\MATCHAtablekind
  \global\MATCHAparagraphpendingtrue
}

\usetikzlibrary{arrows,arrows.meta,positioning,calc,fit}
\usepgfplotslibrary{groupplots}
\pgfplotsset{compat=1.18}

\newcommand{\X}{\mathbf{X}}
\newcommand{\Xs}{\mathbf{X}^s}
\newcommand{\Xshat}{\hat{\mathbf{X}}^s}
\newcommand{\Xt}{\mathbf{X}^t}
\newcommand{\Ast}{\mathcal{A}^{s \to t}}
\newcommand{\Ats}{\mathcal{A}^{t \to s}}
\newcommand{\Ms}{\mathcal{M}_s}
\newcommand{\Mt}{\mathcal{M}_t}

\newcommand{\R}{\mathbb{R}}
\DeclareMathOperator{\softmax}{softmax}

\theoremstyle{plain}
\newtheorem{proposition}{Proposition}
\newtheorem{lemma}[proposition]{Lemma}
\newtheorem{corollary}[proposition]{Corollary}
\theoremstyle{definition}
\newtheorem{assumption}{Assumption}
\theoremstyle{remark}

\crefname{assumption}{Assumption}{Assumptions}
\crefname{remark}{Remark}{Remarks}

\colorlet{alncolor}{orange!90!red}
\tikzset{
  st/.style={draw,rounded corners=3pt,inner sep=4pt,font=\small},
  lbl/.style={font=\scriptsize}
}
\definecolor{srccolor}{RGB}{40,100,180}
\definecolor{tgtcolor}{RGB}{180,70,70}

\newcommand{\ourmethod}{\textsc{MATCHA}\xspace}

\title{Learning Cross-Model Activation Alignments with Explicit Many-to-Many Layer Maps}

\begin{document}

\date{}
\maketitle

\begin{center}
\begin{tabular}{cc}
\textbf{Alina Sudakov}\textsuperscript{*} & \textbf{Guy Bar-Shalom}\textsuperscript{*} \\
Technion & Technion \\
\texttt{alinasudakov@campus.technion.ac.il} & \texttt{guybs99@gmail.com} \\

& \\
\textbf{Fabrizio Frasca}\textsuperscript{$\dagger$} & \textbf{Haggai Maron}\textsuperscript{$\dagger$} \\
Technion & Technion, NVIDIA Research \\

\end{tabular}
\end{center}

\begingroup
\renewcommand{\thefootnote}{\fnsymbol{footnote}}
\footnotetext[1]{Equal contribution.}
\footnotetext[2]{These authors jointly supervised this work.}
\endgroup

\begin{abstract}

LLMs are released at a rapid pace, raising a natural question: \emph{how do two independently trained models relate, both in which layers correspond and in how features transform between them?} We study this by learning an \emph{activation alignment}, a map from a source model's layerwise activations to a target's.
Our method, \ourmethod, factors this map into a \emph{layer map}, whose output is an \emph{explicit} target-by-source matrix that can be extracted and inspected, and a layer-shared \emph{feature map} between hidden spaces. Most of prior work fixes the layer correspondence in advance, pairing layers at roughly the same relative depth; in contrast, we learn both factors jointly from prompts. Across 42 pairs of seven models spanning three different families, \ourmethod reconstructs the target's activations more faithfully and improves retrieval-based metrics substantially, w.r.t. previous approaches. The recovered maps are broadly monotone in depth but, in contrast with most previous approaches, are consistently \emph{many-to-many}: each target layer draws on a band of source layers. Our alignments also enable \emph{transfer of activation-space interventions}, allowing steering vectors and probes developed for one model to transfer to another.

\end{abstract}

\section{Introduction}

Interpretability~\cite{rai2025practicalreviewmechanisticinterpretability, zhao2023explainabilitylargelanguagemodels}, control~\cite{turner2024steeringlanguagemodelsactivation} and compression tools~\cite{zhang2023h2oheavyhitteroracleefficient, liu2023scissorhandsexploitingpersistenceimportance} for LLMs are built in the representation space of a given ``target'' model. Examples include a probe \cite{alain2017understanding} that is trained to detect whether a model is about to answer incorrectly, a steering vector that suppresses refusals \cite{rimsky-etal-2024-steering, arditi2024refusal}, and a cached key–value state\cite{DBLP:journals/corr/abs-1911-02150}. Each is a function of the target's model hidden activations, and none of them is directly applicable to a successor or a competitor model. As open-weight releases are rolled-out at an increasing rate, the cost of rebuilding these tools per model, and the difficulty of comparing findings obtained on different models, both grow.

\definecolor{aoSrc}{HTML}{3E6D9C}     
\definecolor{aoTgt}{HTML}{A65A78}     
\definecolor{aoMethod}{HTML}{2E4A6B}  
\definecolor{aoLayer}{HTML}{4E7CA8}   
\definecolor{aoFeat}{HTML}{3A8B9A}    
\definecolor{aoOut}{HTML}{C0912F}     
\definecolor{aoLoss}{HTML}{B4674A}    
\definecolor{aoApps}{HTML}{7E5A86}    
\definecolor{aoPrompt}{HTML}{6B6FB0}  
\definecolor{aoNeutral}{HTML}{566374} 
\definecolor{aoStageBg}{HTML}{FDFAF0} 

\begin{figure}[t]
\centering
\resizebox{0.8\linewidth}{!}{%
\begin{tikzpicture}[
    x=1cm,y=1cm,
    font=\scriptsize,
    >=Latex,
    arrow/.style={-{Latex[length=1.6mm]}, line width=0.75pt, draw=aoMethod!85!black},
    dataarrow/.style={-{Latex[length=1.6mm]}, line width=0.85pt, draw=aoMethod},
    trainarrow/.style={-{Latex[length=1.6mm]}, line width=0.85pt, draw=aoLoss},
    module/.style={draw=black!45, rounded corners=2.5pt, line width=0.6pt,
                   align=center, fill=white, inner sep=3pt},
    model/.style={module, minimum width=1.95cm, minimum height=0.58cm, text width=1.75cm},
    stage/.style={draw=black!25, rounded corners=5pt, dashed, line width=0.6pt},
    stagelabel/.style={rounded corners=2pt, inner xsep=4pt, inner ysep=1.5pt,
                       font=\scriptsize\bfseries, anchor=north west},
]

\draw[stage, draw=aoPrompt!60, fill=aoStageBg] (-3.45, 3.00) rectangle (3.45, 6.85);   
\draw[stage, draw=aoMethod!55, fill=aoStageBg] (-3.45,-0.05) rectangle (3.45, 2.75);   
\draw[stage, draw=aoApps!60,   fill=aoStageBg] (-3.45,-1.73) rectangle (3.45,-0.22);   

\node[stagelabel, fill=aoPrompt!18] at (-3.37, 6.79) {1.\,Collect paired activations};
\node[stagelabel, fill=aoMethod!16] at (-3.37, 2.69) {2.\,Learn};
\node[stagelabel, fill=aoApps!20]   at (-3.37,-0.28) {3.\,Reuse after training};

\node[module, fill=aoPrompt!16, draw=aoPrompt,
      minimum width=6.30cm, minimum height=0.50cm, text width=6.05cm] (prompt) at (0,6.02)
  {\textbf{Prompt:} ``Why is iceberg lettuce bad for rabbits?''};

\node[model, fill=aoSrc!16, draw=aoSrc]  (source) at (-2.05,5.02)
  {\textbf{Source LLM}\,$\mathcal{M}_s$};
\node[model, fill=aoTgt!16, draw=aoTgt]  (target) at (2.05,5.02)
  {\textbf{Target LLM}\,$\mathcal{M}_t$};

\draw[arrow] ($(prompt.south)+(-0.55,0)$) -- (source.north);
\draw[arrow] ($(prompt.south)+( 0.55,0)$) -- (target.north);

\coordinate (xs) at (-2.05,3.85);
\draw[fill=aoSrc!16, draw=aoSrc, line width=.5pt]
  ($(xs)+(-0.74,-0.34)$) rectangle ($(xs)+(0.74,0.34)$);
\foreach \u in {-0.4933,-0.2467,0,0.2467,0.4933}{
  \draw[aoSrc!65, line width=.28pt] ($(xs)+(\u,-0.34)$) -- ($(xs)+(\u,0.34)$);}
\foreach \w in {-0.17,0,0.17}{
  \draw[aoSrc!65, line width=.28pt] ($(xs)+(-0.74,\w)$) -- ($(xs)+(0.74,\w)$);}
\node[fill=aoSrc!16, inner sep=0.7pt] at (xs) {$\Xs$};
\draw[decorate,decoration={brace,amplitude=2.5pt},aoSrc!75,line width=.45pt]
  ($(xs)+(-0.81,-0.34)$) -- ($(xs)+(-0.81,0.34)$) node[midway,left=1.5pt,font=\tiny]{$L_s$};
\draw[decorate,decoration={brace,amplitude=2.5pt,mirror},aoSrc!75,line width=.45pt]
  ($(xs)+(-0.74,-0.41)$) -- ($(xs)+(0.74,-0.41)$) node[pos=0.17,below=0.5pt,font=\tiny]{$D_s$};

\coordinate (xt) at (2.05,3.85);
\draw[fill=aoTgt!16, draw=aoTgt, line width=.5pt]
  ($(xt)+(-0.50,-0.22)$) rectangle ($(xt)+(0.50,0.22)$);
\foreach \u in {-0.25,0,0.25}{
  \draw[aoTgt!65, line width=.28pt] ($(xt)+(\u,-0.22)$) -- ($(xt)+(\u,0.22)$);}
\foreach \w in {-0.0733,0.0733}{
  \draw[aoTgt!65, line width=.28pt] ($(xt)+(-0.50,\w)$) -- ($(xt)+(0.50,\w)$);}
\node[fill=aoTgt!16, inner sep=0.7pt] at (xt) {$\Xt$};
\draw[decorate,decoration={brace,amplitude=2.5pt},aoTgt!75,line width=.45pt]
  ($(xt)+(0.57,0.22)$) -- ($(xt)+(0.57,-0.22)$) node[midway,right=1.5pt,font=\tiny]{$L_t$};
\draw[decorate,decoration={brace,amplitude=2.5pt,mirror},aoTgt!75,line width=.45pt]
  ($(xt)+(-0.50,-0.29)$) -- ($(xt)+(0.50,-0.29)$) node[pos=0.83,below=0.5pt,font=\tiny]{$D_t$};

\draw[dataarrow] (source.south) -- ($(xs)+(0,0.44)$);
\draw[dataarrow] (target.south) -- ($(xt)+(0,0.32)$);

\node[module, fill=aoPrompt!16, draw=aoPrompt, minimum height=0.52cm, text width=1.30cm,
      font=\tiny, align=center] (lasttok) at (0,3.85) {Extract\\ last-token\\ activations};
\draw[arrow, draw=aoPrompt!85] (lasttok.west) -- ($(xs)+(0.76,0)$);
\draw[arrow, draw=aoPrompt!85] (lasttok.east) -- ($(xt)+(-0.58,0)$);

\node[draw=aoMethod, rounded corners=3pt, line width=0.7pt, fill=aoMethod!7,
      minimum width=3.29cm, minimum height=2.04cm] (method) at (-1.635,1.11) {};
\node[font=\scriptsize\bfseries, text=aoMethod] (maaca) at (-1.65,1.92) {MATCHA};

\node[module, fill=aoLayer!18, draw=aoLayer,
      minimum width=1.50cm, minimum height=1.50cm] (layerbox) at (-2.42,0.95) {};
\node[font=\scriptsize\bfseries, anchor=north] at (-2.42,1.54) {Layer};
\node[font=\scriptsize\bfseries, anchor=north] at (-2.42,1.325) {map};
\begin{scope}[shift={(-2.643,0.62)}, x=.078cm,y=.078cm]
  \foreach \r in {0,...,3}{\foreach \c in {0,...,5}{
    \pgfmathtruncatemacro{\dist}{abs(3*\c-5*\r)}
    \pgfmathtruncatemacro{\v}{max(14,84-5*\dist+5*mod(\r+2*\c,3))}
    \fill[aoLayer!\v] (\c,\r) rectangle ++(.72,.72);}}
  \draw[black!30, line width=.2pt] (0,0) rectangle (5.72,3.72);
\end{scope}
\draw[decorate,decoration={brace,amplitude=2pt},aoLayer!85,line width=.4pt]
  (-2.678,0.62) -- (-2.678,0.910) node[midway,left=1pt,font=\tiny]{$L_t$};
\draw[decorate,decoration={brace,amplitude=2pt,mirror},aoLayer!85,line width=.4pt]
  (-2.643,0.585) -- (-2.197,0.585) node[midway,below=0.5pt,font=\tiny]{$L_s$};

\node[module, fill=aoFeat!18, draw=aoFeat,
      minimum width=1.50cm, minimum height=1.50cm] (featbox) at (-0.85,0.95) {};
\node[font=\scriptsize\bfseries, anchor=north] at (-0.85,1.67) {Feature};
\node[font=\scriptsize\bfseries, anchor=north] at (-0.85,1.42) {map};
\begin{scope}[shift={(-0.85,0.70)}, font=\tiny]
  \draw[aoFeat!85!black, line width=0.5pt] (-0.52,0) -- (0.30,0);                       
  \draw[fill=aoFeat!22, draw=aoFeat!85!black, line width=0.4pt]                          
    (-0.42,-0.12) rectangle (-0.16,0.12);
  \node at (-0.29,0) {$\mathbf{W}$};
  \draw[fill=aoFeat!22, draw=aoFeat!85!black, line width=0.4pt]                          
    (-0.06,-0.12) rectangle (0.20,0.12);
  \draw[aoFeat!85!black, line width=0.4pt]
    (-0.02,-0.07) .. controls (0.07,-0.07) and (0.07,0.07) .. (0.16,0.07);              
  \draw[fill=white, draw=aoFeat!85!black, line width=0.45pt] (0.34,0) circle (0.05);     
  \draw[aoFeat!85!black, line width=0.35pt] (0.30,0)--(0.38,0) (0.34,-0.04)--(0.34,0.04);
  \draw[-{Latex[length=1.1mm]}, aoFeat!85!black, line width=0.5pt] (0.39,0) -- (0.54,0); 
  \fill[aoFeat!85!black] (-0.11,0) circle (0.012);                                        
  \draw[aoFeat!85!black, line width=0.4pt]
    (-0.11,0.02) .. controls (-0.02,0.30) and (0.24,0.30) .. (0.34,0.055);              
\end{scope}

\coordinate (xhat) at (2.05,0.92);
\draw[fill=aoOut!22, draw=aoOut, line width=.5pt]
  ($(xhat)+(-0.50,-0.22)$) rectangle ($(xhat)+(0.50,0.22)$);
\foreach \u in {-0.25,0,0.25}{
  \draw[aoOut!70, line width=.28pt] ($(xhat)+(\u,-0.22)$) -- ($(xhat)+(\u,0.22)$);}
\foreach \w in {-0.0733,0.0733}{
  \draw[aoOut!70, line width=.28pt] ($(xhat)+(-0.50,\w)$) -- ($(xhat)+(0.50,\w)$);}
\draw[decorate,decoration={brace,amplitude=2.5pt},aoOut!80,line width=.45pt]
  ($(xhat)+(0.57,0.22)$) -- ($(xhat)+(0.57,-0.22)$) node[midway,right=1.5pt,font=\tiny]{$L_t$};
\draw[decorate,decoration={brace,amplitude=2.5pt,mirror},aoOut!80,line width=.45pt]
  ($(xhat)+(-0.50,-0.29)$) -- ($(xhat)+(0.50,-0.29)$) node[pos=0.17,below=0.5pt,font=\tiny]{$D_t$};
\node[fill=aoOut!22, inner sep=0.7pt] at (xhat) {$\widehat \Xt$};

\node[module, fill=aoLoss!16, draw=aoLoss,
      minimum width=1.85cm, minimum height=0.72cm, text width=1.65cm] (loss) at (2.05,1.95)
  {\textbf{Training}\\[-1.5pt]\textbf{objective} $\mathcal{L}_{s\to t}$};

\draw[dataarrow] ($(xs)+(0,-0.42)$) -- (method.north -| xs);          
\draw[dataarrow] (method.east |- xhat) -- ($(xhat)+(-0.52,0)$);       
\draw[trainarrow] ($(xt)+(0,-0.32)$) -- (loss.north);                 
\draw[trainarrow] ($(xhat)+(0,0.28)$) -- (loss.south);                

\node[module, fill=aoApps!16, draw=aoApps,
      minimum width=5.80cm, minimum height=0.86cm, text width=5.55cm] (apps) at (0,-1.18)
  {\textbf{Applying Downstream Applications}\\[-0.5pt]\textbf{in Mapped Space}\\[0pt]
   Probing \qquad Steering};

\draw[arrow] ($(xhat)+(0,-0.30)$) -- (apps.north -| xhat);

\end{tikzpicture}%
}
\caption{\ourmethod aligns paired activations $\Xs,\Xs$ using an (interpretable) layer map, and a feature map. \ourmethod can be used for downstream applications.
}
\label{fig:teaser}
\end{figure}

Recent evidence suggests that rebuilding these tools may be avoidable. Independently trained LLMs appear to converge on similar representational content, differing in the
coordinates they express it in and the depths at which it emerges rather than in what they encode it. This similarity has been documented across model families and scales \citep{huh2024platonicrepresentationhypothesis,pmlr-v97-kornblith19a,moschella2023relative,morcos2018insightsrepresentationalsimilarityneural}.
If two models encode related information, it should be possible to map the layer-wise activations produced by one model 
to those produced by the other on the same prompts. We call such a mapping an \emph{activation alignment} and it can serve two relevant purposes. (i) It allows an application developed for one model to be reused with another without retraining the application or modifying either model; prior work has transferred probes, steering interventions, and key-value caches in this way \citep{wang2026commandv,fu2026cachetocache,oozeer2025activation}. (ii) It can reveal structural relationships between the two models, such as correspondences between their layers. 

\begin{figure}[t]
\centering
\definecolor{srcln}{HTML}{2F5597}\definecolor{srcfl}{HTML}{DCE6F6}
\definecolor{tgtln}{HTML}{A03030}\definecolor{tgtfl}{HTML}{FADEDC}
\definecolor{acc}{HTML}{3F3D7A}
\begin{tikzpicture}[
  sbox/.style={draw=srcln,fill=srcfl,rounded corners=1pt,
               minimum width=6.5mm,minimum height=2mm,inner sep=0pt,line width=.3pt},
  tbox/.style={draw=tgtln,fill=tgtfl,rounded corners=1pt,
               minimum width=6.5mm,minimum height=2mm,inner sep=0pt,line width=.3pt},
  map/.style={-{Stealth[length=1.1mm,width=.9mm]},draw=acc,line width=.8pt,shorten >=.5pt},
  mesh/.style={draw=black!60,line width=.35pt,shorten >=.5pt},
  row/.style={font=\scriptsize\bfseries,rotate=90},
  rowsub/.style={font=\tiny\itshape,rotate=90,black!55},
  ttl/.style={font=\scriptsize\bfseries,align=center},
  sub/.style={font=\tiny,align=center,black!55,inner sep=.5pt},
  lbl/.style={font=\tiny},
  x=1cm,y=1cm]
  \def\dy{0.25}\def\tx{0.95}\def\pw{2.10}\def\ph{2.33}
  \def\ymid{-0.60}

  \node[row] at (-0.98,\ymid) {Fixed mappings};
  \node[rowsub] at (-0.79,\ymid) {(prior work)};
  \draw[-{Stealth[length=.9mm]},black!45,line width=.35pt] (-0.42,0.05) -- (-0.42,{-\dy*5-0.04});
  \foreach \p/\px/\ttlname/\subname in {%
      0/0/{Relative depth}/{one-to-one},
      1/1/{Terminal}/{one-to-one},
      2/2/{Similarity\,/\,HOT}/{many-to-many\\\emph{per model pair}}}{
    \begin{scope}[shift={({\px*\pw},0)}]
      \foreach \i in {0,...,5}{\node[sbox] (P\p S\i) at (0,{-\dy*\i}) {};}
      \foreach \j in {0,...,3}{\node[tbox] (P\p T\j) at (\tx,{-\dy*\j}) {};}
      \node[ttl] at ({\tx/2},0.80) {\ttlname};
      \node[sub,anchor=north] at ({\tx/2},0.62) {\subname};
      \node[lbl,srcln,anchor=base] at (0,{-\dy*5-0.26}) {source};
      \node[lbl,tgtln,anchor=base] at (\tx,{-\dy*5-0.26}) {target};
    \end{scope}}
  \foreach \j/\i in {0/0,1/2,2/3,3/5}{\draw[map] (P0S\i.east) -- (P0T\j.west);}
  \foreach \j/\i in {0/2,1/3,2/4,3/5}{\draw[map] (P1S\i.east) -- (P1T\j.west);}
  \foreach \j in {0,...,3}{\foreach \i in {0,...,5}{\draw[mesh] (P2S\i.east) -- (P2T\j.west);}}
  \foreach \j/\i/\w in {0/0/.5,0/1/.5, 1/1/.45,1/2/.55, 2/3/.6,2/4/.4, 3/4/.45,3/5/.55}{
      \pgfmathsetmacro{\lw}{.55+.35*\w}\pgfmathsetmacro{\ah}{.95+.20*\w}%
      \pgfmathsetmacro{\aw}{.82*\ah}%
      \draw[map,line width=\lw pt,
            -{Stealth[length=\ah mm,width=\aw mm]}] (P2S\i.east) -- (P2T\j.west);}

  \draw[black!12,line width=.4pt] (-1.12,{-\ph+0.62}) -- (5.48,{-\ph+0.62});
  \begin{scope}[shift={(0,-\ph)}]
    \node[row,acc] at (-0.98,\ymid) {Ours};
    \node[rowsub,acc!70] at (-0.79,\ymid) {(many-to-many, per input)};
    \draw[-{Stealth[length=.9mm]},black!45,line width=.35pt] (-0.42,0.05) -- (-0.42,{-\dy*5-0.04});
    \foreach \p/\px/\inp in {3/0/{$x_1$},4/1/{$x_2$},5/2/{$x_3$}}{
      \begin{scope}[shift={({\px*\pw},0)}]
        \foreach \i in {0,...,5}{\node[sbox] (P\p S\i) at (0,{-\dy*\i}) {};}
        \foreach \j in {0,...,3}{\node[tbox] (P\p T\j) at (\tx,{-\dy*\j}) {};}
        \node[ttl] at ({\tx/2},0.40) {input \inp};
        \node[lbl,srcln,anchor=base] at (0,{-\dy*5-0.26}) {source};
        \node[lbl,tgtln,anchor=base] at (\tx,{-\dy*5-0.26}) {target};
      \end{scope}}
    \node[font=\normalsize,acc,inner sep=0pt,fill=white] at (1.525,\ymid) {$\neq$};
    \node[font=\normalsize,acc,inner sep=0pt,fill=white] at (3.625,\ymid) {$\neq$};
  \end{scope}
  \foreach \p in {3,4,5}{\foreach \j in {0,...,3}{\foreach \i in {0,...,5}{
      \draw[mesh] (P\p S\i.east) -- (P\p T\j.west);}}}
  \foreach \p/\j/\i/\w in {%
      3/0/0/.8,3/0/1/.2, 3/1/1/.7,3/1/2/.3, 3/2/2/.75,3/2/3/.25, 3/3/3/.6,3/3/4/.4,
      4/0/2/.35,4/0/5/.65, 4/1/0/.3,4/1/4/.7, 4/2/1/.25,4/2/5/.75, 4/3/2/.4,4/3/3/.6,
      5/0/0/.5,5/0/3/.3,5/0/5/.2, 5/1/1/.4,5/1/2/.35,5/1/4/.25,
      5/2/0/.3,5/2/4/.45,5/2/5/.25, 5/3/1/.2,5/3/3/.35,5/3/5/.45}{
      \pgfmathsetmacro{\lw}{.55+.35*\w}\pgfmathsetmacro{\ah}{.95+.20*\w}%
      \pgfmathsetmacro{\aw}{.82*\ah}%
      \draw[map,line width=\lw pt,
            -{Stealth[length=\ah mm,width=\aw mm]}] (P\p S\i.east) -- (P\p T\j.west);}
\end{tikzpicture}
\caption{Layer maps from source ($L_s{=}6$) to target ($L_t{=}4$), with shallow layers on top.
Prior maps are fixed; ours is learned and input-dependent.
}
\label{fig:differnt-layer-maps}
\end{figure}
Most prior work focuses primarily on (i) and, in fact, assumes a fixed \emph{one-to-one} layer mapping --- assuming each target layer has a \emph{single counterpart} --- that is \emph{independent of both the input and the specific pair of models}. Representative examples include aligning layers at the same ``relative depth''~\citep{wolfram2025layerssimilardepthsgenerate,wang2026commandv}, or the ``terminal'' alignment strategy~\citep{fu2026cachetocache}; see \Cref{fig:differnt-layer-maps}. More recent work relaxes this assumption by allowing soft \emph{many-to-many} layer mappings that may vary across model pairs and are typically derived using optimal transport or feature similarity~\citep{shah2026representational,cui2026transport, kornblith2019similarityneuralnetworkrepresentations}.
However, it remains unclear how to extend these methods to the input-dependent setting, which is particularly relevant when transferring an input-dependent tool, such as a probe, from one model to another. In addition, even if such an extension were possible, it would require solving a separate optimization problem for each input.

\paragraph{Our approach.} 
Our key idea is to recast activation alignment as a \emph{learning} problem: rather than solving a separate optimization problem per input, we train a lightweight network to predict the mapping directly from the input, producing input-dependent maps with a single forward pass.
Concretely, we introduce \ourmethod, a cross-attention-based architecture that learns an \emph{interpretable}, \emph{input-dependent} mapping between layers; see \Cref{fig:teaser}.
Each target layer is associated with a learned query, while each source layer produces an input-dependent key.
Their attention scores form a target-layers $\times$ source-layers ($L_t \times L_s$) correspondence matrix that determines how source-layer representations are combined to reconstruct each target layer, on a per-input basis.
This matrix can be read off directly, yielding a legible layer alignment for every input.
To align features, we apply a single projection shared across all layers, followed by a feed-forward network (\texttt{FF}) with a residual connection.
The complete mapping is trained end to end, jointly optimizing the layer and feature transformations.

\paragraph{Main findings and contributions.} We analyze \ourmethod both quantitatively and qualitatively, yielding the following main contributions. \textbf{Reconstruction and retrieval:} \emph{learning} the correspondence pays off: across 42 pairs of seven models spanning three families and scales from 1B to 8B, \ourmethod reconstructs target activations more accurately than all baselines and achieves stronger performance across multiple retrieval metrics. \ourmethod is particularly effective when transferring between models from different families or with different depths, where fixed mappings are intuitively least plausible. \textbf{Layer correspondences:} \ourmethod reveals interesting cross-model layer correspondences. More specifically, the resulting layer maps are \emph{not one-to-one}: they 
follow an approximately monotone correspondence, while each target layer draws on a \emph{band} of neighboring source layers.
In fact, restricting the map to a single source layer per target, or even a top-k combination
of source layers, degrades every retrieval metric, whereas the full multi-layer map outperforms all those restricted variants. \textbf{Applications:} \ourmethod is useful beyond reconstruction, for \emph{behavioral teleportation}: we show that a hallucination-detection probe and a steering intervention developed for a target model can be transferred to a source model, with \ourmethod yielding better transfer performance than all baselines on both tasks, across four different datasets. Finally, we note that \ourmethod's latency remains extremely low --- transferring between activation spaces takes only approximately $0.06$ milliseconds. 
\section{Related Work}


\paragraph{Cross-model activation maps.}
Most prior work estimates activation maps between prespecified pairs of LLM layers. Layers are commonly paired by relative depth, as in Command-V \citep{wolfram2025layerssimilardepthsgenerate,wang2026commandv}, or backwards from the output, as in the terminal alignment of Cache-to-Cache \citep{fu2026cachetocache}. Other work fits linear maps for individual layer pairs and treats the pairing itself as a hyperparameter \citep{huang-etal-2025-cross}. These approaches 
impose a fixed one-to-one correspondence between layers. Several recent works \emph{point} toward more flexible alternatives. The Linear Representation Transferability Hypothesis sketches a many-to-many extension \citep{journals/corr/abs-2506-00653}, cross-coders identify feature spaces shared across layers \citep{lindsey2024crosscoders}, and Hierarchical Optimal Transport estimates a soft correspondence jointly across layers and features \citep{shah2026representational}. 

\paragraph{Applications of activation maps.}
Activation maps have been used to transfer steering interventions
\citep{wang2026commandv,huang-etal-2025-cross,oozeer2025activation,balasubramanian2026masterkeyhypothesisunlocking}, key--value caches \citep{fu2026cachetocache}, and mechanistic circuits \citep{cristofano2026universalrefusalcircuitsllms}. They have also enabled direct communication between language-model agents through intermediate activations \citep{ramesh2025communicating}, and the reuse of representations for classification, out-of-distribution detection, and more
\citep{gorbett2026characterizinglinearalignmentlanguage}. Beyond inference-time transfer, activation-derived correspondences have been used to guide cross-architecture weight-space merging \citep{cui2026transportmergecrossarchitecturemerging}.

\paragraph{Evaluating cross-model representations.}
Representation similarity across neural networks has been studied using metrics such as CKA~\citep{kornblith2019similarityneuralnetworkrepresentations}, which compares the pairwise relationships among examples across representation spaces that may have different dimensionalities, SVCCA \citep{raghu2017svccasingularvectorcanonical}, and mutual nearest-neighbor similarity \citep{huh2024platonicrepresentationhypothesis,Klabunde_2025}; see \citet{sucholutsky2025getting} for a broader survey. Differently to us, these methods compare the geometry of representation spaces without constructing an explicit map between them. Closer to our setting, model stitching evaluates functional compatibility by transforming an intermediate representation from one network and injecting it into another \citep{bansal2021revisiting}. Compatibility is then assessed through the stitched model's performance on some task.
 \citet{oozeer2025activation}, constructs a feature map between a \emph{single} pair of layers,
and evaluates its quality 
by measuring the effect on model behavior, for example, by replacing a target-layer activation with its mapped reconstruction and computing the KL divergence between the resulting and original next-token distributions. In this paper, we evaluate reconstruction quality from two complementary perspectives: representation retrieval quality and performance on downstream applications, including behavior-based evaluations of the kind described above.


\section{Problem Formulation}
\label{sec:formalization}
\begin{figure}[t]
  \centering
  \resizebox{0.75\columnwidth}{!}{%
  \begin{tikzpicture}[
    >=Stealth,
    promptbox/.style={draw=black!45, semithick, rounded corners=3pt, fill=black!4,
      minimum width=3.2cm, minimum height=0.6cm, align=center, font=\small\itshape},
    llmbox/.style={draw, semithick, rounded corners=3pt,
      minimum width=2.7cm, minimum height=0.72cm, align=center, font=\small},
    flow/.style={->, semithick, shorten >=1pt},
    ax/.style={font=\scriptsize, text=black!45}
  ]
    \def\cw{0.20}\def\chh{0.17}          
    \def\dx{0.26}\def\dy{0.18}           
    \def\gt{0.95}                        

    \node[promptbox] (p) at (0,2.95) {``Is the earth flat?''};
    \draw[semithick] (p.south) -- (0,2.58);
    \draw[flow,srccolor] (0,2.58) -| (-1.9,2.14);
    \draw[flow,tgtcolor] (0,2.58) -| ( 1.9,2.14);

    \node[llmbox, draw=srccolor, fill=srccolor!5, text=srccolor] (srcbox) at (-1.9,1.78)
      {source LLM~$(\Ms)$};
    \node[llmbox, draw=tgtcolor, fill=tgtcolor!5, text=tgtcolor] (tgtbox) at ( 1.9,1.78)
      {target LLM~$(\Mt)$};
    \draw[flow,srccolor] (-1.9,1.42) -- (-1.9,{\gt+\dy+0.06});
    \draw[flow,tgtcolor] ( 1.9,1.42) -- ( 1.9,{\gt+\dy+0.06});

    \def\sR{-1.45}                                  
    \pgfmathsetmacro{\sL}{\sR-5*\cw}\pgfmathsetmacro{\sB}{\gt-6*\chh}
    \fill[srccolor!10] (\sL,\gt) -- (\sR,\gt) -- ++(\dx,\dy) -- ++(-5*\cw,0) -- cycle; 
    \fill[srccolor!35] (\sR,\gt) -- ++(\dx,\dy) -- ++(0,-6*\chh) -- (\sR,\sB) -- cycle; 
    \fill[srccolor!28] ({\sR-\cw},\gt) -- (\sR,\gt) -- ++(\dx,\dy)
                       -- ++(-\cw,0) -- cycle;                                   
    \foreach \r in {0,...,5}{\foreach \c in {0,...,4}{
        \pgfmathsetmacro{\xx}{\sL+\cw*\c}\pgfmathsetmacro{\yy}{\gt-\chh*\r}
        \ifnum\c=4 \fill[srccolor!45] (\xx,\yy) rectangle ++(\cw,-\chh);
        \else       \fill[srccolor!16] (\xx,\yy) rectangle ++(\cw,-\chh);\fi
        \draw[white,line width=.5pt] (\xx,\yy) rectangle ++(\cw,-\chh);}}
    \draw[srccolor!60,thin] (\sL,\gt) -- ++(\dx,\dy) -- ++(5*\cw,0) -- ++(0,-6*\chh);
    \draw[srccolor,semithick]                                   
      ({\sR-\cw},\gt) -- (\sR,\gt) -- ++(\dx,\dy) -- ++(0,-6*\chh) -- (\sR,\sB) -- ({\sR-\cw},\sB) -- cycle;
    \draw[srccolor,semithick] (\sR,\gt) -- (\sR,\sB);
    \draw[srccolor,semithick] ({\sR-\cw},\gt) -- ++(\dx,\dy);
    \node[font=\small,text=srccolor,anchor=north] at ({\sR-\cw/2},{\sB-0.52}) {$\Xs$};

    \def\tL{1.45}
    \pgfmathsetmacro{\tR}{\tL+3*\cw}\pgfmathsetmacro{\tB}{\gt-4*\chh}
    \fill[tgtcolor!10] (\tL,\gt) -- (\tR,\gt) -- ++(\dx,\dy) -- ++(-3*\cw,0) -- cycle;
    \fill[tgtcolor!35] (\tR,\gt) -- ++(\dx,\dy) -- ++(0,-4*\chh) -- (\tR,\tB) -- cycle;
    \fill[tgtcolor!28] ({\tR-\cw},\gt) -- (\tR,\gt) -- ++(\dx,\dy) -- ++(-\cw,0) -- cycle;
    \foreach \r in {0,...,3}{\foreach \c in {0,...,2}{
        \pgfmathsetmacro{\xx}{\tL+\cw*\c}\pgfmathsetmacro{\yy}{\gt-\chh*\r}
        \ifnum\c=2 \fill[tgtcolor!45] (\xx,\yy) rectangle ++(\cw,-\chh);
        \else       \fill[tgtcolor!16] (\xx,\yy) rectangle ++(\cw,-\chh);\fi
        \draw[white,line width=.5pt] (\xx,\yy) rectangle ++(\cw,-\chh);}}
    \draw[tgtcolor!60,thin] (\tL,\gt) -- ++(\dx,\dy) -- ++(3*\cw,0) -- ++(0,-4*\chh);
    \draw[tgtcolor,semithick]
      ({\tR-\cw},\gt) -- (\tR,\gt) -- ++(\dx,\dy) -- ++(0,-4*\chh) -- (\tR,\tB) -- ({\tR-\cw},\tB) -- cycle;
    \draw[tgtcolor,semithick] (\tR,\gt) -- (\tR,\tB);
    \draw[tgtcolor,semithick] ({\tR-\cw},\gt) -- ++(\dx,\dy);
    \node[font=\small,text=tgtcolor,anchor=north] at ({\tR-\cw/2},{\sB-0.52}) {$\Xt$};

    \node[ax,anchor=east] at ({\sL-0.10},{\gt-3*\chh}) {layers};
    \draw[->,black!45,thin] ({\sL-0.07},{\gt-0.02}) -- ({\sL-0.07},{\sB+0.02});
    \draw[->,black!45,thin] ({\sL+0.02},{\sB-0.14}) -- ({\sL+5*\cw-0.02},{\sB-0.14});
    \node[ax,anchor=north] at ({\sL+2.5*\cw},{\sB-0.17}) {tokens};
    \draw[->,black!45,thin] ({\sL+0.03},{\gt+0.02}) -- ({\sL+\dx-0.02},{\gt+\dy-0.01});
    \node[ax,anchor=south east,inner sep=1pt] at ({\sL+\dx},{\gt+\dy+0.02}) {features $d$};

    \draw[->, alncolor, line width=1.1pt]
      ({\sR+\dx+0.06},{\gt-1.2*\chh}) to[bend left=22]
      node[font=\small,text=alncolor,above=2pt,pos=.38] {$\Ast$}
      ({\tR-\cw+\dx*0.5},{\gt+\dy*0.5+0.04});
  \end{tikzpicture}%
  }
  \caption{Both models encode the same prompt. Highlighted are the last-token representations, $\Xs$ and $\Xt$, with $\Ast$ mapping $\Xs$ to $\Xt$.}
  \label{fig:objective}
\end{figure}
Let $\Ms$ and $\Mt$ denote a source and a target LLM, with $L_s$ and $L_t$
layers and hidden dimensionalities $D_s$ and $D_t$, respectively. For a prompt
$p$, we collect from every transformer block the hidden state of the final
token,\footnote{We use the final-token state: under causal masking, it is the only position with access to the full prompt and is commonly used for representation readout \citep{zou2023representation,kossen2024semanticentropyprobesrobust}  
} $(\Xs \in \R^{L_s \times D_s}, \Xt \in \R^{L_t \times D_t})$.

Our goal is to learn an \emph{activation alignment} map:
\begin{equation}
\label{eq:map}
    \Ast :
    \R^{L_s \times D_s}
    \longrightarrow
    \R^{L_t \times D_t},
\end{equation}
such that $\Ast(\Xs) \approx \Xt$ for activations produced from the same prompt; see \Cref{fig:objective}.


\paragraph{Training objective.} Given a dataset of prompts $\mathcal{D}=\{p_i\}_{i=1}^{N}$ and the corresponding
activation pairs $\{(\Xs_i,\Xt_i)\}_{i=1}^{N}$, we learn $\Ast$ by minimizing
the average reconstruction error,
\begin{equation}
    \mathcal{L}_{s\rightarrow t}
    =
    \frac{1}{N L_t D_t}
    \sum_{i=1}^{N}
    \left\|
        \Ast(\Xs_i)-\Xt_i
    \right\|_F^2.
    \label{eq:recon}
\end{equation}

\section{\ourmethod: \underline{M}any-to-many \underline{A}lignment \underline{T}hrough \underline{C}ross-attention over \underline{H}idden \underline{A}ctivations}

\label{sec:method}


The activation alignment map in ~\Cref{eq:map} constrains $\Ast$ only through its input and output shapes:
any map from $\R^{L_s\times D_s}$ to $\R^{L_t\times D_t}$ is admissible. 
Although many such maps can be constructed, designing a useful and legible one requires care, with the most expressive choices typically being the least interpretable. For example, an MLP applied to the flattened source stack could well minimize the objective in \Cref{eq:recon}, but any learned relationship between the layers of the two models would remain implicit in its weights. 

\paragraph{Guiding principles in constructing $\Ast$.}
Our construction is guided by two principles. (i) The alignment should make it clearly explicit both which source layers contribute to each target layer and how features are transformed between the corresponding representation spaces. 
(ii) The alignment should remain flexible enough to subsume prior approaches, in particular, one-to-one mappings~\citep{huang-etal-2025-cross,wolfram2025layerssimilardepthsgenerate,wang2026commandv,fu2026cachetocache}; where each target layer is constructed from \emph{exactly one} source layer. 
To realize these principles, we factorize $\Ast$ along the two axes of the problem: the \emph{layer map} and the \emph{feature map}. \Cref{fig:alignment map} illustrates this factorization; we now detail its construction.



\begin{figure}[t]
\centering
\resizebox{0.75\linewidth}{!}{%
\begin{tikzpicture}[
  font=\footnotesize,
  >={Latex[length=1.5mm,width=1.3mm]},
  mat/.style={draw=black!55, inner sep=0pt, font=\small},
  dim/.style={font=\tiny, text=black!65, inner sep=1.2pt},
  lab/.style={fill=white, fill opacity=0.85, text opacity=1, inner sep=0.8pt},
  blk/.style={draw=black!55, rounded corners=2pt, fill=black!4,
              inner sep=2.5pt, font=\scriptsize},
  op/.style={circle, draw=black!55, minimum size=3.6mm, inner sep=0pt,
             font=\scriptsize},
  fl/.style={->, semithick, black!55},
  note/.style={font=\tiny, text=black!65, align=center, inner sep=1pt},
  stage/.style={font=\scriptsize\itshape, text=black!70, inner sep=1pt},
  brc/.style={decorate, decoration={brace, amplitude=3pt}, black!45, thin}
]
\path[use as bounding box] (-0.45,-1.85) rectangle (6.55,1.35);
\node[mat, minimum width=12mm, minimum height=8mm] (A) at (0.60,0) {};
\begin{scope}[shift={(A.south west)}, x=2.4mm, y=2mm]
  \foreach \c/\r in {0/3, 1/3, 1/2, 2/2, 2/1, 3/1, 3/0, 4/0}
    \fill[blue!45] (\c,\r) rectangle ++(1,1);
  \foreach \c/\r in {0/2, 2/3, 1/1, 3/2, 4/1, 2/0}
    \fill[blue!16] (\c,\r) rectangle ++(1,1);
  \draw[black!20, very thin] (0,0) grid (5,4);
\end{scope}
\draw[black!55] (A.south west) rectangle (A.north east);
\node[lab, font=\footnotesize] at (A.center) {$\alpha(\mathbf{X}^s)$};
\node[dim, anchor=east]  at (A.west)  {$L_t$};
\node[dim, anchor=north] at (A.south) {$L_s$};
\node[mat, minimum width=6mm, minimum height=12mm, fill=black!6] (X) at (2.05,0) {};
\begin{scope}[shift={(X.south west)}, x=6mm, y=2mm]
  \draw[black!25, very thin] (0,1)--(1,1) (0,2)--(1,2) (0,3)--(1,3)
                             (0,4)--(1,4) (0,5)--(1,5);
\end{scope}
\draw[black!55] (X.south west) rectangle (X.north east);
\node[lab, fill=black!6, font=\small] at (X.center) {$\mathbf{X}^s$};
\node[dim, anchor=east]  at (X.west)  {$L_s$};
\node[dim, anchor=north] at (X.south) {$D_s$};
\node[mat, minimum width=10mm, minimum height=6mm, fill=orange!22] (W) at (3.40,0)
      {$\mathbf{W}$};
\node[dim, anchor=east]  at (W.west)  {$D_s$};
\node[dim, anchor=north] at (W.south) {$D_t$};
\coordinate (j) at (4.35,0);
\node[op]  (plus) at (5.30,0)     {$+$};
\node[blk] (ff)   at (4.35,-1.35) {\texttt{FF}};
\draw[fl] (W.east)   -- (plus.west);
\draw[fl] (j)        -- (ff.north);
\draw[fl] (ff.east)  -| (plus.south);
\fill[black!55] (j) circle (0.55pt);
\draw[fl] (plus.east) -- ++(0.70,0);
\node[note, anchor=south] at (5.50,0.22) {$\mathcal{A}_{s\to t}(\mathbf{X}^s)$};
\draw[brc] (X.south east |- 0,-0.93) -- (A.south west |- 0,-0.93)
      node[midway, below=3pt, note] {$\tilde{\mathbf{X}}$};
\draw[brc] (W.south east |- 0,-1.34) -- (A.south west |- 0,-1.34)
      node[midway, below=3pt, note] {$\mathbf{Z}$};
\draw[brc] (A.west |- 0,0.88) -- (A.east |- 0,0.88)
      node[midway, above=3pt, stage] {layer map};
\draw[brc] (W.west |- 0,0.88) -- (plus.east |- 0,0.88)
      node[midway, above=3pt, stage] {feature map};
\end{tikzpicture}}
\caption{Visualization of our alignment map $\Ast$.
  }
\label{fig:alignment map}
\end{figure}

\paragraph{Layer map overview.}
We instantiate the layer map as a function $\alpha:\mathbb{R}^{L_s\times D_s}\rightarrow\mathbb{R}^{L_t\times L_s}$
that reads the source activations $\Xs$ and returns the layer correspondence between the
two models. Its output, $\alpha(\Xs) \in \mathbb{R}^{L_t \times L_s}$, is a matrix whose entry $(\ell,j)$ is the  coefficient with
which source layer $j$ additively 
contributes to target layer $\ell$. Applying it to $\Xs$ therefore
mixes the source stack along the layer axis alone, leaving the feature axis untouched:
\begin{equation}
\label{eq:layer-map}
    \tilde{\X}^s = \alpha(\Xs) \Xs \in \mathbb{R}^{L_t\times D_s}.
\end{equation}
Importantly, because the layer correspondence is materialized as an explicit $L_t\times L_s$ matrix, it can be
read off directly. 
We now describe the parameterization of the layer-mapping matrix $\alpha$. The main idea is to use cross-attention, with queries representing target-model layers and keys representing source-model layers.



\paragraph{Constructing the layer map.}

Target activations $\Xt$ are precisely what the alignment must predict at inference time, and are available only during training. Due to this, the layer map and, in particular, the target-layer queries cannot depend on $\Xt$. Thus, we associate each target layer $\ell$ with a learned, \emph{input-independent} query $q_\ell$; these are gathered across layers into (learnable) matrix $ Q \in \R^{L_t \times d}$.
In contrast to the above, we, instead, do have access to source activations $\Xs$, so we compute an \emph{input-dependent} Key matrix as follows,
\begin{equation}
    K(\Xs) = \Xs \mathbf{W}^K \in \mathbb{R}^{L_s \times d},
\end{equation}
where $\mathbf{W}^K \in \mathbb{R}^{D_s \times d}$ is a learnable matrix. 
The layer-mapping matrix is then defined by row-wise cross-attention between the target-layer queries and source-layer keys:
\begin{equation}
    \!\!\alpha(X)
    =
    \operatorname{softmax}\footnote{We treat the softmax temperature as a hyperparameter, omitted from \Cref{eq:attn} for brevity.}\!\left(
        \frac{Q K(\Xs)^\top}{\sqrt{d}}
    \!\right)
    \in \R^{L_t \times L_s},
    \label{eq:attn}
\end{equation}
so that $\alpha_{\ell,j}$ is the weight assigned to source layer $j$, when
reconstructing target layer $\ell$. 

\paragraph{Generalizing constant maps.} Under mild conditions, our layer map in \Cref{eq:attn} can approximate all constant one-to-one maps used in prior work, including `terminal' and `relative-depth' mappings. In-fact, it can also approximate any constant soft many-to-many map. For further details and proofs, see \Cref{app:expressivity}.

\paragraph{Feature map.}
The feature map acts (independently) on the remaining axis, carrying the mixed representations from the
source model's hidden space into the target's. It applies a learnable projection
$\mathbf{W}$ shared across layers, followed by a residual
feed-forward block \texttt{FF}\footnote{\texttt{FF} is a two-layer network,
\texttt{Linear}$\to$\texttt{GELU}$\to$\texttt{Linear}, applied to the feature dimension.},
\begin{align}
    \mathbf{Z} &= \tilde{\X}^s \mathbf{W} \in \mathbb{R}^{L_t\times D_t}, \label{eq:feature-map1}
    \\
    \Ast(\Xs) &= \mathbf{Z} + \texttt{FF}(\mathbf{Z}), \label{eq:feature-map2}
\end{align}
to insure a fully expressive feature transformation.
\section{Experiments}

We evaluate our alignment method from three complementary perspectives. (i) We provide a \emph{quantitative} evaluation of the learned activation alignment map $\Ast$ directly through its reconstruction performance. (ii) We provide a \emph{qualitative} analysis -- analyzing the layer correspondences we learned between different LLMs, by inspecting our layer-map ($\alpha$). (iii) We examine the utility of the learned alignment in two downstream applications: probing and steering. In both these settings, we assume that a probe or steering vector is available only for the target model and we use it by mapping $\Xs$ to the activation space of the target model. We present only the main results here; further details and extended results are provided in \Cref{sec:appendix-results}.

\paragraph{LLMs.}
We evaluate our method on seven instruction-tuned LLMs: Llama-3.2-1B, Llama-3.2-3B, Llama-3.1-8B~\cite{dubey2024llama3herdmodels}, Qwen2.5-1.5B, Qwen2.5-1.3B, Qwen2.5-1.7B~\cite{qwen2, qwen2.5}, gemma-2-2b~\cite{team2024gemma}. We learn alignments between every non-degenerate pair,
yielding 42 such pairs. The selected models span multiple model families, parameter scales, architectural variants, and training regimes, allowing us to test whether the proposed alignment generalizes across (substantial) differences in model design.

\paragraph{Methods in comparison.}
We compare against two categories of methods. The first consists of methods adopted directly from prior work --- we dub those \emph{external baselines}. The second consists of \emph{additional baselines} designed either to isolate the contributions of our architectural choices or adapted from prior methods developed for closely related tasks. In the latter case, although these methods were not originally proposed for directly aligning source and target models, they can be incorporated into our setting with only minor modifications and are therefore included as additional baselines, using our same training pipeline. \textbf{External baselines:} 
(1) \texttt{Command-V}: \cite{wang2026commandv} constructs layer correspondences according to the relative depth strategy, mapping target layer $l_t$ to source layer $l_s$ as $l_s=\left\lfloor \frac{|L_s|}{|L_t|}l_t \right\rfloor$; 
then it fits a linear regression between the representations of each corresponding layer pair\footnote{\texttt{Command-V} trains alignment adapters only for even-numbered layers of the target model. In our setting, we extend its relative-depth mapping to all target layers.}.
(2) \texttt{HOT}: \cite{cui2026transport, shah2026representational} uses activation-based hierarchical optimal transport to infer soft feature-level correspondences between every source-target layer pair, aggregates the resulting transport costs to infer a global layer-level correspondence, and uses these transport plans to map between the activation spaces. \textbf{Additional baselines:} (1) \texttt{Terminal}: motivated by \cite{fu2026cachetocache}, it applies only when aligning a deeper source model with a shallower target one. It constructs the map using the final $L_t$ layers of the source model; this is only applicable if $L_s \geq L_t$.
(2) \texttt{Relative Depth}~\cite{wolfram2025layerssimilardepthsgenerate, fu2026cachetocache}: follows our method but replaces $\alpha$ (see \Cref{eq:attn}) with the relative-depth layer mapping used in \texttt{Command-V}. (3) \texttt{CKA}~\cite{kornblith2019similarityneuralnetworkrepresentations}: replaces $\alpha$ with a fixed map obtained by computing CKA similarity for every source--target layer pair and applying a softmax over the source layers for each target layer. While \texttt{Terminal}, \texttt{Relative Depth} and \texttt{Command-V} are using one-to-one layer-maps, the rest use soft, many-to-many maps. 


\paragraph{Setup.}
Following Command-V~\cite{wang2026commandv}, we train an alignment for each ordered pair of models on the LIMA dataset~\cite{zhou2023lima} using the formulation introduced in \Cref{sec:formalization}; \Cref{fig:objective} provides a high-level illustration of the pipeline. LIMA is a compact instruction-following dataset containing 1,030 training conversations and 300 held-out test prompts spanning diverse domains and tasks. Its combination of small size and broad prompt diversity makes it suitable for learning generic alignments across many model pairs. We divide the training split into training and validation subsets, select hyperparameters for each model pair on the validation set, and evaluate alignment quality on the official test set. 
We train each configuration with three random seeds and report the mean and standard deviation; further details are provided in \Cref{sec:appendix-results}.

\subsection{Evaluation of Cross-Model Alignment}
\label{subsec:Quantitative}
We consider the following evaluation metrics.

\paragraph{Evaluation metrics.}
Motivated by prior work~\cite{moschella2023relative, bansal2021revisiting, lample2018word}, we evaluate the learned mappings using nearest-neighbor retrieval and reconstruction quality. We provide brief intuitive descriptions below and defer their formal definitions to \Cref{sec:appendix-metrics}. Let $\hat{\mathbf{X}}^{t} \coloneqq \Ast(\Xs)$ denote the mapped representation of source sample $\Xs$. \textbf{Recall@1} (R@1) measures whether the ground-truth target representation $\Xt$ is the nearest neighbor of $\hat{\mathbf{X}}^{t}$ in the target space. Conversely, \textbf{Precision@1} (P@1) measures whether the mapped representation $\hat{\mathbf{X}}^{t}$ is the nearest neighbor of $\Xt$. \textbf{Jaccard@5} (J@5) compares the sets of five nearest neighbors of the source
and target representations in their spaces, respectively, and computes the size of their intersection divided by the size of their union.
\textbf{Tok-Jac@5} (TJ@5) applies the same Jaccard measure to the sets of top-5 predicted next-tokens, conveying a similar, but behavior-related metric. Finally, \textbf{MSE loss} is the mean squared error between the mapped and target representations and directly measures reconstruction accuracy. Higher values are better for the retrieval-based metrics, whereas lower MSE indicates better performance.

\paragraph{Results.} \begin{table*}[t]\centering\small
\resizebox{\textwidth}{!}{%
\begin{tabular}{lcccccccccccc}
\toprule
 & \multicolumn{4}{c}{comparable-size} & \multicolumn{4}{c}{Bigger$\to$Smaller} & \multicolumn{4}{c}{Smaller$\to$Bigger} \\
\cmidrule(lr){2-5} \cmidrule(lr){6-9} \cmidrule(lr){10-13}
Method & TJ@5 & R@1 & P@1 & J@5 & TJ@5 & R@1 & P@1 & J@5 & TJ@5 & R@1 & P@1 & J@5 \\
\midrule
\texttt{Terminal} & \underline{0.449}\,{$\scriptscriptstyle\pm0.001$} & 0.716\,{$\scriptscriptstyle\pm0.001$} & 0.915\,{$\scriptscriptstyle\pm0.001$} & 0.376\,{$\scriptscriptstyle\pm0.000$} & 0.407\,{$\scriptscriptstyle\pm0.001$} & \underline{0.759}\,{$\scriptscriptstyle\pm0.000$} & 0.874\,{$\scriptscriptstyle\pm0.000$} & 0.381\,{$\scriptscriptstyle\pm0.000$} & -- & -- & -- & -- \\
\texttt{Relative Depth} & 0.447\,{$\scriptscriptstyle\pm0.002$} & \underline{0.727}\,{$\scriptscriptstyle\pm0.001$} & \underline{0.917}\,{$\scriptscriptstyle\pm0.001$} & \underline{0.377}\,{$\scriptscriptstyle\pm0.000$} & \underline{0.411}\,{$\scriptscriptstyle\pm0.001$} & 0.757\,{$\scriptscriptstyle\pm0.000$} & \underline{0.918}\,{$\scriptscriptstyle\pm0.000$} & \underline{0.395}\,{$\scriptscriptstyle\pm0.000$} & \underline{0.435}\,{$\scriptscriptstyle\pm0.001$} & \underline{0.686}\,{$\scriptscriptstyle\pm0.000$} & \underline{0.856}\,{$\scriptscriptstyle\pm0.000$} & \underline{0.381}\,{$\scriptscriptstyle\pm0.000$} \\
\texttt{CKA} & 0.361\,{$\scriptscriptstyle\pm0.004$} & 0.544\,{$\scriptscriptstyle\pm0.002$} & 0.457\,{$\scriptscriptstyle\pm0.002$} & 0.204\,{$\scriptscriptstyle\pm0.001$} & 0.283\,{$\scriptscriptstyle\pm0.002$} & 0.634\,{$\scriptscriptstyle\pm0.001$} & 0.561\,{$\scriptscriptstyle\pm0.001$} & 0.256\,{$\scriptscriptstyle\pm0.001$} & 0.311\,{$\scriptscriptstyle\pm0.002$} & 0.567\,{$\scriptscriptstyle\pm0.001$} & 0.464\,{$\scriptscriptstyle\pm0.002$} & 0.217\,{$\scriptscriptstyle\pm0.001$} \\
\midrule
\texttt{Command-V} & 0.316 & 0.541 & 0.395 & 0.321 & 0.294 & 0.617 & 0.517 & 0.347 & 0.281 & 0.627 & 0.465 & 0.325 \\
HOT & 0.091 & 0.030 & 0.343 & 0.253 & 0.093 & 0.041 & 0.434 & 0.259 & 0.101 & 0.037 & 0.401 & 0.263 \\
\midrule
\ourmethod (Ours) & \textbf{0.455}\,{$\scriptscriptstyle\pm0.002$} & \textbf{0.833}\,{$\scriptscriptstyle\pm0.002$} & \textbf{0.968}\,{$\scriptscriptstyle\pm0.001$} & \textbf{0.415}\,{$\scriptscriptstyle\pm0.001$} & \textbf{0.422}\,{$\scriptscriptstyle\pm0.001$} & \textbf{0.847}\,{$\scriptscriptstyle\pm0.002$} & \textbf{0.958}\,{$\scriptscriptstyle\pm0.002$} & \textbf{0.433}\,{$\scriptscriptstyle\pm0.001$} & \textbf{0.449}\,{$\scriptscriptstyle\pm0.001$} & \textbf{0.808}\,{$\scriptscriptstyle\pm0.001$} & \textbf{0.962}\,{$\scriptscriptstyle\pm0.001$} & \textbf{0.413}\,{$\scriptscriptstyle\pm0.000$} \\
\midrule
\rowcolor{gray!15}
\textbf{Improvement} & $\mathbf{+}$\textbf{0.005} & $\mathbf{+}$\textbf{0.106} & $\mathbf{+}$\textbf{0.052} & $\mathbf{+}$\textbf{0.038} & $\mathbf{+}$\textbf{0.011} & $\mathbf{+}$\textbf{0.088} & $\mathbf{+}$\textbf{0.040} & $\mathbf{+}$\textbf{0.038} & $\mathbf{+}$\textbf{0.014} & $\mathbf{+}$\textbf{0.123} & $\mathbf{+}$\textbf{0.106} & $\mathbf{+}$\textbf{0.032} \\
\bottomrule\end{tabular}}
\caption{\textbf{Alignment quality} across comparable-size, larger-to-smaller, and smaller-to-larger model pairs. Best results are in \textbf{bold}, and second-best results are underlined. per-pair breakdowns are provided in \Cref{sec:appendix-metrics-raw}.}
\label{tab:main-metrics}
    \vspace{-0.4cm}
\end{table*}

In \Cref{tab:main-metrics}, we compare our method with the baselines across all proposed evaluation metrics; we report the corresponding MSE losses separately in \Cref{sec:appendix-mse}. We group the results into three categories according to the relative sizes of the source and target models: models of comparable size, larger-to-smaller mappings, and smaller-to-larger mappings. \ourmethod consistently achieves the best performance across all metrics and size configurations; for example, we observe a large gain of around 0.12 in the R@1 metric, in the smaller $\rightarrow$ larger setup . In general, we observe the largest gains when transferring from smaller to larger models, suggesting that expressive, learnable mappings, such as ours, are especially important when aligning activation spaces across models of different sizes. Among the baselines, \texttt{Relative Depth} achieves the second-best overall performance. MSE losses exhibit the same overall trend as discussed above; see \Cref{sec:appendix-mse} 

\paragraph{Mapping across model families.}
\begin{figure}[h]
\vspace{-0.2cm}
\centering
\includegraphics[width=0.8\linewidth]{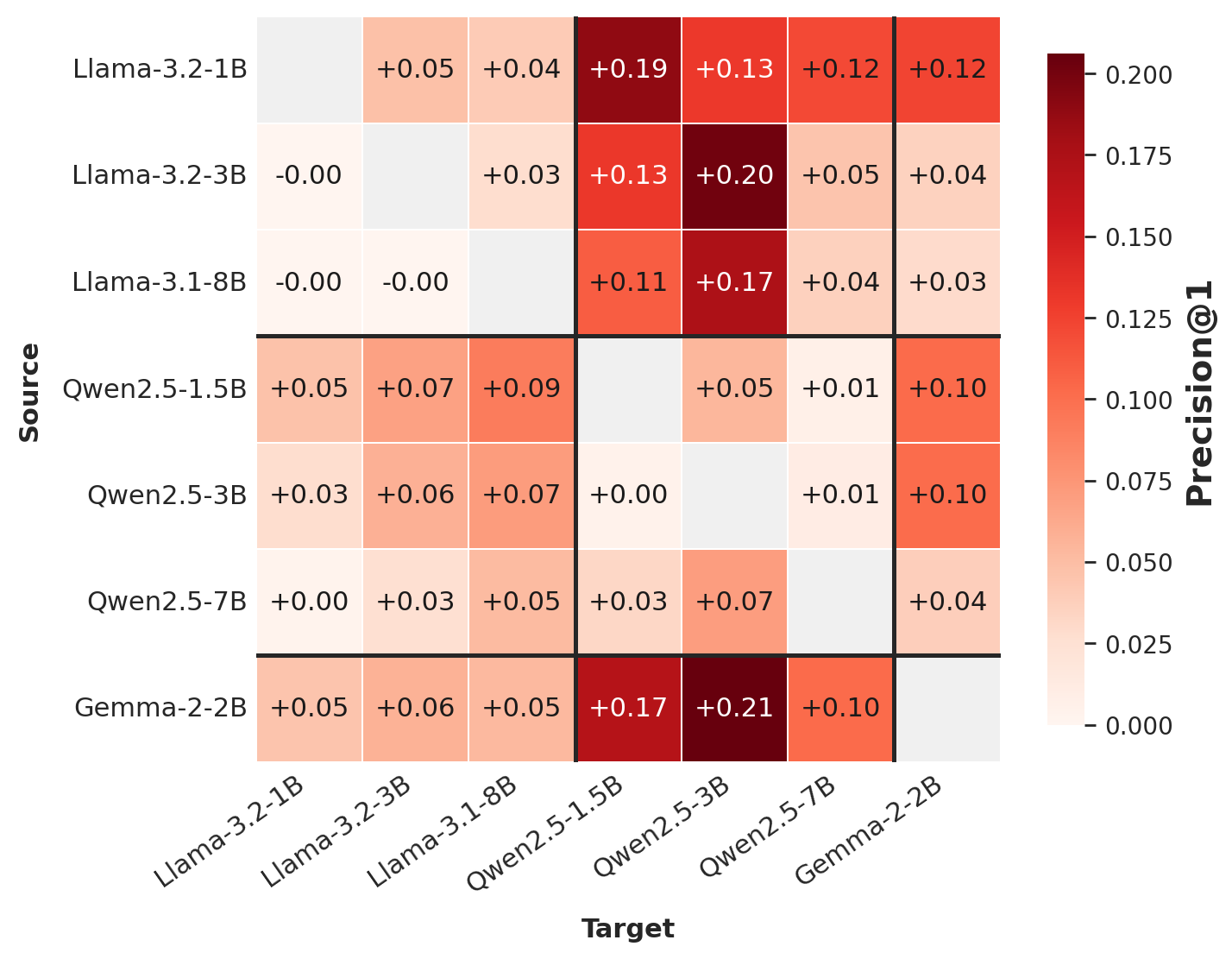}
\caption{Per-pair Precision@1 difference (\ourmethod $-$ \texttt{relative depth}).}
\label{fig:wk-vs-linear-main}
\vspace{-0.2cm}
\end{figure}

For a more fine-grained understanding of our alignment results, in \Cref{fig:wk-vs-linear-main}, we show them grouped by model family. In particular, we compare \ourmethod with \texttt{Relative-Depth} (the second-best overall method in terms of quantitative results; see \Cref{tab:main-metrics}), by reporting the difference in Recall@1 across model-family pairings. Interestingly, we observe our method achieves its largest gains when aligning models from different families. This suggests that simpler, fixed, mapping hypotheses may be more suitable when aligning models within the same architectural families, while highlighting the importance of more flexible, multi-layer mappings (such as ours) when the source and target architectures differ more substantially.


\subsection{Learned Layer Correspondences}

\begin{figure}[h]
\vspace{-0.2cm}
    \centering
    \includegraphics[width=\columnwidth]{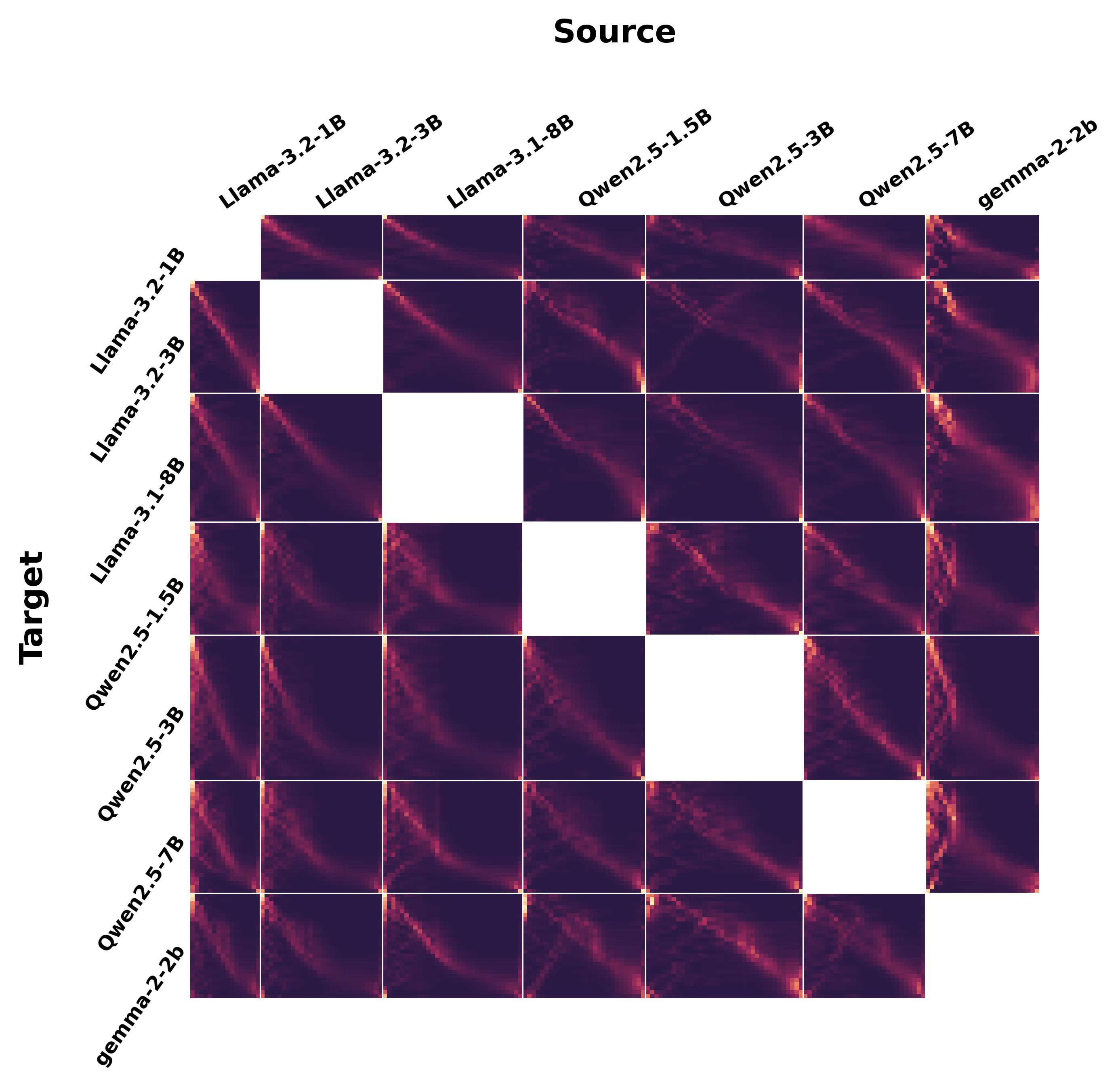}
    \caption{The final, learned $\alpha$ obtained by our method. 
    }
    \label{fig:wk-plus-ffn}
    \vspace{-0.2cm}
\end{figure}

\Cref{fig:wk-plus-ffn} visualizes the layer-alignment maps, $\alpha$, learned by \ourmethod across all model pairs, averaged over samples from a held-out validation subset of the LIMA training set.
We observe a consistent emerging pattern: the learned correspondences are generally \emph{not} one-to-one, with each target layer reconstructed as a \emph{weighted combination} of multiple sources. Additionally, while a significant portion of the alignment mass is concentrated in the \emph{proximity} of the relative-depth correspondence line, 
the salient correspondence motifs often depart from the typical \emph{relative} or \emph{terminal} matching hypotheses. Especially when the source and target models differ in depth, the near-diagonal structure becomes more curved, producing a characteristic ``belly'' around the middle of the network. Our per-pair quantitative breakdown, presented in \Cref{sec:appendix-metrics-raw},
shows that alignments involving Gemma consistently yield lower performance. This trend is echoed in \Cref{fig:wk-plus-ffn}, where maps involving Gemma exhibit a slightly different structure from those of all other model pairs, with a less clear layer correspondence. 
From these observations, we conclude that both alignment quality and correspondence motifs may depend on the specific model pair, with some representation spaces exhibiting clearer cross-layer correspondences than others. 

\subsection{Downstream Applications}
\label{subsec:Downstream applications}
We evaluate the alignment methods on the task of \emph{transfer of activation-space interventions}, 
in which $\Ast$ is used to transfer a target-model activation-space operator
to the source model. We consider two tasks: \emph{steering} \cite{turner2024steeringlanguagemodelsactivation} and \emph{probing} \cite{belinkov-2022-probing,alain2017understanding}. In both cases, the goal is to transfer, to the source model, an activation-spaces intervention
available only for the target model. These are obtained, in particular, by applying a steering vector or an activation probe. The idea is depicted in \Cref{fig:steering,fig:probing}: it consists in approximating the (unknown or unavailable) source steering model $S^s$ (resp., probe model $P^s$), via the target's $S^t$ (resp., $P^t$), assuming activation alignment maps $\Ast, \Ats$ are available:
\begin{align}
  &\text{Steering: }S^s(\Xs) \!\approx\! \Ats\!\left(S^t\!\left(\Ast(\Xs)\right)\right), \\
  &\text{Probing: }P^s(\Xs) \!\approx\! P^t\!\left(\Ast(\Xs)\right).
\end{align}

\begin{figure}[t]
  \centering
  \begin{subfigure}[b]{0.49\columnwidth}
    \centering
    \resizebox{0.8\linewidth}{!}{%
    \begin{tikzpicture}[>=Stealth]
      \path[use as bounding box] (-0.9,-1.45) rectangle (3.9,1.75);
      \node[st, draw=srccolor, fill=srccolor!10, text=srccolor] (xs)  at (0,   1.0) {$\Xs$};
      \node[st, draw=tgtcolor, fill=tgtcolor!10, text=tgtcolor] (xt)  at (2.6, 1.0) {$\Xt$};
      \node[st, draw=srccolor, fill=srccolor!10, text=srccolor] (fxs) at (0,  -0.8) {$S^s(\Xs)$};
      \node[st, draw=tgtcolor, fill=tgtcolor!10, text=tgtcolor] (fxt) at (2.6,-0.8) {$S^t(\Xt)$};
      \draw[dashed, gray!35] (1.3, 1.45) -- (1.3, -1.25);
      \node[lbl, text=srccolor] at (0,   1.45) {source};
      \node[lbl, text=tgtcolor] at (2.6, 1.45) {target};
      \draw[->, orange, thick] (xs)  -- node[above, lbl] {$\Ast$} (xt);
      \draw[->, tgtcolor]      (xt)  -- node[right, lbl] {$S^t$}  (fxt);
      \draw[->, orange, thick] (fxt) -- node[below, lbl] {$\Ats$} (fxs);
      \draw[->, srccolor, dashed, thick] (xs) -- node[left, lbl] {$S^s$} (fxs);
    \end{tikzpicture}}
    \caption{Steering.}
    \label{fig:steering}
  \end{subfigure}
  \hfill
  \begin{subfigure}[b]{0.49\columnwidth}
    \centering
    \resizebox{0.8\linewidth}{!}{%
    \begin{tikzpicture}[>=Stealth]
      \path[use as bounding box] (-0.9,-1.45) rectangle (3.9,1.75);
      \node[st, draw=srccolor, fill=srccolor!10, text=srccolor] (xs)    at (0,   1.0) {$\Xs$};
      \node[st, draw=tgtcolor, fill=tgtcolor!10, text=tgtcolor] (xt)    at (2.6, 1.0) {$\Xt$};
      \node[st, draw=gray!70,  fill=gray!10]                    (score) at (2.6,-0.8) {score $\in\mathbb{R}$};
      \draw[dashed, gray!35] (1.3, 1.45) -- (1.3, -1.25);
      \node[lbl, text=srccolor] at (0,   1.45) {source};
      \node[lbl, text=tgtcolor] at (2.6, 1.45) {target};
      \draw[->, orange, thick] (xs) -- node[above, lbl] {$\Ast$} (xt);
      \draw[->, tgtcolor]      (xt) -- node[right, lbl] {$P^t$}  (score);
      \draw[->, srccolor, dashed, thick] (xs) -- node[below left, lbl] {$P^s$} (score);
    \end{tikzpicture}}
    \caption{Probing.}
    \label{fig:probing}
  \end{subfigure}
  \caption{\textbf{Transfer of activation-space interventions.} Dashed arrows denote source-side operators, which are approximated by the solid path through the target model.}
  \label{fig:teleportation}
  \vspace{-0.3cm}
\end{figure}
    \vspace{-0.2cm}
\paragraph{Steering setup.}
Following \citet{arditi2024refusal}, we study refusal suppression using steering vectors computed as Difference in Mean (DiM) activations of harmful and harmless prompts in the target model. 
For each prompt, we map only the final input token's activation at each layer into the target space, apply the target steering vector, and map the resulting steering-induced change back to the source space. We then recompute the final input token's activations while injecting these transferred changes at the corresponding source-model layers. We evaluate the generated responses on three datasets: AdvBench \citep{zou2023universal}, JailbreakBench \citep{chao2024jailbreakbench}, and HarmBench \citep{mazeika2024harmbench} using LlamaGuard3~8B \citep{dubey2024llama3herdmodels} as the safety classifier. Full details of the datasets, probe training, steering-vector construction, transfer algorithm, and evaluation procedure are provided in~\Cref{sec:appendix-probing,sec:appendix-steering}. 
\paragraph{Probing setup.}
We consider hallucination detection on TriviaQA \citep{joshi-etal-2017-triviaqa}: we train a linear probe at each layer of the target model, select the best-performing target probe by a validation set, and apply it to source activations mapped into the target representation space.

\paragraph{Steering results.}
\begin{figure}[t]
    \centering
    \includegraphics[width=0.8\linewidth]{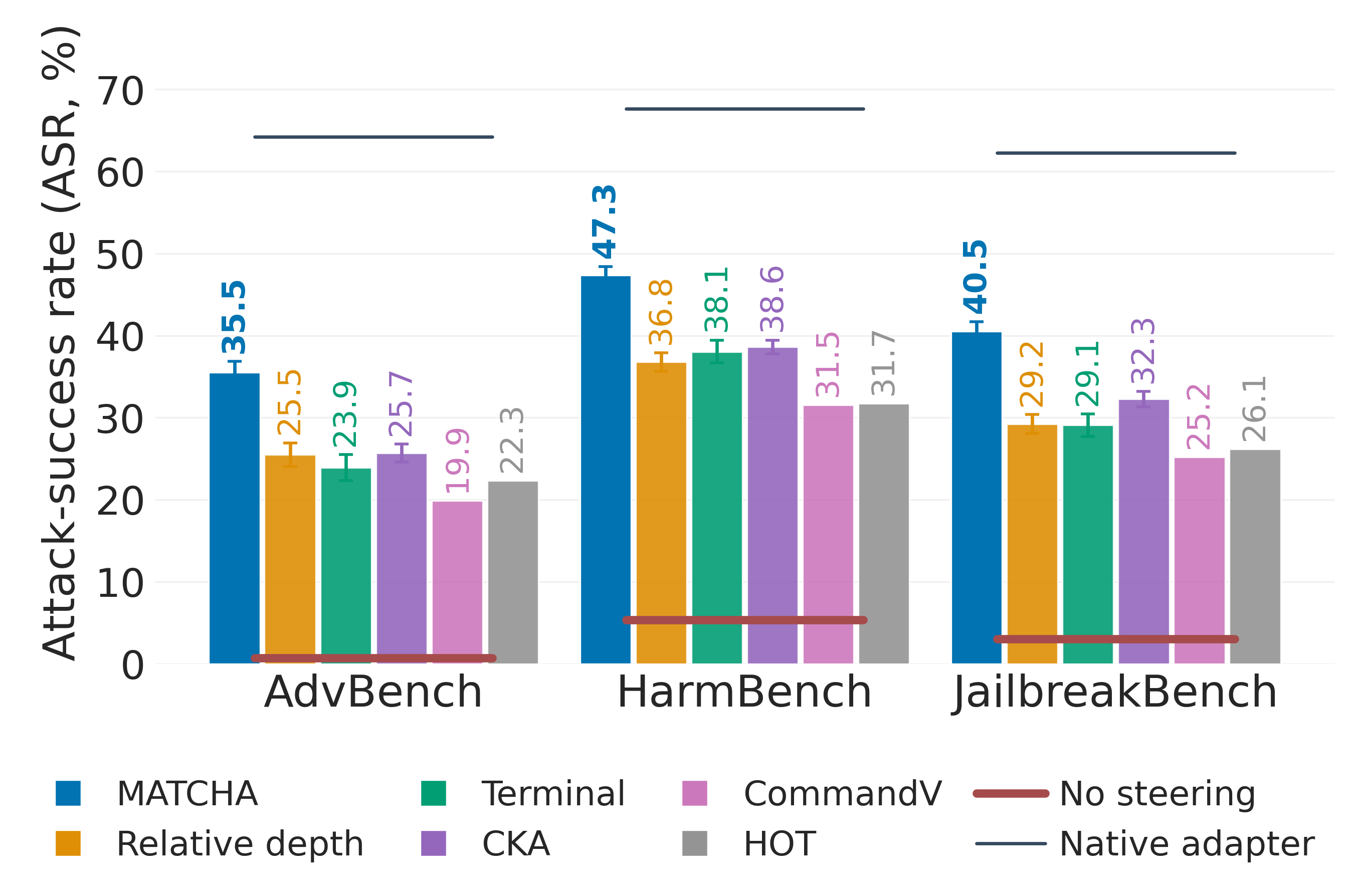}
    \caption{\textbf{Steering results.} We report per-dataset mean test ASR, averaged over all pairs.
}
    \label{fig:steering_results}
    \vspace{-0.3cm}
\end{figure}

Results are shown in \Cref{fig:steering_results}\footnote{The \texttt{Terminal} baseline is averaged only on applicable pairs -- consistent with \Cref{tab:main-metrics}.}, where the metric of interest is \emph{Attack-success rate} (ASR) \%; higher is better. For each dataset, we report the performance of all alignment methods, along with a lower bound corresponding to performance without steering and an upper bound given by the ``native adapter,'' which is trained directly on the target model. Across all three datasets, \ourmethod outperforms the other alignment techniques by a wide margin, where the second best method is the \texttt{Relative Depth} baseline.

\paragraph{Probing results.}
\begin{table}[t]
\centering
\small
\resizebox{\columnwidth}{!}{%
\begin{tabular}{lccc}
\toprule
Method & Same-size & Bigger$\to$Smaller & Smaller$\to$Bigger \\
\midrule
\texttt{Terminal} & \underline{0.647}\,{$\scriptscriptstyle\pm0.006$} & 0.634\,{$\scriptscriptstyle\pm0.004$} & -- \\
\texttt{Relative Depth} & 0.646\,{$\scriptscriptstyle\pm0.007$} & \underline{0.654}\,{$\scriptscriptstyle\pm0.004$} & \underline{0.664}\,{$\scriptscriptstyle\pm0.002$} \\
\texttt{CKA} & 0.549\,{$\scriptscriptstyle\pm0.009$} & 0.593\,{$\scriptscriptstyle\pm0.005$} & 0.597\,{$\scriptscriptstyle\pm0.003$} \\
\texttt{Command-V} & 0.589 & 0.605 & 0.601 \\
\texttt{HOT} & 0.642 & 0.639 & 0.652 \\
\midrule
\textbf{MATCHA} (Ours) & \textbf{0.683}\,{$\scriptscriptstyle\pm0.007$} & \textbf{0.658}\,{$\scriptscriptstyle\pm0.007$} & \textbf{0.693}\,{$\scriptscriptstyle\pm0.004$} \\
\bottomrule
\end{tabular}
}
\caption{Probing transfer (TriviaQA test AUC $\uparrow$) by size group --- the receiver's \textbf{own probe} applied to the transferred representation. 
}
\label{tab:probing-main-tgt}
    \vspace{-0.3cm}
\end{table}

Results are reported in \Cref{tab:probing-main-tgt}. \ourmethod outperforms all baselines, in some cases by a large margin. For example, in the smaller-to-larger setting, it improves AUC by approximately three points over the second-best method. Finally, we again note, that \texttt{Relative Depth} is the strongest baseline overall.


\subsection{Ablation Studies}
In this section, we (i) ablate our multi layer maps; and (ii) ablate components of our architecture.

\paragraph{Ablating Multi-layer mapping.} 
\begin{figure}[h!]
\centering
\includegraphics[width=0.8\linewidth]{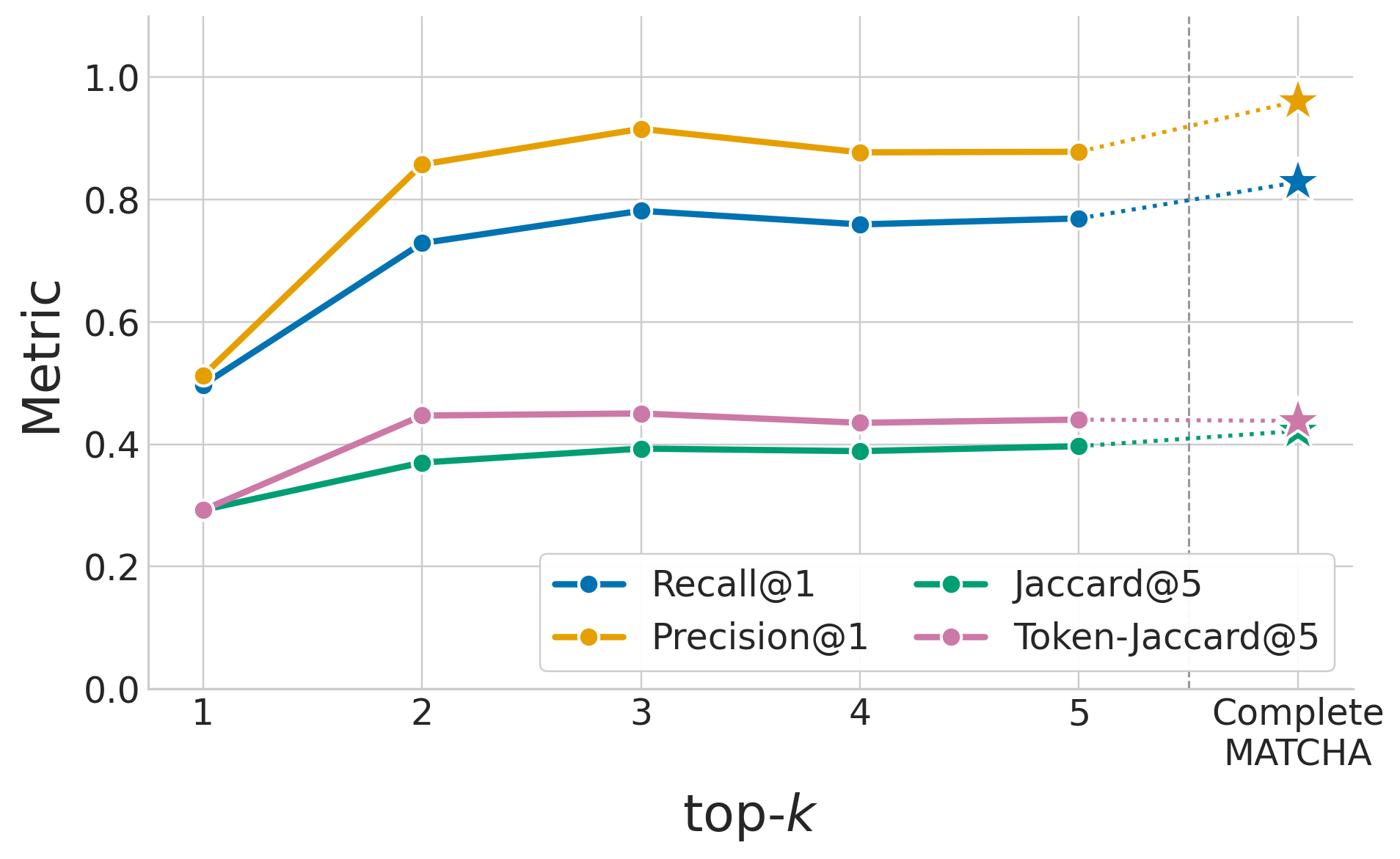}
\caption{Retrieval performance versus top-$k$ source layers, averaged across all LLM pairs.}

\label{fig:ksweep}
\end{figure}
We investigate the relevance of the multi-layer mapping estimated by our approach. In \Cref{fig:ksweep} we show the impact of restricting the layer-map obtained by \ourmethod for each target layer to its top-$k$ source layers, ranked by alignment mass\footnote{We renormalize scores across the selected top-$k$ source layers with softmax.}. The results show that, when restricted to one-to-one alignments ($k=1$), our method performs substantially worse than alignments combining multiple source layers ($k>1$). Moreover, our full alignment map consistently outperforms all of its top-$k$ approximations. 
These findings provide additional evidence that our learned cross-model layer correspondences are calculating not only a better performing, but also an inherently multi-layer map.

\paragraph{Impact of design choices.} 
\begin{table}[t]
\centering
\small
\resizebox{\columnwidth}{!}{%
\begin{tabular}{lcccc}
\toprule
Variant & R@1 & P@1 & J@5 & TJ@5 \\
\midrule
MATCHA-K & \underline{0.715}\,{$\scriptscriptstyle\pm0.002$} & \underline{0.834}\,{$\scriptscriptstyle\pm0.001$} & \underline{0.368}\,{$\scriptscriptstyle\pm0.001$} & \underline{0.416}\,{$\scriptscriptstyle\pm0.001$} \\
MATCHA-FL & 0.707\,{$\scriptscriptstyle\pm0.002$} & 0.794\,{$\scriptscriptstyle\pm0.002$} & 0.340\,{$\scriptscriptstyle\pm0.001$} & 0.412\,{$\scriptscriptstyle\pm0.001$} \\
\texttt{MLP} & 0.460\,{$\scriptscriptstyle\pm0.017$} & 0.817\,{$\scriptscriptstyle\pm0.009$} & 0.318\,{$\scriptscriptstyle\pm0.003$} & 0.364\,{$\scriptscriptstyle\pm0.003$} \\
\midrule
\textbf{MATCHA} (Ours) & \textbf{0.829}\,{$\scriptscriptstyle\pm0.001$} & \textbf{0.961}\,{$\scriptscriptstyle\pm0.001$} & \textbf{0.422}\,{$\scriptscriptstyle\pm0.001$} & \textbf{0.438}\,{$\scriptscriptstyle\pm0.001$} \\
\bottomrule
\end{tabular}}
\caption{\textbf{Architecture ablation}. The results are averaged over all 42 LLM pairs.
}\label{tab:ablation-arch}
    \vspace{-0.3cm}
\end{table}

To evaluate our architectural design choices, we study the following variants of our method: (1) \ourmethod-$K$, in which the key matrix is an \emph{input-independent} learnable parameter rather than a projection of $\Xs$; (2) \ourmethod with a Fully Learned layer map (\ourmethod-FL), in which the layer-mapping coefficients $\alpha$ are parameters learned without constraints; and (3) \texttt{MLP}, a simple \texttt{MLP} applied to flattened representations from the two activation spaces. Results are reported in \Cref{tab:ablation-arch}. Each modification to \ourmethod degrades performance across all metrics. Importantly, both \ourmethod-$K$ and \ourmethod-FL produce \emph{input-independent} layer maps and perform significantly worse than \ourmethod, which learns an \emph{input-dependent} map. This suggests that layer correspondences between LLMs may not be fixed, and depend on the specific input. Finally, the unstructured \textsc{MLP} baseline performs substantially worse then all ablation methods, highlighting the value of explicitly decoupling the layer-map and feature-map.

\section{Conclusion}
We formulated cross-LLM activation alignment as a learning problem, in which layer correspondences are learned rather than assumed. We introduced \ourmethod, a highly efficient method that applies the learned alignment in approximately $0.06$ milliseconds and decomposes it into an interpretable cross-layer map and a feature transformation. 

Across 42 ordered model pairs, \ourmethod consistently outperforms prior baselines, with particularly strong gains for cross-family and smaller-to-larger transfers. \ourmethod learned layer-maps further reveal that cross-model layer correspondences are broadly depth-monotone, yet, \emph{inherently multi-layer}. Beyond improving alignment quality, \ourmethod enables probes and steering interventions to transfer across models, providing a practical framework for reusing activation-based tools with newly introduced LLMs without retraining the tools for each such new model.



\clearpage

\section{Ethical Considerations}
Although our method can facilitate the reuse of interpretability and control tools, its ability to transfer steering interventions is dual-use and could be misused to weaken model safeguards; we therefore recommend careful safety evaluation and appropriate access controls before deployment.

\section*{Limitations}
Despite its effectiveness, our approach has several limitations. It requires access to internal activations, or hidden states, and is therefore applicable only to open-source models or when implemented directly by closed-source model providers. In addition, we train a separate activation map for each model pair because activation spaces may differ across models. We leave the development of a joint activation map trained across multiple model pairs to future work.

\section*{Acknowledgements}
G.B. is supported by the Jacobs Qualcomm PhD Fellowship. F.F. conducted this work supported by an Aly Kaufman Post-Doctoral Fellowship. H.M.\ is supported by the Israel Science Foundation through a personal grant (ISF 264/23)
and an equipment grant (ISF 532/23), and by the Career Advancement Chairs in Artificial
Intelligence--Schmidt Futures.

\bibliographystyle{plainnat}
\bibliography{custom}


\clearpage
\onecolumn

\RenewDocumentCommand{\twocolumn}{o}{\onecolumn}

\RenewDocumentEnvironment{multicols}{m o o}
  {\par}
  {\par}

\RenewDocumentEnvironment{multicols*}{m o o}
  {\par}
  {\par}

\appendix

\global\MATCHAsectionpendingfalse
\global\MATCHAparagraphpendingfalse
\FloatBarrier

\FloatBarrier

\RenewDocumentEnvironment{figure}{O{!htbp}+b}
  {%
    \ifnum\value{figure}<12
      \par
      \medskip
      \noindent\hfill
      \begin{minipage}{0.60\textwidth}
        \MATCHAsetlocalwidth
        \centering
        \captionsetup{type=figure}
        #2
      \end{minipage}
      \hfill\mbox{}\par
      \medskip
    \else
      \MATCHAorigfigure[#1]
      \centering
      \begin{minipage}{0.60\textwidth}
        \MATCHAsetlocalwidth
        \centering
        #2
      \end{minipage}
      \MATCHAendorigfigure
    \fi
  }
  {}

\RenewDocumentEnvironment{figure*}{O{!htbp}+b}
  {%
    \MATCHAorigfigure[#1]
    \centering
    \begin{minipage}{0.80\textwidth}
      \MATCHAsetlocalwidth
      \centering
      #2
    \end{minipage}
    \MATCHAendorigfigure
  }
  {}

\begingroup
\RenewDocumentCommand{\paragraph}{s o m}{}

\appendix

\section*{Appendix}

\section{Additional Implementation Details}
\label{sec:appendix-impl-details}

\subsection{Activation Normalization}
\label{sec:appendix-normalization}

Before training, we perform a calibration pass over the alignment-training data and
estimate, for every layer of every model, a mean $\mu_\ell$ and standard deviation
$\sigma_\ell$, pooled over all activation entries collected for that layer (across
examples, token positions, and hidden dimensions). Each source and target layer $\ell$ is
normalized using its own statistics,
\[
    x_{\mathrm{norm}} = \frac{x - \mu_\ell}{\sigma_\ell},
\]
before $\Ast$ is applied and the reconstruction loss of
Eq.~\Cref{eq:recon} is computed; this prevents layers with larger
activation scales from dominating the objective. $\Ast$'s output is denormalized back with
the target layer's own statistics,
\[
    x = x_{\mathrm{norm}} \cdot \sigma_\ell + \mu_\ell,
\]
whenever it needs to be read out on the target layer's original activation scale---for
example when used inside the target model during activation-editing transfer, or when
computing the MSE/retrieval metrics (see \Cref{tab:main-metrics}), which we report in this original, denormalized scale rather than the
normalized training coordinates. The normalization statistics are fixed once, at
calibration time, and reused unchanged throughout training, validation, and testing.

\subsection{Extended Metric Definitions}
\label{sec:appendix-metrics}

This appendix restates the four retrieval-style metrics of \S\ref{subsec:Quantitative} precisely,
in the same notation. Let $\hat{X}^s_i := \Ast(\Xs_i)$ denote the mapped representation of
held-out example $i$, and let $\mathrm{KNN}_K(\cdot)$ denote the index set of the $K$
nearest neighbors of a representation under pairwise mean-squared distance.

\begin{itemize}
    \item \textbf{Recall@K}: $\frac{1}{N}\sum_i \mathbb{1}\!\left[i \in
    \mathrm{KNN}^{t}_K\big(\hat{X}^s_i\big)\right]$, querying from the mapped
    representation into the target space---does the true target land among the
    representations nearest to the prediction?

\paragraph{Recall@K.}

\begin{figure}[H]
  \centering
  \scalebox{0.75}{
  \begin{tikzpicture}[
    pnode/.style={circle, fill=orange!85!black, text=white, inner sep=4pt, font=\small\bfseries},
    tnode/.style={circle, fill=tgtcolor,        text=white, inner sep=4pt, font=\small\bfseries},
    >=Stealth]
    \node[orange!85!black, font=\small\bfseries] at (0.0, 3.3)
      {predicted $\Xshat_i{=}\Ast(\Xs_i)$};
    \node[tgtcolor, font=\small\bfseries] at (4.2, 3.3) {target $\Xt_i$};
    \foreach \i/\y in {1/2.6, 2/1.3, 3/0.0}
      \node[font=\small, gray!65!black, anchor=east] at (-1.6, \y) {$i{=}\i$};
    \node[pnode] (p1) at (0.0, 2.6) {$\Xshat_1$};
    \node[pnode] (p2) at (0.0, 1.3) {$\Xshat_2$};
    \node[pnode] (p3) at (0.0, 0.0) {$\Xshat_3$};
    \node[tnode] (t1) at (4.2, 2.6) {$\Xt_1$};
    \node[tnode] (t2) at (4.2, 1.3) {$\Xt_2$};
    \node[tnode] (t3) at (4.2, 0.0) {$\Xt_3$};
    \draw[->, red!70!black, thick, bend right=18] (p1) to (t2);
    \draw[->, tgtcolor,     thick]                (p2) -- (t2);
    \draw[->, tgtcolor,     thick]                (p3) -- (t3);
    \node[red!70!black, font=\large\bfseries, left=6pt of p1] {\texttimes};
    \node[tgtcolor,     font=\large\bfseries, left=6pt of p2] {\checkmark};
    \node[tgtcolor,     font=\large\bfseries, left=6pt of p3] {\checkmark};
    \node[font=\small] at (2.1, -0.85) {Recall@1 $= \tfrac{2}{3} \approx 66.7\%$};
  \end{tikzpicture}
  }
  \caption{Recall@1 on three samples. Each predicted representation $\Xshat_i$
  (orange) queries its nearest neighbour in the target space (teal).
  A sample is a \textcolor{tgtcolor}{\textbf{hit}} (\checkmark) when $\Xt_i$ is
  that neighbour, and a \textcolor{red!70!black}{\textbf{miss}} ($\times$) otherwise.}
  \label{fig:recall-at-k}
\end{figure}

    \item \textbf{Precision@K}: $\frac{1}{N}\sum_i \mathbb{1}\!\left[i \in
    \mathrm{KNN}^{\hat{s}}_K\big(\Xt_i\big)\right]$, querying in the opposite direction,
    from the true target into the mapped space---does the prediction land among the
    representations nearest to the true target?

\paragraph{Precision@K.}

\begin{figure}[H]
  \centering
  \scalebox{0.75}{
  \begin{tikzpicture}[
    pnode/.style={circle, fill=orange!85!black, text=white, inner sep=4pt, font=\small\bfseries},
    tnode/.style={circle, fill=tgtcolor,        text=white, inner sep=4pt, font=\small\bfseries},
    >=Stealth]
    \node[orange!85!black, font=\small\bfseries] at (0.0, 3.3)
      {predicted $\Xshat_i{=}\Ast(\Xs_i)$};
    \node[tgtcolor, font=\small\bfseries] at (4.2, 3.3) {target $\Xt_i$};
    \foreach \i/\y in {1/2.6, 2/1.3, 3/0.0}
      \node[font=\small, gray!65!black, anchor=east] at (-1.6, \y) {$i{=}\i$};
    \node[pnode] (p1) at (0.0, 2.6) {$\Xshat_1$};
    \node[pnode] (p2) at (0.0, 1.3) {$\Xshat_2$};
    \node[pnode] (p3) at (0.0, 0.0) {$\Xshat_3$};
    \node[tnode] (t1) at (4.2, 2.6) {$\Xt_1$};
    \node[tnode] (t2) at (4.2, 1.3) {$\Xt_2$};
    \node[tnode] (t3) at (4.2, 0.0) {$\Xt_3$};
    \draw[->, red!70!black, thick, bend left=18] (t1) to (p2);
    \draw[->, tgtcolor,     thick]               (t2) -- (p2);
    \draw[->, tgtcolor,     thick]               (t3) -- (p3);
    \node[red!70!black, font=\large\bfseries, right=6pt of t1] {\texttimes};
    \node[tgtcolor,     font=\large\bfseries, right=6pt of t2] {\checkmark};
    \node[tgtcolor,     font=\large\bfseries, right=6pt of t3] {\checkmark};
    \node[font=\small] at (2.1, -0.85) {Precision@1 $= \tfrac{2}{3} \approx 66.7\%$};
  \end{tikzpicture}
  }
  \caption{Precision@1 on three samples. Each target representation $\Xt_i$
  (teal) queries its nearest neighbour in the predicted space (orange).
  A sample is a \textcolor{tgtcolor}{\textbf{hit}} (\checkmark) when $\Xshat_i$ is
  that neighbour, and a \textcolor{red!70!black}{\textbf{miss}} ($\times$) otherwise.}
  \label{fig:precision-at-k}
\end{figure}

    \item \textbf{Jaccard@K}: $\frac{1}{N}\sum_i \big|\mathcal{N}^s_K(i) \cap
    \mathcal{N}^{\hat s}_K(i)\big| \,/\, \big|\mathcal{N}^s_K(i) \cup \mathcal{N}^{\hat
    s}_K(i)\big|$, where $\mathcal{N}^s_K(i)$ and $\mathcal{N}^{\hat s}_K(i)$ are the index
    sets of the $K$ nearest neighbors of example $i$ in the source space and of $\hat
    X^s_i$ in the mapped space, respectively---does the local neighborhood structure of the
    source space survive the mapping, penalizing both neighbors that are lost and
    neighbors that are spuriously introduced.

\begin{figure}[H]
  \centering
  \scalebox{0.75}{
  \begin{tikzpicture}[
    snode/.style={circle, fill=srccolor,        text=white, inner sep=4pt, font=\small\bfseries},
    pnode/.style={circle, fill=orange!85!black, text=white, inner sep=4pt, font=\small\bfseries},
    >=Stealth]
    \node[srccolor,        font=\small\bfseries] at (0.0, 3.3) {source $\Xs_i$};
    \node[orange!85!black, font=\small\bfseries] at (4.2, 3.3)
      {predicted $\Xshat_i{=}\Ast(\Xs_i)$};
    \foreach \i/\y in {1/2.6, 2/1.3, 3/0.0}
      \node[font=\small, gray!65!black, anchor=east] at (-1.6, \y) {$i{=}\i$};
    \node[snode] (s1) at (0.0, 2.6) {$\Xs_1$};
    \node[snode] (s2) at (0.0, 1.3) {$\Xs_2$};
    \node[snode] (s3) at (0.0, 0.0) {$\Xs_3$};
    \node[pnode] (p1) at (4.2, 2.6) {$\Xshat_1$};
    \node[pnode] (p2) at (4.2, 1.3) {$\Xshat_2$};
    \node[pnode] (p3) at (4.2, 0.0) {$\Xshat_3$};
    \foreach \i in {1,2,3}
      \draw[->, orange!60!black, dashed, thin] (s\i) -- (p\i);
    \draw[->, srccolor, thick, bend right=50] (s1) to (s3);
    \draw[->, srccolor, thick, bend left=22]  (s2) to (s1);
    \draw[->, srccolor, thick, bend left=22]  (s3) to (s2);
    \draw[->, tgtcolor,     thick, bend right=50] (p1) to (p3);
    \draw[->, red!70!black, thick, bend left=28]  (p2) to (p3);
    \draw[->, tgtcolor,     thick, bend left=22]  (p3) to (p2);
    \node[tgtcolor,     font=\large\bfseries, right=6pt of p1] {\checkmark};
    \node[red!70!black, font=\large\bfseries, right=6pt of p2] {\texttimes};
    \node[tgtcolor,     font=\large\bfseries, right=6pt of p3] {\checkmark};
    \node[font=\scriptsize, gray!60!black, align=left, anchor=west] at (5.3, 1.3)
      {$\cap{=}\emptyset,\;\cup{=}2$};
    \node[font=\small] at (2.1, -0.85) {Jaccard@1 $= \tfrac{2}{3} \approx 66.7\%$};
  \end{tikzpicture}
  }
  \caption{Jaccard@1 on three samples. For each hit (\checkmark), $|\cap|{=}|\cup|{=}1$, giving
  score~1. For the miss (\texttimes), $|\cap|{=}0$ and $|\cup|{=}2$---yielding score~0.
  The union penalty distinguishes Jaccard from Overlap when $K{>}1$.}
  \label{fig:jaccard-at-k}
\end{figure}
    
    \item \textbf{Token-Jaccard@K}: the same intersection-over-union definition as
    Jaccard@K, but with $\mathcal{N}^s_K(i)$ and $\mathcal{N}^{\hat s}_K(i)$ computed after
    projecting every example through the target model's unembedding matrix into
    next-token-distribution space, rather than compared as raw hidden activations. This is
    the only one of the four metrics that depends on the target model's own read-out
    rather than purely on the geometry of its activation space.
\end{itemize}

\endgroup
%
\section{Extended Experimental Section}
\label{sec:appendix-results}

\paragraph{Hyperparameters}

Our hyperparameters are reported in \Cref{tab:hp-grid-space}, and selected ones based on validation are in \Cref{tab:hp-best-wkffn}, \Cref{tab:hp-best-maacak}, \Cref{tab:hp-best-maacafl},
\Cref{tab:hp-best-maacanoffn}, \Cref{tab:hp-best-fixedffn}, \Cref{tab:hp-best-cka}, \Cref{tab:hp-best-mlp}, \Cref{tab:hp-best-hotnorm}.
\begin{table}[h!]
\centering
\begin{tabular}{@{}lcl@{}}
\toprule
\textbf{Hyperparameter} & \textbf{\#} & \textbf{Values swept} \\
\midrule
\multicolumn{3}{@{}l}{\textbf{MATCHA}} \\
\texttt{learning\_rate}         & 2 & 0.0003, 0.001 \\
\texttt{attention\_temperature} & 4 & 0.5, 1, 4, 8 \\
\texttt{bottleneck\_dim}        & 3 & 256, 512, 1024 \\
\texttt{num\_epochs}            & 3 & 10, 30, 60 \\
\midrule
\multicolumn{3}{@{}l}{\textbf{MATCHA-K}} \\
\texttt{learning\_rate}         & 2 & 0.0003, 0.001 \\
\bottomrule
\end{tabular}
\caption{Hyperparameter grid per method.}\label{tab:hp-grid-space}
\end{table}


\begin{table*}[t]
  \centering
  \small
  \setlength{\tabcolsep}{4pt}
  \renewcommand{\arraystretch}{0.92}
  \begin{tabular}{lrrrrr}
    \toprule
    \textbf{Target} & \textbf{LR} & \textbf{Temp.} & \textbf{Bottleneck} & \textbf{Epochs} & \textbf{Val loss} \\
    \midrule
    \multicolumn{6}{l}{\textbf{Source: Llama-3.2-1B}} \\
    Llama-3.2-3B & 0.0003 & 1 & 512 & 30 & 0.1736 \\
    Llama-3.1-8B & 0.0003 & 1 & 256 & 30 & 0.1450 \\
    Qwen2.5-1.5B & 0.001 & 1 & 1024 & 30 & 0.0391 \\
    Qwen2.5-3B & 0.001 & 0.5 & 256 & 30 & 0.0426 \\
    Qwen2.5-7B & 0.001 & 0.5 & 512 & 60 & 0.0703 \\
    Gemma-2-2B & 0.001 & 1 & 512 & 30 & 0.0930 \\
\bottomrule
  \end{tabular}
  \caption{Best \textbf{MATCHA} hyperparameters per pair (min.\ validation loss).}
  \label{tab:hp-best-wkffn}
\end{table*}


\begin{table*}[t]
  \centering
  \small
  \setlength{\tabcolsep}{4pt}
  \renewcommand{\arraystretch}{0.92}
  \begin{tabular}{lrrrrr}
    \toprule
    \textbf{Target} & \textbf{LR} & \textbf{Temp.} & \textbf{Bottleneck} & \textbf{Epochs} & \textbf{Val loss} \\
    \midrule
    \multicolumn{6}{l}{\textbf{Source: Llama-3.2-1B}} \\
    Llama-3.2-3B & 0.001 & 0.5 & 1024 & 60 & 0.2092 \\
    Llama-3.1-8B & 0.001 & 0.5 & 1024 & 60 & 0.1731 \\
    Qwen2.5-1.5B & 0.001 & 0.5 & 1024 & 60 & 0.0445 \\
    Qwen2.5-3B & 0.001 & 0.5 & 1024 & 60 & 0.0514 \\
    Qwen2.5-7B & 0.001 & 0.5 & 1024 & 60 & 0.0879 \\
    Gemma-2-2B & 0.001 & 0.5 & 1024 & 60 & 0.1058 \\
\bottomrule
  \end{tabular}
  \caption{Best \textbf{MATCHA-K} hyperparameters per pair (min.\ validation loss). MATCHA-K replaces the dynamic key with an input-independent one.}
  \label{tab:hp-best-maacak}
\end{table*}


\begin{table*}[t]
  \centering
  \small
  \setlength{\tabcolsep}{4pt}
  \renewcommand{\arraystretch}{0.92}
  \begin{tabular}{lrrrrr}
    \toprule
    \textbf{Target} & \textbf{LR} & \textbf{Temp.} & \textbf{Bottleneck} & \textbf{Epochs} & \textbf{Val loss} \\
    \midrule
    \multicolumn{6}{l}{\textbf{Source: Llama-3.2-1B}} \\
    Llama-3.2-3B & 0.001 & 0.5 & 512 & 30 & 0.1878 \\
    Llama-3.1-8B & 0.0003 & 0.5 & 512 & 30 & 0.1584 \\
    Qwen2.5-1.5B & 0.001 & 0.5 & 512 & 30 & 0.0431 \\
    Qwen2.5-3B & 0.001 & 0.5 & 512 & 30 & 0.0480 \\
    Qwen2.5-7B & 0.001 & 0.5 & 512 & 30 & 0.0778 \\
    Gemma-2-2B & 0.001 & 0.5 & 512 & 30 & 0.1004 \\
\bottomrule
  \end{tabular}
  \caption{Best \textbf{MATCHA-FL} hyperparameters per pair (min.\ validation loss). MATCHA-FL uses a single fully-learned logit map.}
  \label{tab:hp-best-maacafl}
\end{table*}


\begin{table*}[t]
  \centering
  \small
  \setlength{\tabcolsep}{4pt}
  \renewcommand{\arraystretch}{0.92}
  \begin{tabular}{lrrrrr}
    \toprule
    \textbf{Target} & \textbf{LR} & \textbf{Temp.} & \textbf{Bottleneck} & \textbf{Epochs} & \textbf{Val loss} \\
    \midrule
    \multicolumn{6}{l}{\textbf{Source: Llama-3.2-1B}} \\
    Llama-3.2-3B & 0.001 & 1 & 512 & 10 & 0.2186 \\
    Llama-3.1-8B & 0.001 & 0.5 & 256 & 10 & 0.1975 \\
    Qwen2.5-1.5B & 0.001 & 8 & 256 & 30 & 0.0696 \\
    Qwen2.5-3B & 0.001 & 0.5 & 256 & 30 & 0.0807 \\
    Qwen2.5-7B & 0.001 & 0.5 & 512 & 30 & 0.1073 \\
    Gemma-2-2B & 0.001 & 1 & 256 & 10 & 0.1219 \\
\bottomrule
  \end{tabular}
  \caption{Best \textbf{MATCHA (no FFN)} hyperparameters per pair (min.\ validation loss). MATCHA (no FFN) omits the FFN value path; trained from scratch, not derived from MATCHA's checkpoint.}
  \label{tab:hp-best-maacanoffn}
\end{table*}


\begin{table*}[t]
  \centering
  \small
  \setlength{\tabcolsep}{4pt}
  \renewcommand{\arraystretch}{0.92}
  \begin{tabular}{lrrrrrr}
    \toprule
     & \multicolumn{3}{c}{\textbf{Relative depth}} & \multicolumn{3}{c}{\textbf{Terminal}} \\
    \cmidrule(lr){2-4}\cmidrule(lr){5-7}
    \textbf{Target} & \textbf{LR} & \textbf{Ep.} & \textbf{Val} & \textbf{LR} & \textbf{Ep.} & \textbf{Val} \\
    \midrule
    \multicolumn{7}{l}{\textbf{Source: Llama-3.2-1B}} \\
    Llama-3.2-3B & 0.0003 & 30 & 0.2169 & 0.0003 & 30 & 0.3659 \\
    Llama-3.1-8B & 0.0003 & 30 & 0.2023 & 0.0003 & 30 & 0.3629 \\
    Qwen2.5-1.5B & 0.001 & 30 & 0.0698 & 0.001 & 30 & 0.1403 \\
    Qwen2.5-3B & 0.001 & 30 & 0.0952 & 0.001 & 30 & 0.2019 \\
    Qwen2.5-7B & 0.001 & 30 & 0.1047 & 0.001 & 10 & 0.1832 \\
\bottomrule
  \end{tabular}
  \caption{Best \textbf{Relative depth} and \textbf{Terminal} hyperparameters per pair (min.\ validation loss); both share the LR $\times$ epochs grid.}
  \label{tab:hp-best-fixedffn}
\end{table*}


\begin{table*}[t]
  \centering
  \small
  \setlength{\tabcolsep}{4pt}
  \renewcommand{\arraystretch}{0.92}
  \begin{tabular}{lrrr}
    \toprule
    \textbf{Target} & \textbf{LR} & \textbf{Epochs} & \textbf{Val loss} \\
    \midrule
    \multicolumn{4}{l}{\textbf{Source: Llama-3.2-1B}} \\
    Llama-3.2-3B & 0.0003 & 60 & 0.2725 \\
    Llama-3.1-8B & 0.0003 & 30 & 0.2567 \\
    Qwen2.5-1.5B & 0.001 & 30 & 0.1350 \\
    Qwen2.5-3B & 0.001 & 30 & 0.1276 \\
    Qwen2.5-7B & 0.0003 & 60 & 0.1587 \\
    Gemma-2-2B & 0.0003 & 30 & 0.2173 \\
\bottomrule
  \end{tabular}
  \caption{Best \textbf{CKA} hyperparameters per pair (min.\ validation loss). CKA shares the fixed-map methods' grid (learning rate $\times$ epochs).}
  \label{tab:hp-best-cka}
\end{table*}


\begin{table*}[t]
  \centering
  \scriptsize
  \setlength{\tabcolsep}{4pt}
  \renewcommand{\arraystretch}{0.92}
  \begin{tabular}{lrrr}
    \toprule
    \textbf{Target} & \textbf{LR} & \textbf{Epochs} & \textbf{Val loss} \\
    \midrule
    \multicolumn{4}{l}{\textbf{Source: Llama-3.2-1B}} \\
    Llama-3.2-3B & 0.0003 & 60 & 0.1740 \\
    Llama-3.1-8B & 0.0003 & 60 & 0.1448 \\
    Qwen2.5-1.5B & 0.0003 & 60 & 0.0394 \\
    Qwen2.5-3B & 0.0003 & 60 & 0.0428 \\
    Qwen2.5-7B & 0.0003 & 60 & 0.0720 \\
    Gemma-2-2B & 0.0003 & 60 & 0.0913 \\
    \midrule
    \multicolumn{4}{l}{\textbf{Source: Llama-3.2-3B}} \\
    Llama-3.2-1B & 0.0003 & 60 & 0.1357 \\
    Llama-3.1-8B & 0.0003 & 60 & 0.1571 \\
    Qwen2.5-1.5B & 0.0003 & 60 & 0.0423 \\
    Qwen2.5-3B & 0.0003 & 60 & 0.0465 \\
    Qwen2.5-7B & 0.0003 & 60 & 0.0776 \\
    Gemma-2-2B & 0.0003 & 60 & 0.1002 \\
    \midrule
    \multicolumn{4}{l}{\textbf{Source: Llama-3.1-8B}} \\
    Llama-3.2-1B & 0.0003 & 60 & 0.1415 \\
    Llama-3.2-3B & 0.0003 & 60 & 0.2003 \\
    Qwen2.5-1.5B & 0.0003 & 60 & 0.0440 \\
    Qwen2.5-3B & 0.0003 & 60 & 0.0486 \\
    Qwen2.5-7B & 0.0003 & 60 & 0.0786 \\
    Gemma-2-2B & 0.0003 & 60 & 0.1024 \\
    \midrule
    \multicolumn{4}{l}{\textbf{Source: Qwen2.5-1.5B}} \\
    Llama-3.2-1B & 0.0003 & 60 & 0.1293 \\
    Llama-3.2-3B & 0.0003 & 60 & 0.1892 \\
    Llama-3.1-8B & 0.0003 & 60 & 0.1548 \\
    Qwen2.5-3B & 0.0003 & 60 & 0.0400 \\
    Qwen2.5-7B & 0.0003 & 60 & 0.0696 \\
    Gemma-2-2B & 0.0003 & 60 & 0.0941 \\
    \midrule
    \multicolumn{4}{l}{\textbf{Source: Qwen2.5-3B}} \\
    Llama-3.2-1B & 0.0003 & 60 & 0.1363 \\
    Llama-3.2-3B & 0.0003 & 60 & 0.1952 \\
    Llama-3.1-8B & 0.0003 & 60 & 0.1589 \\
    Qwen2.5-1.5B & 0.0003 & 60 & 0.0386 \\
    Qwen2.5-7B & 0.0003 & 60 & 0.0687 \\
    Gemma-2-2B & 0.0003 & 60 & 0.0974 \\
    \midrule
    \multicolumn{4}{l}{\textbf{Source: Qwen2.5-7B}} \\
    Llama-3.2-1B & 0.0003 & 60 & 0.1416 \\
    Llama-3.2-3B & 0.0003 & 60 & 0.1996 \\
    Llama-3.1-8B & 0.0003 & 60 & 0.1619 \\
    Qwen2.5-1.5B & 0.0003 & 60 & 0.0404 \\
    Qwen2.5-3B & 0.0003 & 60 & 0.0443 \\
    Gemma-2-2B & 0.0003 & 60 & 0.0991 \\
    \midrule
    \multicolumn{4}{l}{\textbf{Source: Gemma-2-2B}} \\
    Llama-3.2-1B & 0.0003 & 60 & 0.1305 \\
    Llama-3.2-3B & 0.0003 & 60 & 0.1866 \\
    Llama-3.1-8B & 0.0003 & 60 & 0.1519 \\
    Qwen2.5-1.5B & 0.0003 & 60 & 0.0406 \\
    Qwen2.5-3B & 0.0003 & 60 & 0.0436 \\
    Qwen2.5-7B & 0.0003 & 60 & 0.0734 \\
    \bottomrule
  \end{tabular}
  \caption{Best \textbf{MLP} hyperparameters per pair (min.\ validation loss). Learning rate and number of epochs are searched (LR $\in\{0.0003, 0.001\}$, epochs $\in\{10,30,60\}$); layers and hidden width are fixed (2, 10{,}000).}
  \label{tab:hp-best-mlp}
\end{table*}


\begin{table*}[t]
  \centering
  \small
  \setlength{\tabcolsep}{4pt}
  \renewcommand{\arraystretch}{0.92}
  \begin{tabular}{lrr}
    \toprule
    \textbf{Target} & \textbf{$k$ (\texttt{routing\_top\_k})} & \textbf{Val loss} \\
    \midrule
    \multicolumn{3}{l}{\textbf{Source: Llama-3.2-1B}} \\
    Llama-3.2-3B & 3 & 0.1384 \\
    Llama-3.1-8B & 3 & 0.1721 \\
    Qwen2.5-1.5B & 10 & 5.4011 \\
    Qwen2.5-3B & 10 & 6.6036 \\
    Qwen2.5-7B & 5 & 5.3680 \\
    Gemma-2-2B & 5 & 33.8884 \\
    \midrule
    \multicolumn{3}{l}{\textbf{Source: Llama-3.2-3B}} \\
    Llama-3.2-1B & 3 & 0.2571 \\
    Llama-3.1-8B & 3 & 0.1617 \\
    Qwen2.5-1.5B & 3 & 5.4139 \\
    Qwen2.5-3B & 5 & 6.5828 \\
    Qwen2.5-7B & 10 & 5.3624 \\
    Gemma-2-2B & 10 & 33.8854 \\
    \midrule
    \multicolumn{3}{l}{\textbf{Source: Llama-3.1-8B}} \\
    Llama-3.2-1B & 3 & 0.2566 \\
    Llama-3.2-3B & 3 & 0.1274 \\
    Qwen2.5-1.5B & 3 & 5.3795 \\
    Qwen2.5-3B & 3 & 6.4682 \\
    Qwen2.5-7B & 10 & 5.2887 \\
    Gemma-2-2B & 3 & 33.8018 \\
    \midrule
    \multicolumn{3}{l}{\textbf{Source: Qwen2.5-1.5B}} \\
    Llama-3.2-1B & 10 & 0.2957 \\
    Llama-3.2-3B & 10 & 0.1575 \\
    Llama-3.1-8B & 10 & 0.1959 \\
    Qwen2.5-3B & 3 & 5.8200 \\
    Qwen2.5-7B & 10 & 5.2962 \\
    Gemma-2-2B & 10 & 35.1998 \\
    \midrule
    \multicolumn{3}{l}{\textbf{Source: Qwen2.5-3B}} \\
    Llama-3.2-1B & 10 & 0.2935 \\
    Llama-3.2-3B & 10 & 0.1564 \\
    Llama-3.1-8B & 10 & 0.1940 \\
    Qwen2.5-1.5B & 3 & 4.6408 \\
    Qwen2.5-7B & 10 & 5.1453 \\
    Gemma-2-2B & 10 & 35.1838 \\
    \midrule
    \multicolumn{3}{l}{\textbf{Source: Qwen2.5-7B}} \\
    Llama-3.2-1B & 3 & 0.2792 \\
    Llama-3.2-3B & 3 & 0.1498 \\
    Llama-3.1-8B & 3 & 0.1851 \\
    Qwen2.5-1.5B & 3 & 5.0661 \\
    Qwen2.5-3B & 3 & 6.0459 \\
    Gemma-2-2B & 10 & 33.8265 \\
    \midrule
    \multicolumn{3}{l}{\textbf{Source: Gemma-2-2B}} \\
    Llama-3.2-1B & 5 & 0.2811 \\
    Llama-3.2-3B & 5 & 0.1502 \\
    Llama-3.1-8B & 5 & 0.1860 \\
    Qwen2.5-1.5B & 5 & 5.3062 \\
    Qwen2.5-3B & 5 & 6.5408 \\
    Qwen2.5-7B & 5 & 5.3828 \\
    \bottomrule
  \end{tabular}
  \caption{Best \textbf{HOT} routing-$k$ per pair (min.\ validation loss); $k$ caps how many donor layers each receiver layer routes from, swept over $\{3,5,10\}$.}
  \label{tab:hp-best-hotnorm}
\end{table*}

\paragraph{Additional training details.}
\label{sec:appendix-training}
We use the AdamW optimizer \cite{kingma2014adam}
(weight decay $0$) under a cosine schedule with $10\%$
linear warm-up, with early stopping (patience
$5$). 
selected per pair by the sweeps above.

\paragraph{The hidden state we use.} Every alignment map is fit on each model's last-token residual-stream state at every layer, extracted identically for donor and receiver
(chat-templated prompt, one forward pass, frozen model).

\subsection{Models Description}
\label{sec:appendix-models}

\paragraph{Models.} \Cref{tab:roster} lists the seven instruction-tuned LLMs; the groups in \Cref{tab:main-metrics} are as follows.



\begin{table}[t]\centering
\setlength{\tabcolsep}{4pt}
\begin{tabular}{@{}llrr@{}}
\toprule
\textbf{Model} & \textbf{Family} & \textbf{Layers} & \textbf{Hidden} \\
\midrule
Llama-3.2-1B & Llama & 16 & 2048 \\
Gemma-2-2B   & Gemma & 26 & 2304 \\
Qwen2.5-1.5B & Qwen  & 28 & 1536 \\
Llama-3.2-3B & Llama & 28 & 3072 \\
Qwen2.5-7B   & Qwen  & 28 & 3584 \\
Llama-3.1-8B & Llama & 32 & 4096 \\
Qwen2.5-3B   & Qwen  & 36 & 2048 \\
\bottomrule
\end{tabular}
\caption{The seven instruction-tuned LLMs used, with their features.}
\label{tab:roster}
\end{table}

\begingroup\small
\begin{itemize}\setlength{\itemsep}{4pt}
\item \textbf{Bigger$\to$Smaller (deeper donor)} --- 18 directional pairs: Qwen2.5-1.5B (28) $\to$\allowbreak\ Llama-3.2-1B (16), Qwen2.5-1.5B (28) $\to$\allowbreak\ Gemma-2-2B (26), Gemma-2-2B (26) $\to$\allowbreak\ Llama-3.2-1B (16), Qwen2.5-3B (36) $\to$\allowbreak\ Llama-3.2-1B (16), Qwen2.5-3B (36) $\to$\allowbreak\ Qwen2.5-1.5B (28), Qwen2.5-3B (36) $\to$\allowbreak\ Gemma-2-2B (26), Qwen2.5-3B (36) $\to$\allowbreak\ Llama-3.2-3B (28), Qwen2.5-3B (36) $\to$\allowbreak\ Qwen2.5-7B (28), Qwen2.5-3B (36) $\to$\allowbreak\ Llama-3.1-8B (32), Llama-3.2-3B (28) $\to$\allowbreak\ Llama-3.2-1B (16), Llama-3.2-3B (28) $\to$\allowbreak\ Gemma-2-2B (26), Qwen2.5-7B (28) $\to$\allowbreak\ Llama-3.2-1B (16), Qwen2.5-7B (28) $\to$\allowbreak\ Gemma-2-2B (26), Llama-3.1-8B (32) $\to$\allowbreak\ Llama-3.2-1B (16), Llama-3.1-8B (32) $\to$\allowbreak\ Qwen2.5-1.5B (28), Llama-3.1-8B (32) $\to$\allowbreak\ Gemma-2-2B (26), Llama-3.1-8B (32) $\to$\allowbreak\ Llama-3.2-3B (28), Llama-3.1-8B (32) $\to$\allowbreak\ Qwen2.5-7B (28).
\item \textbf{Smaller$\to$Bigger (shallower donor)} --- 18 directional pairs: Llama-3.2-1B (16) $\to$\allowbreak\ Qwen2.5-1.5B (28), Llama-3.2-1B (16) $\to$\allowbreak\ Gemma-2-2B (26), Llama-3.2-1B (16) $\to$\allowbreak\ Qwen2.5-3B (36), Llama-3.2-1B (16) $\to$\allowbreak\ Llama-3.2-3B (28), Llama-3.2-1B (16) $\to$\allowbreak\ Qwen2.5-7B (28), Llama-3.2-1B (16) $\to$\allowbreak\ Llama-3.1-8B (32), Qwen2.5-1.5B (28) $\to$\allowbreak\ Qwen2.5-3B (36), Qwen2.5-1.5B (28) $\to$\allowbreak\ Llama-3.1-8B (32), Gemma-2-2B (26) $\to$\allowbreak\ Qwen2.5-1.5B (28), Gemma-2-2B (26) $\to$\allowbreak\ Qwen2.5-3B (36), Gemma-2-2B (26) $\to$\allowbreak\ Llama-3.2-3B (28), Gemma-2-2B (26) $\to$\allowbreak\ Qwen2.5-7B (28), Gemma-2-2B (26) $\to$\allowbreak\ Llama-3.1-8B (32), Llama-3.2-3B (28) $\to$\allowbreak\ Qwen2.5-3B (36), Llama-3.2-3B (28) $\to$\allowbreak\ Llama-3.1-8B (32), Qwen2.5-7B (28) $\to$\allowbreak\ Qwen2.5-3B (36), Qwen2.5-7B (28) $\to$\allowbreak\ Llama-3.1-8B (32), Llama-3.1-8B (32) $\to$\allowbreak\ Qwen2.5-3B (36).
\item \textbf{Same-size (equal depth)} --- 6 directional pairs: Qwen2.5-1.5B (28) $\to$\allowbreak\ Llama-3.2-3B (28), Qwen2.5-1.5B (28) $\to$\allowbreak\ Qwen2.5-7B (28), Llama-3.2-3B (28) $\to$\allowbreak\ Qwen2.5-1.5B (28), Llama-3.2-3B (28) $\to$\allowbreak\ Qwen2.5-7B (28), Qwen2.5-7B (28) $\to$\allowbreak\ Qwen2.5-1.5B (28), Qwen2.5-7B (28) $\to$\allowbreak\ Llama-3.2-3B (28).
\end{itemize}
\endgroup

\subsection{Dataset Description}
\label{sec:appendix-datasets}

\paragraph{Datasets.}
We use six data sources, with separate roles in alignment training, probing,
steering-vector construction, and final steering evaluation:
\begin{itemize}\setlength{\itemsep}{3pt}

\item \textbf{LIMA}~\citep{zhou2023lima} is a curated instruction-following
corpus of $1{,}000$ diverse prompt--response demonstrations, covering
factual questions, advice, planning, creative writing, and other
instruction types. We use only the prompts: the alignment
train, validation, and test activations are extracted from disjoint subsets
of LIMA prompts. No task labels or LIMA responses are used to train the
alignment maps.

\item \textbf{TriviaQA}~\citep{joshi-etal-2017-triviaqa} is a large-scale factual
question-answering dataset containing trivia questions, reference answers,
and supporting evidence documents. We use the questions and reference
answers to construct the probing task: each model-generated response is
assigned a binary correctness label, which is used to train and evaluate
the layerwise correctness probes.

\item \textbf{AdvBench}~\citep{zou2023universal} is a collection of harmful
behaviors formulated as instructions, designed to test whether a
safety-aligned model attempts to comply with a prohibited request. We use
AdvBench only as a held-out benchmark for evaluating refusal-suppression
steering.

\item \textbf{HarmBench}~\citep{mazeika2024harmbench} is a standardized
red-teaming benchmark covering a broad range of harmful behaviors and
misuse categories. We use its behavior prompts as a second held-out
evaluation set for measuring the effect of transferred steering
interventions.

\item \textbf{JailbreakBench}~\citep{chao2024jailbreakbench} contains $100$
unique and realizable text-based misuse behaviors, annotated by misuse
category and source. We use these behaviors as the third held-out steering
benchmark.

\item \textbf{Refusal-direction prompts}~\citep{arditi2024refusal} are the
harmful and harmless prompt splits released with the
\texttt{refusal direction} implementation. For each model, we use the
training prompts to construct a contrastive refusal direction at every
layer and the validation prompts to select the intervention layer and
coefficient. These prompts are used only for steering-vector construction
and validation and are kept disjoint from the AdvBench, HarmBench, and
JailbreakBench prompts used for final evaluation.

\end{itemize}

\subsection{Additional Results}
\label{sec:appendix-additional}
In the following, we report the full breakdown of the results and include two additional variants of \ourmethod as further ablations.

\paragraph{\ourmethod without \texttt{FF}.} This variant removes the \texttt{FF} component from \ourmethod.

\paragraph{\ourmethod with a fixed average map.} We compute an average map using a held-out validation set and keep it fixed during evaluation.

\subsubsection{Retrival-Based Metrics per Model Pair}
\label{sec:appendix-metrics-raw}
In \Cref{tab:pp-recall1}, \Cref{tab:pp-jaccard5}, \Cref{tab:pp-precision1}, \Cref{tab:pp-tj5}, we report the pairwise breakdown of the results.


\onecolumn
\begingroup\fontsize{4.6pt}{5.6pt}\selectfont\setlength{\tabcolsep}{0.6pt}\renewcommand{\arraystretch}{1.1}
\setlength\LTleft{0pt}\setlength\LTright{0pt}\begin{longtable}{@{\extracolsep{\fill}}lccccccccccc}
\caption{Token-Jaccard@5 ($\uparrow$): full per-pair results, seed mean\,$\pm$\,std. Best \textbf{bold}, second \underline{underlined}; grouped by source (donor) LLM.}\label{tab:pp-tj5}\\
\toprule
Target & \textbf{Term} & \textbf{Rel-dep} & \textbf{CKA} & \textbf{MATCHA} & \textbf{MATCHA-K} & \textbf{MATCHA-FL} & \textbf{MATCHA-avg} & \textbf{MATCHA-no FF} & \textbf{CmdV} & \textbf{HOT} & \textbf{MLP} \\
\midrule\endfirsthead
\multicolumn{12}{c}{\tablename\ \thetable{} -- continued}\\ \toprule
Target & \textbf{Term} & \textbf{Rel-dep} & \textbf{CKA} & \textbf{MATCHA} & \textbf{MATCHA-K} & \textbf{MATCHA-FL} & \textbf{MATCHA-avg} & \textbf{MATCHA-no FF} & \textbf{CmdV} & \textbf{HOT} & \textbf{MLP} \\
\midrule\endhead
\bottomrule\endfoot
\multicolumn{12}{l}{\textbf{Source: Llama-3.2-1B}} \\
Llama-3.2-3B & 0.453\,{$\scriptscriptstyle\pm0.008$} & 0.455\,{$\scriptscriptstyle\pm0.002$} & 0.327\,{$\scriptscriptstyle\pm0.002$} & \underline{0.455}\,{$\scriptscriptstyle\pm0.002$} & 0.427\,{$\scriptscriptstyle\pm0.005$} & 0.411\,{$\scriptscriptstyle\pm0.006$} & \textbf{0.457}\,{$\scriptscriptstyle\pm0.003$} & 0.433\,{$\scriptscriptstyle\pm0.001$} & 0.431 & 0.188 & 0.418\,{$\scriptscriptstyle\pm0.003$} \\
Llama-3.1-8B & \underline{0.461}\,{$\scriptscriptstyle\pm0.003$} & 0.442\,{$\scriptscriptstyle\pm0.008$} & 0.259\,{$\scriptscriptstyle\pm0.004$} & 0.461\,{$\scriptscriptstyle\pm0.002$} & 0.413\,{$\scriptscriptstyle\pm0.007$} & 0.410\,{$\scriptscriptstyle\pm0.012$} & \textbf{0.464}\,{$\scriptscriptstyle\pm0.002$} & 0.413\,{$\scriptscriptstyle\pm0.005$} & 0.393 & 0.182 & 0.400\,{$\scriptscriptstyle\pm0.001$} \\
Qwen2.5-1.5B & 0.474\,{$\scriptscriptstyle\pm0.008$} & 0.468\,{$\scriptscriptstyle\pm0.004$} & 0.378\,{$\scriptscriptstyle\pm0.003$} & \underline{0.493}\,{$\scriptscriptstyle\pm0.009$} & 0.490\,{$\scriptscriptstyle\pm0.004$} & 0.484\,{$\scriptscriptstyle\pm0.003$} & \textbf{0.495}\,{$\scriptscriptstyle\pm0.002$} & 0.443\,{$\scriptscriptstyle\pm0.005$} & 0.315 & 0.020 & 0.466\,{$\scriptscriptstyle\pm0.009$} \\
Qwen2.5-3B & 0.431\,{$\scriptscriptstyle\pm0.007$} & 0.435\,{$\scriptscriptstyle\pm0.001$} & 0.339\,{$\scriptscriptstyle\pm0.006$} & \textbf{0.455}\,{$\scriptscriptstyle\pm0.002$} & 0.428\,{$\scriptscriptstyle\pm0.002$} & 0.431\,{$\scriptscriptstyle\pm0.007$} & \underline{0.455}\,{$\scriptscriptstyle\pm0.002$} & 0.421\,{$\scriptscriptstyle\pm0.009$} & 0.317 & 0.052 & 0.402\,{$\scriptscriptstyle\pm0.001$} \\
Qwen2.5-7B & \textbf{0.436}\,{$\scriptscriptstyle\pm0.003$} & \underline{0.435}\,{$\scriptscriptstyle\pm0.004$} & 0.322\,{$\scriptscriptstyle\pm0.005$} & 0.428\,{$\scriptscriptstyle\pm0.004$} & 0.422\,{$\scriptscriptstyle\pm0.004$} & 0.425\,{$\scriptscriptstyle\pm0.007$} & 0.434\,{$\scriptscriptstyle\pm0.003$} & 0.412\,{$\scriptscriptstyle\pm0.008$} & 0.297 & 0.104 & 0.416\,{$\scriptscriptstyle\pm0.008$} \\
gemma-2-2b & 0.280\,{$\scriptscriptstyle\pm0.006$} & 0.286\,{$\scriptscriptstyle\pm0.002$} & 0.101\,{$\scriptscriptstyle\pm0.004$} & \textbf{0.311}\,{$\scriptscriptstyle\pm0.013$} & 0.271\,{$\scriptscriptstyle\pm0.005$} & \underline{0.294}\,{$\scriptscriptstyle\pm0.024$} & 0.147\,{$\scriptscriptstyle\pm0.148$} & 0.227\,{$\scriptscriptstyle\pm0.005$} & 0.016 & 0.028 & 0.292\,{$\scriptscriptstyle\pm0.005$} \\
\midrule
\multicolumn{12}{l}{\textbf{Source: Llama-3.2-3B}} \\
Llama-3.2-1B & 0.485\,{$\scriptscriptstyle\pm0.003$} & 0.491\,{$\scriptscriptstyle\pm0.004$} & 0.332\,{$\scriptscriptstyle\pm0.003$} & \underline{0.493}\,{$\scriptscriptstyle\pm0.002$} & 0.468\,{$\scriptscriptstyle\pm0.002$} & 0.450\,{$\scriptscriptstyle\pm0.006$} & \textbf{0.504}\,{$\scriptscriptstyle\pm0.006$} & 0.493\,{$\scriptscriptstyle\pm0.005$} & 0.489 & 0.152 & 0.392\,{$\scriptscriptstyle\pm0.009$} \\
Llama-3.1-8B & 0.525\,{$\scriptscriptstyle\pm0.001$} & \textbf{0.529}\,{$\scriptscriptstyle\pm0.005$} & 0.205\,{$\scriptscriptstyle\pm0.009$} & 0.523\,{$\scriptscriptstyle\pm0.004$} & 0.471\,{$\scriptscriptstyle\pm0.011$} & 0.451\,{$\scriptscriptstyle\pm0.005$} & 0.528\,{$\scriptscriptstyle\pm0.002$} & 0.501\,{$\scriptscriptstyle\pm0.003$} & \underline{0.528} & 0.229 & 0.383\,{$\scriptscriptstyle\pm0.002$} \\
Qwen2.5-1.5B & 0.459\,{$\scriptscriptstyle\pm0.001$} & 0.456\,{$\scriptscriptstyle\pm0.006$} & 0.355\,{$\scriptscriptstyle\pm0.003$} & \underline{0.482}\,{$\scriptscriptstyle\pm0.008$} & 0.482\,{$\scriptscriptstyle\pm0.007$} & 0.463\,{$\scriptscriptstyle\pm0.005$} & \textbf{0.485}\,{$\scriptscriptstyle\pm0.003$} & 0.436\,{$\scriptscriptstyle\pm0.004$} & 0.333 & 0.027 & 0.450\,{$\scriptscriptstyle\pm0.007$} \\
Qwen2.5-3B & 0.442\,{$\scriptscriptstyle\pm0.006$} & 0.441\,{$\scriptscriptstyle\pm0.003$} & 0.324\,{$\scriptscriptstyle\pm0.006$} & \textbf{0.471}\,{$\scriptscriptstyle\pm0.003$} & 0.435\,{$\scriptscriptstyle\pm0.005$} & 0.433\,{$\scriptscriptstyle\pm0.003$} & \underline{0.467}\,{$\scriptscriptstyle\pm0.001$} & 0.433\,{$\scriptscriptstyle\pm0.008$} & 0.374 & 0.072 & 0.380\,{$\scriptscriptstyle\pm0.004$} \\
Qwen2.5-7B & \underline{0.446}\,{$\scriptscriptstyle\pm0.004$} & 0.444\,{$\scriptscriptstyle\pm0.002$} & 0.298\,{$\scriptscriptstyle\pm0.016$} & \textbf{0.449}\,{$\scriptscriptstyle\pm0.004$} & 0.441\,{$\scriptscriptstyle\pm0.002$} & 0.431\,{$\scriptscriptstyle\pm0.010$} & 0.393\,{$\scriptscriptstyle\pm0.092$} & 0.422\,{$\scriptscriptstyle\pm0.011$} & 0.409 & 0.103 & 0.389\,{$\scriptscriptstyle\pm0.005$} \\
gemma-2-2b & 0.276\,{$\scriptscriptstyle\pm0.004$} & 0.280\,{$\scriptscriptstyle\pm0.012$} & 0.099\,{$\scriptscriptstyle\pm0.003$} & \textbf{0.315}\,{$\scriptscriptstyle\pm0.007$} & 0.263\,{$\scriptscriptstyle\pm0.004$} & 0.279\,{$\scriptscriptstyle\pm0.023$} & 0.232\,{$\scriptscriptstyle\pm0.142$} & 0.224\,{$\scriptscriptstyle\pm0.015$} & 0.021 & 0.035 & \underline{0.282}\,{$\scriptscriptstyle\pm0.009$} \\
\midrule
\multicolumn{12}{l}{\textbf{Source: Llama-3.1-8B}} \\
Llama-3.2-1B & 0.481\,{$\scriptscriptstyle\pm0.003$} & 0.481\,{$\scriptscriptstyle\pm0.001$} & 0.361\,{$\scriptscriptstyle\pm0.002$} & \underline{0.494}\,{$\scriptscriptstyle\pm0.003$} & 0.461\,{$\scriptscriptstyle\pm0.006$} & 0.442\,{$\scriptscriptstyle\pm0.006$} & \textbf{0.494}\,{$\scriptscriptstyle\pm0.002$} & 0.491\,{$\scriptscriptstyle\pm0.004$} & 0.476 & 0.145 & 0.376\,{$\scriptscriptstyle\pm0.007$} \\
Llama-3.2-3B & 0.520\,{$\scriptscriptstyle\pm0.004$} & \underline{0.523}\,{$\scriptscriptstyle\pm0.001$} & 0.355\,{$\scriptscriptstyle\pm0.002$} & 0.518\,{$\scriptscriptstyle\pm0.003$} & 0.488\,{$\scriptscriptstyle\pm0.002$} & 0.455\,{$\scriptscriptstyle\pm0.004$} & 0.522\,{$\scriptscriptstyle\pm0.004$} & 0.501\,{$\scriptscriptstyle\pm0.003$} & \textbf{0.531} & 0.216 & 0.371\,{$\scriptscriptstyle\pm0.006$} \\
Qwen2.5-1.5B & 0.458\,{$\scriptscriptstyle\pm0.002$} & 0.471\,{$\scriptscriptstyle\pm0.004$} & 0.374\,{$\scriptscriptstyle\pm0.005$} & \underline{0.488}\,{$\scriptscriptstyle\pm0.001$} & 0.473\,{$\scriptscriptstyle\pm0.014$} & \textbf{0.490}\,{$\scriptscriptstyle\pm0.008$} & 0.487\,{$\scriptscriptstyle\pm0.003$} & 0.459\,{$\scriptscriptstyle\pm0.008$} & 0.365 & 0.025 & 0.435\,{$\scriptscriptstyle\pm0.005$} \\
Qwen2.5-3B & 0.445\,{$\scriptscriptstyle\pm0.003$} & 0.446\,{$\scriptscriptstyle\pm0.005$} & 0.344\,{$\scriptscriptstyle\pm0.007$} & \textbf{0.477}\,{$\scriptscriptstyle\pm0.001$} & 0.441\,{$\scriptscriptstyle\pm0.004$} & 0.439\,{$\scriptscriptstyle\pm0.009$} & \underline{0.472}\,{$\scriptscriptstyle\pm0.007$} & 0.450\,{$\scriptscriptstyle\pm0.002$} & 0.389 & 0.078 & 0.360\,{$\scriptscriptstyle\pm0.006$} \\
Qwen2.5-7B & 0.465\,{$\scriptscriptstyle\pm0.001$} & 0.466\,{$\scriptscriptstyle\pm0.004$} & 0.283\,{$\scriptscriptstyle\pm0.015$} & \textbf{0.478}\,{$\scriptscriptstyle\pm0.004$} & 0.449\,{$\scriptscriptstyle\pm0.002$} & 0.449\,{$\scriptscriptstyle\pm0.007$} & 0.380\,{$\scriptscriptstyle\pm0.177$} & \underline{0.467}\,{$\scriptscriptstyle\pm0.002$} & 0.423 & 0.102 & 0.389\,{$\scriptscriptstyle\pm0.003$} \\
gemma-2-2b & 0.285\,{$\scriptscriptstyle\pm0.009$} & 0.286\,{$\scriptscriptstyle\pm0.007$} & 0.146\,{$\scriptscriptstyle\pm0.002$} & \textbf{0.323}\,{$\scriptscriptstyle\pm0.009$} & 0.264\,{$\scriptscriptstyle\pm0.006$} & \underline{0.287}\,{$\scriptscriptstyle\pm0.009$} & 0.160\,{$\scriptscriptstyle\pm0.143$} & 0.246\,{$\scriptscriptstyle\pm0.010$} & 0.021 & 0.025 & 0.267\,{$\scriptscriptstyle\pm0.007$} \\
\midrule
\multicolumn{12}{l}{\textbf{Source: Qwen2.5-1.5B}} \\
Llama-3.2-1B & \underline{0.418}\,{$\scriptscriptstyle\pm0.004$} & 0.414\,{$\scriptscriptstyle\pm0.002$} & 0.374\,{$\scriptscriptstyle\pm0.006$} & 0.414\,{$\scriptscriptstyle\pm0.009$} & \textbf{0.420}\,{$\scriptscriptstyle\pm0.006$} & 0.407\,{$\scriptscriptstyle\pm0.011$} & 0.404\,{$\scriptscriptstyle\pm0.024$} & 0.402\,{$\scriptscriptstyle\pm0.006$} & 0.316 & 0.104 & 0.397\,{$\scriptscriptstyle\pm0.003$} \\
Llama-3.2-3B & \textbf{0.393}\,{$\scriptscriptstyle\pm0.007$} & \underline{0.392}\,{$\scriptscriptstyle\pm0.007$} & 0.333\,{$\scriptscriptstyle\pm0.006$} & 0.387\,{$\scriptscriptstyle\pm0.004$} & 0.379\,{$\scriptscriptstyle\pm0.004$} & 0.376\,{$\scriptscriptstyle\pm0.001$} & 0.331\,{$\scriptscriptstyle\pm0.054$} & 0.370\,{$\scriptscriptstyle\pm0.005$} & 0.199 & 0.122 & 0.375\,{$\scriptscriptstyle\pm0.001$} \\
Llama-3.1-8B & 0.388\,{$\scriptscriptstyle\pm0.001$} & 0.387\,{$\scriptscriptstyle\pm0.003$} & 0.255\,{$\scriptscriptstyle\pm0.021$} & \underline{0.388}\,{$\scriptscriptstyle\pm0.003$} & 0.360\,{$\scriptscriptstyle\pm0.004$} & 0.364\,{$\scriptscriptstyle\pm0.007$} & \textbf{0.391}\,{$\scriptscriptstyle\pm0.003$} & 0.362\,{$\scriptscriptstyle\pm0.003$} & 0.217 & 0.102 & 0.372\,{$\scriptscriptstyle\pm0.005$} \\
Qwen2.5-3B & 0.470\,{$\scriptscriptstyle\pm0.000$} & 0.463\,{$\scriptscriptstyle\pm0.002$} & 0.410\,{$\scriptscriptstyle\pm0.004$} & \underline{0.472}\,{$\scriptscriptstyle\pm0.004$} & 0.458\,{$\scriptscriptstyle\pm0.008$} & 0.446\,{$\scriptscriptstyle\pm0.003$} & \textbf{0.480}\,{$\scriptscriptstyle\pm0.004$} & 0.413\,{$\scriptscriptstyle\pm0.006$} & 0.229 & 0.070 & 0.417\,{$\scriptscriptstyle\pm0.003$} \\
Qwen2.5-7B & \textbf{0.441}\,{$\scriptscriptstyle\pm0.001$} & 0.438\,{$\scriptscriptstyle\pm0.004$} & 0.381\,{$\scriptscriptstyle\pm0.011$} & \underline{0.439}\,{$\scriptscriptstyle\pm0.004$} & 0.431\,{$\scriptscriptstyle\pm0.003$} & 0.425\,{$\scriptscriptstyle\pm0.006$} & 0.437\,{$\scriptscriptstyle\pm0.004$} & 0.427\,{$\scriptscriptstyle\pm0.003$} & 0.227 & 0.093 & 0.413\,{$\scriptscriptstyle\pm0.003$} \\
gemma-2-2b & 0.273\,{$\scriptscriptstyle\pm0.008$} & 0.272\,{$\scriptscriptstyle\pm0.006$} & 0.135\,{$\scriptscriptstyle\pm0.003$} & \textbf{0.296}\,{$\scriptscriptstyle\pm0.008$} & 0.278\,{$\scriptscriptstyle\pm0.003$} & \underline{0.291}\,{$\scriptscriptstyle\pm0.008$} & 0.210\,{$\scriptscriptstyle\pm0.151$} & 0.207\,{$\scriptscriptstyle\pm0.018$} & 0.020 & 0.037 & 0.282\,{$\scriptscriptstyle\pm0.003$} \\
\midrule
\multicolumn{12}{l}{\textbf{Source: Qwen2.5-3B}} \\
Llama-3.2-1B & 0.424\,{$\scriptscriptstyle\pm0.001$} & \textbf{0.434}\,{$\scriptscriptstyle\pm0.001$} & 0.316\,{$\scriptscriptstyle\pm0.004$} & \underline{0.430}\,{$\scriptscriptstyle\pm0.001$} & 0.425\,{$\scriptscriptstyle\pm0.002$} & 0.412\,{$\scriptscriptstyle\pm0.002$} & 0.420\,{$\scriptscriptstyle\pm0.023$} & 0.417\,{$\scriptscriptstyle\pm0.006$} & 0.363 & 0.108 & 0.386\,{$\scriptscriptstyle\pm0.005$} \\
Llama-3.2-3B & \underline{0.414}\,{$\scriptscriptstyle\pm0.003$} & 0.408\,{$\scriptscriptstyle\pm0.004$} & 0.339\,{$\scriptscriptstyle\pm0.004$} & \textbf{0.415}\,{$\scriptscriptstyle\pm0.008$} & 0.391\,{$\scriptscriptstyle\pm0.007$} & 0.391\,{$\scriptscriptstyle\pm0.010$} & 0.413\,{$\scriptscriptstyle\pm0.005$} & 0.393\,{$\scriptscriptstyle\pm0.002$} & 0.350 & 0.146 & 0.376\,{$\scriptscriptstyle\pm0.001$} \\
Llama-3.1-8B & 0.415\,{$\scriptscriptstyle\pm0.005$} & 0.404\,{$\scriptscriptstyle\pm0.002$} & 0.229\,{$\scriptscriptstyle\pm0.023$} & \underline{0.416}\,{$\scriptscriptstyle\pm0.003$} & 0.381\,{$\scriptscriptstyle\pm0.004$} & 0.374\,{$\scriptscriptstyle\pm0.002$} & \textbf{0.417}\,{$\scriptscriptstyle\pm0.001$} & 0.381\,{$\scriptscriptstyle\pm0.003$} & 0.351 & 0.137 & 0.375\,{$\scriptscriptstyle\pm0.007$} \\
Qwen2.5-1.5B & 0.511\,{$\scriptscriptstyle\pm0.003$} & 0.526\,{$\scriptscriptstyle\pm0.001$} & 0.491\,{$\scriptscriptstyle\pm0.005$} & \textbf{0.529}\,{$\scriptscriptstyle\pm0.004$} & \underline{0.528}\,{$\scriptscriptstyle\pm0.009$} & 0.518\,{$\scriptscriptstyle\pm0.011$} & 0.516\,{$\scriptscriptstyle\pm0.025$} & 0.435\,{$\scriptscriptstyle\pm0.029$} & 0.363 & 0.050 & 0.470\,{$\scriptscriptstyle\pm0.005$} \\
Qwen2.5-7B & \underline{0.481}\,{$\scriptscriptstyle\pm0.004$} & 0.479\,{$\scriptscriptstyle\pm0.002$} & 0.322\,{$\scriptscriptstyle\pm0.004$} & \textbf{0.483}\,{$\scriptscriptstyle\pm0.004$} & 0.464\,{$\scriptscriptstyle\pm0.001$} & 0.459\,{$\scriptscriptstyle\pm0.008$} & 0.402\,{$\scriptscriptstyle\pm0.136$} & 0.456\,{$\scriptscriptstyle\pm0.013$} & 0.385 & 0.130 & 0.412\,{$\scriptscriptstyle\pm0.009$} \\
gemma-2-2b & 0.288\,{$\scriptscriptstyle\pm0.006$} & \underline{0.302}\,{$\scriptscriptstyle\pm0.002$} & 0.145\,{$\scriptscriptstyle\pm0.003$} & \textbf{0.314}\,{$\scriptscriptstyle\pm0.006$} & 0.282\,{$\scriptscriptstyle\pm0.006$} & 0.294\,{$\scriptscriptstyle\pm0.010$} & 0.058\,{$\scriptscriptstyle\pm0.015$} & 0.193\,{$\scriptscriptstyle\pm0.009$} & 0.018 & 0.044 & 0.282\,{$\scriptscriptstyle\pm0.006$} \\
\midrule
\multicolumn{12}{l}{\textbf{Source: Qwen2.5-7B}} \\
Llama-3.2-1B & 0.446\,{$\scriptscriptstyle\pm0.005$} & 0.441\,{$\scriptscriptstyle\pm0.006$} & 0.319\,{$\scriptscriptstyle\pm0.020$} & \underline{0.446}\,{$\scriptscriptstyle\pm0.010$} & 0.428\,{$\scriptscriptstyle\pm0.005$} & 0.421\,{$\scriptscriptstyle\pm0.002$} & \textbf{0.449}\,{$\scriptscriptstyle\pm0.003$} & 0.434\,{$\scriptscriptstyle\pm0.001$} & 0.408 & 0.102 & 0.381\,{$\scriptscriptstyle\pm0.008$} \\
Llama-3.2-3B & 0.441\,{$\scriptscriptstyle\pm0.001$} & 0.439\,{$\scriptscriptstyle\pm0.001$} & 0.355\,{$\scriptscriptstyle\pm0.004$} & \underline{0.442}\,{$\scriptscriptstyle\pm0.003$} & 0.423\,{$\scriptscriptstyle\pm0.003$} & 0.407\,{$\scriptscriptstyle\pm0.009$} & \textbf{0.443}\,{$\scriptscriptstyle\pm0.003$} & 0.410\,{$\scriptscriptstyle\pm0.004$} & 0.359 & 0.140 & 0.371\,{$\scriptscriptstyle\pm0.007$} \\
Llama-3.1-8B & 0.435\,{$\scriptscriptstyle\pm0.008$} & 0.433\,{$\scriptscriptstyle\pm0.006$} & 0.256\,{$\scriptscriptstyle\pm0.008$} & \textbf{0.443}\,{$\scriptscriptstyle\pm0.005$} & 0.400\,{$\scriptscriptstyle\pm0.003$} & 0.393\,{$\scriptscriptstyle\pm0.010$} & \underline{0.441}\,{$\scriptscriptstyle\pm0.003$} & 0.402\,{$\scriptscriptstyle\pm0.012$} & 0.374 & 0.122 & 0.367\,{$\scriptscriptstyle\pm0.003$} \\
Qwen2.5-1.5B & 0.516\,{$\scriptscriptstyle\pm0.003$} & 0.513\,{$\scriptscriptstyle\pm0.006$} & 0.443\,{$\scriptscriptstyle\pm0.008$} & \underline{0.528}\,{$\scriptscriptstyle\pm0.003$} & \textbf{0.530}\,{$\scriptscriptstyle\pm0.005$} & 0.527\,{$\scriptscriptstyle\pm0.001$} & 0.438\,{$\scriptscriptstyle\pm0.143$} & 0.498\,{$\scriptscriptstyle\pm0.015$} & 0.368 & 0.060 & 0.450\,{$\scriptscriptstyle\pm0.013$} \\
Qwen2.5-3B & 0.509\,{$\scriptscriptstyle\pm0.003$} & 0.504\,{$\scriptscriptstyle\pm0.003$} & 0.407\,{$\scriptscriptstyle\pm0.002$} & \underline{0.518}\,{$\scriptscriptstyle\pm0.006$} & 0.492\,{$\scriptscriptstyle\pm0.008$} & 0.469\,{$\scriptscriptstyle\pm0.013$} & \textbf{0.522}\,{$\scriptscriptstyle\pm0.008$} & 0.487\,{$\scriptscriptstyle\pm0.019$} & 0.388 & 0.123 & 0.393\,{$\scriptscriptstyle\pm0.007$} \\
gemma-2-2b & 0.288\,{$\scriptscriptstyle\pm0.001$} & \underline{0.298}\,{$\scriptscriptstyle\pm0.010$} & 0.169\,{$\scriptscriptstyle\pm0.009$} & \textbf{0.321}\,{$\scriptscriptstyle\pm0.002$} & 0.287\,{$\scriptscriptstyle\pm0.001$} & 0.296\,{$\scriptscriptstyle\pm0.012$} & 0.058\,{$\scriptscriptstyle\pm0.025$} & 0.248\,{$\scriptscriptstyle\pm0.028$} & 0.021 & 0.033 & 0.280\,{$\scriptscriptstyle\pm0.003$} \\
\midrule
\multicolumn{12}{l}{\textbf{Source: gemma-2-2b}} \\
Llama-3.2-1B & 0.404\,{$\scriptscriptstyle\pm0.000$} & 0.415\,{$\scriptscriptstyle\pm0.003$} & 0.304\,{$\scriptscriptstyle\pm0.008$} & \textbf{0.422}\,{$\scriptscriptstyle\pm0.007$} & 0.411\,{$\scriptscriptstyle\pm0.001$} & 0.417\,{$\scriptscriptstyle\pm0.006$} & \underline{0.419}\,{$\scriptscriptstyle\pm0.007$} & 0.392\,{$\scriptscriptstyle\pm0.005$} & 0.362 & 0.080 & 0.395\,{$\scriptscriptstyle\pm0.008$} \\
Llama-3.2-3B & 0.415\,{$\scriptscriptstyle\pm0.003$} & 0.413\,{$\scriptscriptstyle\pm0.001$} & 0.286\,{$\scriptscriptstyle\pm0.007$} & \textbf{0.426}\,{$\scriptscriptstyle\pm0.002$} & 0.406\,{$\scriptscriptstyle\pm0.004$} & 0.399\,{$\scriptscriptstyle\pm0.008$} & \underline{0.425}\,{$\scriptscriptstyle\pm0.003$} & 0.356\,{$\scriptscriptstyle\pm0.007$} & 0.258 & 0.121 & 0.387\,{$\scriptscriptstyle\pm0.002$} \\
Llama-3.1-8B & 0.403\,{$\scriptscriptstyle\pm0.003$} & 0.400\,{$\scriptscriptstyle\pm0.001$} & 0.274\,{$\scriptscriptstyle\pm0.007$} & \textbf{0.419}\,{$\scriptscriptstyle\pm0.005$} & 0.405\,{$\scriptscriptstyle\pm0.003$} & 0.385\,{$\scriptscriptstyle\pm0.007$} & \underline{0.417}\,{$\scriptscriptstyle\pm0.002$} & 0.319\,{$\scriptscriptstyle\pm0.002$} & 0.216 & 0.115 & 0.379\,{$\scriptscriptstyle\pm0.007$} \\
Qwen2.5-1.5B & 0.457\,{$\scriptscriptstyle\pm0.002$} & 0.462\,{$\scriptscriptstyle\pm0.004$} & 0.403\,{$\scriptscriptstyle\pm0.007$} & \textbf{0.474}\,{$\scriptscriptstyle\pm0.008$} & 0.467\,{$\scriptscriptstyle\pm0.002$} & 0.469\,{$\scriptscriptstyle\pm0.012$} & \underline{0.474}\,{$\scriptscriptstyle\pm0.008$} & 0.414\,{$\scriptscriptstyle\pm0.030$} & 0.040 & 0.029 & 0.443\,{$\scriptscriptstyle\pm0.006$} \\
Qwen2.5-3B & 0.424\,{$\scriptscriptstyle\pm0.008$} & 0.414\,{$\scriptscriptstyle\pm0.005$} & 0.324\,{$\scriptscriptstyle\pm0.002$} & \textbf{0.444}\,{$\scriptscriptstyle\pm0.002$} & 0.432\,{$\scriptscriptstyle\pm0.011$} & 0.420\,{$\scriptscriptstyle\pm0.003$} & \underline{0.444}\,{$\scriptscriptstyle\pm0.003$} & 0.364\,{$\scriptscriptstyle\pm0.013$} & 0.076 & 0.064 & 0.397\,{$\scriptscriptstyle\pm0.008$} \\
Qwen2.5-7B & 0.415\,{$\scriptscriptstyle\pm0.003$} & 0.417\,{$\scriptscriptstyle\pm0.004$} & 0.390\,{$\scriptscriptstyle\pm0.007$} & \textbf{0.423}\,{$\scriptscriptstyle\pm0.001$} & 0.405\,{$\scriptscriptstyle\pm0.003$} & \underline{0.419}\,{$\scriptscriptstyle\pm0.005$} & 0.367\,{$\scriptscriptstyle\pm0.096$} & 0.348\,{$\scriptscriptstyle\pm0.031$} & 0.201 & 0.114 & 0.395\,{$\scriptscriptstyle\pm0.003$} \\
\midrule\midrule
\textbf{\#\,1st-place wins} & \textbf{3} & \textbf{2} & \textbf{0} & \textbf{21} & \textbf{2} & \textbf{1} & \textbf{12} & \textbf{0} & \textbf{1} & \textbf{0} & \textbf{0} \\
\#\,2nd-place wins & 5 & 5 & 0 & 15 & 1 & 4 & 9 & 1 & 1 & 0 & 1 \\
\end{longtable}\endgroup
\twocolumn

\onecolumn
\begingroup\fontsize{4.6pt}{5.6pt}\selectfont\setlength{\tabcolsep}{0.6pt}\renewcommand{\arraystretch}{1.1}
\setlength\LTleft{0pt}\setlength\LTright{0pt}\begin{longtable}{@{\extracolsep{\fill}}lccccccccccc}
\caption{Recall@1 ($\uparrow$): full per-pair results, seed mean\,$\pm$\,std. Best \textbf{bold}, second \underline{underlined}; grouped by source (donor) LLM.}\label{tab:pp-recall1}\\
\toprule
Target & \textbf{Term} & \textbf{Rel-dep} & \textbf{CKA} & \textbf{MATCHA} & \textbf{MATCHA-K} & \textbf{MATCHA-FL} & \textbf{MATCHA-avg} & \textbf{MATCHA-no FF} & \textbf{CmdV} & \textbf{HOT} & \textbf{MLP} \\
\midrule\endfirsthead
\multicolumn{12}{c}{\tablename\ \thetable{} -- continued}\\ \toprule
Target & \textbf{Term} & \textbf{Rel-dep} & \textbf{CKA} & \textbf{MATCHA} & \textbf{MATCHA-K} & \textbf{MATCHA-FL} & \textbf{MATCHA-avg} & \textbf{MATCHA-no FF} & \textbf{CmdV} & \textbf{HOT} & \textbf{MLP} \\
\midrule\endhead
\bottomrule\endfoot
\multicolumn{12}{l}{\textbf{Source: Llama-3.2-1B}} \\
Llama-3.2-3B & 0.397\,{$\scriptscriptstyle\pm0.001$} & 0.783\,{$\scriptscriptstyle\pm0.001$} & 0.737\,{$\scriptscriptstyle\pm0.002$} & \underline{0.857}\,{$\scriptscriptstyle\pm0.003$} & 0.745\,{$\scriptscriptstyle\pm0.019$} & 0.758\,{$\scriptscriptstyle\pm0.005$} & 0.853\,{$\scriptscriptstyle\pm0.002$} & 0.744\,{$\scriptscriptstyle\pm0.009$} & \textbf{0.904} & 0.063 & 0.728\,{$\scriptscriptstyle\pm0.006$} \\
Llama-3.1-8B & 0.338\,{$\scriptscriptstyle\pm0.001$} & 0.800\,{$\scriptscriptstyle\pm0.001$} & 0.735\,{$\scriptscriptstyle\pm0.002$} & 0.862\,{$\scriptscriptstyle\pm0.002$} & 0.777\,{$\scriptscriptstyle\pm0.002$} & 0.779\,{$\scriptscriptstyle\pm0.002$} & \underline{0.863}\,{$\scriptscriptstyle\pm0.004$} & 0.730\,{$\scriptscriptstyle\pm0.028$} & \textbf{0.898} & 0.053 & 0.745\,{$\scriptscriptstyle\pm0.008$} \\
Qwen2.5-1.5B & 0.445\,{$\scriptscriptstyle\pm0.001$} & 0.612\,{$\scriptscriptstyle\pm0.001$} & 0.498\,{$\scriptscriptstyle\pm0.004$} & \underline{0.762}\,{$\scriptscriptstyle\pm0.006$} & 0.735\,{$\scriptscriptstyle\pm0.005$} & 0.679\,{$\scriptscriptstyle\pm0.014$} & \textbf{0.770}\,{$\scriptscriptstyle\pm0.002$} & 0.593\,{$\scriptscriptstyle\pm0.026$} & 0.500 & 0.037 & 0.610\,{$\scriptscriptstyle\pm0.007$} \\
Qwen2.5-3B & 0.330\,{$\scriptscriptstyle\pm0.001$} & 0.591\,{$\scriptscriptstyle\pm0.000$} & 0.435\,{$\scriptscriptstyle\pm0.001$} & \textbf{0.765}\,{$\scriptscriptstyle\pm0.007$} & 0.640\,{$\scriptscriptstyle\pm0.017$} & 0.630\,{$\scriptscriptstyle\pm0.023$} & \underline{0.758}\,{$\scriptscriptstyle\pm0.002$} & 0.488\,{$\scriptscriptstyle\pm0.039$} & 0.451 & 0.040 & 0.554\,{$\scriptscriptstyle\pm0.013$} \\
Qwen2.5-7B & 0.412\,{$\scriptscriptstyle\pm0.001$} & 0.668\,{$\scriptscriptstyle\pm0.001$} & 0.573\,{$\scriptscriptstyle\pm0.003$} & \underline{0.856}\,{$\scriptscriptstyle\pm0.003$} & 0.619\,{$\scriptscriptstyle\pm0.025$} & 0.693\,{$\scriptscriptstyle\pm0.009$} & \textbf{0.862}\,{$\scriptscriptstyle\pm0.002$} & 0.620\,{$\scriptscriptstyle\pm0.012$} & 0.557 & 0.046 & 0.632\,{$\scriptscriptstyle\pm0.008$} \\
gemma-2-2b & 0.393\,{$\scriptscriptstyle\pm0.002$} & 0.672\,{$\scriptscriptstyle\pm0.003$} & 0.532\,{$\scriptscriptstyle\pm0.005$} & \textbf{0.805}\,{$\scriptscriptstyle\pm0.005$} & \underline{0.746}\,{$\scriptscriptstyle\pm0.005$} & 0.739\,{$\scriptscriptstyle\pm0.005$} & 0.511\,{$\scriptscriptstyle\pm0.260$} & 0.710\,{$\scriptscriptstyle\pm0.021$} & 0.733 & 0.021 & 0.672\,{$\scriptscriptstyle\pm0.002$} \\
\midrule
\multicolumn{12}{l}{\textbf{Source: Llama-3.2-3B}} \\
Llama-3.2-1B & 0.773\,{$\scriptscriptstyle\pm0.002$} & 0.903\,{$\scriptscriptstyle\pm0.002$} & 0.740\,{$\scriptscriptstyle\pm0.001$} & 0.915\,{$\scriptscriptstyle\pm0.002$} & 0.740\,{$\scriptscriptstyle\pm0.021$} & 0.778\,{$\scriptscriptstyle\pm0.011$} & \underline{0.917}\,{$\scriptscriptstyle\pm0.002$} & 0.838\,{$\scriptscriptstyle\pm0.001$} & \textbf{0.955} & 0.115 & 0.557\,{$\scriptscriptstyle\pm0.013$} \\
Llama-3.1-8B & 0.777\,{$\scriptscriptstyle\pm0.001$} & 0.914\,{$\scriptscriptstyle\pm0.001$} & 0.783\,{$\scriptscriptstyle\pm0.004$} & 0.924\,{$\scriptscriptstyle\pm0.001$} & 0.838\,{$\scriptscriptstyle\pm0.007$} & 0.782\,{$\scriptscriptstyle\pm0.011$} & \underline{0.925}\,{$\scriptscriptstyle\pm0.004$} & 0.847\,{$\scriptscriptstyle\pm0.002$} & \textbf{0.969} & 0.089 & 0.556\,{$\scriptscriptstyle\pm0.008$} \\
Qwen2.5-1.5B & 0.644\,{$\scriptscriptstyle\pm0.001$} & 0.647\,{$\scriptscriptstyle\pm0.001$} & 0.345\,{$\scriptscriptstyle\pm0.003$} & \underline{0.797}\,{$\scriptscriptstyle\pm0.008$} & 0.675\,{$\scriptscriptstyle\pm0.004$} & 0.615\,{$\scriptscriptstyle\pm0.022$} & \textbf{0.798}\,{$\scriptscriptstyle\pm0.001$} & 0.642\,{$\scriptscriptstyle\pm0.005$} & 0.541 & 0.033 & 0.445\,{$\scriptscriptstyle\pm0.026$} \\
Qwen2.5-3B & 0.554\,{$\scriptscriptstyle\pm0.001$} & 0.588\,{$\scriptscriptstyle\pm0.000$} & 0.376\,{$\scriptscriptstyle\pm0.001$} & \underline{0.799}\,{$\scriptscriptstyle\pm0.004$} & 0.544\,{$\scriptscriptstyle\pm0.017$} & 0.556\,{$\scriptscriptstyle\pm0.010$} & \textbf{0.802}\,{$\scriptscriptstyle\pm0.001$} & 0.615\,{$\scriptscriptstyle\pm0.011$} & 0.491 & 0.044 & 0.399\,{$\scriptscriptstyle\pm0.020$} \\
Qwen2.5-7B & 0.766\,{$\scriptscriptstyle\pm0.003$} & \underline{0.767}\,{$\scriptscriptstyle\pm0.001$} & 0.479\,{$\scriptscriptstyle\pm0.005$} & \textbf{0.892}\,{$\scriptscriptstyle\pm0.008$} & 0.587\,{$\scriptscriptstyle\pm0.010$} & 0.676\,{$\scriptscriptstyle\pm0.013$} & 0.697\,{$\scriptscriptstyle\pm0.352$} & 0.644\,{$\scriptscriptstyle\pm0.004$} & 0.600 & 0.044 & 0.461\,{$\scriptscriptstyle\pm0.020$} \\
gemma-2-2b & \underline{0.814}\,{$\scriptscriptstyle\pm0.001$} & 0.804\,{$\scriptscriptstyle\pm0.001$} & 0.536\,{$\scriptscriptstyle\pm0.003$} & \textbf{0.853}\,{$\scriptscriptstyle\pm0.008$} & 0.785\,{$\scriptscriptstyle\pm0.004$} & 0.720\,{$\scriptscriptstyle\pm0.007$} & 0.718\,{$\scriptscriptstyle\pm0.246$} & 0.737\,{$\scriptscriptstyle\pm0.007$} & 0.809 & 0.021 & 0.485\,{$\scriptscriptstyle\pm0.020$} \\
\midrule
\multicolumn{12}{l}{\textbf{Source: Llama-3.1-8B}} \\
Llama-3.2-1B & 0.738\,{$\scriptscriptstyle\pm0.002$} & \underline{0.922}\,{$\scriptscriptstyle\pm0.000$} & 0.745\,{$\scriptscriptstyle\pm0.006$} & 0.907\,{$\scriptscriptstyle\pm0.005$} & 0.727\,{$\scriptscriptstyle\pm0.013$} & 0.777\,{$\scriptscriptstyle\pm0.017$} & 0.909\,{$\scriptscriptstyle\pm0.001$} & 0.871\,{$\scriptscriptstyle\pm0.003$} & \textbf{0.943} & 0.101 & 0.480\,{$\scriptscriptstyle\pm0.019$} \\
Llama-3.2-3B & 0.929\,{$\scriptscriptstyle\pm0.000$} & \underline{0.956}\,{$\scriptscriptstyle\pm0.001$} & 0.783\,{$\scriptscriptstyle\pm0.005$} & 0.944\,{$\scriptscriptstyle\pm0.003$} & 0.772\,{$\scriptscriptstyle\pm0.004$} & 0.776\,{$\scriptscriptstyle\pm0.017$} & 0.948\,{$\scriptscriptstyle\pm0.001$} & 0.902\,{$\scriptscriptstyle\pm0.002$} & \textbf{0.980} & 0.107 & 0.442\,{$\scriptscriptstyle\pm0.009$} \\
Qwen2.5-1.5B & 0.629\,{$\scriptscriptstyle\pm0.001$} & 0.694\,{$\scriptscriptstyle\pm0.001$} & 0.473\,{$\scriptscriptstyle\pm0.006$} & \underline{0.827}\,{$\scriptscriptstyle\pm0.007$} & 0.711\,{$\scriptscriptstyle\pm0.019$} & 0.654\,{$\scriptscriptstyle\pm0.011$} & \textbf{0.830}\,{$\scriptscriptstyle\pm0.009$} & 0.690\,{$\scriptscriptstyle\pm0.020$} & 0.537 & 0.037 & 0.397\,{$\scriptscriptstyle\pm0.023$} \\
Qwen2.5-3B & 0.627\,{$\scriptscriptstyle\pm0.002$} & 0.642\,{$\scriptscriptstyle\pm0.001$} & 0.439\,{$\scriptscriptstyle\pm0.002$} & \underline{0.829}\,{$\scriptscriptstyle\pm0.004$} & 0.590\,{$\scriptscriptstyle\pm0.016$} & 0.572\,{$\scriptscriptstyle\pm0.024$} & \textbf{0.836}\,{$\scriptscriptstyle\pm0.002$} & 0.621\,{$\scriptscriptstyle\pm0.036$} & 0.489 & 0.043 & 0.334\,{$\scriptscriptstyle\pm0.006$} \\
Qwen2.5-7B & 0.763\,{$\scriptscriptstyle\pm0.003$} & \underline{0.814}\,{$\scriptscriptstyle\pm0.001$} & 0.600\,{$\scriptscriptstyle\pm0.005$} & \textbf{0.919}\,{$\scriptscriptstyle\pm0.003$} & 0.559\,{$\scriptscriptstyle\pm0.014$} & 0.709\,{$\scriptscriptstyle\pm0.014$} & 0.747\,{$\scriptscriptstyle\pm0.309$} & 0.694\,{$\scriptscriptstyle\pm0.005$} & 0.633 & 0.042 & 0.432\,{$\scriptscriptstyle\pm0.015$} \\
gemma-2-2b & 0.791\,{$\scriptscriptstyle\pm0.002$} & \underline{0.834}\,{$\scriptscriptstyle\pm0.000$} & 0.570\,{$\scriptscriptstyle\pm0.001$} & \textbf{0.885}\,{$\scriptscriptstyle\pm0.002$} & 0.805\,{$\scriptscriptstyle\pm0.006$} & 0.726\,{$\scriptscriptstyle\pm0.012$} & 0.637\,{$\scriptscriptstyle\pm0.214$} & 0.780\,{$\scriptscriptstyle\pm0.001$} & 0.809 & 0.019 & 0.428\,{$\scriptscriptstyle\pm0.023$} \\
\midrule
\multicolumn{12}{l}{\textbf{Source: Qwen2.5-1.5B}} \\
Llama-3.2-1B & \underline{0.821}\,{$\scriptscriptstyle\pm0.000$} & 0.678\,{$\scriptscriptstyle\pm0.002$} & 0.655\,{$\scriptscriptstyle\pm0.003$} & \textbf{0.833}\,{$\scriptscriptstyle\pm0.014$} & 0.776\,{$\scriptscriptstyle\pm0.004$} & 0.762\,{$\scriptscriptstyle\pm0.019$} & 0.696\,{$\scriptscriptstyle\pm0.254$} & 0.543\,{$\scriptscriptstyle\pm0.032$} & 0.430 & 0.035 & 0.736\,{$\scriptscriptstyle\pm0.003$} \\
Llama-3.2-3B & 0.603\,{$\scriptscriptstyle\pm0.001$} & 0.602\,{$\scriptscriptstyle\pm0.002$} & 0.526\,{$\scriptscriptstyle\pm0.007$} & \textbf{0.784}\,{$\scriptscriptstyle\pm0.002$} & 0.646\,{$\scriptscriptstyle\pm0.005$} & 0.641\,{$\scriptscriptstyle\pm0.002$} & 0.415\,{$\scriptscriptstyle\pm0.312$} & 0.479\,{$\scriptscriptstyle\pm0.030$} & 0.398 & 0.024 & \underline{0.657}\,{$\scriptscriptstyle\pm0.009$} \\
Llama-3.1-8B & 0.518\,{$\scriptscriptstyle\pm0.001$} & 0.619\,{$\scriptscriptstyle\pm0.001$} & 0.694\,{$\scriptscriptstyle\pm0.009$} & \textbf{0.771}\,{$\scriptscriptstyle\pm0.006$} & 0.666\,{$\scriptscriptstyle\pm0.004$} & 0.680\,{$\scriptscriptstyle\pm0.013$} & \underline{0.767}\,{$\scriptscriptstyle\pm0.003$} & 0.542\,{$\scriptscriptstyle\pm0.004$} & 0.416 & 0.016 & 0.702\,{$\scriptscriptstyle\pm0.016$} \\
Qwen2.5-3B & 0.596\,{$\scriptscriptstyle\pm0.002$} & 0.726\,{$\scriptscriptstyle\pm0.001$} & 0.562\,{$\scriptscriptstyle\pm0.003$} & \textbf{0.790}\,{$\scriptscriptstyle\pm0.008$} & 0.715\,{$\scriptscriptstyle\pm0.002$} & 0.686\,{$\scriptscriptstyle\pm0.018$} & \underline{0.790}\,{$\scriptscriptstyle\pm0.003$} & 0.378\,{$\scriptscriptstyle\pm0.005$} & 0.448 & 0.019 & 0.624\,{$\scriptscriptstyle\pm0.010$} \\
Qwen2.5-7B & 0.766\,{$\scriptscriptstyle\pm0.002$} & 0.831\,{$\scriptscriptstyle\pm0.001$} & 0.598\,{$\scriptscriptstyle\pm0.004$} & \underline{0.846}\,{$\scriptscriptstyle\pm0.001$} & 0.729\,{$\scriptscriptstyle\pm0.003$} & 0.741\,{$\scriptscriptstyle\pm0.011$} & \textbf{0.850}\,{$\scriptscriptstyle\pm0.004$} & 0.560\,{$\scriptscriptstyle\pm0.091$} & 0.481 & 0.016 & 0.664\,{$\scriptscriptstyle\pm0.018$} \\
gemma-2-2b & 0.617\,{$\scriptscriptstyle\pm0.002$} & 0.560\,{$\scriptscriptstyle\pm0.002$} & 0.558\,{$\scriptscriptstyle\pm0.001$} & \textbf{0.745}\,{$\scriptscriptstyle\pm0.014$} & \underline{0.717}\,{$\scriptscriptstyle\pm0.005$} & 0.708\,{$\scriptscriptstyle\pm0.015$} & 0.555\,{$\scriptscriptstyle\pm0.346$} & 0.474\,{$\scriptscriptstyle\pm0.045$} & 0.379 & 0.008 & 0.688\,{$\scriptscriptstyle\pm0.008$} \\
\midrule
\multicolumn{12}{l}{\textbf{Source: Qwen2.5-3B}} \\
Llama-3.2-1B & 0.758\,{$\scriptscriptstyle\pm0.002$} & \underline{0.793}\,{$\scriptscriptstyle\pm0.003$} & 0.667\,{$\scriptscriptstyle\pm0.004$} & \textbf{0.846}\,{$\scriptscriptstyle\pm0.004$} & 0.787\,{$\scriptscriptstyle\pm0.003$} & 0.758\,{$\scriptscriptstyle\pm0.020$} & 0.709\,{$\scriptscriptstyle\pm0.271$} & 0.601\,{$\scriptscriptstyle\pm0.009$} & 0.414 & 0.035 & 0.637\,{$\scriptscriptstyle\pm0.013$} \\
Llama-3.2-3B & 0.700\,{$\scriptscriptstyle\pm0.001$} & 0.577\,{$\scriptscriptstyle\pm0.001$} & 0.629\,{$\scriptscriptstyle\pm0.002$} & \textbf{0.737}\,{$\scriptscriptstyle\pm0.024$} & 0.673\,{$\scriptscriptstyle\pm0.004$} & 0.656\,{$\scriptscriptstyle\pm0.012$} & \underline{0.732}\,{$\scriptscriptstyle\pm0.018$} & 0.621\,{$\scriptscriptstyle\pm0.016$} & 0.384 & 0.022 & 0.574\,{$\scriptscriptstyle\pm0.019$} \\
Llama-3.1-8B & 0.698\,{$\scriptscriptstyle\pm0.000$} & 0.627\,{$\scriptscriptstyle\pm0.002$} & 0.727\,{$\scriptscriptstyle\pm0.010$} & \textbf{0.796}\,{$\scriptscriptstyle\pm0.005$} & 0.702\,{$\scriptscriptstyle\pm0.008$} & 0.688\,{$\scriptscriptstyle\pm0.024$} & \underline{0.793}\,{$\scriptscriptstyle\pm0.006$} & 0.616\,{$\scriptscriptstyle\pm0.024$} & 0.416 & 0.017 & 0.617\,{$\scriptscriptstyle\pm0.021$} \\
Qwen2.5-1.5B & \underline{0.800}\,{$\scriptscriptstyle\pm0.003$} & 0.791\,{$\scriptscriptstyle\pm0.002$} & 0.648\,{$\scriptscriptstyle\pm0.008$} & \textbf{0.820}\,{$\scriptscriptstyle\pm0.019$} & 0.787\,{$\scriptscriptstyle\pm0.002$} & 0.768\,{$\scriptscriptstyle\pm0.007$} & 0.713\,{$\scriptscriptstyle\pm0.213$} & 0.570\,{$\scriptscriptstyle\pm0.031$} & 0.475 & 0.018 & 0.597\,{$\scriptscriptstyle\pm0.006$} \\
Qwen2.5-7B & 0.809\,{$\scriptscriptstyle\pm0.001$} & 0.797\,{$\scriptscriptstyle\pm0.003$} & 0.632\,{$\scriptscriptstyle\pm0.001$} & \textbf{0.886}\,{$\scriptscriptstyle\pm0.006$} & 0.768\,{$\scriptscriptstyle\pm0.001$} & \underline{0.809}\,{$\scriptscriptstyle\pm0.010$} & 0.680\,{$\scriptscriptstyle\pm0.377$} & 0.648\,{$\scriptscriptstyle\pm0.069$} & 0.484 & 0.020 & 0.623\,{$\scriptscriptstyle\pm0.044$} \\
gemma-2-2b & 0.701\,{$\scriptscriptstyle\pm0.003$} & 0.556\,{$\scriptscriptstyle\pm0.001$} & 0.549\,{$\scriptscriptstyle\pm0.004$} & \textbf{0.782}\,{$\scriptscriptstyle\pm0.011$} & \underline{0.736}\,{$\scriptscriptstyle\pm0.003$} & 0.710\,{$\scriptscriptstyle\pm0.009$} & 0.278\,{$\scriptscriptstyle\pm0.010$} & 0.459\,{$\scriptscriptstyle\pm0.062$} & 0.399 & 0.008 & 0.606\,{$\scriptscriptstyle\pm0.015$} \\
\midrule
\multicolumn{12}{l}{\textbf{Source: Qwen2.5-7B}} \\
Llama-3.2-1B & 0.819\,{$\scriptscriptstyle\pm0.001$} & 0.824\,{$\scriptscriptstyle\pm0.001$} & 0.719\,{$\scriptscriptstyle\pm0.008$} & \underline{0.852}\,{$\scriptscriptstyle\pm0.002$} & 0.790\,{$\scriptscriptstyle\pm0.014$} & 0.782\,{$\scriptscriptstyle\pm0.008$} & \textbf{0.856}\,{$\scriptscriptstyle\pm0.002$} & 0.687\,{$\scriptscriptstyle\pm0.022$} & 0.641 & 0.058 & 0.561\,{$\scriptscriptstyle\pm0.011$} \\
Llama-3.2-3B & 0.751\,{$\scriptscriptstyle\pm0.001$} & 0.749\,{$\scriptscriptstyle\pm0.001$} & 0.699\,{$\scriptscriptstyle\pm0.002$} & \textbf{0.824}\,{$\scriptscriptstyle\pm0.006$} & 0.706\,{$\scriptscriptstyle\pm0.009$} & 0.713\,{$\scriptscriptstyle\pm0.014$} & \underline{0.815}\,{$\scriptscriptstyle\pm0.005$} & 0.715\,{$\scriptscriptstyle\pm0.018$} & 0.699 & 0.030 & 0.504\,{$\scriptscriptstyle\pm0.018$} \\
Llama-3.1-8B & 0.636\,{$\scriptscriptstyle\pm0.000$} & 0.753\,{$\scriptscriptstyle\pm0.002$} & 0.737\,{$\scriptscriptstyle\pm0.003$} & \textbf{0.855}\,{$\scriptscriptstyle\pm0.002$} & 0.755\,{$\scriptscriptstyle\pm0.011$} & 0.746\,{$\scriptscriptstyle\pm0.007$} & \underline{0.852}\,{$\scriptscriptstyle\pm0.003$} & 0.703\,{$\scriptscriptstyle\pm0.027$} & 0.674 & 0.022 & 0.563\,{$\scriptscriptstyle\pm0.022$} \\
Qwen2.5-1.5B & 0.763\,{$\scriptscriptstyle\pm0.003$} & 0.764\,{$\scriptscriptstyle\pm0.002$} & 0.615\,{$\scriptscriptstyle\pm0.003$} & \textbf{0.855}\,{$\scriptscriptstyle\pm0.000$} & \underline{0.794}\,{$\scriptscriptstyle\pm0.002$} & 0.779\,{$\scriptscriptstyle\pm0.004$} & 0.720\,{$\scriptscriptstyle\pm0.237$} & 0.776\,{$\scriptscriptstyle\pm0.019$} & 0.526 & 0.033 & 0.489\,{$\scriptscriptstyle\pm0.054$} \\
Qwen2.5-3B & 0.604\,{$\scriptscriptstyle\pm0.003$} & 0.733\,{$\scriptscriptstyle\pm0.001$} & 0.543\,{$\scriptscriptstyle\pm0.001$} & \textbf{0.885}\,{$\scriptscriptstyle\pm0.005$} & 0.764\,{$\scriptscriptstyle\pm0.004$} & 0.738\,{$\scriptscriptstyle\pm0.017$} & \underline{0.884}\,{$\scriptscriptstyle\pm0.002$} & 0.710\,{$\scriptscriptstyle\pm0.053$} & 0.545 & 0.039 & 0.464\,{$\scriptscriptstyle\pm0.020$} \\
gemma-2-2b & 0.775\,{$\scriptscriptstyle\pm0.001$} & 0.735\,{$\scriptscriptstyle\pm0.001$} & 0.549\,{$\scriptscriptstyle\pm0.001$} & \textbf{0.841}\,{$\scriptscriptstyle\pm0.014$} & \underline{0.778}\,{$\scriptscriptstyle\pm0.003$} & 0.745\,{$\scriptscriptstyle\pm0.012$} & 0.402\,{$\scriptscriptstyle\pm0.018$} & 0.654\,{$\scriptscriptstyle\pm0.012$} & 0.648 & 0.016 & 0.519\,{$\scriptscriptstyle\pm0.018$} \\
\midrule
\multicolumn{12}{l}{\textbf{Source: gemma-2-2b}} \\
Llama-3.2-1B & 0.721\,{$\scriptscriptstyle\pm0.001$} & 0.763\,{$\scriptscriptstyle\pm0.001$} & 0.627\,{$\scriptscriptstyle\pm0.007$} & \underline{0.859}\,{$\scriptscriptstyle\pm0.003$} & 0.775\,{$\scriptscriptstyle\pm0.001$} & 0.734\,{$\scriptscriptstyle\pm0.012$} & \textbf{0.866}\,{$\scriptscriptstyle\pm0.002$} & 0.716\,{$\scriptscriptstyle\pm0.001$} & 0.777 & 0.053 & 0.621\,{$\scriptscriptstyle\pm0.012$} \\
Llama-3.2-3B & 0.679\,{$\scriptscriptstyle\pm0.002$} & 0.714\,{$\scriptscriptstyle\pm0.001$} & 0.626\,{$\scriptscriptstyle\pm0.002$} & \underline{0.788}\,{$\scriptscriptstyle\pm0.002$} & 0.736\,{$\scriptscriptstyle\pm0.006$} & 0.716\,{$\scriptscriptstyle\pm0.001$} & 0.785\,{$\scriptscriptstyle\pm0.004$} & 0.599\,{$\scriptscriptstyle\pm0.024$} & \textbf{0.819} & 0.032 & 0.606\,{$\scriptscriptstyle\pm0.008$} \\
Llama-3.1-8B & 0.590\,{$\scriptscriptstyle\pm0.002$} & 0.746\,{$\scriptscriptstyle\pm0.003$} & 0.648\,{$\scriptscriptstyle\pm0.009$} & \textbf{0.814}\,{$\scriptscriptstyle\pm0.008$} & 0.726\,{$\scriptscriptstyle\pm0.006$} & 0.698\,{$\scriptscriptstyle\pm0.005$} & \underline{0.810}\,{$\scriptscriptstyle\pm0.009$} & 0.698\,{$\scriptscriptstyle\pm0.009$} & 0.789 & 0.023 & 0.626\,{$\scriptscriptstyle\pm0.007$} \\
Qwen2.5-1.5B & 0.577\,{$\scriptscriptstyle\pm0.001$} & 0.580\,{$\scriptscriptstyle\pm0.001$} & 0.437\,{$\scriptscriptstyle\pm0.002$} & \underline{0.714}\,{$\scriptscriptstyle\pm0.010$} & 0.672\,{$\scriptscriptstyle\pm0.001$} & 0.602\,{$\scriptscriptstyle\pm0.010$} & \textbf{0.719}\,{$\scriptscriptstyle\pm0.005$} & 0.504\,{$\scriptscriptstyle\pm0.025$} & 0.498 & 0.022 & 0.513\,{$\scriptscriptstyle\pm0.008$} \\
Qwen2.5-3B & 0.471\,{$\scriptscriptstyle\pm0.002$} & 0.526\,{$\scriptscriptstyle\pm0.002$} & 0.362\,{$\scriptscriptstyle\pm0.008$} & \textbf{0.694}\,{$\scriptscriptstyle\pm0.004$} & 0.586\,{$\scriptscriptstyle\pm0.010$} & 0.557\,{$\scriptscriptstyle\pm0.014$} & \underline{0.694}\,{$\scriptscriptstyle\pm0.004$} & 0.420\,{$\scriptscriptstyle\pm0.075$} & 0.491 & 0.023 & 0.446\,{$\scriptscriptstyle\pm0.022$} \\
Qwen2.5-7B & 0.662\,{$\scriptscriptstyle\pm0.002$} & \underline{0.676}\,{$\scriptscriptstyle\pm0.001$} & 0.494\,{$\scriptscriptstyle\pm0.011$} & \textbf{0.776}\,{$\scriptscriptstyle\pm0.014$} & 0.653\,{$\scriptscriptstyle\pm0.006$} & 0.669\,{$\scriptscriptstyle\pm0.005$} & 0.582\,{$\scriptscriptstyle\pm0.381$} & 0.562\,{$\scriptscriptstyle\pm0.021$} & 0.619 & 0.031 & 0.532\,{$\scriptscriptstyle\pm0.015$} \\
\midrule\midrule
\textbf{\#\,1st-place wins} & \textbf{0} & \textbf{0} & \textbf{0} & \textbf{25} & \textbf{0} & \textbf{0} & \textbf{10} & \textbf{0} & \textbf{7} & \textbf{0} & \textbf{0} \\
\#\,2nd-place wins & 3 & 7 & 0 & 12 & 5 & 1 & 13 & 0 & 0 & 0 & 1 \\
\end{longtable}\endgroup
\twocolumn

\onecolumn
\begingroup\fontsize{4.6pt}{5.6pt}\selectfont\setlength{\tabcolsep}{0.6pt}\renewcommand{\arraystretch}{1.1}
\setlength\LTleft{0pt}\setlength\LTright{0pt}\begin{longtable}{@{\extracolsep{\fill}}lccccccccccc}
\caption{Precision@1 ($\uparrow$): full per-pair results, seed mean\,$\pm$\,std. Best \textbf{bold}, second \underline{underlined}; grouped by source (donor) LLM.}\label{tab:pp-precision1}\\
\toprule
Target & \textbf{Term} & \textbf{Rel-dep} & \textbf{CKA} & \textbf{MATCHA} & \textbf{MATCHA-K} & \textbf{MATCHA-FL} & \textbf{MATCHA-avg} & \textbf{MATCHA-no FF} & \textbf{CmdV} & \textbf{HOT} & \textbf{MLP} \\
\midrule\endfirsthead
\multicolumn{12}{c}{\tablename\ \thetable{} -- continued}\\ \toprule
Target & \textbf{Term} & \textbf{Rel-dep} & \textbf{CKA} & \textbf{MATCHA} & \textbf{MATCHA-K} & \textbf{MATCHA-FL} & \textbf{MATCHA-avg} & \textbf{MATCHA-no FF} & \textbf{CmdV} & \textbf{HOT} & \textbf{MLP} \\
\midrule\endhead
\bottomrule\endfoot
\multicolumn{12}{l}{\textbf{Source: Llama-3.2-1B}} \\
Llama-3.2-3B & 0.606\,{$\scriptscriptstyle\pm0.002$} & 0.928\,{$\scriptscriptstyle\pm0.002$} & 0.702\,{$\scriptscriptstyle\pm0.006$} & \underline{0.976}\,{$\scriptscriptstyle\pm0.001$} & 0.868\,{$\scriptscriptstyle\pm0.010$} & 0.867\,{$\scriptscriptstyle\pm0.009$} & \textbf{0.977}\,{$\scriptscriptstyle\pm0.001$} & 0.948\,{$\scriptscriptstyle\pm0.005$} & 0.669 & 0.702 & 0.922\,{$\scriptscriptstyle\pm0.006$} \\
Llama-3.1-8B & 0.594\,{$\scriptscriptstyle\pm0.002$} & 0.948\,{$\scriptscriptstyle\pm0.001$} & 0.651\,{$\scriptscriptstyle\pm0.003$} & \underline{0.989}\,{$\scriptscriptstyle\pm0.000$} & 0.896\,{$\scriptscriptstyle\pm0.001$} & 0.853\,{$\scriptscriptstyle\pm0.012$} & \textbf{0.989}\,{$\scriptscriptstyle\pm0.001$} & 0.960\,{$\scriptscriptstyle\pm0.005$} & 0.608 & 0.739 & 0.947\,{$\scriptscriptstyle\pm0.005$} \\
Qwen2.5-1.5B & 0.611\,{$\scriptscriptstyle\pm0.003$} & 0.748\,{$\scriptscriptstyle\pm0.003$} & 0.358\,{$\scriptscriptstyle\pm0.007$} & \underline{0.937}\,{$\scriptscriptstyle\pm0.002$} & 0.825\,{$\scriptscriptstyle\pm0.002$} & 0.761\,{$\scriptscriptstyle\pm0.017$} & \textbf{0.943}\,{$\scriptscriptstyle\pm0.001$} & 0.746\,{$\scriptscriptstyle\pm0.024$} & 0.336 & 0.210 & 0.864\,{$\scriptscriptstyle\pm0.008$} \\
Qwen2.5-3B & 0.529\,{$\scriptscriptstyle\pm0.001$} & 0.821\,{$\scriptscriptstyle\pm0.001$} & 0.347\,{$\scriptscriptstyle\pm0.004$} & \underline{0.951}\,{$\scriptscriptstyle\pm0.002$} & 0.718\,{$\scriptscriptstyle\pm0.006$} & 0.712\,{$\scriptscriptstyle\pm0.021$} & \textbf{0.956}\,{$\scriptscriptstyle\pm0.001$} & 0.699\,{$\scriptscriptstyle\pm0.010$} & 0.273 & 0.199 & 0.846\,{$\scriptscriptstyle\pm0.005$} \\
Qwen2.5-7B & 0.614\,{$\scriptscriptstyle\pm0.002$} & 0.854\,{$\scriptscriptstyle\pm0.003$} & 0.462\,{$\scriptscriptstyle\pm0.005$} & \underline{0.976}\,{$\scriptscriptstyle\pm0.002$} & 0.765\,{$\scriptscriptstyle\pm0.018$} & 0.797\,{$\scriptscriptstyle\pm0.013$} & \textbf{0.977}\,{$\scriptscriptstyle\pm0.001$} & 0.857\,{$\scriptscriptstyle\pm0.023$} & 0.297 & 0.372 & 0.897\,{$\scriptscriptstyle\pm0.003$} \\
gemma-2-2b & 0.522\,{$\scriptscriptstyle\pm0.000$} & 0.801\,{$\scriptscriptstyle\pm0.003$} & 0.482\,{$\scriptscriptstyle\pm0.004$} & \textbf{0.924}\,{$\scriptscriptstyle\pm0.004$} & 0.808\,{$\scriptscriptstyle\pm0.009$} & 0.790\,{$\scriptscriptstyle\pm0.008$} & 0.561\,{$\scriptscriptstyle\pm0.344$} & 0.826\,{$\scriptscriptstyle\pm0.004$} & 0.544 & 0.445 & \underline{0.882}\,{$\scriptscriptstyle\pm0.004$} \\
\midrule
\multicolumn{12}{l}{\textbf{Source: Llama-3.2-3B}} \\
Llama-3.2-1B & 0.810\,{$\scriptscriptstyle\pm0.001$} & \textbf{0.993}\,{$\scriptscriptstyle\pm0.000$} & 0.572\,{$\scriptscriptstyle\pm0.002$} & 0.988\,{$\scriptscriptstyle\pm0.002$} & 0.813\,{$\scriptscriptstyle\pm0.012$} & 0.819\,{$\scriptscriptstyle\pm0.013$} & \underline{0.992}\,{$\scriptscriptstyle\pm0.003$} & 0.971\,{$\scriptscriptstyle\pm0.015$} & 0.945 & 0.597 & 0.868\,{$\scriptscriptstyle\pm0.008$} \\
Llama-3.1-8B & 0.913\,{$\scriptscriptstyle\pm0.000$} & 0.970\,{$\scriptscriptstyle\pm0.000$} & 0.582\,{$\scriptscriptstyle\pm0.015$} & \underline{0.998}\,{$\scriptscriptstyle\pm0.000$} & 0.872\,{$\scriptscriptstyle\pm0.004$} & 0.771\,{$\scriptscriptstyle\pm0.007$} & \textbf{0.999}\,{$\scriptscriptstyle\pm0.000$} & 0.987\,{$\scriptscriptstyle\pm0.001$} & 0.950 & 0.758 & 0.905\,{$\scriptscriptstyle\pm0.005$} \\
Qwen2.5-1.5B & 0.818\,{$\scriptscriptstyle\pm0.003$} & 0.819\,{$\scriptscriptstyle\pm0.003$} & 0.222\,{$\scriptscriptstyle\pm0.008$} & \underline{0.951}\,{$\scriptscriptstyle\pm0.003$} & 0.770\,{$\scriptscriptstyle\pm0.003$} & 0.645\,{$\scriptscriptstyle\pm0.018$} & \textbf{0.955}\,{$\scriptscriptstyle\pm0.001$} & 0.752\,{$\scriptscriptstyle\pm0.012$} & 0.475 & 0.198 & 0.774\,{$\scriptscriptstyle\pm0.013$} \\
Qwen2.5-3B & 0.720\,{$\scriptscriptstyle\pm0.002$} & 0.764\,{$\scriptscriptstyle\pm0.003$} & 0.295\,{$\scriptscriptstyle\pm0.004$} & \underline{0.965}\,{$\scriptscriptstyle\pm0.001$} & 0.674\,{$\scriptscriptstyle\pm0.014$} & 0.562\,{$\scriptscriptstyle\pm0.010$} & \textbf{0.971}\,{$\scriptscriptstyle\pm0.002$} & 0.802\,{$\scriptscriptstyle\pm0.018$} & 0.417 & 0.209 & 0.760\,{$\scriptscriptstyle\pm0.010$} \\
Qwen2.5-7B & \underline{0.935}\,{$\scriptscriptstyle\pm0.002$} & 0.935\,{$\scriptscriptstyle\pm0.001$} & 0.277\,{$\scriptscriptstyle\pm0.005$} & \textbf{0.980}\,{$\scriptscriptstyle\pm0.003$} & 0.658\,{$\scriptscriptstyle\pm0.015$} & 0.645\,{$\scriptscriptstyle\pm0.018$} & 0.694\,{$\scriptscriptstyle\pm0.497$} & 0.848\,{$\scriptscriptstyle\pm0.042$} & 0.470 & 0.372 & 0.828\,{$\scriptscriptstyle\pm0.013$} \\
gemma-2-2b & 0.880\,{$\scriptscriptstyle\pm0.002$} & \underline{0.906}\,{$\scriptscriptstyle\pm0.001$} & 0.394\,{$\scriptscriptstyle\pm0.003$} & \textbf{0.942}\,{$\scriptscriptstyle\pm0.004$} & 0.787\,{$\scriptscriptstyle\pm0.004$} & 0.726\,{$\scriptscriptstyle\pm0.004$} & 0.709\,{$\scriptscriptstyle\pm0.413$} & 0.849\,{$\scriptscriptstyle\pm0.012$} & 0.789 & 0.448 & 0.813\,{$\scriptscriptstyle\pm0.013$} \\
\midrule
\multicolumn{12}{l}{\textbf{Source: Llama-3.1-8B}} \\
Llama-3.2-1B & 0.784\,{$\scriptscriptstyle\pm0.001$} & \underline{0.987}\,{$\scriptscriptstyle\pm0.000$} & 0.620\,{$\scriptscriptstyle\pm0.009$} & 0.986\,{$\scriptscriptstyle\pm0.001$} & 0.839\,{$\scriptscriptstyle\pm0.009$} & 0.839\,{$\scriptscriptstyle\pm0.013$} & \textbf{0.988}\,{$\scriptscriptstyle\pm0.001$} & 0.976\,{$\scriptscriptstyle\pm0.000$} & 0.968 & 0.626 & 0.846\,{$\scriptscriptstyle\pm0.011$} \\
Llama-3.2-3B & 0.966\,{$\scriptscriptstyle\pm0.000$} & \textbf{0.994}\,{$\scriptscriptstyle\pm0.000$} & 0.655\,{$\scriptscriptstyle\pm0.004$} & 0.990\,{$\scriptscriptstyle\pm0.001$} & 0.876\,{$\scriptscriptstyle\pm0.006$} & 0.828\,{$\scriptscriptstyle\pm0.013$} & \underline{0.993}\,{$\scriptscriptstyle\pm0.000$} & 0.978\,{$\scriptscriptstyle\pm0.002$} & 0.990 & 0.754 & 0.854\,{$\scriptscriptstyle\pm0.013$} \\
Qwen2.5-1.5B & 0.734\,{$\scriptscriptstyle\pm0.003$} & 0.842\,{$\scriptscriptstyle\pm0.003$} & 0.298\,{$\scriptscriptstyle\pm0.010$} & \underline{0.952}\,{$\scriptscriptstyle\pm0.006$} & 0.781\,{$\scriptscriptstyle\pm0.016$} & 0.701\,{$\scriptscriptstyle\pm0.006$} & \textbf{0.955}\,{$\scriptscriptstyle\pm0.004$} & 0.827\,{$\scriptscriptstyle\pm0.018$} & 0.532 & 0.195 & 0.764\,{$\scriptscriptstyle\pm0.015$} \\
Qwen2.5-3B & 0.776\,{$\scriptscriptstyle\pm0.002$} & 0.789\,{$\scriptscriptstyle\pm0.002$} & 0.298\,{$\scriptscriptstyle\pm0.016$} & \underline{0.963}\,{$\scriptscriptstyle\pm0.004$} & 0.731\,{$\scriptscriptstyle\pm0.004$} & 0.627\,{$\scriptscriptstyle\pm0.018$} & \textbf{0.969}\,{$\scriptscriptstyle\pm0.004$} & 0.778\,{$\scriptscriptstyle\pm0.028$} & 0.451 & 0.191 & 0.728\,{$\scriptscriptstyle\pm0.004$} \\
Qwen2.5-7B & 0.882\,{$\scriptscriptstyle\pm0.001$} & \underline{0.949}\,{$\scriptscriptstyle\pm0.000$} & 0.452\,{$\scriptscriptstyle\pm0.009$} & \textbf{0.985}\,{$\scriptscriptstyle\pm0.000$} & 0.754\,{$\scriptscriptstyle\pm0.005$} & 0.711\,{$\scriptscriptstyle\pm0.006$} & 0.667\,{$\scriptscriptstyle\pm0.553$} & 0.938\,{$\scriptscriptstyle\pm0.003$} & 0.542 & 0.383 & 0.818\,{$\scriptscriptstyle\pm0.003$} \\
gemma-2-2b & 0.847\,{$\scriptscriptstyle\pm0.001$} & \underline{0.916}\,{$\scriptscriptstyle\pm0.000$} & 0.466\,{$\scriptscriptstyle\pm0.003$} & \textbf{0.945}\,{$\scriptscriptstyle\pm0.002$} & 0.814\,{$\scriptscriptstyle\pm0.005$} & 0.753\,{$\scriptscriptstyle\pm0.008$} & 0.434\,{$\scriptscriptstyle\pm0.447$} & 0.886\,{$\scriptscriptstyle\pm0.002$} & 0.803 & 0.461 & 0.789\,{$\scriptscriptstyle\pm0.007$} \\
\midrule
\multicolumn{12}{l}{\textbf{Source: Qwen2.5-1.5B}} \\
Llama-3.2-1B & 0.919\,{$\scriptscriptstyle\pm0.001$} & 0.921\,{$\scriptscriptstyle\pm0.002$} & 0.684\,{$\scriptscriptstyle\pm0.005$} & \textbf{0.968}\,{$\scriptscriptstyle\pm0.006$} & 0.911\,{$\scriptscriptstyle\pm0.008$} & 0.888\,{$\scriptscriptstyle\pm0.013$} & \underline{0.933}\,{$\scriptscriptstyle\pm0.062$} & 0.822\,{$\scriptscriptstyle\pm0.011$} & 0.226 & 0.366 & 0.920\,{$\scriptscriptstyle\pm0.003$} \\
Llama-3.2-3B & 0.881\,{$\scriptscriptstyle\pm0.002$} & 0.880\,{$\scriptscriptstyle\pm0.002$} & 0.561\,{$\scriptscriptstyle\pm0.005$} & \textbf{0.948}\,{$\scriptscriptstyle\pm0.002$} & 0.875\,{$\scriptscriptstyle\pm0.002$} & 0.853\,{$\scriptscriptstyle\pm0.004$} & 0.736\,{$\scriptscriptstyle\pm0.220$} & 0.801\,{$\scriptscriptstyle\pm0.017$} & 0.157 & 0.397 & \underline{0.892}\,{$\scriptscriptstyle\pm0.007$} \\
Llama-3.1-8B & 0.776\,{$\scriptscriptstyle\pm0.002$} & 0.870\,{$\scriptscriptstyle\pm0.001$} & 0.636\,{$\scriptscriptstyle\pm0.009$} & \underline{0.961}\,{$\scriptscriptstyle\pm0.002$} & 0.887\,{$\scriptscriptstyle\pm0.003$} & 0.873\,{$\scriptscriptstyle\pm0.012$} & \textbf{0.961}\,{$\scriptscriptstyle\pm0.003$} & 0.836\,{$\scriptscriptstyle\pm0.007$} & 0.166 & 0.433 & 0.929\,{$\scriptscriptstyle\pm0.002$} \\
Qwen2.5-3B & 0.783\,{$\scriptscriptstyle\pm0.001$} & 0.922\,{$\scriptscriptstyle\pm0.002$} & 0.435\,{$\scriptscriptstyle\pm0.006$} & \underline{0.976}\,{$\scriptscriptstyle\pm0.001$} & 0.866\,{$\scriptscriptstyle\pm0.010$} & 0.825\,{$\scriptscriptstyle\pm0.015$} & \textbf{0.978}\,{$\scriptscriptstyle\pm0.000$} & 0.710\,{$\scriptscriptstyle\pm0.012$} & 0.241 & 0.284 & 0.901\,{$\scriptscriptstyle\pm0.002$} \\
Qwen2.5-7B & 0.972\,{$\scriptscriptstyle\pm0.001$} & 0.980\,{$\scriptscriptstyle\pm0.001$} & 0.519\,{$\scriptscriptstyle\pm0.002$} & \textbf{0.987}\,{$\scriptscriptstyle\pm0.001$} & 0.899\,{$\scriptscriptstyle\pm0.004$} & 0.895\,{$\scriptscriptstyle\pm0.012$} & \underline{0.987}\,{$\scriptscriptstyle\pm0.001$} & 0.853\,{$\scriptscriptstyle\pm0.024$} & 0.206 & 0.328 & 0.932\,{$\scriptscriptstyle\pm0.009$} \\
gemma-2-2b & 0.822\,{$\scriptscriptstyle\pm0.003$} & 0.798\,{$\scriptscriptstyle\pm0.002$} & 0.534\,{$\scriptscriptstyle\pm0.001$} & \textbf{0.901}\,{$\scriptscriptstyle\pm0.008$} & 0.839\,{$\scriptscriptstyle\pm0.000$} & 0.808\,{$\scriptscriptstyle\pm0.017$} & 0.710\,{$\scriptscriptstyle\pm0.337$} & 0.699\,{$\scriptscriptstyle\pm0.017$} & 0.149 & 0.336 & \underline{0.880}\,{$\scriptscriptstyle\pm0.003$} \\
\midrule
\multicolumn{12}{l}{\textbf{Source: Qwen2.5-3B}} \\
Llama-3.2-1B & 0.884\,{$\scriptscriptstyle\pm0.002$} & \underline{0.941}\,{$\scriptscriptstyle\pm0.001$} & 0.682\,{$\scriptscriptstyle\pm0.007$} & \textbf{0.969}\,{$\scriptscriptstyle\pm0.001$} & 0.926\,{$\scriptscriptstyle\pm0.004$} & 0.890\,{$\scriptscriptstyle\pm0.011$} & 0.910\,{$\scriptscriptstyle\pm0.106$} & 0.863\,{$\scriptscriptstyle\pm0.010$} & 0.289 & 0.392 & 0.902\,{$\scriptscriptstyle\pm0.003$} \\
Llama-3.2-3B & 0.916\,{$\scriptscriptstyle\pm0.001$} & 0.868\,{$\scriptscriptstyle\pm0.001$} & 0.712\,{$\scriptscriptstyle\pm0.006$} & \textbf{0.927}\,{$\scriptscriptstyle\pm0.028$} & 0.890\,{$\scriptscriptstyle\pm0.004$} & 0.871\,{$\scriptscriptstyle\pm0.009$} & \underline{0.927}\,{$\scriptscriptstyle\pm0.021$} & 0.858\,{$\scriptscriptstyle\pm0.013$} & 0.221 & 0.454 & 0.878\,{$\scriptscriptstyle\pm0.008$} \\
Llama-3.1-8B & 0.924\,{$\scriptscriptstyle\pm0.001$} & 0.888\,{$\scriptscriptstyle\pm0.002$} & 0.675\,{$\scriptscriptstyle\pm0.008$} & \underline{0.959}\,{$\scriptscriptstyle\pm0.002$} & 0.908\,{$\scriptscriptstyle\pm0.002$} & 0.892\,{$\scriptscriptstyle\pm0.013$} & \textbf{0.959}\,{$\scriptscriptstyle\pm0.003$} & 0.855\,{$\scriptscriptstyle\pm0.006$} & 0.241 & 0.498 & 0.915\,{$\scriptscriptstyle\pm0.003$} \\
Qwen2.5-1.5B & 0.937\,{$\scriptscriptstyle\pm0.001$} & \underline{0.962}\,{$\scriptscriptstyle\pm0.001$} & 0.521\,{$\scriptscriptstyle\pm0.004$} & \textbf{0.966}\,{$\scriptscriptstyle\pm0.008$} & 0.911\,{$\scriptscriptstyle\pm0.002$} & 0.859\,{$\scriptscriptstyle\pm0.009$} & 0.893\,{$\scriptscriptstyle\pm0.141$} & 0.825\,{$\scriptscriptstyle\pm0.021$} & 0.334 & 0.305 & 0.886\,{$\scriptscriptstyle\pm0.006$} \\
Qwen2.5-7B & 0.956\,{$\scriptscriptstyle\pm0.001$} & \underline{0.980}\,{$\scriptscriptstyle\pm0.001$} & 0.637\,{$\scriptscriptstyle\pm0.004$} & \textbf{0.991}\,{$\scriptscriptstyle\pm0.001$} & 0.920\,{$\scriptscriptstyle\pm0.003$} & 0.896\,{$\scriptscriptstyle\pm0.003$} & 0.701\,{$\scriptscriptstyle\pm0.503$} & 0.892\,{$\scriptscriptstyle\pm0.029$} & 0.331 & 0.425 & 0.921\,{$\scriptscriptstyle\pm0.012$} \\
gemma-2-2b & 0.847\,{$\scriptscriptstyle\pm0.001$} & 0.813\,{$\scriptscriptstyle\pm0.001$} & 0.534\,{$\scriptscriptstyle\pm0.001$} & \textbf{0.915}\,{$\scriptscriptstyle\pm0.003$} & 0.855\,{$\scriptscriptstyle\pm0.004$} & 0.824\,{$\scriptscriptstyle\pm0.009$} & 0.403\,{$\scriptscriptstyle\pm0.119$} & 0.754\,{$\scriptscriptstyle\pm0.038$} & 0.244 & 0.323 & \underline{0.859}\,{$\scriptscriptstyle\pm0.009$} \\
\midrule
\multicolumn{12}{l}{\textbf{Source: Qwen2.5-7B}} \\
Llama-3.2-1B & 0.914\,{$\scriptscriptstyle\pm0.001$} & 0.958\,{$\scriptscriptstyle\pm0.001$} & 0.667\,{$\scriptscriptstyle\pm0.005$} & \underline{0.962}\,{$\scriptscriptstyle\pm0.002$} & 0.892\,{$\scriptscriptstyle\pm0.012$} & 0.886\,{$\scriptscriptstyle\pm0.008$} & \textbf{0.967}\,{$\scriptscriptstyle\pm0.004$} & 0.917\,{$\scriptscriptstyle\pm0.010$} & 0.542 & 0.433 & 0.870\,{$\scriptscriptstyle\pm0.009$} \\
Llama-3.2-3B & 0.941\,{$\scriptscriptstyle\pm0.002$} & 0.940\,{$\scriptscriptstyle\pm0.001$} & 0.670\,{$\scriptscriptstyle\pm0.006$} & \underline{0.966}\,{$\scriptscriptstyle\pm0.001$} & 0.868\,{$\scriptscriptstyle\pm0.005$} & 0.865\,{$\scriptscriptstyle\pm0.008$} & \textbf{0.966}\,{$\scriptscriptstyle\pm0.001$} & 0.919\,{$\scriptscriptstyle\pm0.001$} & 0.557 & 0.467 & 0.862\,{$\scriptscriptstyle\pm0.006$} \\
Llama-3.1-8B & 0.862\,{$\scriptscriptstyle\pm0.001$} & 0.932\,{$\scriptscriptstyle\pm0.001$} & 0.620\,{$\scriptscriptstyle\pm0.006$} & \underline{0.984}\,{$\scriptscriptstyle\pm0.001$} & 0.906\,{$\scriptscriptstyle\pm0.006$} & 0.887\,{$\scriptscriptstyle\pm0.005$} & \textbf{0.986}\,{$\scriptscriptstyle\pm0.001$} & 0.934\,{$\scriptscriptstyle\pm0.004$} & 0.514 & 0.522 & 0.904\,{$\scriptscriptstyle\pm0.008$} \\
Qwen2.5-1.5B & 0.946\,{$\scriptscriptstyle\pm0.001$} & \underline{0.946}\,{$\scriptscriptstyle\pm0.001$} & 0.494\,{$\scriptscriptstyle\pm0.003$} & \textbf{0.978}\,{$\scriptscriptstyle\pm0.000$} & 0.893\,{$\scriptscriptstyle\pm0.002$} & 0.829\,{$\scriptscriptstyle\pm0.008$} & 0.700\,{$\scriptscriptstyle\pm0.483$} & 0.919\,{$\scriptscriptstyle\pm0.016$} & 0.506 & 0.294 & 0.842\,{$\scriptscriptstyle\pm0.028$} \\
Qwen2.5-3B & 0.784\,{$\scriptscriptstyle\pm0.002$} & 0.918\,{$\scriptscriptstyle\pm0.001$} & 0.426\,{$\scriptscriptstyle\pm0.002$} & \underline{0.988}\,{$\scriptscriptstyle\pm0.000$} & 0.842\,{$\scriptscriptstyle\pm0.009$} & 0.827\,{$\scriptscriptstyle\pm0.012$} & \textbf{0.990}\,{$\scriptscriptstyle\pm0.001$} & 0.872\,{$\scriptscriptstyle\pm0.037$} & 0.511 & 0.353 & 0.847\,{$\scriptscriptstyle\pm0.013$} \\
gemma-2-2b & \underline{0.897}\,{$\scriptscriptstyle\pm0.001$} & 0.891\,{$\scriptscriptstyle\pm0.001$} & 0.472\,{$\scriptscriptstyle\pm0.005$} & \textbf{0.928}\,{$\scriptscriptstyle\pm0.008$} & 0.853\,{$\scriptscriptstyle\pm0.001$} & 0.819\,{$\scriptscriptstyle\pm0.009$} & 0.255\,{$\scriptscriptstyle\pm0.087$} & 0.831\,{$\scriptscriptstyle\pm0.024$} & 0.555 & 0.384 & 0.840\,{$\scriptscriptstyle\pm0.007$} \\
\midrule
\multicolumn{12}{l}{\textbf{Source: gemma-2-2b}} \\
Llama-3.2-1B & 0.820\,{$\scriptscriptstyle\pm0.001$} & 0.923\,{$\scriptscriptstyle\pm0.001$} & 0.518\,{$\scriptscriptstyle\pm0.013$} & \underline{0.968}\,{$\scriptscriptstyle\pm0.001$} & 0.871\,{$\scriptscriptstyle\pm0.001$} & 0.829\,{$\scriptscriptstyle\pm0.004$} & \textbf{0.970}\,{$\scriptscriptstyle\pm0.001$} & 0.884\,{$\scriptscriptstyle\pm0.004$} & 0.614 & 0.437 & 0.856\,{$\scriptscriptstyle\pm0.011$} \\
Llama-3.2-3B & 0.866\,{$\scriptscriptstyle\pm0.001$} & 0.894\,{$\scriptscriptstyle\pm0.001$} & 0.565\,{$\scriptscriptstyle\pm0.004$} & \textbf{0.952}\,{$\scriptscriptstyle\pm0.002$} & 0.854\,{$\scriptscriptstyle\pm0.003$} & 0.795\,{$\scriptscriptstyle\pm0.008$} & \underline{0.951}\,{$\scriptscriptstyle\pm0.001$} & 0.835\,{$\scriptscriptstyle\pm0.033$} & 0.638 & 0.504 & 0.854\,{$\scriptscriptstyle\pm0.006$} \\
Llama-3.1-8B & 0.765\,{$\scriptscriptstyle\pm0.001$} & 0.916\,{$\scriptscriptstyle\pm0.001$} & 0.513\,{$\scriptscriptstyle\pm0.011$} & \textbf{0.969}\,{$\scriptscriptstyle\pm0.003$} & 0.822\,{$\scriptscriptstyle\pm0.005$} & 0.823\,{$\scriptscriptstyle\pm0.001$} & \underline{0.967}\,{$\scriptscriptstyle\pm0.002$} & 0.894\,{$\scriptscriptstyle\pm0.003$} & 0.605 & 0.552 & 0.892\,{$\scriptscriptstyle\pm0.001$} \\
Qwen2.5-1.5B & 0.733\,{$\scriptscriptstyle\pm0.002$} & 0.743\,{$\scriptscriptstyle\pm0.002$} & 0.330\,{$\scriptscriptstyle\pm0.003$} & \underline{0.912}\,{$\scriptscriptstyle\pm0.010$} & 0.787\,{$\scriptscriptstyle\pm0.003$} & 0.688\,{$\scriptscriptstyle\pm0.006$} & \textbf{0.919}\,{$\scriptscriptstyle\pm0.002$} & 0.628\,{$\scriptscriptstyle\pm0.017$} & 0.386 & 0.209 & 0.783\,{$\scriptscriptstyle\pm0.005$} \\
Qwen2.5-3B & 0.632\,{$\scriptscriptstyle\pm0.001$} & 0.731\,{$\scriptscriptstyle\pm0.002$} & 0.279\,{$\scriptscriptstyle\pm0.005$} & \underline{0.937}\,{$\scriptscriptstyle\pm0.002$} & 0.712\,{$\scriptscriptstyle\pm0.004$} & 0.625\,{$\scriptscriptstyle\pm0.013$} & \textbf{0.943}\,{$\scriptscriptstyle\pm0.001$} & 0.594\,{$\scriptscriptstyle\pm0.065$} & 0.351 & 0.203 & 0.758\,{$\scriptscriptstyle\pm0.012$} \\
Qwen2.5-7B & 0.837\,{$\scriptscriptstyle\pm0.000$} & \underline{0.855}\,{$\scriptscriptstyle\pm0.002$} & 0.372\,{$\scriptscriptstyle\pm0.011$} & \textbf{0.958}\,{$\scriptscriptstyle\pm0.011$} & 0.780\,{$\scriptscriptstyle\pm0.008$} & 0.704\,{$\scriptscriptstyle\pm0.006$} & 0.699\,{$\scriptscriptstyle\pm0.459$} & 0.759\,{$\scriptscriptstyle\pm0.043$} & 0.415 & 0.339 & 0.821\,{$\scriptscriptstyle\pm0.011$} \\
\midrule\midrule
\textbf{\#\,1st-place wins} & \textbf{0} & \textbf{2} & \textbf{0} & \textbf{19} & \textbf{0} & \textbf{0} & \textbf{21} & \textbf{0} & \textbf{0} & \textbf{0} & \textbf{0} \\
\#\,2nd-place wins & 2 & 9 & 0 & 20 & 0 & 0 & 7 & 0 & 0 & 0 & 4 \\
\end{longtable}\endgroup
\twocolumn

\onecolumn
\begingroup\fontsize{4.6pt}{5.6pt}\selectfont\setlength{\tabcolsep}{0.6pt}\renewcommand{\arraystretch}{1.1}
\setlength\LTleft{0pt}\setlength\LTright{0pt}\begin{longtable}{@{\extracolsep{\fill}}lccccccccccc}
\caption{Jaccard@5 ($\uparrow$): full per-pair results, seed mean\,$\pm$\,std. Best \textbf{bold}, second \underline{underlined}; grouped by source (donor) LLM.}\label{tab:pp-jaccard5}\\
\toprule
Target & \textbf{Term} & \textbf{Rel-dep} & \textbf{CKA} & \textbf{MATCHA} & \textbf{MATCHA-K} & \textbf{MATCHA-FL} & \textbf{MATCHA-avg} & \textbf{MATCHA-no FF} & \textbf{CmdV} & \textbf{HOT} & \textbf{MLP} \\
\midrule\endfirsthead
\multicolumn{12}{c}{\tablename\ \thetable{} -- continued}\\ \toprule
Target & \textbf{Term} & \textbf{Rel-dep} & \textbf{CKA} & \textbf{MATCHA} & \textbf{MATCHA-K} & \textbf{MATCHA-FL} & \textbf{MATCHA-avg} & \textbf{MATCHA-no FF} & \textbf{CmdV} & \textbf{HOT} & \textbf{MLP} \\
\midrule\endhead
\bottomrule\endfoot
\multicolumn{12}{l}{\textbf{Source: Llama-3.2-1B}} \\
Llama-3.2-3B & 0.271\,{$\scriptscriptstyle\pm0.000$} & 0.443\,{$\scriptscriptstyle\pm0.001$} & 0.320\,{$\scriptscriptstyle\pm0.002$} & 0.464\,{$\scriptscriptstyle\pm0.001$} & 0.415\,{$\scriptscriptstyle\pm0.005$} & 0.390\,{$\scriptscriptstyle\pm0.003$} & \underline{0.467}\,{$\scriptscriptstyle\pm0.000$} & \textbf{0.473}\,{$\scriptscriptstyle\pm0.006$} & 0.385 & 0.324 & 0.380\,{$\scriptscriptstyle\pm0.003$} \\
Llama-3.1-8B & 0.251\,{$\scriptscriptstyle\pm0.001$} & 0.435\,{$\scriptscriptstyle\pm0.001$} & 0.306\,{$\scriptscriptstyle\pm0.001$} & 0.455\,{$\scriptscriptstyle\pm0.002$} & 0.414\,{$\scriptscriptstyle\pm0.001$} & 0.371\,{$\scriptscriptstyle\pm0.002$} & \textbf{0.460}\,{$\scriptscriptstyle\pm0.002$} & \underline{0.460}\,{$\scriptscriptstyle\pm0.011$} & 0.369 & 0.326 & 0.385\,{$\scriptscriptstyle\pm0.004$} \\
Qwen2.5-1.5B & 0.299\,{$\scriptscriptstyle\pm0.001$} & 0.364\,{$\scriptscriptstyle\pm0.001$} & 0.177\,{$\scriptscriptstyle\pm0.002$} & \underline{0.380}\,{$\scriptscriptstyle\pm0.001$} & 0.345\,{$\scriptscriptstyle\pm0.000$} & 0.312\,{$\scriptscriptstyle\pm0.006$} & \textbf{0.384}\,{$\scriptscriptstyle\pm0.001$} & 0.341\,{$\scriptscriptstyle\pm0.013$} & 0.292 & 0.242 & 0.313\,{$\scriptscriptstyle\pm0.002$} \\
Qwen2.5-3B & 0.262\,{$\scriptscriptstyle\pm0.001$} & 0.352\,{$\scriptscriptstyle\pm0.000$} & 0.178\,{$\scriptscriptstyle\pm0.001$} & \underline{0.372}\,{$\scriptscriptstyle\pm0.002$} & 0.300\,{$\scriptscriptstyle\pm0.003$} & 0.292\,{$\scriptscriptstyle\pm0.005$} & \textbf{0.373}\,{$\scriptscriptstyle\pm0.002$} & 0.311\,{$\scriptscriptstyle\pm0.013$} & 0.268 & 0.236 & 0.298\,{$\scriptscriptstyle\pm0.008$} \\
Qwen2.5-7B & 0.292\,{$\scriptscriptstyle\pm0.002$} & 0.381\,{$\scriptscriptstyle\pm0.000$} & 0.217\,{$\scriptscriptstyle\pm0.001$} & \underline{0.398}\,{$\scriptscriptstyle\pm0.002$} & 0.317\,{$\scriptscriptstyle\pm0.007$} & 0.320\,{$\scriptscriptstyle\pm0.003$} & \textbf{0.399}\,{$\scriptscriptstyle\pm0.001$} & 0.371\,{$\scriptscriptstyle\pm0.008$} & 0.293 & 0.261 & 0.326\,{$\scriptscriptstyle\pm0.001$} \\
gemma-2-2b & 0.259\,{$\scriptscriptstyle\pm0.000$} & 0.391\,{$\scriptscriptstyle\pm0.001$} & 0.236\,{$\scriptscriptstyle\pm0.002$} & \textbf{0.406}\,{$\scriptscriptstyle\pm0.005$} & 0.369\,{$\scriptscriptstyle\pm0.004$} & 0.352\,{$\scriptscriptstyle\pm0.006$} & 0.319\,{$\scriptscriptstyle\pm0.106$} & \underline{0.399}\,{$\scriptscriptstyle\pm0.007$} & 0.313 & 0.250 & 0.360\,{$\scriptscriptstyle\pm0.002$} \\
\midrule
\multicolumn{12}{l}{\textbf{Source: Llama-3.2-3B}} \\
Llama-3.2-1B & 0.378\,{$\scriptscriptstyle\pm0.002$} & 0.492\,{$\scriptscriptstyle\pm0.001$} & 0.273\,{$\scriptscriptstyle\pm0.003$} & 0.497\,{$\scriptscriptstyle\pm0.002$} & 0.395\,{$\scriptscriptstyle\pm0.009$} & 0.389\,{$\scriptscriptstyle\pm0.007$} & \textbf{0.501}\,{$\scriptscriptstyle\pm0.002$} & \underline{0.498}\,{$\scriptscriptstyle\pm0.007$} & 0.478 & 0.313 & 0.351\,{$\scriptscriptstyle\pm0.001$} \\
Llama-3.1-8B & 0.461\,{$\scriptscriptstyle\pm0.000$} & 0.523\,{$\scriptscriptstyle\pm0.001$} & 0.272\,{$\scriptscriptstyle\pm0.007$} & 0.521\,{$\scriptscriptstyle\pm0.001$} & 0.455\,{$\scriptscriptstyle\pm0.003$} & 0.367\,{$\scriptscriptstyle\pm0.005$} & \underline{0.525}\,{$\scriptscriptstyle\pm0.001$} & \textbf{0.527}\,{$\scriptscriptstyle\pm0.001$} & 0.514 & 0.365 & 0.368\,{$\scriptscriptstyle\pm0.007$} \\
Qwen2.5-1.5B & 0.341\,{$\scriptscriptstyle\pm0.001$} & 0.341\,{$\scriptscriptstyle\pm0.001$} & 0.118\,{$\scriptscriptstyle\pm0.002$} & \textbf{0.391}\,{$\scriptscriptstyle\pm0.003$} & 0.334\,{$\scriptscriptstyle\pm0.003$} & 0.289\,{$\scriptscriptstyle\pm0.007$} & \underline{0.391}\,{$\scriptscriptstyle\pm0.002$} & 0.334\,{$\scriptscriptstyle\pm0.004$} & 0.317 & 0.239 & 0.294\,{$\scriptscriptstyle\pm0.000$} \\
Qwen2.5-3B & 0.330\,{$\scriptscriptstyle\pm0.001$} & 0.327\,{$\scriptscriptstyle\pm0.001$} & 0.150\,{$\scriptscriptstyle\pm0.001$} & \textbf{0.387}\,{$\scriptscriptstyle\pm0.001$} & 0.290\,{$\scriptscriptstyle\pm0.005$} & 0.256\,{$\scriptscriptstyle\pm0.003$} & \underline{0.387}\,{$\scriptscriptstyle\pm0.001$} & 0.350\,{$\scriptscriptstyle\pm0.007$} & 0.301 & 0.236 & 0.277\,{$\scriptscriptstyle\pm0.002$} \\
Qwen2.5-7B & 0.379\,{$\scriptscriptstyle\pm0.000$} & \underline{0.379}\,{$\scriptscriptstyle\pm0.000$} & 0.120\,{$\scriptscriptstyle\pm0.006$} & \textbf{0.424}\,{$\scriptscriptstyle\pm0.002$} & 0.281\,{$\scriptscriptstyle\pm0.004$} & 0.285\,{$\scriptscriptstyle\pm0.006$} & 0.357\,{$\scriptscriptstyle\pm0.116$} & 0.367\,{$\scriptscriptstyle\pm0.018$} & 0.337 & 0.264 & 0.307\,{$\scriptscriptstyle\pm0.006$} \\
gemma-2-2b & 0.416\,{$\scriptscriptstyle\pm0.001$} & \underline{0.420}\,{$\scriptscriptstyle\pm0.001$} & 0.186\,{$\scriptscriptstyle\pm0.003$} & \textbf{0.438}\,{$\scriptscriptstyle\pm0.007$} & 0.381\,{$\scriptscriptstyle\pm0.001$} & 0.343\,{$\scriptscriptstyle\pm0.005$} & 0.387\,{$\scriptscriptstyle\pm0.088$} & 0.417\,{$\scriptscriptstyle\pm0.002$} & 0.403 & 0.262 & 0.334\,{$\scriptscriptstyle\pm0.004$} \\
\midrule
\multicolumn{12}{l}{\textbf{Source: Llama-3.1-8B}} \\
Llama-3.2-1B & 0.354\,{$\scriptscriptstyle\pm0.001$} & 0.481\,{$\scriptscriptstyle\pm0.001$} & 0.286\,{$\scriptscriptstyle\pm0.003$} & 0.490\,{$\scriptscriptstyle\pm0.001$} & 0.397\,{$\scriptscriptstyle\pm0.007$} & 0.393\,{$\scriptscriptstyle\pm0.003$} & \textbf{0.495}\,{$\scriptscriptstyle\pm0.001$} & \underline{0.492}\,{$\scriptscriptstyle\pm0.002$} & 0.476 & 0.304 & 0.346\,{$\scriptscriptstyle\pm0.006$} \\
Llama-3.2-3B & 0.502\,{$\scriptscriptstyle\pm0.000$} & 0.543\,{$\scriptscriptstyle\pm0.001$} & 0.304\,{$\scriptscriptstyle\pm0.002$} & 0.530\,{$\scriptscriptstyle\pm0.001$} & 0.449\,{$\scriptscriptstyle\pm0.003$} & 0.378\,{$\scriptscriptstyle\pm0.009$} & 0.536\,{$\scriptscriptstyle\pm0.001$} & \underline{0.544}\,{$\scriptscriptstyle\pm0.001$} & \textbf{0.546} & 0.344 & 0.344\,{$\scriptscriptstyle\pm0.006$} \\
Qwen2.5-1.5B & 0.306\,{$\scriptscriptstyle\pm0.001$} & 0.341\,{$\scriptscriptstyle\pm0.001$} & 0.143\,{$\scriptscriptstyle\pm0.006$} & \textbf{0.395}\,{$\scriptscriptstyle\pm0.006$} & 0.329\,{$\scriptscriptstyle\pm0.008$} & 0.305\,{$\scriptscriptstyle\pm0.002$} & \underline{0.395}\,{$\scriptscriptstyle\pm0.005$} & 0.356\,{$\scriptscriptstyle\pm0.006$} & 0.333 & 0.236 & 0.285\,{$\scriptscriptstyle\pm0.002$} \\
Qwen2.5-3B & 0.334\,{$\scriptscriptstyle\pm0.001$} & 0.330\,{$\scriptscriptstyle\pm0.001$} & 0.150\,{$\scriptscriptstyle\pm0.010$} & \textbf{0.394}\,{$\scriptscriptstyle\pm0.003$} & 0.311\,{$\scriptscriptstyle\pm0.003$} & 0.275\,{$\scriptscriptstyle\pm0.009$} & \underline{0.392}\,{$\scriptscriptstyle\pm0.004$} & 0.337\,{$\scriptscriptstyle\pm0.015$} & 0.315 & 0.230 & 0.266\,{$\scriptscriptstyle\pm0.005$} \\
Qwen2.5-7B & 0.352\,{$\scriptscriptstyle\pm0.001$} & 0.389\,{$\scriptscriptstyle\pm0.001$} & 0.221\,{$\scriptscriptstyle\pm0.005$} & \textbf{0.440}\,{$\scriptscriptstyle\pm0.003$} & 0.321\,{$\scriptscriptstyle\pm0.002$} & 0.309\,{$\scriptscriptstyle\pm0.001$} & 0.352\,{$\scriptscriptstyle\pm0.151$} & \underline{0.404}\,{$\scriptscriptstyle\pm0.002$} & 0.358 & 0.259 & 0.295\,{$\scriptscriptstyle\pm0.002$} \\
gemma-2-2b & 0.389\,{$\scriptscriptstyle\pm0.001$} & 0.426\,{$\scriptscriptstyle\pm0.001$} & 0.219\,{$\scriptscriptstyle\pm0.001$} & \textbf{0.453}\,{$\scriptscriptstyle\pm0.001$} & 0.393\,{$\scriptscriptstyle\pm0.001$} & 0.356\,{$\scriptscriptstyle\pm0.004$} & 0.316\,{$\scriptscriptstyle\pm0.116$} & \underline{0.431}\,{$\scriptscriptstyle\pm0.001$} & 0.416 & 0.250 & 0.323\,{$\scriptscriptstyle\pm0.002$} \\
\midrule
\multicolumn{12}{l}{\textbf{Source: Qwen2.5-1.5B}} \\
Llama-3.2-1B & 0.387\,{$\scriptscriptstyle\pm0.001$} & 0.366\,{$\scriptscriptstyle\pm0.002$} & 0.294\,{$\scriptscriptstyle\pm0.003$} & \textbf{0.414}\,{$\scriptscriptstyle\pm0.005$} & 0.392\,{$\scriptscriptstyle\pm0.003$} & 0.365\,{$\scriptscriptstyle\pm0.006$} & \underline{0.409}\,{$\scriptscriptstyle\pm0.011$} & 0.359\,{$\scriptscriptstyle\pm0.008$} & 0.269 & 0.253 & 0.368\,{$\scriptscriptstyle\pm0.003$} \\
Llama-3.2-3B & 0.337\,{$\scriptscriptstyle\pm0.000$} & 0.336\,{$\scriptscriptstyle\pm0.001$} & 0.244\,{$\scriptscriptstyle\pm0.000$} & \textbf{0.394}\,{$\scriptscriptstyle\pm0.001$} & \underline{0.360}\,{$\scriptscriptstyle\pm0.002$} & 0.342\,{$\scriptscriptstyle\pm0.005$} & 0.346\,{$\scriptscriptstyle\pm0.067$} & 0.333\,{$\scriptscriptstyle\pm0.011$} & 0.245 & 0.246 & 0.350\,{$\scriptscriptstyle\pm0.001$} \\
Llama-3.1-8B & 0.294\,{$\scriptscriptstyle\pm0.001$} & 0.326\,{$\scriptscriptstyle\pm0.001$} & 0.278\,{$\scriptscriptstyle\pm0.001$} & \underline{0.388}\,{$\scriptscriptstyle\pm0.001$} & 0.366\,{$\scriptscriptstyle\pm0.002$} & 0.349\,{$\scriptscriptstyle\pm0.006$} & \textbf{0.389}\,{$\scriptscriptstyle\pm0.001$} & 0.355\,{$\scriptscriptstyle\pm0.001$} & 0.237 & 0.250 & 0.369\,{$\scriptscriptstyle\pm0.002$} \\
Qwen2.5-3B & 0.357\,{$\scriptscriptstyle\pm0.001$} & 0.398\,{$\scriptscriptstyle\pm0.002$} & 0.202\,{$\scriptscriptstyle\pm0.003$} & \textbf{0.404}\,{$\scriptscriptstyle\pm0.002$} & 0.353\,{$\scriptscriptstyle\pm0.002$} & 0.325\,{$\scriptscriptstyle\pm0.009$} & \underline{0.403}\,{$\scriptscriptstyle\pm0.002$} & 0.300\,{$\scriptscriptstyle\pm0.006$} & 0.293 & 0.250 & 0.329\,{$\scriptscriptstyle\pm0.001$} \\
Qwen2.5-7B & 0.403\,{$\scriptscriptstyle\pm0.001$} & 0.408\,{$\scriptscriptstyle\pm0.001$} & 0.220\,{$\scriptscriptstyle\pm0.001$} & \textbf{0.421}\,{$\scriptscriptstyle\pm0.002$} & 0.370\,{$\scriptscriptstyle\pm0.002$} & 0.348\,{$\scriptscriptstyle\pm0.007$} & \underline{0.420}\,{$\scriptscriptstyle\pm0.001$} & 0.377\,{$\scriptscriptstyle\pm0.029$} & 0.300 & 0.258 & 0.347\,{$\scriptscriptstyle\pm0.002$} \\
gemma-2-2b & 0.335\,{$\scriptscriptstyle\pm0.002$} & 0.315\,{$\scriptscriptstyle\pm0.001$} & 0.241\,{$\scriptscriptstyle\pm0.000$} & \textbf{0.376}\,{$\scriptscriptstyle\pm0.003$} & \underline{0.355}\,{$\scriptscriptstyle\pm0.003$} & 0.334\,{$\scriptscriptstyle\pm0.005$} & 0.342\,{$\scriptscriptstyle\pm0.059$} & 0.306\,{$\scriptscriptstyle\pm0.017$} & 0.216 & 0.224 & 0.354\,{$\scriptscriptstyle\pm0.003$} \\
\midrule
\multicolumn{12}{l}{\textbf{Source: Qwen2.5-3B}} \\
Llama-3.2-1B & 0.376\,{$\scriptscriptstyle\pm0.001$} & 0.374\,{$\scriptscriptstyle\pm0.001$} & 0.305\,{$\scriptscriptstyle\pm0.003$} & \textbf{0.423}\,{$\scriptscriptstyle\pm0.001$} & 0.399\,{$\scriptscriptstyle\pm0.002$} & 0.367\,{$\scriptscriptstyle\pm0.005$} & \underline{0.417}\,{$\scriptscriptstyle\pm0.015$} & 0.368\,{$\scriptscriptstyle\pm0.003$} & 0.288 & 0.244 & 0.356\,{$\scriptscriptstyle\pm0.003$} \\
Llama-3.2-3B & 0.368\,{$\scriptscriptstyle\pm0.001$} & 0.320\,{$\scriptscriptstyle\pm0.001$} & 0.313\,{$\scriptscriptstyle\pm0.000$} & \underline{0.387}\,{$\scriptscriptstyle\pm0.015$} & 0.378\,{$\scriptscriptstyle\pm0.003$} & 0.354\,{$\scriptscriptstyle\pm0.010$} & \textbf{0.388}\,{$\scriptscriptstyle\pm0.013$} & 0.366\,{$\scriptscriptstyle\pm0.006$} & 0.256 & 0.238 & 0.340\,{$\scriptscriptstyle\pm0.005$} \\
Llama-3.1-8B & 0.360\,{$\scriptscriptstyle\pm0.000$} & 0.329\,{$\scriptscriptstyle\pm0.001$} & 0.304\,{$\scriptscriptstyle\pm0.003$} & \underline{0.411}\,{$\scriptscriptstyle\pm0.001$} & 0.389\,{$\scriptscriptstyle\pm0.003$} & 0.360\,{$\scriptscriptstyle\pm0.007$} & \textbf{0.414}\,{$\scriptscriptstyle\pm0.001$} & 0.367\,{$\scriptscriptstyle\pm0.004$} & 0.258 & 0.255 & 0.367\,{$\scriptscriptstyle\pm0.004$} \\
Qwen2.5-1.5B & 0.406\,{$\scriptscriptstyle\pm0.001$} & 0.405\,{$\scriptscriptstyle\pm0.001$} & 0.240\,{$\scriptscriptstyle\pm0.002$} & \textbf{0.419}\,{$\scriptscriptstyle\pm0.013$} & 0.390\,{$\scriptscriptstyle\pm0.001$} & 0.356\,{$\scriptscriptstyle\pm0.004$} & \underline{0.414}\,{$\scriptscriptstyle\pm0.022$} & 0.340\,{$\scriptscriptstyle\pm0.012$} & 0.325 & 0.246 & 0.339\,{$\scriptscriptstyle\pm0.003$} \\
Qwen2.5-7B & \underline{0.421}\,{$\scriptscriptstyle\pm0.001$} & 0.417\,{$\scriptscriptstyle\pm0.001$} & 0.278\,{$\scriptscriptstyle\pm0.001$} & \textbf{0.446}\,{$\scriptscriptstyle\pm0.004$} & 0.392\,{$\scriptscriptstyle\pm0.002$} & 0.359\,{$\scriptscriptstyle\pm0.004$} & 0.367\,{$\scriptscriptstyle\pm0.144$} & 0.396\,{$\scriptscriptstyle\pm0.027$} & 0.342 & 0.267 & 0.350\,{$\scriptscriptstyle\pm0.014$} \\
gemma-2-2b & 0.361\,{$\scriptscriptstyle\pm0.001$} & 0.309\,{$\scriptscriptstyle\pm0.001$} & 0.248\,{$\scriptscriptstyle\pm0.002$} & \textbf{0.387}\,{$\scriptscriptstyle\pm0.003$} & \underline{0.362}\,{$\scriptscriptstyle\pm0.003$} & 0.335\,{$\scriptscriptstyle\pm0.009$} & 0.267\,{$\scriptscriptstyle\pm0.041$} & 0.319\,{$\scriptscriptstyle\pm0.029$} & 0.238 & 0.210 & 0.341\,{$\scriptscriptstyle\pm0.002$} \\
\midrule
\multicolumn{12}{l}{\textbf{Source: Qwen2.5-7B}} \\
Llama-3.2-1B & 0.406\,{$\scriptscriptstyle\pm0.000$} & 0.411\,{$\scriptscriptstyle\pm0.002$} & 0.292\,{$\scriptscriptstyle\pm0.007$} & \underline{0.434}\,{$\scriptscriptstyle\pm0.001$} & 0.403\,{$\scriptscriptstyle\pm0.005$} & 0.386\,{$\scriptscriptstyle\pm0.006$} & \textbf{0.439}\,{$\scriptscriptstyle\pm0.001$} & 0.400\,{$\scriptscriptstyle\pm0.010$} & 0.365 & 0.257 & 0.342\,{$\scriptscriptstyle\pm0.004$} \\
Llama-3.2-3B & 0.390\,{$\scriptscriptstyle\pm0.001$} & 0.388\,{$\scriptscriptstyle\pm0.001$} & 0.306\,{$\scriptscriptstyle\pm0.005$} & \underline{0.431}\,{$\scriptscriptstyle\pm0.001$} & 0.397\,{$\scriptscriptstyle\pm0.003$} & 0.378\,{$\scriptscriptstyle\pm0.009$} & \textbf{0.432}\,{$\scriptscriptstyle\pm0.001$} & 0.410\,{$\scriptscriptstyle\pm0.007$} & 0.352 & 0.256 & 0.339\,{$\scriptscriptstyle\pm0.002$} \\
Llama-3.1-8B & 0.351\,{$\scriptscriptstyle\pm0.000$} & 0.391\,{$\scriptscriptstyle\pm0.001$} & 0.270\,{$\scriptscriptstyle\pm0.003$} & \underline{0.447}\,{$\scriptscriptstyle\pm0.002$} & 0.415\,{$\scriptscriptstyle\pm0.003$} & 0.390\,{$\scriptscriptstyle\pm0.005$} & \textbf{0.447}\,{$\scriptscriptstyle\pm0.001$} & 0.415\,{$\scriptscriptstyle\pm0.009$} & 0.346 & 0.264 & 0.363\,{$\scriptscriptstyle\pm0.007$} \\
Qwen2.5-1.5B & 0.411\,{$\scriptscriptstyle\pm0.001$} & \underline{0.411}\,{$\scriptscriptstyle\pm0.001$} & 0.217\,{$\scriptscriptstyle\pm0.002$} & \textbf{0.429}\,{$\scriptscriptstyle\pm0.002$} & 0.390\,{$\scriptscriptstyle\pm0.002$} & 0.351\,{$\scriptscriptstyle\pm0.005$} & 0.358\,{$\scriptscriptstyle\pm0.121$} & 0.403\,{$\scriptscriptstyle\pm0.007$} & 0.373 & 0.256 & 0.314\,{$\scriptscriptstyle\pm0.013$} \\
Qwen2.5-3B & 0.385\,{$\scriptscriptstyle\pm0.001$} & 0.431\,{$\scriptscriptstyle\pm0.001$} & 0.203\,{$\scriptscriptstyle\pm0.000$} & \underline{0.444}\,{$\scriptscriptstyle\pm0.003$} & 0.362\,{$\scriptscriptstyle\pm0.002$} & 0.344\,{$\scriptscriptstyle\pm0.005$} & \textbf{0.445}\,{$\scriptscriptstyle\pm0.002$} & 0.389\,{$\scriptscriptstyle\pm0.022$} & 0.386 & 0.264 & 0.312\,{$\scriptscriptstyle\pm0.005$} \\
gemma-2-2b & \underline{0.386}\,{$\scriptscriptstyle\pm0.001$} & 0.370\,{$\scriptscriptstyle\pm0.001$} & 0.211\,{$\scriptscriptstyle\pm0.004$} & \textbf{0.416}\,{$\scriptscriptstyle\pm0.006$} & 0.385\,{$\scriptscriptstyle\pm0.001$} & 0.358\,{$\scriptscriptstyle\pm0.006$} & 0.247\,{$\scriptscriptstyle\pm0.028$} & 0.377\,{$\scriptscriptstyle\pm0.005$} & 0.323 & 0.232 & 0.336\,{$\scriptscriptstyle\pm0.004$} \\
\midrule
\multicolumn{12}{l}{\textbf{Source: gemma-2-2b}} \\
Llama-3.2-1B & 0.359\,{$\scriptscriptstyle\pm0.002$} & 0.410\,{$\scriptscriptstyle\pm0.000$} & 0.247\,{$\scriptscriptstyle\pm0.009$} & \textbf{0.444}\,{$\scriptscriptstyle\pm0.001$} & 0.401\,{$\scriptscriptstyle\pm0.001$} & 0.364\,{$\scriptscriptstyle\pm0.004$} & \underline{0.443}\,{$\scriptscriptstyle\pm0.001$} & 0.410\,{$\scriptscriptstyle\pm0.001$} & 0.363 & 0.260 & 0.345\,{$\scriptscriptstyle\pm0.002$} \\
Llama-3.2-3B & 0.375\,{$\scriptscriptstyle\pm0.000$} & 0.392\,{$\scriptscriptstyle\pm0.000$} & 0.258\,{$\scriptscriptstyle\pm0.001$} & \textbf{0.425}\,{$\scriptscriptstyle\pm0.001$} & 0.391\,{$\scriptscriptstyle\pm0.002$} & 0.346\,{$\scriptscriptstyle\pm0.002$} & \underline{0.425}\,{$\scriptscriptstyle\pm0.001$} & 0.384\,{$\scriptscriptstyle\pm0.015$} & 0.362 & 0.269 & 0.342\,{$\scriptscriptstyle\pm0.002$} \\
Llama-3.1-8B & 0.335\,{$\scriptscriptstyle\pm0.000$} & 0.405\,{$\scriptscriptstyle\pm0.001$} & 0.232\,{$\scriptscriptstyle\pm0.006$} & \underline{0.437}\,{$\scriptscriptstyle\pm0.000$} & 0.367\,{$\scriptscriptstyle\pm0.002$} & 0.374\,{$\scriptscriptstyle\pm0.003$} & \textbf{0.438}\,{$\scriptscriptstyle\pm0.002$} & 0.421\,{$\scriptscriptstyle\pm0.003$} & 0.362 & 0.280 & 0.365\,{$\scriptscriptstyle\pm0.002$} \\
Qwen2.5-1.5B & 0.314\,{$\scriptscriptstyle\pm0.001$} & 0.317\,{$\scriptscriptstyle\pm0.001$} & 0.155\,{$\scriptscriptstyle\pm0.002$} & \underline{0.367}\,{$\scriptscriptstyle\pm0.003$} & 0.323\,{$\scriptscriptstyle\pm0.001$} & 0.283\,{$\scriptscriptstyle\pm0.003$} & \textbf{0.368}\,{$\scriptscriptstyle\pm0.003$} & 0.275\,{$\scriptscriptstyle\pm0.002$} & 0.275 & 0.225 & 0.296\,{$\scriptscriptstyle\pm0.002$} \\
Qwen2.5-3B & 0.281\,{$\scriptscriptstyle\pm0.000$} & 0.303\,{$\scriptscriptstyle\pm0.001$} & 0.136\,{$\scriptscriptstyle\pm0.002$} & \textbf{0.351}\,{$\scriptscriptstyle\pm0.001$} & 0.286\,{$\scriptscriptstyle\pm0.002$} & 0.260\,{$\scriptscriptstyle\pm0.005$} & \underline{0.351}\,{$\scriptscriptstyle\pm0.001$} & 0.258\,{$\scriptscriptstyle\pm0.031$} & 0.253 & 0.217 & 0.274\,{$\scriptscriptstyle\pm0.002$} \\
Qwen2.5-7B & 0.342\,{$\scriptscriptstyle\pm0.001$} & \underline{0.346}\,{$\scriptscriptstyle\pm0.001$} & 0.171\,{$\scriptscriptstyle\pm0.007$} & \textbf{0.386}\,{$\scriptscriptstyle\pm0.002$} & 0.321\,{$\scriptscriptstyle\pm0.001$} & 0.283\,{$\scriptscriptstyle\pm0.003$} & 0.328\,{$\scriptscriptstyle\pm0.105$} & 0.332\,{$\scriptscriptstyle\pm0.012$} & 0.289 & 0.241 & 0.307\,{$\scriptscriptstyle\pm0.002$} \\
\midrule\midrule
\textbf{\#\,1st-place wins} & \textbf{0} & \textbf{0} & \textbf{0} & \textbf{24} & \textbf{0} & \textbf{0} & \textbf{15} & \textbf{2} & \textbf{1} & \textbf{0} & \textbf{0} \\
\#\,2nd-place wins & 2 & 4 & 0 & 12 & 3 & 0 & 14 & 7 & 0 & 0 & 0 \\
\end{longtable}\endgroup
\twocolumn

\subsubsection{MSE per Model Pair}
\label{sec:appendix-mse}
In \Cref{tab:mse-pp}, we report the pairwise breakdown of the results.


\onecolumn
\begingroup\fontsize{4.8pt}{5.8pt}\selectfont\setlength{\tabcolsep}{1pt}\renewcommand{\arraystretch}{1.1}
\setlength\LTleft{0pt}\setlength\LTright{0pt}\begin{longtable}{@{\extracolsep{\fill}}lcccccccccc}
\caption{MSE loss ($\downarrow$): full per-pair results, seed mean\,$\pm$\,std. Lowest \textbf{bold}, second \underline{underlined}; grouped by source (donor) LLM.}\label{tab:mse-pp}\\
\toprule
Target & \textbf{Terminal} & \textbf{Relative depth} & \textbf{CKA} & \textbf{MATCHA} & \textbf{MATCHA-K} & \textbf{MATCHA-FL} & \textbf{MATCHA-avg} & \textbf{MATCHA-no FF} & \textbf{CommandV} & \textbf{HOT} \\
\midrule\endfirsthead
\multicolumn{11}{c}{\tablename\ \thetable{} -- continued}\\ \toprule
Target & \textbf{Terminal} & \textbf{Relative depth} & \textbf{CKA} & \textbf{MATCHA} & \textbf{MATCHA-K} & \textbf{MATCHA-FL} & \textbf{MATCHA-avg} & \textbf{MATCHA-no FF} & \textbf{CommandV} & \textbf{HOT} \\
\midrule\endhead
\bottomrule\endfoot
\multicolumn{11}{l}{\textbf{Source: Llama-3.2-1B}} \\
Llama-3.2-3B & 0.077\,{$\scriptscriptstyle\pm0.000$} & 0.073\,{$\scriptscriptstyle\pm0.000$} & 0.120\,{$\scriptscriptstyle\pm0.000$} & \underline{0.071}\,{$\scriptscriptstyle\pm0.000$} & 0.079\,{$\scriptscriptstyle\pm0.001$} & 0.072\,{$\scriptscriptstyle\pm0.001$} & \textbf{0.070}\,{$\scriptscriptstyle\pm0.000$} & 0.077\,{$\scriptscriptstyle\pm0.000$} & 0.516 & 0.145 \\
Llama-3.1-8B & 0.090\,{$\scriptscriptstyle\pm0.000$} & 0.113\,{$\scriptscriptstyle\pm0.000$} & 0.195\,{$\scriptscriptstyle\pm0.002$} & \underline{0.085}\,{$\scriptscriptstyle\pm0.000$} & 0.090\,{$\scriptscriptstyle\pm0.000$} & 0.088\,{$\scriptscriptstyle\pm0.001$} & \textbf{0.083}\,{$\scriptscriptstyle\pm0.000$} & 0.098\,{$\scriptscriptstyle\pm0.000$} & 0.627 & 0.180 \\
Qwen2.5-1.5B & 0.576\,{$\scriptscriptstyle\pm0.000$} & 0.564\,{$\scriptscriptstyle\pm0.001$} & 1.576\,{$\scriptscriptstyle\pm0.020$} & \underline{0.451}\,{$\scriptscriptstyle\pm0.002$} & 0.501\,{$\scriptscriptstyle\pm0.002$} & 0.463\,{$\scriptscriptstyle\pm0.003$} & \textbf{0.446}\,{$\scriptscriptstyle\pm0.002$} & 0.611\,{$\scriptscriptstyle\pm0.021$} & 16.772 & 5.424 \\
Qwen2.5-3B & 0.932\,{$\scriptscriptstyle\pm0.001$} & 1.481\,{$\scriptscriptstyle\pm0.000$} & 1.655\,{$\scriptscriptstyle\pm0.016$} & \underline{0.724}\,{$\scriptscriptstyle\pm0.002$} & 0.797\,{$\scriptscriptstyle\pm0.005$} & 0.738\,{$\scriptscriptstyle\pm0.001$} & \textbf{0.718}\,{$\scriptscriptstyle\pm0.002$} & 0.953\,{$\scriptscriptstyle\pm0.027$} & 49.502 & 6.687 \\
Qwen2.5-7B & 1.214\,{$\scriptscriptstyle\pm0.000$} & 1.278\,{$\scriptscriptstyle\pm0.001$} & 3.237\,{$\scriptscriptstyle\pm0.055$} & \underline{1.113}\,{$\scriptscriptstyle\pm0.002$} & 1.269\,{$\scriptscriptstyle\pm0.011$} & 1.149\,{$\scriptscriptstyle\pm0.004$} & \textbf{1.104}\,{$\scriptscriptstyle\pm0.002$} & 1.370\,{$\scriptscriptstyle\pm0.008$} & 35.587 & 5.432 \\
gemma-2-2b & 9.008\,{$\scriptscriptstyle\pm0.012$} & 9.049\,{$\scriptscriptstyle\pm0.007$} & 19.246\,{$\scriptscriptstyle\pm0.515$} & \underline{7.825}\,{$\scriptscriptstyle\pm0.022$} & 8.293\,{$\scriptscriptstyle\pm0.038$} & \textbf{7.731}\,{$\scriptscriptstyle\pm0.045$} & 16.850\,{$\scriptscriptstyle\pm8.986$} & 9.206\,{$\scriptscriptstyle\pm0.058$} & 508.487 & 35.561 \\
\midrule
\multicolumn{11}{l}{\textbf{Source: Llama-3.2-3B}} \\
Llama-3.2-1B & 0.122\,{$\scriptscriptstyle\pm0.000$} & 0.122\,{$\scriptscriptstyle\pm0.000$} & 0.304\,{$\scriptscriptstyle\pm0.015$} & 0.121\,{$\scriptscriptstyle\pm0.001$} & 0.132\,{$\scriptscriptstyle\pm0.001$} & \underline{0.120}\,{$\scriptscriptstyle\pm0.001$} & \textbf{0.120}\,{$\scriptscriptstyle\pm0.001$} & 0.130\,{$\scriptscriptstyle\pm0.000$} & 0.245 & 0.272 \\
Llama-3.1-8B & 0.079\,{$\scriptscriptstyle\pm0.000$} & \underline{0.078}\,{$\scriptscriptstyle\pm0.000$} & 0.219\,{$\scriptscriptstyle\pm0.014$} & 0.079\,{$\scriptscriptstyle\pm0.000$} & 0.088\,{$\scriptscriptstyle\pm0.001$} & 0.091\,{$\scriptscriptstyle\pm0.002$} & \textbf{0.078}\,{$\scriptscriptstyle\pm0.000$} & 0.092\,{$\scriptscriptstyle\pm0.000$} & 0.093 & 0.170 \\
Qwen2.5-1.5B & 0.581\,{$\scriptscriptstyle\pm0.001$} & 0.583\,{$\scriptscriptstyle\pm0.000$} & 2.586\,{$\scriptscriptstyle\pm0.047$} & \textbf{0.488}\,{$\scriptscriptstyle\pm0.013$} & 0.539\,{$\scriptscriptstyle\pm0.004$} & 0.533\,{$\scriptscriptstyle\pm0.010$} & \underline{0.491}\,{$\scriptscriptstyle\pm0.012$} & 0.645\,{$\scriptscriptstyle\pm0.029$} & 2.699 & 5.445 \\
Qwen2.5-3B & 0.865\,{$\scriptscriptstyle\pm0.001$} & 0.879\,{$\scriptscriptstyle\pm0.000$} & 1.790\,{$\scriptscriptstyle\pm0.010$} & \textbf{0.755}\,{$\scriptscriptstyle\pm0.003$} & 0.806\,{$\scriptscriptstyle\pm0.009$} & 0.833\,{$\scriptscriptstyle\pm0.008$} & \underline{0.756}\,{$\scriptscriptstyle\pm0.001$} & 0.975\,{$\scriptscriptstyle\pm0.022$} & 3.065 & 6.679 \\
Qwen2.5-7B & \underline{1.214}\,{$\scriptscriptstyle\pm0.001$} & 1.217\,{$\scriptscriptstyle\pm0.001$} & 5.685\,{$\scriptscriptstyle\pm0.648$} & \textbf{1.142}\,{$\scriptscriptstyle\pm0.006$} & 1.369\,{$\scriptscriptstyle\pm0.005$} & 1.268\,{$\scriptscriptstyle\pm0.008$} & 2.206\,{$\scriptscriptstyle\pm1.844$} & 1.374\,{$\scriptscriptstyle\pm0.028$} & 5.013 & 5.453 \\
gemma-2-2b & 8.190\,{$\scriptscriptstyle\pm0.010$} & 8.136\,{$\scriptscriptstyle\pm0.008$} & 21.839\,{$\scriptscriptstyle\pm0.468$} & \textbf{7.529}\,{$\scriptscriptstyle\pm0.041$} & 8.260\,{$\scriptscriptstyle\pm0.030$} & \underline{7.975}\,{$\scriptscriptstyle\pm0.131$} & 14.003\,{$\scriptscriptstyle\pm11.374$} & 8.547\,{$\scriptscriptstyle\pm0.050$} & 25.640 & 35.658 \\
\midrule
\multicolumn{11}{l}{\textbf{Source: Llama-3.1-8B}} \\
Llama-3.2-1B & 0.123\,{$\scriptscriptstyle\pm0.000$} & 0.121\,{$\scriptscriptstyle\pm0.000$} & 0.284\,{$\scriptscriptstyle\pm0.023$} & \underline{0.117}\,{$\scriptscriptstyle\pm0.000$} & 0.129\,{$\scriptscriptstyle\pm0.000$} & 0.118\,{$\scriptscriptstyle\pm0.001$} & \textbf{0.117}\,{$\scriptscriptstyle\pm0.001$} & 0.125\,{$\scriptscriptstyle\pm0.000$} & 0.243 & 0.270 \\
Llama-3.2-3B & 0.066\,{$\scriptscriptstyle\pm0.000$} & \underline{0.064}\,{$\scriptscriptstyle\pm0.000$} & 0.145\,{$\scriptscriptstyle\pm0.004$} & 0.065\,{$\scriptscriptstyle\pm0.000$} & 0.073\,{$\scriptscriptstyle\pm0.001$} & 0.070\,{$\scriptscriptstyle\pm0.001$} & \textbf{0.064}\,{$\scriptscriptstyle\pm0.000$} & 0.069\,{$\scriptscriptstyle\pm0.000$} & 0.081 & 0.133 \\
Qwen2.5-1.5B & 0.556\,{$\scriptscriptstyle\pm0.001$} & 0.550\,{$\scriptscriptstyle\pm0.001$} & 1.774\,{$\scriptscriptstyle\pm0.074$} & \textbf{0.466}\,{$\scriptscriptstyle\pm0.002$} & 0.540\,{$\scriptscriptstyle\pm0.005$} & 0.486\,{$\scriptscriptstyle\pm0.007$} & \underline{0.470}\,{$\scriptscriptstyle\pm0.001$} & 0.565\,{$\scriptscriptstyle\pm0.028$} & 0.976 & 5.413 \\
Qwen2.5-3B & 0.812\,{$\scriptscriptstyle\pm0.002$} & 0.818\,{$\scriptscriptstyle\pm0.002$} & 1.906\,{$\scriptscriptstyle\pm0.133$} & \textbf{0.722}\,{$\scriptscriptstyle\pm0.002$} & 0.804\,{$\scriptscriptstyle\pm0.010$} & 0.773\,{$\scriptscriptstyle\pm0.015$} & \underline{0.725}\,{$\scriptscriptstyle\pm0.001$} & 0.855\,{$\scriptscriptstyle\pm0.022$} & 5.939 & 6.559 \\
Qwen2.5-7B & 1.150\,{$\scriptscriptstyle\pm0.001$} & \underline{1.139}\,{$\scriptscriptstyle\pm0.001$} & 2.770\,{$\scriptscriptstyle\pm0.186$} & \textbf{1.116}\,{$\scriptscriptstyle\pm0.008$} & 1.271\,{$\scriptscriptstyle\pm0.013$} & 1.162\,{$\scriptscriptstyle\pm0.005$} & 3.680\,{$\scriptscriptstyle\pm4.448$} & 1.251\,{$\scriptscriptstyle\pm0.007$} & 1.783 & 5.373 \\
gemma-2-2b & 7.940\,{$\scriptscriptstyle\pm0.013$} & 7.773\,{$\scriptscriptstyle\pm0.004$} & 19.424\,{$\scriptscriptstyle\pm0.160$} & \textbf{7.418}\,{$\scriptscriptstyle\pm0.020$} & 8.002\,{$\scriptscriptstyle\pm0.012$} & \underline{7.618}\,{$\scriptscriptstyle\pm0.096$} & 22.861\,{$\scriptscriptstyle\pm13.740$} & 8.073\,{$\scriptscriptstyle\pm0.013$} & 24.952 & 35.492 \\
\midrule
\multicolumn{11}{l}{\textbf{Source: Qwen2.5-1.5B}} \\
Llama-3.2-1B & 0.126\,{$\scriptscriptstyle\pm0.000$} & \textbf{0.119}\,{$\scriptscriptstyle\pm0.000$} & 0.245\,{$\scriptscriptstyle\pm0.006$} & 0.125\,{$\scriptscriptstyle\pm0.001$} & \underline{0.123}\,{$\scriptscriptstyle\pm0.000$} & 0.125\,{$\scriptscriptstyle\pm0.001$} & 0.134\,{$\scriptscriptstyle\pm0.018$} & 0.138\,{$\scriptscriptstyle\pm0.000$} & 0.275 & 0.303 \\
Llama-3.2-3B & \textbf{0.077}\,{$\scriptscriptstyle\pm0.000$} & \underline{0.077}\,{$\scriptscriptstyle\pm0.000$} & 0.107\,{$\scriptscriptstyle\pm0.001$} & 0.078\,{$\scriptscriptstyle\pm0.000$} & 0.078\,{$\scriptscriptstyle\pm0.000$} & 0.077\,{$\scriptscriptstyle\pm0.000$} & 0.113\,{$\scriptscriptstyle\pm0.035$} & 0.085\,{$\scriptscriptstyle\pm0.000$} & 0.240 & 0.161 \\
Llama-3.1-8B & 0.090\,{$\scriptscriptstyle\pm0.000$} & \underline{0.090}\,{$\scriptscriptstyle\pm0.000$} & 0.194\,{$\scriptscriptstyle\pm0.003$} & 0.090\,{$\scriptscriptstyle\pm0.000$} & 0.093\,{$\scriptscriptstyle\pm0.000$} & 0.092\,{$\scriptscriptstyle\pm0.001$} & \textbf{0.089}\,{$\scriptscriptstyle\pm0.000$} & 0.107\,{$\scriptscriptstyle\pm0.000$} & 0.223 & 0.201 \\
Qwen2.5-3B & 0.660\,{$\scriptscriptstyle\pm0.000$} & 0.669\,{$\scriptscriptstyle\pm0.001$} & 1.711\,{$\scriptscriptstyle\pm0.026$} & \underline{0.644}\,{$\scriptscriptstyle\pm0.003$} & 0.718\,{$\scriptscriptstyle\pm0.007$} & 0.689\,{$\scriptscriptstyle\pm0.007$} & \textbf{0.640}\,{$\scriptscriptstyle\pm0.001$} & 0.968\,{$\scriptscriptstyle\pm0.044$} & 2.678 & 5.955 \\
Qwen2.5-7B & 1.077\,{$\scriptscriptstyle\pm0.001$} & 1.051\,{$\scriptscriptstyle\pm0.002$} & 1.999\,{$\scriptscriptstyle\pm0.022$} & \underline{1.050}\,{$\scriptscriptstyle\pm0.002$} & 1.136\,{$\scriptscriptstyle\pm0.002$} & 1.099\,{$\scriptscriptstyle\pm0.014$} & \textbf{1.044}\,{$\scriptscriptstyle\pm0.001$} & 1.383\,{$\scriptscriptstyle\pm0.025$} & 4.463 & 5.380 \\
gemma-2-2b & \underline{7.902}\,{$\scriptscriptstyle\pm0.022$} & 7.964\,{$\scriptscriptstyle\pm0.013$} & 14.633\,{$\scriptscriptstyle\pm0.111$} & \textbf{7.850}\,{$\scriptscriptstyle\pm0.094$} & 8.296\,{$\scriptscriptstyle\pm0.020$} & 7.957\,{$\scriptscriptstyle\pm0.035$} & 11.975\,{$\scriptscriptstyle\pm7.376$} & 9.539\,{$\scriptscriptstyle\pm0.188$} & 59.072 & 36.695 \\
\midrule
\multicolumn{11}{l}{\textbf{Source: Qwen2.5-3B}} \\
Llama-3.2-1B & 0.127\,{$\scriptscriptstyle\pm0.000$} & 0.129\,{$\scriptscriptstyle\pm0.000$} & 0.250\,{$\scriptscriptstyle\pm0.005$} & 0.127\,{$\scriptscriptstyle\pm0.001$} & \textbf{0.122}\,{$\scriptscriptstyle\pm0.000$} & \underline{0.124}\,{$\scriptscriptstyle\pm0.001$} & 0.139\,{$\scriptscriptstyle\pm0.025$} & 0.145\,{$\scriptscriptstyle\pm0.000$} & 0.265 & 0.301 \\
Llama-3.2-3B & 0.076\,{$\scriptscriptstyle\pm0.000$} & 0.075\,{$\scriptscriptstyle\pm0.000$} & 0.112\,{$\scriptscriptstyle\pm0.000$} & 0.075\,{$\scriptscriptstyle\pm0.000$} & 0.076\,{$\scriptscriptstyle\pm0.000$} & \underline{0.075}\,{$\scriptscriptstyle\pm0.000$} & \textbf{0.074}\,{$\scriptscriptstyle\pm0.001$} & 0.087\,{$\scriptscriptstyle\pm0.000$} & 0.202 & 0.160 \\
Llama-3.1-8B & 0.089\,{$\scriptscriptstyle\pm0.000$} & \underline{0.087}\,{$\scriptscriptstyle\pm0.000$} & 0.190\,{$\scriptscriptstyle\pm0.009$} & 0.088\,{$\scriptscriptstyle\pm0.000$} & 0.091\,{$\scriptscriptstyle\pm0.000$} & 0.091\,{$\scriptscriptstyle\pm0.001$} & \textbf{0.087}\,{$\scriptscriptstyle\pm0.000$} & 0.108\,{$\scriptscriptstyle\pm0.001$} & 0.186 & 0.198 \\
Qwen2.5-1.5B & \underline{0.409}\,{$\scriptscriptstyle\pm0.000$} & 0.418\,{$\scriptscriptstyle\pm0.001$} & 0.812\,{$\scriptscriptstyle\pm0.010$} & \textbf{0.407}\,{$\scriptscriptstyle\pm0.001$} & 0.427\,{$\scriptscriptstyle\pm0.003$} & 0.424\,{$\scriptscriptstyle\pm0.006$} & 0.483\,{$\scriptscriptstyle\pm0.145$} & 0.540\,{$\scriptscriptstyle\pm0.003$} & 1.412 & 4.704 \\
Qwen2.5-7B & \underline{1.019}\,{$\scriptscriptstyle\pm0.000$} & 1.033\,{$\scriptscriptstyle\pm0.000$} & 2.145\,{$\scriptscriptstyle\pm0.007$} & \textbf{0.984}\,{$\scriptscriptstyle\pm0.004$} & 1.069\,{$\scriptscriptstyle\pm0.005$} & 1.079\,{$\scriptscriptstyle\pm0.003$} & 2.283\,{$\scriptscriptstyle\pm2.265$} & 1.355\,{$\scriptscriptstyle\pm0.084$} & 2.758 & 5.238 \\
gemma-2-2b & 7.792\,{$\scriptscriptstyle\pm0.009$} & 7.907\,{$\scriptscriptstyle\pm0.014$} & 14.352\,{$\scriptscriptstyle\pm0.115$} & \textbf{7.635}\,{$\scriptscriptstyle\pm0.048$} & 7.918\,{$\scriptscriptstyle\pm0.020$} & \underline{7.647}\,{$\scriptscriptstyle\pm0.072$} & 19.613\,{$\scriptscriptstyle\pm2.272$} & 9.021\,{$\scriptscriptstyle\pm0.507$} & 48.299 & 36.602 \\
\midrule
\multicolumn{11}{l}{\textbf{Source: Qwen2.5-7B}} \\
Llama-3.2-1B & 0.123\,{$\scriptscriptstyle\pm0.000$} & 0.126\,{$\scriptscriptstyle\pm0.000$} & 0.304\,{$\scriptscriptstyle\pm0.023$} & \textbf{0.118}\,{$\scriptscriptstyle\pm0.000$} & 0.122\,{$\scriptscriptstyle\pm0.001$} & 0.120\,{$\scriptscriptstyle\pm0.001$} & \underline{0.118}\,{$\scriptscriptstyle\pm0.000$} & 0.131\,{$\scriptscriptstyle\pm0.000$} & 0.250 & 0.288 \\
Llama-3.2-3B & 0.073\,{$\scriptscriptstyle\pm0.000$} & 0.073\,{$\scriptscriptstyle\pm0.000$} & 0.103\,{$\scriptscriptstyle\pm0.001$} & 0.072\,{$\scriptscriptstyle\pm0.000$} & 0.080\,{$\scriptscriptstyle\pm0.000$} & \underline{0.072}\,{$\scriptscriptstyle\pm0.001$} & \textbf{0.071}\,{$\scriptscriptstyle\pm0.000$} & 0.079\,{$\scriptscriptstyle\pm0.000$} & 0.125 & 0.154 \\
Llama-3.1-8B & 0.085\,{$\scriptscriptstyle\pm0.000$} & 0.085\,{$\scriptscriptstyle\pm0.000$} & 0.222\,{$\scriptscriptstyle\pm0.008$} & \underline{0.083}\,{$\scriptscriptstyle\pm0.000$} & 0.088\,{$\scriptscriptstyle\pm0.000$} & 0.087\,{$\scriptscriptstyle\pm0.001$} & \textbf{0.082}\,{$\scriptscriptstyle\pm0.000$} & 0.095\,{$\scriptscriptstyle\pm0.001$} & 0.154 & 0.190 \\
Qwen2.5-1.5B & 0.474\,{$\scriptscriptstyle\pm0.001$} & 0.475\,{$\scriptscriptstyle\pm0.001$} & 1.562\,{$\scriptscriptstyle\pm0.063$} & \textbf{0.420}\,{$\scriptscriptstyle\pm0.001$} & 0.444\,{$\scriptscriptstyle\pm0.001$} & \underline{0.422}\,{$\scriptscriptstyle\pm0.005$} & 1.260\,{$\scriptscriptstyle\pm1.464$} & 0.526\,{$\scriptscriptstyle\pm0.012$} & 0.946 & 5.128 \\
Qwen2.5-3B & 0.699\,{$\scriptscriptstyle\pm0.001$} & 0.709\,{$\scriptscriptstyle\pm0.001$} & 1.592\,{$\scriptscriptstyle\pm0.018$} & \underline{0.638}\,{$\scriptscriptstyle\pm0.004$} & 0.695\,{$\scriptscriptstyle\pm0.002$} & 0.654\,{$\scriptscriptstyle\pm0.007$} & \textbf{0.628}\,{$\scriptscriptstyle\pm0.004$} & 0.788\,{$\scriptscriptstyle\pm0.017$} & 1.120 & 6.181 \\
gemma-2-2b & 7.704\,{$\scriptscriptstyle\pm0.002$} & 7.789\,{$\scriptscriptstyle\pm0.003$} & 16.008\,{$\scriptscriptstyle\pm0.265$} & \underline{7.482}\,{$\scriptscriptstyle\pm0.112$} & 7.649\,{$\scriptscriptstyle\pm0.018$} & \textbf{7.294}\,{$\scriptscriptstyle\pm0.025$} & 26.885\,{$\scriptscriptstyle\pm4.770$} & 8.353\,{$\scriptscriptstyle\pm0.011$} & 31.048 & 35.384 \\
\midrule
\multicolumn{11}{l}{\textbf{Source: gemma-2-2b}} \\
Llama-3.2-1B & \textbf{0.121}\,{$\scriptscriptstyle\pm0.000$} & 0.125\,{$\scriptscriptstyle\pm0.000$} & 0.308\,{$\scriptscriptstyle\pm0.029$} & 0.126\,{$\scriptscriptstyle\pm0.000$} & \underline{0.123}\,{$\scriptscriptstyle\pm0.001$} & 0.125\,{$\scriptscriptstyle\pm0.001$} & 0.124\,{$\scriptscriptstyle\pm0.001$} & 0.142\,{$\scriptscriptstyle\pm0.003$} & 0.251 & 0.291 \\
Llama-3.2-3B & 0.072\,{$\scriptscriptstyle\pm0.000$} & \underline{0.072}\,{$\scriptscriptstyle\pm0.000$} & 0.135\,{$\scriptscriptstyle\pm0.001$} & 0.073\,{$\scriptscriptstyle\pm0.000$} & 0.075\,{$\scriptscriptstyle\pm0.000$} & 0.078\,{$\scriptscriptstyle\pm0.001$} & \textbf{0.072}\,{$\scriptscriptstyle\pm0.000$} & 0.080\,{$\scriptscriptstyle\pm0.001$} & 0.147 & 0.155 \\
Llama-3.1-8B & 0.084\,{$\scriptscriptstyle\pm0.000$} & \underline{0.084}\,{$\scriptscriptstyle\pm0.000$} & 0.247\,{$\scriptscriptstyle\pm0.027$} & 0.085\,{$\scriptscriptstyle\pm0.000$} & 0.086\,{$\scriptscriptstyle\pm0.000$} & 0.090\,{$\scriptscriptstyle\pm0.000$} & \textbf{0.084}\,{$\scriptscriptstyle\pm0.000$} & 0.109\,{$\scriptscriptstyle\pm0.000$} & 0.187 & 0.192 \\
Qwen2.5-1.5B & 0.520\,{$\scriptscriptstyle\pm0.001$} & 0.520\,{$\scriptscriptstyle\pm0.000$} & 1.566\,{$\scriptscriptstyle\pm0.056$} & \underline{0.448}\,{$\scriptscriptstyle\pm0.005$} & 0.500\,{$\scriptscriptstyle\pm0.002$} & 0.495\,{$\scriptscriptstyle\pm0.006$} & \textbf{0.447}\,{$\scriptscriptstyle\pm0.004$} & 0.622\,{$\scriptscriptstyle\pm0.006$} & 1.362 & 5.360 \\
Qwen2.5-3B & 0.800\,{$\scriptscriptstyle\pm0.001$} & 0.812\,{$\scriptscriptstyle\pm0.000$} & 1.871\,{$\scriptscriptstyle\pm0.020$} & \underline{0.706}\,{$\scriptscriptstyle\pm0.005$} & 0.786\,{$\scriptscriptstyle\pm0.020$} & 0.772\,{$\scriptscriptstyle\pm0.014$} & \textbf{0.704}\,{$\scriptscriptstyle\pm0.006$} & 1.014\,{$\scriptscriptstyle\pm0.022$} & 1.725 & 6.652 \\
Qwen2.5-7B & \underline{1.157}\,{$\scriptscriptstyle\pm0.000$} & 1.158\,{$\scriptscriptstyle\pm0.001$} & 2.798\,{$\scriptscriptstyle\pm0.048$} & \textbf{1.088}\,{$\scriptscriptstyle\pm0.010$} & 1.214\,{$\scriptscriptstyle\pm0.011$} & 1.249\,{$\scriptscriptstyle\pm0.015$} & 2.001\,{$\scriptscriptstyle\pm1.586$} & 1.411\,{$\scriptscriptstyle\pm0.035$} & 2.513 & 5.458 \\
\midrule\midrule
\textbf{\#\,1st (lowest)} & \textbf{2} & \textbf{1} & \textbf{0} & \textbf{15} & \textbf{1} & \textbf{2} & \textbf{21} & \textbf{0} & \textbf{0} & \textbf{0} \\
\#\,2nd & 5 & 8 & 0 & 14 & 2 & 8 & 5 & 0 & 0 & 0 \\
\end{longtable}\endgroup
\twocolumn

\subsubsection{Probing}
\label{sec:appendix-probing}

Beyond simply applying the receiver's own (globally-best) probe to the transferred
activations, we additionally consider a setting where the receiver's probe is
\emph{re-selected by validation after alignment}: among the receiver's per-layer
probes, we pick whichever has the highest validation AUC on the aligned
(transferred) features, rather than reusing the receiver's globally-best probe
as-is. Both settings score on the same transferred representation; they differ
only in which probe is used.

\paragraph{Results per model pair.} The complete per-pair probing numbers behind
the main probing tables: the transferred-representation test AUC with the best
receiver probe by validation (Table~\ref{tab:probing-pp-aligned}) and the
receiver's own probe (Table~\ref{tab:probing-pp-target}). Rows are the 42 pairs
grouped by source; best per row \textbf{bold}, second \underline{underlined}, with
per-method win counts.

\onecolumn
\begingroup\fontsize{4.6pt}{5.6pt}\selectfont\setlength{\tabcolsep}{0.6pt}\renewcommand{\arraystretch}{1.1}
\setlength\LTleft{0pt}\setlength\LTright{0pt}\begin{longtable}{@{\extracolsep{\fill}}lccccccccccc}
\caption{Probing --- \textbf{best probe by validation} (test AUC $\uparrow$): the receiver probe re-selected by validation on the transferred features. Best \textbf{bold}, second \underline{underlined}; grouped by source (donor) LLM.}\label{tab:probing-pp-aligned}\\
\toprule
Target & \textbf{Term} & \textbf{Rel-dep} & \textbf{CKA} & \textbf{MATCHA} & \textbf{MATCHA-K} & \textbf{MATCHA-FL} & \textbf{MATCHA-avg} & \textbf{MATCHA-no FF} & \textbf{CmdV} & \textbf{HOT} & \textbf{MLP} \\
\midrule\endfirsthead
\multicolumn{12}{c}{\tablename\ \thetable{} -- continued}\\ \toprule
Target & \textbf{Term} & \textbf{Rel-dep} & \textbf{CKA} & \textbf{MATCHA} & \textbf{MATCHA-K} & \textbf{MATCHA-FL} & \textbf{MATCHA-avg} & \textbf{MATCHA-no FF} & \textbf{CmdV} & \textbf{HOT} & \textbf{MLP} \\
\midrule\endhead
\bottomrule\endfoot
\multicolumn{12}{l}{\textbf{Source: Llama-3.2-1B}} \\
Llama-3.2-3B & -- & 0.731\,{$\scriptscriptstyle\pm0.004$} & 0.696\,{$\scriptscriptstyle\pm0.005$} & 0.723\,{$\scriptscriptstyle\pm0.003$} & 0.701\,{$\scriptscriptstyle\pm0.009$} & 0.668\,{$\scriptscriptstyle\pm0.014$} & 0.726\,{$\scriptscriptstyle\pm0.003$} & \underline{0.733}\,{$\scriptscriptstyle\pm0.003$} & 0.653 & \textbf{0.734} & 0.696\,{$\scriptscriptstyle\pm0.010$} \\
Llama-3.1-8B & -- & 0.730\,{$\scriptscriptstyle\pm0.002$} & 0.705\,{$\scriptscriptstyle\pm0.004$} & 0.722\,{$\scriptscriptstyle\pm0.005$} & 0.704\,{$\scriptscriptstyle\pm0.006$} & 0.703\,{$\scriptscriptstyle\pm0.012$} & 0.724\,{$\scriptscriptstyle\pm0.002$} & \underline{0.731}\,{$\scriptscriptstyle\pm0.003$} & 0.663 & \textbf{0.743} & 0.693\,{$\scriptscriptstyle\pm0.007$} \\
Qwen2.5-1.5B & -- & 0.662\,{$\scriptscriptstyle\pm0.010$} & 0.648\,{$\scriptscriptstyle\pm0.021$} & \textbf{0.671}\,{$\scriptscriptstyle\pm0.009$} & \underline{0.669}\,{$\scriptscriptstyle\pm0.019$} & 0.656\,{$\scriptscriptstyle\pm0.010$} & 0.668\,{$\scriptscriptstyle\pm0.005$} & 0.644\,{$\scriptscriptstyle\pm0.011$} & 0.641 & 0.630 & 0.659\,{$\scriptscriptstyle\pm0.009$} \\
Qwen2.5-3B & -- & 0.662\,{$\scriptscriptstyle\pm0.012$} & 0.607\,{$\scriptscriptstyle\pm0.018$} & \textbf{0.691}\,{$\scriptscriptstyle\pm0.011$} & 0.635\,{$\scriptscriptstyle\pm0.006$} & 0.620\,{$\scriptscriptstyle\pm0.019$} & \underline{0.683}\,{$\scriptscriptstyle\pm0.006$} & 0.659\,{$\scriptscriptstyle\pm0.019$} & 0.645 & 0.638 & 0.662\,{$\scriptscriptstyle\pm0.017$} \\
Qwen2.5-7B & -- & \underline{0.679}\,{$\scriptscriptstyle\pm0.007$} & 0.638\,{$\scriptscriptstyle\pm0.006$} & \textbf{0.680}\,{$\scriptscriptstyle\pm0.008$} & 0.619\,{$\scriptscriptstyle\pm0.013$} & 0.653\,{$\scriptscriptstyle\pm0.010$} & 0.663\,{$\scriptscriptstyle\pm0.011$} & 0.642\,{$\scriptscriptstyle\pm0.009$} & 0.602 & 0.631 & 0.660\,{$\scriptscriptstyle\pm0.014$} \\
gemma-2-2b & -- & 0.610\,{$\scriptscriptstyle\pm0.004$} & 0.611\,{$\scriptscriptstyle\pm0.019$} & 0.627\,{$\scriptscriptstyle\pm0.012$} & 0.613\,{$\scriptscriptstyle\pm0.022$} & 0.595\,{$\scriptscriptstyle\pm0.028$} & 0.633\,{$\scriptscriptstyle\pm0.004$} & 0.633\,{$\scriptscriptstyle\pm0.005$} & 0.606 & \underline{0.639} & \textbf{0.645}\,{$\scriptscriptstyle\pm0.017$} \\
\midrule
\multicolumn{12}{l}{\textbf{Source: Llama-3.2-3B}} \\
Llama-3.2-1B & 0.673\,{$\scriptscriptstyle\pm0.005$} & 0.726\,{$\scriptscriptstyle\pm0.003$} & 0.671\,{$\scriptscriptstyle\pm0.020$} & 0.721\,{$\scriptscriptstyle\pm0.004$} & 0.681\,{$\scriptscriptstyle\pm0.002$} & 0.644\,{$\scriptscriptstyle\pm0.029$} & \underline{0.733}\,{$\scriptscriptstyle\pm0.003$} & 0.728\,{$\scriptscriptstyle\pm0.007$} & 0.716 & \textbf{0.765} & 0.647\,{$\scriptscriptstyle\pm0.010$} \\
Llama-3.1-8B & -- & \underline{0.781}\,{$\scriptscriptstyle\pm0.005$} & 0.709\,{$\scriptscriptstyle\pm0.007$} & 0.771\,{$\scriptscriptstyle\pm0.005$} & 0.725\,{$\scriptscriptstyle\pm0.009$} & 0.681\,{$\scriptscriptstyle\pm0.006$} & 0.774\,{$\scriptscriptstyle\pm0.003$} & 0.766\,{$\scriptscriptstyle\pm0.005$} & 0.748 & \textbf{0.796} & 0.698\,{$\scriptscriptstyle\pm0.009$} \\
Qwen2.5-1.5B & 0.632\,{$\scriptscriptstyle\pm0.021$} & 0.632\,{$\scriptscriptstyle\pm0.021$} & 0.572\,{$\scriptscriptstyle\pm0.009$} & \textbf{0.682}\,{$\scriptscriptstyle\pm0.018$} & 0.649\,{$\scriptscriptstyle\pm0.013$} & 0.606\,{$\scriptscriptstyle\pm0.001$} & \underline{0.678}\,{$\scriptscriptstyle\pm0.010$} & 0.662\,{$\scriptscriptstyle\pm0.015$} & 0.645 & 0.667 & 0.615\,{$\scriptscriptstyle\pm0.017$} \\
Qwen2.5-3B & -- & 0.656\,{$\scriptscriptstyle\pm0.018$} & 0.589\,{$\scriptscriptstyle\pm0.027$} & \textbf{0.681}\,{$\scriptscriptstyle\pm0.014$} & 0.647\,{$\scriptscriptstyle\pm0.012$} & 0.589\,{$\scriptscriptstyle\pm0.035$} & \underline{0.673}\,{$\scriptscriptstyle\pm0.014$} & 0.670\,{$\scriptscriptstyle\pm0.007$} & 0.619 & 0.662 & 0.617\,{$\scriptscriptstyle\pm0.023$} \\
Qwen2.5-7B & 0.643\,{$\scriptscriptstyle\pm0.005$} & 0.643\,{$\scriptscriptstyle\pm0.008$} & 0.610\,{$\scriptscriptstyle\pm0.019$} & \underline{0.693}\,{$\scriptscriptstyle\pm0.008$} & 0.606\,{$\scriptscriptstyle\pm0.025$} & 0.604\,{$\scriptscriptstyle\pm0.025$} & 0.672\,{$\scriptscriptstyle\pm0.013$} & \textbf{0.699}\,{$\scriptscriptstyle\pm0.010$} & 0.589 & 0.626 & 0.644\,{$\scriptscriptstyle\pm0.015$} \\
gemma-2-2b & 0.599\,{$\scriptscriptstyle\pm0.003$} & 0.617\,{$\scriptscriptstyle\pm0.012$} & 0.580\,{$\scriptscriptstyle\pm0.009$} & 0.614\,{$\scriptscriptstyle\pm0.009$} & 0.573\,{$\scriptscriptstyle\pm0.007$} & 0.560\,{$\scriptscriptstyle\pm0.028$} & 0.604\,{$\scriptscriptstyle\pm0.006$} & 0.596\,{$\scriptscriptstyle\pm0.015$} & \underline{0.639} & \textbf{0.659} & 0.607\,{$\scriptscriptstyle\pm0.006$} \\
\midrule
\multicolumn{12}{l}{\textbf{Source: Llama-3.1-8B}} \\
Llama-3.2-1B & 0.670\,{$\scriptscriptstyle\pm0.001$} & 0.717\,{$\scriptscriptstyle\pm0.002$} & 0.689\,{$\scriptscriptstyle\pm0.004$} & 0.730\,{$\scriptscriptstyle\pm0.003$} & 0.693\,{$\scriptscriptstyle\pm0.003$} & 0.659\,{$\scriptscriptstyle\pm0.013$} & 0.729\,{$\scriptscriptstyle\pm0.008$} & \underline{0.736}\,{$\scriptscriptstyle\pm0.007$} & 0.688 & \textbf{0.744} & 0.607\,{$\scriptscriptstyle\pm0.021$} \\
Llama-3.2-3B & 0.748\,{$\scriptscriptstyle\pm0.003$} & 0.762\,{$\scriptscriptstyle\pm0.003$} & 0.689\,{$\scriptscriptstyle\pm0.003$} & 0.762\,{$\scriptscriptstyle\pm0.006$} & 0.732\,{$\scriptscriptstyle\pm0.011$} & 0.675\,{$\scriptscriptstyle\pm0.028$} & 0.757\,{$\scriptscriptstyle\pm0.005$} & \underline{0.765}\,{$\scriptscriptstyle\pm0.007$} & 0.745 & \textbf{0.783} & 0.639\,{$\scriptscriptstyle\pm0.005$} \\
Qwen2.5-1.5B & 0.642\,{$\scriptscriptstyle\pm0.010$} & 0.647\,{$\scriptscriptstyle\pm0.015$} & 0.624\,{$\scriptscriptstyle\pm0.017$} & \textbf{0.683}\,{$\scriptscriptstyle\pm0.018$} & 0.639\,{$\scriptscriptstyle\pm0.014$} & 0.634\,{$\scriptscriptstyle\pm0.020$} & \underline{0.677}\,{$\scriptscriptstyle\pm0.016$} & 0.675\,{$\scriptscriptstyle\pm0.029$} & 0.674 & 0.677 & 0.598\,{$\scriptscriptstyle\pm0.011$} \\
Qwen2.5-3B & -- & 0.637\,{$\scriptscriptstyle\pm0.012$} & 0.571\,{$\scriptscriptstyle\pm0.022$} & \textbf{0.689}\,{$\scriptscriptstyle\pm0.009$} & 0.654\,{$\scriptscriptstyle\pm0.027$} & 0.566\,{$\scriptscriptstyle\pm0.037$} & 0.683\,{$\scriptscriptstyle\pm0.007$} & 0.674\,{$\scriptscriptstyle\pm0.020$} & 0.628 & \underline{0.683} & 0.605\,{$\scriptscriptstyle\pm0.008$} \\
Qwen2.5-7B & 0.627\,{$\scriptscriptstyle\pm0.008$} & 0.654\,{$\scriptscriptstyle\pm0.003$} & 0.636\,{$\scriptscriptstyle\pm0.001$} & \underline{0.707}\,{$\scriptscriptstyle\pm0.008$} & 0.645\,{$\scriptscriptstyle\pm0.013$} & 0.624\,{$\scriptscriptstyle\pm0.040$} & 0.692\,{$\scriptscriptstyle\pm0.012$} & \textbf{0.720}\,{$\scriptscriptstyle\pm0.010$} & 0.633 & 0.636 & 0.623\,{$\scriptscriptstyle\pm0.011$} \\
gemma-2-2b & 0.624\,{$\scriptscriptstyle\pm0.023$} & 0.600\,{$\scriptscriptstyle\pm0.016$} & 0.589\,{$\scriptscriptstyle\pm0.031$} & 0.610\,{$\scriptscriptstyle\pm0.014$} & \textbf{0.644}\,{$\scriptscriptstyle\pm0.020$} & 0.584\,{$\scriptscriptstyle\pm0.035$} & 0.633\,{$\scriptscriptstyle\pm0.028$} & 0.633\,{$\scriptscriptstyle\pm0.007$} & 0.636 & \underline{0.640} & 0.632\,{$\scriptscriptstyle\pm0.007$} \\
\midrule
\multicolumn{12}{l}{\textbf{Source: Qwen2.5-1.5B}} \\
Llama-3.2-1B & 0.714\,{$\scriptscriptstyle\pm0.005$} & \underline{0.734}\,{$\scriptscriptstyle\pm0.005$} & 0.701\,{$\scriptscriptstyle\pm0.010$} & 0.718\,{$\scriptscriptstyle\pm0.003$} & 0.699\,{$\scriptscriptstyle\pm0.001$} & 0.677\,{$\scriptscriptstyle\pm0.044$} & 0.711\,{$\scriptscriptstyle\pm0.014$} & \textbf{0.749}\,{$\scriptscriptstyle\pm0.005$} & 0.670 & 0.732 & 0.706\,{$\scriptscriptstyle\pm0.011$} \\
Llama-3.2-3B & 0.720\,{$\scriptscriptstyle\pm0.005$} & \underline{0.721}\,{$\scriptscriptstyle\pm0.005$} & 0.657\,{$\scriptscriptstyle\pm0.022$} & 0.713\,{$\scriptscriptstyle\pm0.015$} & 0.669\,{$\scriptscriptstyle\pm0.016$} & 0.675\,{$\scriptscriptstyle\pm0.019$} & 0.714\,{$\scriptscriptstyle\pm0.002$} & \textbf{0.747}\,{$\scriptscriptstyle\pm0.016$} & 0.627 & 0.720 & 0.713\,{$\scriptscriptstyle\pm0.006$} \\
Llama-3.1-8B & -- & 0.726\,{$\scriptscriptstyle\pm0.002$} & 0.705\,{$\scriptscriptstyle\pm0.006$} & 0.745\,{$\scriptscriptstyle\pm0.005$} & 0.701\,{$\scriptscriptstyle\pm0.026$} & 0.678\,{$\scriptscriptstyle\pm0.035$} & 0.749\,{$\scriptscriptstyle\pm0.006$} & \textbf{0.756}\,{$\scriptscriptstyle\pm0.001$} & 0.657 & \underline{0.750} & 0.729\,{$\scriptscriptstyle\pm0.008$} \\
Qwen2.5-3B & -- & \textbf{0.756}\,{$\scriptscriptstyle\pm0.003$} & 0.626\,{$\scriptscriptstyle\pm0.018$} & 0.744\,{$\scriptscriptstyle\pm0.007$} & 0.713\,{$\scriptscriptstyle\pm0.022$} & 0.691\,{$\scriptscriptstyle\pm0.007$} & \underline{0.748}\,{$\scriptscriptstyle\pm0.007$} & 0.659\,{$\scriptscriptstyle\pm0.026$} & 0.628 & 0.733 & 0.701\,{$\scriptscriptstyle\pm0.005$} \\
Qwen2.5-7B & \textbf{0.763}\,{$\scriptscriptstyle\pm0.003$} & 0.733\,{$\scriptscriptstyle\pm0.009$} & 0.667\,{$\scriptscriptstyle\pm0.005$} & \underline{0.747}\,{$\scriptscriptstyle\pm0.006$} & 0.722\,{$\scriptscriptstyle\pm0.019$} & 0.702\,{$\scriptscriptstyle\pm0.007$} & 0.742\,{$\scriptscriptstyle\pm0.012$} & 0.679\,{$\scriptscriptstyle\pm0.029$} & 0.617 & 0.734 & 0.715\,{$\scriptscriptstyle\pm0.003$} \\
gemma-2-2b & 0.664\,{$\scriptscriptstyle\pm0.025$} & \underline{0.670}\,{$\scriptscriptstyle\pm0.016$} & 0.655\,{$\scriptscriptstyle\pm0.008$} & 0.661\,{$\scriptscriptstyle\pm0.025$} & 0.652\,{$\scriptscriptstyle\pm0.003$} & 0.589\,{$\scriptscriptstyle\pm0.009$} & 0.631\,{$\scriptscriptstyle\pm0.026$} & 0.646\,{$\scriptscriptstyle\pm0.016$} & 0.593 & 0.652 & \textbf{0.696}\,{$\scriptscriptstyle\pm0.019$} \\
\midrule
\multicolumn{12}{l}{\textbf{Source: Qwen2.5-3B}} \\
Llama-3.2-1B & 0.683\,{$\scriptscriptstyle\pm0.005$} & 0.688\,{$\scriptscriptstyle\pm0.008$} & 0.677\,{$\scriptscriptstyle\pm0.017$} & 0.703\,{$\scriptscriptstyle\pm0.004$} & 0.700\,{$\scriptscriptstyle\pm0.012$} & 0.691\,{$\scriptscriptstyle\pm0.017$} & 0.708\,{$\scriptscriptstyle\pm0.002$} & \textbf{0.738}\,{$\scriptscriptstyle\pm0.002$} & 0.620 & \underline{0.722} & 0.690\,{$\scriptscriptstyle\pm0.011$} \\
Llama-3.2-3B & 0.721\,{$\scriptscriptstyle\pm0.010$} & 0.712\,{$\scriptscriptstyle\pm0.004$} & 0.685\,{$\scriptscriptstyle\pm0.001$} & 0.713\,{$\scriptscriptstyle\pm0.028$} & 0.696\,{$\scriptscriptstyle\pm0.006$} & 0.686\,{$\scriptscriptstyle\pm0.006$} & 0.717\,{$\scriptscriptstyle\pm0.014$} & \underline{0.723}\,{$\scriptscriptstyle\pm0.023$} & 0.654 & \textbf{0.725} & 0.686\,{$\scriptscriptstyle\pm0.017$} \\
Llama-3.1-8B & 0.710\,{$\scriptscriptstyle\pm0.003$} & 0.705\,{$\scriptscriptstyle\pm0.006$} & 0.624\,{$\scriptscriptstyle\pm0.012$} & 0.728\,{$\scriptscriptstyle\pm0.008$} & 0.698\,{$\scriptscriptstyle\pm0.017$} & 0.652\,{$\scriptscriptstyle\pm0.025$} & 0.728\,{$\scriptscriptstyle\pm0.003$} & \underline{0.730}\,{$\scriptscriptstyle\pm0.004$} & 0.645 & \textbf{0.747} & 0.713\,{$\scriptscriptstyle\pm0.003$} \\
Qwen2.5-1.5B & \underline{0.737}\,{$\scriptscriptstyle\pm0.004$} & \textbf{0.745}\,{$\scriptscriptstyle\pm0.002$} & 0.697\,{$\scriptscriptstyle\pm0.031$} & 0.733\,{$\scriptscriptstyle\pm0.017$} & 0.704\,{$\scriptscriptstyle\pm0.007$} & 0.678\,{$\scriptscriptstyle\pm0.013$} & 0.732\,{$\scriptscriptstyle\pm0.013$} & 0.678\,{$\scriptscriptstyle\pm0.017$} & 0.691 & 0.736 & 0.697\,{$\scriptscriptstyle\pm0.010$} \\
Qwen2.5-7B & \underline{0.743}\,{$\scriptscriptstyle\pm0.006$} & \textbf{0.747}\,{$\scriptscriptstyle\pm0.010$} & 0.675\,{$\scriptscriptstyle\pm0.007$} & 0.734\,{$\scriptscriptstyle\pm0.011$} & 0.705\,{$\scriptscriptstyle\pm0.009$} & 0.661\,{$\scriptscriptstyle\pm0.023$} & 0.738\,{$\scriptscriptstyle\pm0.002$} & 0.660\,{$\scriptscriptstyle\pm0.006$} & 0.639 & 0.703 & 0.689\,{$\scriptscriptstyle\pm0.021$} \\
gemma-2-2b & \textbf{0.679}\,{$\scriptscriptstyle\pm0.008$} & 0.659\,{$\scriptscriptstyle\pm0.011$} & 0.609\,{$\scriptscriptstyle\pm0.009$} & 0.635\,{$\scriptscriptstyle\pm0.018$} & 0.631\,{$\scriptscriptstyle\pm0.010$} & 0.593\,{$\scriptscriptstyle\pm0.009$} & 0.656\,{$\scriptscriptstyle\pm0.035$} & 0.636\,{$\scriptscriptstyle\pm0.053$} & 0.605 & 0.661 & \underline{0.675}\,{$\scriptscriptstyle\pm0.018$} \\
\midrule
\multicolumn{12}{l}{\textbf{Source: Qwen2.5-7B}} \\
Llama-3.2-1B & 0.725\,{$\scriptscriptstyle\pm0.013$} & 0.718\,{$\scriptscriptstyle\pm0.014$} & 0.668\,{$\scriptscriptstyle\pm0.017$} & 0.736\,{$\scriptscriptstyle\pm0.009$} & 0.675\,{$\scriptscriptstyle\pm0.019$} & 0.667\,{$\scriptscriptstyle\pm0.025$} & \underline{0.738}\,{$\scriptscriptstyle\pm0.001$} & \textbf{0.739}\,{$\scriptscriptstyle\pm0.006$} & 0.695 & 0.710 & 0.682\,{$\scriptscriptstyle\pm0.015$} \\
Llama-3.2-3B & 0.744\,{$\scriptscriptstyle\pm0.011$} & 0.743\,{$\scriptscriptstyle\pm0.011$} & 0.611\,{$\scriptscriptstyle\pm0.010$} & 0.748\,{$\scriptscriptstyle\pm0.004$} & 0.728\,{$\scriptscriptstyle\pm0.010$} & 0.696\,{$\scriptscriptstyle\pm0.027$} & \underline{0.752}\,{$\scriptscriptstyle\pm0.007$} & \textbf{0.769}\,{$\scriptscriptstyle\pm0.005$} & 0.730 & 0.733 & 0.668\,{$\scriptscriptstyle\pm0.022$} \\
Llama-3.1-8B & -- & 0.748\,{$\scriptscriptstyle\pm0.009$} & 0.662\,{$\scriptscriptstyle\pm0.017$} & 0.763\,{$\scriptscriptstyle\pm0.004$} & 0.739\,{$\scriptscriptstyle\pm0.006$} & 0.682\,{$\scriptscriptstyle\pm0.032$} & \underline{0.764}\,{$\scriptscriptstyle\pm0.004$} & \textbf{0.774}\,{$\scriptscriptstyle\pm0.003$} & 0.729 & 0.751 & 0.713\,{$\scriptscriptstyle\pm0.009$} \\
Qwen2.5-1.5B & \textbf{0.763}\,{$\scriptscriptstyle\pm0.004$} & \underline{0.763}\,{$\scriptscriptstyle\pm0.003$} & 0.675\,{$\scriptscriptstyle\pm0.014$} & 0.754\,{$\scriptscriptstyle\pm0.007$} & 0.707\,{$\scriptscriptstyle\pm0.010$} & 0.679\,{$\scriptscriptstyle\pm0.041$} & 0.758\,{$\scriptscriptstyle\pm0.007$} & 0.753\,{$\scriptscriptstyle\pm0.006$} & 0.711 & 0.732 & 0.702\,{$\scriptscriptstyle\pm0.023$} \\
Qwen2.5-3B & -- & \underline{0.761}\,{$\scriptscriptstyle\pm0.005$} & 0.608\,{$\scriptscriptstyle\pm0.018$} & 0.754\,{$\scriptscriptstyle\pm0.004$} & 0.672\,{$\scriptscriptstyle\pm0.016$} & 0.623\,{$\scriptscriptstyle\pm0.009$} & \textbf{0.767}\,{$\scriptscriptstyle\pm0.009$} & 0.748\,{$\scriptscriptstyle\pm0.014$} & 0.682 & 0.724 & 0.705\,{$\scriptscriptstyle\pm0.025$} \\
gemma-2-2b & \underline{0.674}\,{$\scriptscriptstyle\pm0.017$} & \textbf{0.683}\,{$\scriptscriptstyle\pm0.010$} & 0.556\,{$\scriptscriptstyle\pm0.007$} & 0.632\,{$\scriptscriptstyle\pm0.008$} & 0.628\,{$\scriptscriptstyle\pm0.012$} & 0.579\,{$\scriptscriptstyle\pm0.023$} & 0.641\,{$\scriptscriptstyle\pm0.037$} & 0.633\,{$\scriptscriptstyle\pm0.004$} & 0.621 & 0.637 & 0.671\,{$\scriptscriptstyle\pm0.009$} \\
\midrule
\multicolumn{12}{l}{\textbf{Source: gemma-2-2b}} \\
Llama-3.2-1B & 0.697\,{$\scriptscriptstyle\pm0.002$} & 0.705\,{$\scriptscriptstyle\pm0.007$} & 0.655\,{$\scriptscriptstyle\pm0.019$} & \textbf{0.714}\,{$\scriptscriptstyle\pm0.005$} & 0.704\,{$\scriptscriptstyle\pm0.004$} & 0.677\,{$\scriptscriptstyle\pm0.013$} & 0.707\,{$\scriptscriptstyle\pm0.005$} & \underline{0.711}\,{$\scriptscriptstyle\pm0.001$} & 0.649 & 0.695 & 0.679\,{$\scriptscriptstyle\pm0.010$} \\
Llama-3.2-3B & -- & \underline{0.715}\,{$\scriptscriptstyle\pm0.006$} & 0.631\,{$\scriptscriptstyle\pm0.015$} & \textbf{0.720}\,{$\scriptscriptstyle\pm0.005$} & 0.687\,{$\scriptscriptstyle\pm0.010$} & 0.662\,{$\scriptscriptstyle\pm0.010$} & 0.714\,{$\scriptscriptstyle\pm0.004$} & 0.713\,{$\scriptscriptstyle\pm0.006$} & 0.658 & 0.699 & 0.676\,{$\scriptscriptstyle\pm0.009$} \\
Llama-3.1-8B & -- & \textbf{0.720}\,{$\scriptscriptstyle\pm0.005$} & 0.667\,{$\scriptscriptstyle\pm0.004$} & \underline{0.719}\,{$\scriptscriptstyle\pm0.000$} & 0.707\,{$\scriptscriptstyle\pm0.006$} & 0.663\,{$\scriptscriptstyle\pm0.030$} & 0.715\,{$\scriptscriptstyle\pm0.002$} & 0.716\,{$\scriptscriptstyle\pm0.003$} & 0.685 & 0.700 & 0.668\,{$\scriptscriptstyle\pm0.019$} \\
Qwen2.5-1.5B & -- & \textbf{0.684}\,{$\scriptscriptstyle\pm0.006$} & 0.599\,{$\scriptscriptstyle\pm0.012$} & \underline{0.678}\,{$\scriptscriptstyle\pm0.005$} & 0.677\,{$\scriptscriptstyle\pm0.010$} & 0.630\,{$\scriptscriptstyle\pm0.035$} & 0.673\,{$\scriptscriptstyle\pm0.018$} & 0.653\,{$\scriptscriptstyle\pm0.015$} & 0.631 & 0.650 & 0.653\,{$\scriptscriptstyle\pm0.004$} \\
Qwen2.5-3B & -- & \textbf{0.700}\,{$\scriptscriptstyle\pm0.011$} & 0.566\,{$\scriptscriptstyle\pm0.011$} & \underline{0.679}\,{$\scriptscriptstyle\pm0.009$} & 0.662\,{$\scriptscriptstyle\pm0.009$} & 0.605\,{$\scriptscriptstyle\pm0.013$} & 0.671\,{$\scriptscriptstyle\pm0.004$} & 0.609\,{$\scriptscriptstyle\pm0.009$} & 0.631 & 0.655 & 0.640\,{$\scriptscriptstyle\pm0.009$} \\
Qwen2.5-7B & -- & 0.671\,{$\scriptscriptstyle\pm0.006$} & 0.620\,{$\scriptscriptstyle\pm0.006$} & \textbf{0.696}\,{$\scriptscriptstyle\pm0.016$} & \underline{0.676}\,{$\scriptscriptstyle\pm0.012$} & 0.643\,{$\scriptscriptstyle\pm0.017$} & 0.670\,{$\scriptscriptstyle\pm0.023$} & 0.649\,{$\scriptscriptstyle\pm0.011$} & 0.657 & 0.627 & 0.633\,{$\scriptscriptstyle\pm0.012$} \\
\midrule\midrule
\textbf{\#\,1st-place wins} & \textbf{3} & \textbf{7} & \textbf{0} & \textbf{10} & \textbf{1} & \textbf{0} & \textbf{1} & \textbf{9} & \textbf{0} & \textbf{9} & \textbf{2} \\
\#\,2nd-place wins & 3 & 8 & 0 & 6 & 2 & 0 & 9 & 7 & 1 & 5 & 1 \\
\end{longtable}\endgroup
\twocolumn

\onecolumn
\begingroup\fontsize{4.6pt}{5.6pt}\selectfont\setlength{\tabcolsep}{0.6pt}\renewcommand{\arraystretch}{1.1}
\setlength\LTleft{0pt}\setlength\LTright{0pt}\begin{longtable}{@{\extracolsep{\fill}}lccccccccccc}
\caption{Probing --- \textbf{target's own probe} (test AUC $\uparrow$): the receiver's globally-best probe applied to the transferred features (transfer-free ceiling). Best \textbf{bold}, second \underline{underlined}; grouped by source (donor) LLM.}\label{tab:probing-pp-target}\\
\toprule
Target & \textbf{Term} & \textbf{Rel-dep} & \textbf{CKA} & \textbf{MATCHA} & \textbf{MATCHA-K} & \textbf{MATCHA-FL} & \textbf{MATCHA-avg} & \textbf{MATCHA-no FF} & \textbf{CmdV} & \textbf{HOT} & \textbf{MLP} \\
\midrule\endfirsthead
\multicolumn{12}{c}{\tablename\ \thetable{} -- continued}\\ \toprule
Target & \textbf{Term} & \textbf{Rel-dep} & \textbf{CKA} & \textbf{MATCHA} & \textbf{MATCHA-K} & \textbf{MATCHA-FL} & \textbf{MATCHA-avg} & \textbf{MATCHA-no FF} & \textbf{CmdV} & \textbf{HOT} & \textbf{MLP} \\
\midrule\endhead
\bottomrule\endfoot
\multicolumn{12}{l}{\textbf{Source: Llama-3.2-1B}} \\
Llama-3.2-3B & -- & 0.700\,{$\scriptscriptstyle\pm0.004$} & 0.696\,{$\scriptscriptstyle\pm0.005$} & \underline{0.723}\,{$\scriptscriptstyle\pm0.004$} & 0.683\,{$\scriptscriptstyle\pm0.010$} & 0.646\,{$\scriptscriptstyle\pm0.030$} & 0.716\,{$\scriptscriptstyle\pm0.008$} & \textbf{0.728}\,{$\scriptscriptstyle\pm0.002$} & 0.592 & 0.695 & 0.678\,{$\scriptscriptstyle\pm0.011$} \\
Llama-3.1-8B & -- & 0.705\,{$\scriptscriptstyle\pm0.003$} & 0.663\,{$\scriptscriptstyle\pm0.002$} & 0.715\,{$\scriptscriptstyle\pm0.001$} & 0.692\,{$\scriptscriptstyle\pm0.005$} & 0.677\,{$\scriptscriptstyle\pm0.008$} & 0.714\,{$\scriptscriptstyle\pm0.005$} & \underline{0.716}\,{$\scriptscriptstyle\pm0.001$} & 0.607 & \textbf{0.722} & 0.672\,{$\scriptscriptstyle\pm0.005$} \\
Qwen2.5-1.5B & -- & 0.555\,{$\scriptscriptstyle\pm0.005$} & 0.592\,{$\scriptscriptstyle\pm0.007$} & 0.619\,{$\scriptscriptstyle\pm0.013$} & 0.589\,{$\scriptscriptstyle\pm0.029$} & 0.579\,{$\scriptscriptstyle\pm0.052$} & \underline{0.623}\,{$\scriptscriptstyle\pm0.003$} & 0.596\,{$\scriptscriptstyle\pm0.028$} & 0.606 & 0.545 & \textbf{0.654}\,{$\scriptscriptstyle\pm0.017$} \\
Qwen2.5-3B & -- & 0.633\,{$\scriptscriptstyle\pm0.004$} & 0.599\,{$\scriptscriptstyle\pm0.016$} & \textbf{0.681}\,{$\scriptscriptstyle\pm0.012$} & 0.633\,{$\scriptscriptstyle\pm0.020$} & 0.605\,{$\scriptscriptstyle\pm0.031$} & \underline{0.665}\,{$\scriptscriptstyle\pm0.012$} & 0.657\,{$\scriptscriptstyle\pm0.012$} & 0.571 & 0.598 & 0.649\,{$\scriptscriptstyle\pm0.019$} \\
Qwen2.5-7B & -- & 0.640\,{$\scriptscriptstyle\pm0.018$} & 0.622\,{$\scriptscriptstyle\pm0.003$} & \textbf{0.662}\,{$\scriptscriptstyle\pm0.009$} & 0.580\,{$\scriptscriptstyle\pm0.018$} & 0.616\,{$\scriptscriptstyle\pm0.013$} & 0.638\,{$\scriptscriptstyle\pm0.012$} & \underline{0.647}\,{$\scriptscriptstyle\pm0.008$} & 0.562 & 0.547 & 0.628\,{$\scriptscriptstyle\pm0.003$} \\
gemma-2-2b & -- & 0.568\,{$\scriptscriptstyle\pm0.025$} & 0.590\,{$\scriptscriptstyle\pm0.006$} & \underline{0.606}\,{$\scriptscriptstyle\pm0.047$} & 0.600\,{$\scriptscriptstyle\pm0.006$} & 0.566\,{$\scriptscriptstyle\pm0.044$} & 0.580\,{$\scriptscriptstyle\pm0.028$} & 0.605\,{$\scriptscriptstyle\pm0.012$} & 0.543 & 0.601 & \textbf{0.642}\,{$\scriptscriptstyle\pm0.015$} \\
\midrule
\multicolumn{12}{l}{\textbf{Source: Llama-3.2-3B}} \\
Llama-3.2-1B & 0.613\,{$\scriptscriptstyle\pm0.011$} & 0.677\,{$\scriptscriptstyle\pm0.010$} & 0.624\,{$\scriptscriptstyle\pm0.040$} & 0.685\,{$\scriptscriptstyle\pm0.014$} & 0.632\,{$\scriptscriptstyle\pm0.013$} & 0.609\,{$\scriptscriptstyle\pm0.013$} & \underline{0.696}\,{$\scriptscriptstyle\pm0.005$} & 0.669\,{$\scriptscriptstyle\pm0.003$} & 0.662 & \textbf{0.721} & 0.613\,{$\scriptscriptstyle\pm0.017$} \\
Llama-3.1-8B & -- & \underline{0.781}\,{$\scriptscriptstyle\pm0.005$} & 0.661\,{$\scriptscriptstyle\pm0.003$} & 0.769\,{$\scriptscriptstyle\pm0.009$} & 0.697\,{$\scriptscriptstyle\pm0.008$} & 0.620\,{$\scriptscriptstyle\pm0.009$} & 0.771\,{$\scriptscriptstyle\pm0.004$} & 0.765\,{$\scriptscriptstyle\pm0.004$} & 0.706 & \textbf{0.791} & 0.679\,{$\scriptscriptstyle\pm0.014$} \\
Qwen2.5-1.5B & 0.520\,{$\scriptscriptstyle\pm0.022$} & 0.521\,{$\scriptscriptstyle\pm0.022$} & 0.515\,{$\scriptscriptstyle\pm0.044$} & \underline{0.587}\,{$\scriptscriptstyle\pm0.025$} & 0.574\,{$\scriptscriptstyle\pm0.032$} & 0.528\,{$\scriptscriptstyle\pm0.044$} & 0.575\,{$\scriptscriptstyle\pm0.041$} & 0.551\,{$\scriptscriptstyle\pm0.021$} & 0.535 & 0.505 & \textbf{0.597}\,{$\scriptscriptstyle\pm0.014$} \\
Qwen2.5-3B & -- & 0.616\,{$\scriptscriptstyle\pm0.017$} & 0.553\,{$\scriptscriptstyle\pm0.014$} & \textbf{0.668}\,{$\scriptscriptstyle\pm0.019$} & 0.577\,{$\scriptscriptstyle\pm0.024$} & 0.569\,{$\scriptscriptstyle\pm0.040$} & \underline{0.664}\,{$\scriptscriptstyle\pm0.013$} & 0.647\,{$\scriptscriptstyle\pm0.015$} & 0.588 & 0.662 & 0.578\,{$\scriptscriptstyle\pm0.030$} \\
Qwen2.5-7B & 0.604\,{$\scriptscriptstyle\pm0.013$} & 0.604\,{$\scriptscriptstyle\pm0.014$} & 0.575\,{$\scriptscriptstyle\pm0.013$} & \underline{0.671}\,{$\scriptscriptstyle\pm0.023$} & 0.568\,{$\scriptscriptstyle\pm0.009$} & 0.560\,{$\scriptscriptstyle\pm0.051$} & 0.670\,{$\scriptscriptstyle\pm0.024$} & \textbf{0.696}\,{$\scriptscriptstyle\pm0.022$} & 0.566 & 0.582 & 0.607\,{$\scriptscriptstyle\pm0.019$} \\
gemma-2-2b & 0.548\,{$\scriptscriptstyle\pm0.024$} & 0.584\,{$\scriptscriptstyle\pm0.020$} & 0.560\,{$\scriptscriptstyle\pm0.023$} & 0.567\,{$\scriptscriptstyle\pm0.007$} & 0.554\,{$\scriptscriptstyle\pm0.036$} & 0.529\,{$\scriptscriptstyle\pm0.023$} & 0.575\,{$\scriptscriptstyle\pm0.034$} & 0.585\,{$\scriptscriptstyle\pm0.019$} & \underline{0.610} & \textbf{0.620} & 0.573\,{$\scriptscriptstyle\pm0.020$} \\
\midrule
\multicolumn{12}{l}{\textbf{Source: Llama-3.1-8B}} \\
Llama-3.2-1B & 0.613\,{$\scriptscriptstyle\pm0.008$} & 0.661\,{$\scriptscriptstyle\pm0.016$} & 0.645\,{$\scriptscriptstyle\pm0.007$} & \underline{0.708}\,{$\scriptscriptstyle\pm0.009$} & 0.654\,{$\scriptscriptstyle\pm0.007$} & 0.621\,{$\scriptscriptstyle\pm0.010$} & 0.701\,{$\scriptscriptstyle\pm0.014$} & 0.700\,{$\scriptscriptstyle\pm0.003$} & 0.632 & \textbf{0.712} & 0.590\,{$\scriptscriptstyle\pm0.009$} \\
Llama-3.2-3B & 0.723\,{$\scriptscriptstyle\pm0.005$} & 0.739\,{$\scriptscriptstyle\pm0.009$} & 0.689\,{$\scriptscriptstyle\pm0.003$} & 0.744\,{$\scriptscriptstyle\pm0.005$} & 0.726\,{$\scriptscriptstyle\pm0.013$} & 0.667\,{$\scriptscriptstyle\pm0.024$} & 0.740\,{$\scriptscriptstyle\pm0.003$} & \underline{0.754}\,{$\scriptscriptstyle\pm0.006$} & 0.721 & \textbf{0.765} & 0.623\,{$\scriptscriptstyle\pm0.008$} \\
Qwen2.5-1.5B & 0.518\,{$\scriptscriptstyle\pm0.035$} & 0.535\,{$\scriptscriptstyle\pm0.034$} & 0.531\,{$\scriptscriptstyle\pm0.019$} & 0.576\,{$\scriptscriptstyle\pm0.045$} & 0.544\,{$\scriptscriptstyle\pm0.032$} & 0.547\,{$\scriptscriptstyle\pm0.014$} & \textbf{0.593}\,{$\scriptscriptstyle\pm0.032$} & 0.575\,{$\scriptscriptstyle\pm0.015$} & \underline{0.588} & 0.486 & 0.562\,{$\scriptscriptstyle\pm0.024$} \\
Qwen2.5-3B & -- & 0.576\,{$\scriptscriptstyle\pm0.003$} & 0.550\,{$\scriptscriptstyle\pm0.011$} & \textbf{0.680}\,{$\scriptscriptstyle\pm0.009$} & 0.621\,{$\scriptscriptstyle\pm0.029$} & 0.544\,{$\scriptscriptstyle\pm0.029$} & \underline{0.678}\,{$\scriptscriptstyle\pm0.010$} & 0.661\,{$\scriptscriptstyle\pm0.036$} & 0.623 & 0.629 & 0.577\,{$\scriptscriptstyle\pm0.004$} \\
Qwen2.5-7B & 0.576\,{$\scriptscriptstyle\pm0.009$} & 0.607\,{$\scriptscriptstyle\pm0.012$} & 0.590\,{$\scriptscriptstyle\pm0.011$} & \underline{0.695}\,{$\scriptscriptstyle\pm0.019$} & 0.593\,{$\scriptscriptstyle\pm0.017$} & 0.602\,{$\scriptscriptstyle\pm0.034$} & 0.685\,{$\scriptscriptstyle\pm0.009$} & \textbf{0.708}\,{$\scriptscriptstyle\pm0.014$} & 0.615 & 0.622 & 0.605\,{$\scriptscriptstyle\pm0.021$} \\
gemma-2-2b & 0.544\,{$\scriptscriptstyle\pm0.027$} & 0.556\,{$\scriptscriptstyle\pm0.029$} & 0.548\,{$\scriptscriptstyle\pm0.028$} & 0.566\,{$\scriptscriptstyle\pm0.019$} & 0.575\,{$\scriptscriptstyle\pm0.032$} & 0.564\,{$\scriptscriptstyle\pm0.045$} & 0.600\,{$\scriptscriptstyle\pm0.038$} & 0.584\,{$\scriptscriptstyle\pm0.022$} & \textbf{0.618} & \underline{0.610} & 0.596\,{$\scriptscriptstyle\pm0.027$} \\
\midrule
\multicolumn{12}{l}{\textbf{Source: Qwen2.5-1.5B}} \\
Llama-3.2-1B & 0.642\,{$\scriptscriptstyle\pm0.026$} & \underline{0.694}\,{$\scriptscriptstyle\pm0.004$} & 0.681\,{$\scriptscriptstyle\pm0.010$} & 0.682\,{$\scriptscriptstyle\pm0.012$} & 0.645\,{$\scriptscriptstyle\pm0.012$} & 0.632\,{$\scriptscriptstyle\pm0.038$} & 0.676\,{$\scriptscriptstyle\pm0.011$} & \textbf{0.714}\,{$\scriptscriptstyle\pm0.019$} & 0.632 & 0.685 & 0.664\,{$\scriptscriptstyle\pm0.032$} \\
Llama-3.2-3B & 0.704\,{$\scriptscriptstyle\pm0.006$} & 0.703\,{$\scriptscriptstyle\pm0.005$} & 0.611\,{$\scriptscriptstyle\pm0.023$} & \underline{0.705}\,{$\scriptscriptstyle\pm0.009$} & 0.645\,{$\scriptscriptstyle\pm0.024$} & 0.670\,{$\scriptscriptstyle\pm0.017$} & 0.696\,{$\scriptscriptstyle\pm0.012$} & \textbf{0.721}\,{$\scriptscriptstyle\pm0.005$} & 0.556 & 0.687 & 0.701\,{$\scriptscriptstyle\pm0.004$} \\
Llama-3.1-8B & -- & 0.701\,{$\scriptscriptstyle\pm0.002$} & 0.653\,{$\scriptscriptstyle\pm0.012$} & \underline{0.737}\,{$\scriptscriptstyle\pm0.006$} & 0.681\,{$\scriptscriptstyle\pm0.031$} & 0.628\,{$\scriptscriptstyle\pm0.047$} & 0.736\,{$\scriptscriptstyle\pm0.013$} & \textbf{0.741}\,{$\scriptscriptstyle\pm0.002$} & 0.495 & 0.701 & 0.716\,{$\scriptscriptstyle\pm0.014$} \\
Qwen2.5-3B & -- & 0.706\,{$\scriptscriptstyle\pm0.009$} & 0.614\,{$\scriptscriptstyle\pm0.005$} & \underline{0.740}\,{$\scriptscriptstyle\pm0.007$} & 0.669\,{$\scriptscriptstyle\pm0.042$} & 0.676\,{$\scriptscriptstyle\pm0.038$} & \textbf{0.740}\,{$\scriptscriptstyle\pm0.014$} & 0.620\,{$\scriptscriptstyle\pm0.018$} & 0.553 & 0.680 & 0.688\,{$\scriptscriptstyle\pm0.004$} \\
Qwen2.5-7B & 0.684\,{$\scriptscriptstyle\pm0.018$} & 0.679\,{$\scriptscriptstyle\pm0.019$} & 0.489\,{$\scriptscriptstyle\pm0.014$} & \underline{0.712}\,{$\scriptscriptstyle\pm0.012$} & 0.663\,{$\scriptscriptstyle\pm0.019$} & 0.587\,{$\scriptscriptstyle\pm0.039$} & \textbf{0.729}\,{$\scriptscriptstyle\pm0.020$} & 0.650\,{$\scriptscriptstyle\pm0.030$} & 0.571 & 0.684 & 0.671\,{$\scriptscriptstyle\pm0.033$} \\
gemma-2-2b & \underline{0.664}\,{$\scriptscriptstyle\pm0.025$} & \textbf{0.670}\,{$\scriptscriptstyle\pm0.016$} & 0.623\,{$\scriptscriptstyle\pm0.010$} & 0.606\,{$\scriptscriptstyle\pm0.089$} & 0.606\,{$\scriptscriptstyle\pm0.004$} & 0.563\,{$\scriptscriptstyle\pm0.112$} & 0.586\,{$\scriptscriptstyle\pm0.006$} & 0.601\,{$\scriptscriptstyle\pm0.005$} & 0.568 & 0.557 & 0.604\,{$\scriptscriptstyle\pm0.024$} \\
\midrule
\multicolumn{12}{l}{\textbf{Source: Qwen2.5-3B}} \\
Llama-3.2-1B & 0.593\,{$\scriptscriptstyle\pm0.018$} & \underline{0.675}\,{$\scriptscriptstyle\pm0.008$} & 0.565\,{$\scriptscriptstyle\pm0.035$} & 0.664\,{$\scriptscriptstyle\pm0.027$} & 0.633\,{$\scriptscriptstyle\pm0.029$} & 0.630\,{$\scriptscriptstyle\pm0.009$} & 0.650\,{$\scriptscriptstyle\pm0.020$} & \textbf{0.696}\,{$\scriptscriptstyle\pm0.009$} & 0.554 & 0.670 & 0.656\,{$\scriptscriptstyle\pm0.013$} \\
Llama-3.2-3B & \textbf{0.713}\,{$\scriptscriptstyle\pm0.004$} & 0.681\,{$\scriptscriptstyle\pm0.004$} & 0.617\,{$\scriptscriptstyle\pm0.008$} & 0.704\,{$\scriptscriptstyle\pm0.052$} & 0.674\,{$\scriptscriptstyle\pm0.018$} & 0.670\,{$\scriptscriptstyle\pm0.018$} & 0.707\,{$\scriptscriptstyle\pm0.026$} & \underline{0.708}\,{$\scriptscriptstyle\pm0.022$} & 0.554 & 0.698 & 0.668\,{$\scriptscriptstyle\pm0.019$} \\
Llama-3.1-8B & 0.694\,{$\scriptscriptstyle\pm0.001$} & 0.676\,{$\scriptscriptstyle\pm0.003$} & 0.587\,{$\scriptscriptstyle\pm0.015$} & \textbf{0.711}\,{$\scriptscriptstyle\pm0.013$} & 0.669\,{$\scriptscriptstyle\pm0.017$} & 0.618\,{$\scriptscriptstyle\pm0.021$} & 0.703\,{$\scriptscriptstyle\pm0.001$} & 0.675\,{$\scriptscriptstyle\pm0.026$} & 0.587 & \underline{0.709} & 0.684\,{$\scriptscriptstyle\pm0.014$} \\
Qwen2.5-1.5B & \underline{0.659}\,{$\scriptscriptstyle\pm0.021$} & 0.651\,{$\scriptscriptstyle\pm0.008$} & 0.646\,{$\scriptscriptstyle\pm0.003$} & \textbf{0.667}\,{$\scriptscriptstyle\pm0.016$} & 0.577\,{$\scriptscriptstyle\pm0.027$} & 0.606\,{$\scriptscriptstyle\pm0.013$} & 0.647\,{$\scriptscriptstyle\pm0.039$} & 0.578\,{$\scriptscriptstyle\pm0.020$} & 0.632 & 0.591 & 0.637\,{$\scriptscriptstyle\pm0.018$} \\
Qwen2.5-7B & 0.688\,{$\scriptscriptstyle\pm0.008$} & \underline{0.712}\,{$\scriptscriptstyle\pm0.007$} & 0.578\,{$\scriptscriptstyle\pm0.042$} & 0.683\,{$\scriptscriptstyle\pm0.032$} & 0.674\,{$\scriptscriptstyle\pm0.007$} & 0.601\,{$\scriptscriptstyle\pm0.024$} & \textbf{0.715}\,{$\scriptscriptstyle\pm0.005$} & 0.636\,{$\scriptscriptstyle\pm0.036$} & 0.572 & 0.669 & 0.658\,{$\scriptscriptstyle\pm0.019$} \\
gemma-2-2b & \textbf{0.679}\,{$\scriptscriptstyle\pm0.008$} & 0.623\,{$\scriptscriptstyle\pm0.026$} & 0.564\,{$\scriptscriptstyle\pm0.011$} & \underline{0.641}\,{$\scriptscriptstyle\pm0.010$} & 0.598\,{$\scriptscriptstyle\pm0.010$} & 0.565\,{$\scriptscriptstyle\pm0.078$} & 0.606\,{$\scriptscriptstyle\pm0.043$} & 0.599\,{$\scriptscriptstyle\pm0.061$} & 0.559 & 0.579 & 0.639\,{$\scriptscriptstyle\pm0.022$} \\
\midrule
\multicolumn{12}{l}{\textbf{Source: Qwen2.5-7B}} \\
Llama-3.2-1B & 0.633\,{$\scriptscriptstyle\pm0.014$} & 0.679\,{$\scriptscriptstyle\pm0.012$} & 0.541\,{$\scriptscriptstyle\pm0.022$} & \underline{0.679}\,{$\scriptscriptstyle\pm0.022$} & 0.624\,{$\scriptscriptstyle\pm0.040$} & 0.592\,{$\scriptscriptstyle\pm0.018$} & 0.670\,{$\scriptscriptstyle\pm0.026$} & \textbf{0.696}\,{$\scriptscriptstyle\pm0.014$} & 0.644 & 0.595 & 0.618\,{$\scriptscriptstyle\pm0.021$} \\
Llama-3.2-3B & 0.690\,{$\scriptscriptstyle\pm0.012$} & 0.691\,{$\scriptscriptstyle\pm0.012$} & 0.556\,{$\scriptscriptstyle\pm0.014$} & 0.732\,{$\scriptscriptstyle\pm0.009$} & 0.713\,{$\scriptscriptstyle\pm0.013$} & 0.695\,{$\scriptscriptstyle\pm0.026$} & \underline{0.743}\,{$\scriptscriptstyle\pm0.012$} & \textbf{0.754}\,{$\scriptscriptstyle\pm0.007$} & 0.667 & 0.709 & 0.660\,{$\scriptscriptstyle\pm0.020$} \\
Llama-3.1-8B & -- & 0.720\,{$\scriptscriptstyle\pm0.001$} & 0.561\,{$\scriptscriptstyle\pm0.028$} & 0.750\,{$\scriptscriptstyle\pm0.006$} & 0.705\,{$\scriptscriptstyle\pm0.015$} & 0.639\,{$\scriptscriptstyle\pm0.043$} & \underline{0.754}\,{$\scriptscriptstyle\pm0.003$} & \textbf{0.763}\,{$\scriptscriptstyle\pm0.004$} & 0.671 & 0.714 & 0.704\,{$\scriptscriptstyle\pm0.010$} \\
Qwen2.5-1.5B & 0.681\,{$\scriptscriptstyle\pm0.018$} & 0.680\,{$\scriptscriptstyle\pm0.019$} & 0.548\,{$\scriptscriptstyle\pm0.008$} & \underline{0.690}\,{$\scriptscriptstyle\pm0.006$} & 0.535\,{$\scriptscriptstyle\pm0.019$} & 0.584\,{$\scriptscriptstyle\pm0.043$} & \textbf{0.691}\,{$\scriptscriptstyle\pm0.030$} & 0.690\,{$\scriptscriptstyle\pm0.021$} & 0.637 & 0.684 & 0.651\,{$\scriptscriptstyle\pm0.003$} \\
Qwen2.5-3B & -- & \underline{0.761}\,{$\scriptscriptstyle\pm0.005$} & 0.521\,{$\scriptscriptstyle\pm0.005$} & 0.752\,{$\scriptscriptstyle\pm0.007$} & 0.632\,{$\scriptscriptstyle\pm0.019$} & 0.585\,{$\scriptscriptstyle\pm0.042$} & \textbf{0.766}\,{$\scriptscriptstyle\pm0.004$} & 0.748\,{$\scriptscriptstyle\pm0.014$} & 0.685 & 0.724 & 0.690\,{$\scriptscriptstyle\pm0.018$} \\
gemma-2-2b & \underline{0.674}\,{$\scriptscriptstyle\pm0.017$} & \textbf{0.683}\,{$\scriptscriptstyle\pm0.010$} & 0.510\,{$\scriptscriptstyle\pm0.020$} & 0.622\,{$\scriptscriptstyle\pm0.012$} & 0.564\,{$\scriptscriptstyle\pm0.013$} & 0.550\,{$\scriptscriptstyle\pm0.075$} & 0.634\,{$\scriptscriptstyle\pm0.011$} & 0.615\,{$\scriptscriptstyle\pm0.003$} & 0.545 & 0.576 & 0.638\,{$\scriptscriptstyle\pm0.019$} \\
\midrule
\multicolumn{12}{l}{\textbf{Source: gemma-2-2b}} \\
Llama-3.2-1B & 0.631\,{$\scriptscriptstyle\pm0.011$} & \textbf{0.671}\,{$\scriptscriptstyle\pm0.011$} & 0.566\,{$\scriptscriptstyle\pm0.021$} & 0.651\,{$\scriptscriptstyle\pm0.013$} & 0.651\,{$\scriptscriptstyle\pm0.023$} & 0.595\,{$\scriptscriptstyle\pm0.013$} & 0.645\,{$\scriptscriptstyle\pm0.028$} & 0.637\,{$\scriptscriptstyle\pm0.007$} & 0.604 & 0.640 & \underline{0.660}\,{$\scriptscriptstyle\pm0.015$} \\
Llama-3.2-3B & -- & 0.700\,{$\scriptscriptstyle\pm0.005$} & 0.631\,{$\scriptscriptstyle\pm0.015$} & \textbf{0.717}\,{$\scriptscriptstyle\pm0.001$} & 0.678\,{$\scriptscriptstyle\pm0.015$} & 0.619\,{$\scriptscriptstyle\pm0.017$} & \underline{0.715}\,{$\scriptscriptstyle\pm0.006$} & 0.689\,{$\scriptscriptstyle\pm0.008$} & 0.594 & 0.648 & 0.657\,{$\scriptscriptstyle\pm0.006$} \\
Llama-3.1-8B & -- & \underline{0.709}\,{$\scriptscriptstyle\pm0.003$} & 0.637\,{$\scriptscriptstyle\pm0.012$} & 0.707\,{$\scriptscriptstyle\pm0.013$} & 0.699\,{$\scriptscriptstyle\pm0.007$} & 0.603\,{$\scriptscriptstyle\pm0.053$} & 0.706\,{$\scriptscriptstyle\pm0.007$} & \textbf{0.716}\,{$\scriptscriptstyle\pm0.003$} & 0.629 & 0.648 & 0.644\,{$\scriptscriptstyle\pm0.015$} \\
Qwen2.5-1.5B & -- & 0.598\,{$\scriptscriptstyle\pm0.010$} & 0.461\,{$\scriptscriptstyle\pm0.002$} & 0.585\,{$\scriptscriptstyle\pm0.030$} & 0.606\,{$\scriptscriptstyle\pm0.013$} & 0.560\,{$\scriptscriptstyle\pm0.059$} & 0.590\,{$\scriptscriptstyle\pm0.022$} & 0.561\,{$\scriptscriptstyle\pm0.005$} & 0.600 & \underline{0.622} & \textbf{0.639}\,{$\scriptscriptstyle\pm0.006$} \\
Qwen2.5-3B & -- & 0.625\,{$\scriptscriptstyle\pm0.010$} & 0.513\,{$\scriptscriptstyle\pm0.031$} & \textbf{0.677}\,{$\scriptscriptstyle\pm0.009$} & 0.662\,{$\scriptscriptstyle\pm0.009$} & 0.579\,{$\scriptscriptstyle\pm0.031$} & \underline{0.664}\,{$\scriptscriptstyle\pm0.006$} & 0.593\,{$\scriptscriptstyle\pm0.017$} & 0.572 & 0.643 & 0.618\,{$\scriptscriptstyle\pm0.010$} \\
Qwen2.5-7B & -- & 0.667\,{$\scriptscriptstyle\pm0.004$} & 0.620\,{$\scriptscriptstyle\pm0.006$} & \textbf{0.688}\,{$\scriptscriptstyle\pm0.010$} & 0.661\,{$\scriptscriptstyle\pm0.003$} & 0.627\,{$\scriptscriptstyle\pm0.004$} & \underline{0.669}\,{$\scriptscriptstyle\pm0.034$} & 0.643\,{$\scriptscriptstyle\pm0.014$} & 0.624 & 0.562 & 0.629\,{$\scriptscriptstyle\pm0.019$} \\
\midrule\midrule
\textbf{\#\,1st-place wins} & \textbf{2} & \textbf{3} & \textbf{0} & \textbf{9} & \textbf{0} & \textbf{0} & \textbf{6} & \textbf{11} & \textbf{1} & \textbf{6} & \textbf{4} \\
\#\,2nd-place wins & 3 & 6 & 0 & 13 & 0 & 0 & 10 & 4 & 2 & 3 & 1 \\
\end{longtable}\endgroup
\twocolumn

\paragraph{Probing on the best layer selected by validation.}
Table~\ref{tab:probing-pp-layer} lists, per pair and method, the receiver layer
whose \emph{validation} AUC is highest (the layer whose \emph{test} AUC is reported
in Table~\ref{tab:probing-pp-aligned}). Receiver depth varies (16--36 layers), so
the selected layer index is method- and pair-dependent.

\onecolumn
\begingroup\fontsize{5.5pt}{6.5pt}\selectfont\setlength{\tabcolsep}{1.5pt}\renewcommand{\arraystretch}{1.15}
\setlength\LTleft{0pt}\setlength\LTright{0pt}\begin{longtable}{@{\extracolsep{\fill}}lccccccccccc}
\caption{Winning probe layer (receiver index) per method and pair, i.e.\ the layer whose test AUC is in Table~\ref{tab:probing-pp-aligned}. Grouped by source (donor) LLM.}\label{tab:probing-pp-layer}\\
\toprule
Target & \textbf{Term} & \textbf{Rel-dep} & \textbf{CKA} & \textbf{MATCHA} & \textbf{MATCHA-K} & \textbf{MATCHA-FL} & \textbf{MATCHA-avg} & \textbf{MATCHA-no FF} & \textbf{CmdV} & \textbf{HOT} & \textbf{MLP} \\
\midrule\endfirsthead
\multicolumn{12}{c}{\tablename\ \thetable{} -- continued}\\ \toprule
Target & \textbf{Term} & \textbf{Rel-dep} & \textbf{CKA} & \textbf{MATCHA} & \textbf{MATCHA-K} & \textbf{MATCHA-FL} & \textbf{MATCHA-avg} & \textbf{MATCHA-no FF} & \textbf{CmdV} & \textbf{HOT} & \textbf{MLP} \\
\midrule\endhead
\bottomrule\endfoot
\multicolumn{12}{l}{\textbf{Source: Llama-3.2-1B}} \\
Llama-3.2-3B & -- & 20 & 13 & 13 & 21 & -- & -- & -- & 21 & 19 & 21 \\
Llama-3.1-8B & -- & 21 & 16 & 18 & 17 & -- & -- & -- & 29 & 18 & 21 \\
Qwen2.5-1.5B & -- & 16 & 18 & 20 & 18 & -- & -- & -- & 21 & 8 & 21 \\
Qwen2.5-3B & -- & 18 & 27 & 25 & 18 & -- & -- & -- & 27 & 10 & 24 \\
Qwen2.5-7B & -- & 18 & 5 & 18 & 5 & -- & -- & -- & 21 & 2 & 8 \\
gemma-2-2b & -- & 8 & 8 & 8 & 25 & -- & -- & -- & 12 & 23 & 15 \\
\midrule
\multicolumn{12}{l}{\textbf{Source: Llama-3.2-3B}} \\
Llama-3.2-1B & 14 & 8 & 11 & 10 & 11 & -- & -- & -- & 9 & 8 & 10 \\
Llama-3.1-8B & -- & 17 & 16 & 17 & 17 & -- & -- & -- & 18 & 19 & 19 \\
Qwen2.5-1.5B & 13 & 13 & 10 & 18 & 18 & -- & -- & -- & 17 & 16 & 18 \\
Qwen2.5-3B & -- & 21 & 34 & 24 & 20 & -- & -- & -- & 24 & 26 & 20 \\
Qwen2.5-7B & 17 & 17 & 5 & 18 & 10 & -- & -- & -- & 17 & 18 & 18 \\
gemma-2-2b & 2 & 9 & 16 & 7 & 19 & -- & -- & -- & 11 & 11 & 2 \\
\midrule
\multicolumn{12}{l}{\textbf{Source: Llama-3.1-8B}} \\
Llama-3.2-1B & 11 & 9 & 8 & 10 & 10 & -- & -- & -- & 9 & 10 & 8 \\
Llama-3.2-3B & 15 & 15 & 13 & 15 & 13 & -- & -- & -- & 17 & 19 & 20 \\
Qwen2.5-1.5B & 11 & 13 & 16 & 18 & 17 & -- & -- & -- & 16 & 18 & 18 \\
Qwen2.5-3B & -- & 16 & 24 & 24 & 20 & -- & -- & -- & 24 & 24 & 22 \\
Qwen2.5-7B & 17 & 14 & 16 & 19 & 6 & -- & -- & -- & 19 & 18 & 18 \\
gemma-2-2b & 7 & 8 & 8 & 20 & 21 & -- & -- & -- & 13 & 12 & 9 \\
\midrule
\multicolumn{12}{l}{\textbf{Source: Qwen2.5-1.5B}} \\
Llama-3.2-1B & 7 & 9 & 6 & 9 & 7 & -- & -- & -- & 10 & 9 & 7 \\
Llama-3.2-3B & 16 & 16 & 19 & 12 & 12 & -- & -- & -- & 9 & 14 & 8 \\
Llama-3.1-8B & -- & 19 & 16 & 14 & 16 & -- & -- & -- & 25 & 16 & 11 \\
Qwen2.5-3B & -- & 24 & 16 & 24 & 18 & -- & -- & -- & 23 & 24 & 25 \\
Qwen2.5-7B & 16 & 16 & 1 & 18 & 14 & -- & -- & -- & 23 & 18 & 18 \\
gemma-2-2b & 14 & 14 & 3 & 8 & 23 & -- & -- & -- & 24 & 11 & 2 \\
\midrule
\multicolumn{12}{l}{\textbf{Source: Qwen2.5-3B}} \\
Llama-3.2-1B & 7 & 10 & 10 & 8 & 9 & -- & -- & -- & 13 & 8 & 8 \\
Llama-3.2-3B & 14 & 19 & 23 & 12 & 9 & -- & -- & -- & 22 & 15 & 12 \\
Llama-3.1-8B & 19 & 21 & 16 & 15 & 16 & -- & -- & -- & 22 & 13 & 12 \\
Qwen2.5-1.5B & 15 & 18 & 23 & 19 & 13 & -- & -- & -- & 23 & 17 & 20 \\
Qwen2.5-7B & 15 & 18 & 4 & 16 & 16 & -- & -- & -- & 23 & 18 & 19 \\
gemma-2-2b & 14 & 16 & 5 & 8 & 18 & -- & -- & -- & 18 & 8 & 2 \\
\midrule
\multicolumn{12}{l}{\textbf{Source: Qwen2.5-7B}} \\
Llama-3.2-1B & 7 & 8 & 14 & 8 & 8 & -- & -- & -- & 11 & 8 & 8 \\
Llama-3.2-3B & 19 & 19 & 26 & 16 & 12 & -- & -- & -- & 19 & 12 & 19 \\
Llama-3.1-8B & -- & 22 & 23 & 16 & 15 & -- & -- & -- & 28 & 15 & 16 \\
Qwen2.5-1.5B & 19 & 19 & 9 & 17 & 16 & -- & -- & -- & 24 & 18 & 18 \\
Qwen2.5-3B & -- & 26 & 9 & 25 & 15 & -- & -- & -- & 22 & 26 & 24 \\
gemma-2-2b & 14 & 14 & 11 & 3 & 22 & -- & -- & -- & 12 & 5 & 11 \\
\midrule
\multicolumn{12}{l}{\textbf{Source: gemma-2-2b}} \\
Llama-3.2-1B & 8 & 9 & 8 & 9 & 8 & -- & -- & -- & 11 & 8 & 8 \\
Llama-3.2-3B & -- & 15 & 13 & 12 & 19 & -- & -- & -- & 20 & 14 & 21 \\
Llama-3.1-8B & -- & 21 & 16 & 15 & 16 & -- & -- & -- & 23 & 16 & 24 \\
Qwen2.5-1.5B & -- & 17 & 7 & 19 & 18 & -- & -- & -- & 20 & 18 & 22 \\
Qwen2.5-3B & -- & 22 & 7 & 27 & 26 & -- & -- & -- & 24 & 24 & 20 \\
Qwen2.5-7B & -- & 17 & 20 & 19 & 14 & -- & -- & -- & 19 & 18 & 18 \\
\end{longtable}\endgroup
\twocolumn


\subsubsection{Refusal-Suppression Steering}
\label{sec:appendix-steering}

We construct refusal-suppression directions separately for each model. To
transfer an intervention defined in a target model $\mathcal{M}_t$ to a source
model $\mathcal{M}_s$, we use the frozen source-to-target and target-to-source
alignment maps $\mathcal{A}^{s\to t}$ and $\mathcal{A}^{t\to s}$ described in
Section~\ref{sec:appendix-training}. Neither map is trained on refusal data.

\paragraph{Native steering-vector construction.}
For a model $\mathcal{M}_m$, let $h^m_\ell(p)$ denote the last-token hidden
state at layer $\ell$ for prompt $p$. Given harmful prompts $\mathcal{H}$ and
harmless prompts $\mathcal{N}$, we form a mean-difference direction at every
layer,
\begin{equation}
    u^m_\ell
    =
    \frac{1}{|\mathcal{H}|}\sum_{p\in\mathcal{H}} h^m_\ell(p)
    -
    \frac{1}{|\mathcal{N}|}\sum_{p\in\mathcal{N}} h^m_\ell(p).
    \label{eq:steering-direction}
\end{equation}
Following Arditi et al.\ (2024), we rescale each direction to the typical
activation magnitude at that layer,
\begin{equation}
    v^m_\ell
    =
    u^m_\ell\,
    \frac{s^m_\ell}
         {\operatorname{RMS}(u^m_\ell)+\varepsilon},
    \label{eq:steering-rms}
\end{equation}
where $s^m_\ell$ is the RMS magnitude of $h^m_\ell(p)$ over the prompts used to
construct the direction. We use a fixed sample of $128$ harmful and $128$
harmless prompts from the public \texttt{refusal\_direction} training splits,
sampled once with seed $42$. Direction construction is therefore identical
across alignment methods and alignment-training seeds.

We next select the model's native steering layer and coefficient. For every
layer $\ell$, we apply
\begin{equation}
    h^m_\ell \leftarrow h^m_\ell + \lambda v^m_\ell
    \label{eq:native-steering}
\end{equation}
using both signs of the coefficient magnitudes
\begin{equation}
\begin{aligned}
    \mathcal{C}_{\mathrm{native}}
    ={}& \{0.5,1,1.2,1.5,2,3,4,5,6,8,10,12,16\},\\
    &\lambda\in\{\pm c:c\in\mathcal{C}_{\mathrm{native}}\}.
\end{aligned}
\end{equation}
Each $(\ell,\lambda)$ pair is evaluated on the held-out
\texttt{refusal\_val} split, which contains $39$ harmful prompts and a matched
harmless sample. We select the configuration with the highest attack-success
rate (ASR), excluding configurations whose outputs are predominantly empty,
repetitive, or unusually short. This gives a native layer $\ell_m^\star$ and
coefficient $\lambda_{m,\mathrm{native}}^\star$ for each model. Test prompts
are not used for this selection.

\paragraph{Transferred steering.}
Consider an intervention defined in the target model $\mathcal{M}_t$ and
transferred to the source model $\mathcal{M}_s$. The target direction
$v^t_{\ell_t^\star}$ and target layer $\ell_t^\star$ remain fixed. At each
decoding step, the current last token is processed in two source-model forward
passes.

The first pass is unmodified and is used to collect the full stack of
last-token source activations,
\begin{equation}
    X^s_\tau
    =
    \begin{bmatrix}
        h^s_{1,\tau} \\
        \vdots \\
        h^s_{L_s,\tau}
    \end{bmatrix}
    \in \mathbb{R}^{L_s\times D_s},
\end{equation}
where $\tau$ indexes the current decoding step. We map this stack into target
space,
\begin{equation}
    Z^t_\tau = \mathcal{A}^{s\to t}(X^s_\tau),
\end{equation}
and edit the selected target layer,
\begin{equation}
    \widetilde Z^t_\tau
    =
    Z^t_\tau
    +
    \mathcal{E}_{\ell_t^\star}
        \!\left(\lambda v^t_{\ell_t^\star}\right),
    \label{eq:target-stack-edit}
\end{equation}
where $\mathcal{E}_{\ell_t^\star}(x)$ denotes a target-layer stack whose
$\ell_t^\star$-th row is $x$ and whose remaining rows are zero. Both the edited
and unedited target stacks are then mapped back to source space. Their
difference defines the transferred update,
\begin{equation}
\begin{aligned}
    \Delta X^s_\tau
    &={}
    \mathcal{A}^{t\to s}(\widetilde Z^t_\tau)
    -
    \mathcal{A}^{t\to s}(Z^t_\tau) \\
    &={}
    \begin{bmatrix}
        \Delta h^s_{1,\tau} \\
        \vdots \\
        \Delta h^s_{L_s,\tau}
    \end{bmatrix}.
    \label{eq:steering-transfer-delta}
\end{aligned}
\end{equation}
The reference term is the alignment's own round-trip reconstruction,
$\mathcal{A}^{t\to s}(Z^t_\tau)$, rather than the original source stack
$X^s_\tau$. Thus, $\Delta X^s_\tau$ isolates the change induced by the
intervention in target space and does not include the alignment's
reconstruction error. In particular, $\lambda=0$ gives
$\Delta X^s_\tau=0$ exactly.

In the second pass, the same token is run through the source model again. When
layer $\ell$ produces its last-token state, we add the corresponding row of
Eq.~\ref{eq:steering-transfer-delta},
\begin{equation}
    h^s_{\ell,\tau}
    \leftarrow
    h^s_{\ell,\tau} + \Delta h^s_{\ell,\tau},
    \qquad \ell=1,\ldots,L_s.
    \label{eq:steering-transfer-injection}
\end{equation}
Generation then continues from this steered second pass. The alignment is
therefore evaluated once on the complete source activation stack to compute
all source-layer updates; it is not applied independently at each source layer
during the steered pass.

Because the alignment can change the effective scale of the intervention, we
re-select only the coefficient on \texttt{refusal\_val}, while keeping
$\ell_t^\star$ fixed. We evaluate both signs of
\begin{equation}
    \mathcal{C}_{\mathrm{transfer}}
    = \{0.5,1,1.5,2,4,6,8\},
\end{equation}
as well as the target model's native coefficient
$\lambda_{t,\mathrm{native}}^\star$. We report two settings:
\textbf{best-target}, which uses $\lambda_{t,\mathrm{native}}^\star$ without
transfer-side tuning, and \textbf{best-by-val}, which selects $\lambda$ using
transferred ASR on \texttt{refusal\_val}. Both settings use the same target
layer and direction.

\begin{figure*}[t]
\centering
\resizebox{0.98\textwidth}{!}{%
\begin{tikzpicture}[
    box/.style={draw,rounded corners,minimum height=0.92cm,
                text width=2.65cm,align=center,inner sep=4pt},
    smallbox/.style={draw,rounded corners,minimum height=0.88cm,
                     text width=2.30cm,align=center,inner sep=4pt},
    data/.style={draw,dashed,rounded corners,minimum height=0.92cm,
                 text width=2.55cm,align=center,inner sep=4pt},
    arrow/.style={->,thick,>=latex},
    stage/.style={font=\bfseries}
]
\node[stage] at (-1.0,2.7) {Native};
\node[data] (contrast) at (1.1,2.7)
    {harmful and harmless\\prompts};
\node[box] (directions) at (4.7,2.7)
    {construct and normalize\\$\{v^t_\ell\}_{\ell=1}^{L_t}$};
\node[box] (select) at (8.5,2.7)
    {validation selects\\$(\ell_t^\star,\lambda_{t,\mathrm{native}}^\star)$};
\draw[arrow] (contrast) -- (directions);
\draw[arrow] (directions) -- (select);

\node[stage] at (-1.0,0.35) {Pass 1};
\node[box] (passone) at (1.1,0.35)
    {unmodified source pass\\full last-token stack $X^s_\tau$};
\node[box] (forward) at (4.7,0.35)
    {$Z^t_\tau=\mathcal{A}^{s\to t}(X^s_\tau)$};
\node[smallbox] (base) at (8.0,1.05)
    {$\widehat X^s_\tau=\mathcal{A}^{t\to s}(Z^t_\tau)$};
\node[smallbox] (edited) at (8.0,-0.35)
    {$\widetilde X^s_\tau=\mathcal{A}^{t\to s}\!\left(Z^t_\tau+\mathcal{E}_{\ell_t^\star}(\lambda v^t_{\ell_t^\star})\right)$};
\node[box] (delta) at (11.8,0.35)
    {$\Delta X^s_\tau=\widetilde X^s_\tau-\widehat X^s_\tau$\\all source-layer deltas};
\draw[arrow] (passone) -- (forward);
\draw[arrow] (forward) -- (base);
\draw[arrow] (forward) -- (edited);
\draw[arrow] (base) -- (delta);
\draw[arrow] (edited) -- (delta);

\node[stage] at (-1.0,-2.25) {Pass 2};
\node[box] (passtwo) at (1.1,-2.25)
    {rerun the same\\source-model token};
\node[box] (inject) at (5.2,-2.25)
    {at every layer $\ell$, add\\the precomputed $\Delta h^s_{\ell,\tau}$};
\node[box] (continue) at (9.2,-2.25)
    {steered source pass\\continue generation};
\draw[arrow] (passtwo) -- (inject);
\draw[arrow] (inject) -- (continue);
\end{tikzpicture}%
}
\caption{Refusal-suppression steering. A direction and native coefficient are
selected once for each target model (top). At each decoding step, an unmodified
source-model pass provides the complete last-token activation stack. The stack
is mapped into target space, edited at the selected target layer, and mapped
back both with and without the edit; subtracting the two outputs yields one
delta for each source layer (middle). The same token is then processed a second
time by the source model, with these deltas added at the corresponding source
layers (bottom). The alignment maps are trained beforehand and remain frozen
throughout steering.}
\label{fig:steering-transfer}
\end{figure*}

\paragraph{Evaluation.}
After validation fixes the target layer and coefficient setting, we evaluate
once on JailbreakBench ($100$ prompts), AdvBench ($387$ prompts), and HarmBench
($200$ prompts). We report LlamaGuard attack-success rate and substring-based
refusal rate, together with the source model's no-intervention performance on
the same prompts. All model pairs, alignment variants, and alignment-training
seeds use the same direction-construction, validation, and test splits.

Figure~\ref{fig:steering-bar-val-appx} aggregates the best-by-val results below:
mean test ASR per method and dataset (42 pairs), against the no-steering floor
and native-adapter ceiling.

\begin{figure}[htbp]
\centering
\includegraphics[width=\linewidth]{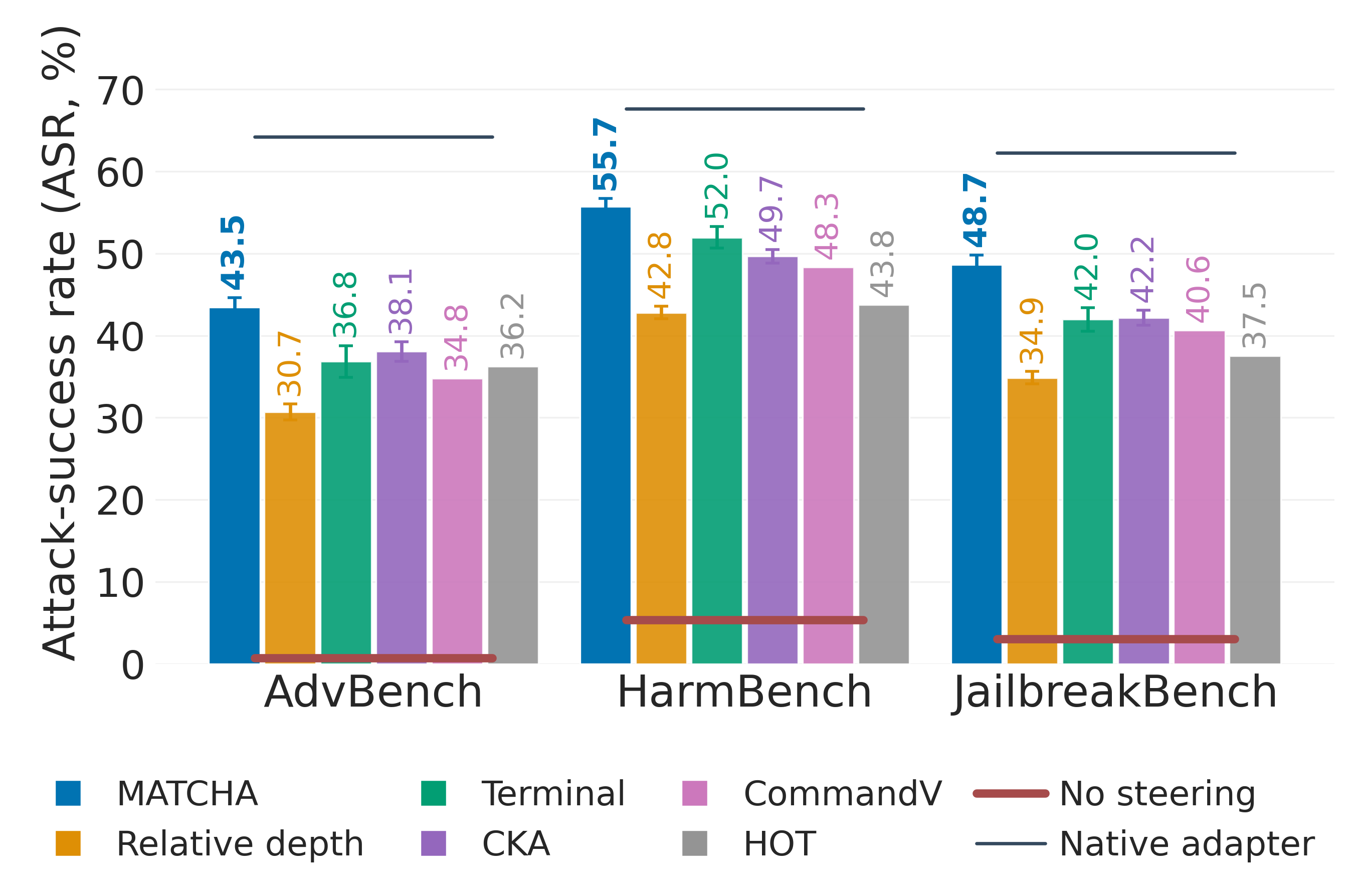}
\caption{Best-by-val transfer, aggregated: mean test ASR per method and dataset
(42 pairs), against the no-steering floor and native-adapter ceiling --- the same
quantity reported in the main results.}
\label{fig:steering-bar-val-appx}
\end{figure}

\paragraph{Results per model pair.} Tables~\ref{tab:steering-main-val}--\ref{tab:steering-main-val-jailbreakbench}
(best-by-val) and Tables~\ref{tab:steering-main-tgt}--\ref{tab:steering-main-tgt-jailbreakbench}
(best-target) give the full per-pair ASR, one table per dataset (rows grouped by
donor LLM; blank $=$ not yet available; trailing rows count per-column wins).
Methods: MATCHA, Relative depth, Terminal, CKA, CommandV, HOT, and (per-pair
tables only) MLP.

\onecolumn
\begingroup\fontsize{4.8pt}{5.8pt}\selectfont\setlength{\tabcolsep}{1pt}\renewcommand{\arraystretch}{1.05}
\setlength\LTleft{0pt}\setlength\LTright{0pt}
\begin{longtable}{@{\extracolsep{\fill}}lccccccc}
\caption{Refusal-steering transfer on \textbf{AdvBench} --- \textbf{best-by-val}: per-pair test ASR (\%, $\uparrow$), steering coefficient chosen by the highest average held-out validation ASR (\texttt{refusal\_val}) of the transfer. Each cell is the mean $\pm$ std over the pair's seeds/runs (std blank for the deterministic CommandV/HOT). Methods: Ours $=$ our method, Rel-dep $=$ Relative depth, Term $=$ Terminal, CmdV $=$ CommandV. Best method per row in \textbf{bold}, second \underline{underlined}; the last two rows count, per column, how many pairs each method won (1st) or came 2nd on. Rows grouped by source (donor) LLM; \,--\, $=$ run not yet available (and, for Terminal, every smaller$\to$bigger pair, where the terminal map is degenerate).}\label{tab:steering-main-val}\\
\toprule
Target & \textbf{\textbf{Ours}} & \textbf{Rel-dep} & \textbf{Term} & \textbf{CKA} & \textbf{CmdV} & \textbf{HOT} & \textbf{MLP} \\
\midrule\endfirsthead
\multicolumn{8}{c}{\tablename\ \thetable{} -- continued}\\ \toprule
Target & \textbf{\textbf{Ours}} & \textbf{Rel-dep} & \textbf{Term} & \textbf{CKA} & \textbf{CmdV} & \textbf{HOT} & \textbf{MLP} \\
\midrule\endhead
\bottomrule\endfoot
\multicolumn{8}{l}{\textbf{Source: Llama-3.2-1B}} \\
Llama-3.2-3B & \textbf{89.5}\,{\tiny$\pm$1.1} & 0.3\,{\tiny$\pm$0.0} & -- & 31.5\,{\tiny$\pm$4.0} & \underline{84.0} & 62.9 & 80.4\,{\tiny$\pm$7.9} \\
Llama-3.1-8B & \textbf{88.3}\,{\tiny$\pm$0.4} & 0.3\,{\tiny$\pm$0.0} & -- & 55.0\,{\tiny$\pm$21.2} & \underline{85.1} & 42.5 & 82.7\,{\tiny$\pm$6.9} \\
Qwen2.5-1.5B & 33.8\,{\tiny$\pm$5.1} & 57.6\,{\tiny$\pm$3.0} & -- & \underline{67.4}\,{\tiny$\pm$13.3} & \textbf{72.7} & 0.5 & 38.6\,{\tiny$\pm$22.8} \\
Qwen2.5-3B & \textbf{49.1}\,{\tiny$\pm$7.2} & 0.3\,{\tiny$\pm$0.0} & -- & \underline{34.0}\,{\tiny$\pm$5.5} & 33.5 & 0.5 & 30.6\,{\tiny$\pm$13.3} \\
Qwen2.5-7B & \textbf{90.5}\,{\tiny$\pm$0.8} & 44.2\,{\tiny$\pm$12.5} & -- & 35.7\,{\tiny$\pm$12.8} & 77.3 & 51.3 & \underline{80.9}\,{\tiny$\pm$10.1} \\
gemma-2-2b & \underline{86.7}\,{\tiny$\pm$2.3} & 78.0\,{\tiny$\pm$16.7} & -- & 53.2\,{\tiny$\pm$4.2} & 35.8 & \textbf{94.8} & 64.5\,{\tiny$\pm$2.3} \\
\midrule
\multicolumn{8}{l}{\textbf{Source: Llama-3.2-3B}} \\
\end{longtable}\endgroup
\begingroup\fontsize{4.8pt}{5.8pt}\selectfont\setlength{\tabcolsep}{1pt}\renewcommand{\arraystretch}{1.05}
\setlength\LTleft{0pt}\setlength\LTright{0pt}\begin{longtable}{@{\extracolsep{\fill}}lccccccc}
\caption{Refusal-steering transfer on \textbf{HarmBench} --- \textbf{best-by-val}: per-pair test ASR (\%, $\uparrow$), steering coefficient chosen by the highest average held-out validation ASR (\texttt{refusal\_val}) of the transfer. Each cell is the mean $\pm$ std over the pair's seeds/runs (std blank for the deterministic CommandV/HOT). Methods: Ours $=$ our method, Rel-dep $=$ Relative depth, Term $=$ Terminal, CmdV $=$ CommandV. Best method per row in \textbf{bold}, second \underline{underlined}; the last two rows count, per column, how many pairs each method won (1st) or came 2nd on. Rows grouped by source (donor) LLM; \,--\, $=$ run not yet available (and, for Terminal, every smaller$\to$bigger pair, where the terminal map is degenerate).}\label{tab:steering-main-val-harmbench}\\
\toprule
Target & \textbf{\textbf{Ours}} & \textbf{Rel-dep} & \textbf{Term} & \textbf{CKA} & \textbf{CmdV} & \textbf{HOT} & \textbf{MLP} \\
\midrule\endfirsthead
\multicolumn{8}{c}{\tablename\ \thetable{} -- continued}\\ \toprule
Target & \textbf{\textbf{Ours}} & \textbf{Rel-dep} & \textbf{Term} & \textbf{CKA} & \textbf{CmdV} & \textbf{HOT} & \textbf{MLP} \\
\midrule\endhead
\bottomrule\endfoot
\multicolumn{8}{l}{\textbf{Source: Llama-3.2-1B}} \\
Llama-3.2-3B & \textbf{89.1}\,{\tiny$\pm$1.1} & 2.3\,{\tiny$\pm$0.4} & -- & 49.6\,{\tiny$\pm$4.8} & \underline{87.5} & 63.0 & 79.4\,{\tiny$\pm$9.0} \\
Llama-3.1-8B & \textbf{89.2}\,{\tiny$\pm$1.7} & 2.4\,{\tiny$\pm$0.2} & -- & 65.4\,{\tiny$\pm$15.3} & 82.0 & 51.0 & \underline{83.8}\,{\tiny$\pm$6.4} \\
Qwen2.5-1.5B & 37.3\,{\tiny$\pm$4.8} & \textbf{68.8}\,{\tiny$\pm$6.2} & -- & 57.8\,{\tiny$\pm$7.4} & \underline{61.0} & 3.0 & 54.0\,{\tiny$\pm$12.3} \\
Qwen2.5-3B & \textbf{51.1}\,{\tiny$\pm$7.5} & 2.5\,{\tiny$\pm$0.2} & -- & 44.0\,{\tiny$\pm$7.2} & 40.0 & 3.5 & \underline{49.1}\,{\tiny$\pm$11.7} \\
Qwen2.5-7B & \textbf{89.9}\,{\tiny$\pm$1.1} & 61.3\,{\tiny$\pm$4.4} & -- & 48.1\,{\tiny$\pm$7.4} & 73.5 & 65.0 & \underline{80.4}\,{\tiny$\pm$11.5} \\
gemma-2-2b & \underline{83.8}\,{\tiny$\pm$3.5} & 80.4\,{\tiny$\pm$13.1} & -- & 56.9\,{\tiny$\pm$8.0} & 46.0 & \textbf{85.0} & 64.6\,{\tiny$\pm$4.6} \\
\midrule
\multicolumn{8}{l}{\textbf{Source: Llama-3.2-3B}} \\
\end{longtable}\endgroup
\begingroup\fontsize{4.8pt}{5.8pt}\selectfont\setlength{\tabcolsep}{1pt}\renewcommand{\arraystretch}{1.05}
\setlength\LTleft{0pt}\setlength\LTright{0pt}\begin{longtable}{@{\extracolsep{\fill}}lccccccc}
\caption{Refusal-steering transfer on \textbf{JailbreakBench} --- \textbf{best-by-val}: per-pair test ASR (\%, $\uparrow$), steering coefficient chosen by the highest average held-out validation ASR (\texttt{refusal\_val}) of the transfer. Each cell is the mean $\pm$ std over the pair's seeds/runs (std blank for the deterministic CommandV/HOT). Methods: Ours $=$ our method, Rel-dep $=$ Relative depth, Term $=$ Terminal, CmdV $=$ CommandV. Best method per row in \textbf{bold}, second \underline{underlined}; the last two rows count, per column, how many pairs each method won (1st) or came 2nd on. Rows grouped by source (donor) LLM; \,--\, $=$ run not yet available (and, for Terminal, every smaller$\to$bigger pair, where the terminal map is degenerate).}\label{tab:steering-main-val-jailbreakbench}\\
\toprule
Target & \textbf{\textbf{Ours}} & \textbf{Rel-dep} & \textbf{Term} & \textbf{CKA} & \textbf{CmdV} & \textbf{HOT} & \textbf{MLP} \\
\midrule\endfirsthead
\multicolumn{8}{c}{\tablename\ \thetable{} -- continued}\\ \toprule
Target & \textbf{\textbf{Ours}} & \textbf{Rel-dep} & \textbf{Term} & \textbf{CKA} & \textbf{CmdV} & \textbf{HOT} & \textbf{MLP} \\
\midrule\endhead
\bottomrule\endfoot
\multicolumn{8}{l}{\textbf{Source: Llama-3.2-1B}} \\
Llama-3.2-3B & \underline{83.6}\,{\tiny$\pm$1.6} & 1.7\,{\tiny$\pm$0.5} & -- & 39.0\,{\tiny$\pm$3.8} & \textbf{86.0} & 55.0 & 75.2\,{\tiny$\pm$9.2} \\
Llama-3.1-8B & \textbf{84.7}\,{\tiny$\pm$1.2} & 1.3\,{\tiny$\pm$0.5} & -- & 57.7\,{\tiny$\pm$15.6} & \underline{82.0} & 45.0 & 77.9\,{\tiny$\pm$5.9} \\
Qwen2.5-1.5B & 36.3\,{\tiny$\pm$5.5} & \textbf{63.0}\,{\tiny$\pm$4.3} & -- & 57.3\,{\tiny$\pm$11.0} & \underline{61.0} & 1.0 & 44.4\,{\tiny$\pm$14.4} \\
Qwen2.5-3B & \textbf{47.1}\,{\tiny$\pm$8.8} & 1.4\,{\tiny$\pm$0.5} & -- & 40.1\,{\tiny$\pm$6.0} & 33.0 & 1.0 & \underline{40.2}\,{\tiny$\pm$12.4} \\
Qwen2.5-7B & \textbf{84.0}\,{\tiny$\pm$1.3} & 49.6\,{\tiny$\pm$6.6} & -- & 43.3\,{\tiny$\pm$7.6} & 64.0 & 61.0 & \underline{74.3}\,{\tiny$\pm$10.4} \\
gemma-2-2b & \underline{81.7}\,{\tiny$\pm$4.7} & 74.5\,{\tiny$\pm$12.9} & -- & 52.4\,{\tiny$\pm$5.6} & 43.0 & \textbf{92.0} & 58.0\,{\tiny$\pm$4.4} \\
\midrule
\multicolumn{8}{l}{\textbf{Source: Llama-3.2-3B}} \\
\end{longtable}\endgroup
\twocolumn

\onecolumn
\begingroup\fontsize{4.8pt}{5.8pt}\selectfont\setlength{\tabcolsep}{1pt}\renewcommand{\arraystretch}{1.05}
\setlength\LTleft{0pt}\setlength\LTright{0pt}\begin{longtable}{@{\extracolsep{\fill}}lccccccc}
\caption{Refusal-steering transfer on \textbf{AdvBench} --- \textbf{best-target}: per-pair test ASR (\%, $\uparrow$), steering coefficient the target model's own native steering coefficient (chosen on the target's native validation, no transfer-side tuning). Each cell is the mean $\pm$ std over the pair's seeds/runs (std blank for the deterministic CommandV/HOT). Methods: Ours $=$ our method, Rel-dep $=$ Relative depth, Term $=$ Terminal, CmdV $=$ CommandV. Best method per row in \textbf{bold}, second \underline{underlined}; the last two rows count, per column, how many pairs each method won (1st) or came 2nd on. Rows grouped by source (donor) LLM; \,--\, $=$ run not yet available (and, for Terminal, every smaller$\to$bigger pair, where the terminal map is degenerate).}\label{tab:steering-main-tgt}\\
\toprule
Target & \textbf{\textbf{Ours}} & \textbf{Rel-dep} & \textbf{Term} & \textbf{CKA} & \textbf{CmdV} & \textbf{HOT} & \textbf{MLP} \\
\midrule\endfirsthead
\multicolumn{8}{c}{\tablename\ \thetable{} -- continued}\\ \toprule
Target & \textbf{\textbf{Ours}} & \textbf{Rel-dep} & \textbf{Term} & \textbf{CKA} & \textbf{CmdV} & \textbf{HOT} & \textbf{MLP} \\
\midrule\endhead
\bottomrule\endfoot
\multicolumn{8}{l}{\textbf{Source: Llama-3.2-1B}} \\
Llama-3.2-3B & \textbf{89.5}\,{\tiny$\pm$1.1} & 0.3\,{\tiny$\pm$0.0} & -- & 42.3\,{\tiny$\pm$16.3} & \underline{82.2} & 62.9 & 80.4\,{\tiny$\pm$7.9} \\
Llama-3.1-8B & \textbf{87.0}\,{\tiny$\pm$1.3} & 0.3\,{\tiny$\pm$0.0} & -- & 40.7\,{\tiny$\pm$23.3} & 76.0 & 4.1 & \underline{79.6}\,{\tiny$\pm$7.9} \\
Qwen2.5-1.5B & 15.3\,{\tiny$\pm$13.6} & \underline{38.5}\,{\tiny$\pm$5.6} & -- & \textbf{41.4}\,{\tiny$\pm$7.7} & 18.8 & 0.0 & 31.1\,{\tiny$\pm$18.7} \\
Qwen2.5-3B & 17.4\,{\tiny$\pm$10.8} & 0.3\,{\tiny$\pm$0.0} & -- & \textbf{32.9}\,{\tiny$\pm$8.5} & 12.1 & 0.3 & \underline{21.9}\,{\tiny$\pm$13.3} \\
Qwen2.5-7B & 19.0\,{\tiny$\pm$5.9} & 44.2\,{\tiny$\pm$12.5} & -- & 3.7\,{\tiny$\pm$1.5} & \textbf{77.3} & 0.8 & \underline{66.8}\,{\tiny$\pm$13.3} \\
gemma-2-2b & \underline{86.7}\,{\tiny$\pm$2.3} & 78.1\,{\tiny$\pm$15.8} & -- & 43.9\,{\tiny$\pm$14.1} & 33.5 & \textbf{94.3} & 58.1\,{\tiny$\pm$6.6} \\
\midrule
\multicolumn{8}{l}{\textbf{Source: Llama-3.2-3B}} \\
\end{longtable}\endgroup
\begingroup\fontsize{4.8pt}{5.8pt}\selectfont\setlength{\tabcolsep}{1pt}\renewcommand{\arraystretch}{1.05}
\setlength\LTleft{0pt}\setlength\LTright{0pt}\begin{longtable}{@{\extracolsep{\fill}}lccccccc}
\caption{Refusal-steering transfer on \textbf{HarmBench} --- \textbf{best-target}: per-pair test ASR (\%, $\uparrow$), steering coefficient the target model's own native steering coefficient (chosen on the target's native validation, no transfer-side tuning). Each cell is the mean $\pm$ std over the pair's seeds/runs (std blank for the deterministic CommandV/HOT). Methods: Ours $=$ our method, Rel-dep $=$ Relative depth, Term $=$ Terminal, CmdV $=$ CommandV. Best method per row in \textbf{bold}, second \underline{underlined}; the last two rows count, per column, how many pairs each method won (1st) or came 2nd on. Rows grouped by source (donor) LLM; \,--\, $=$ run not yet available (and, for Terminal, every smaller$\to$bigger pair, where the terminal map is degenerate).}\label{tab:steering-main-tgt-harmbench}\\
\toprule
Target & \textbf{\textbf{Ours}} & \textbf{Rel-dep} & \textbf{Term} & \textbf{CKA} & \textbf{CmdV} & \textbf{HOT} & \textbf{MLP} \\
\midrule\endfirsthead
\multicolumn{8}{c}{\tablename\ \thetable{} -- continued}\\ \toprule
Target & \textbf{\textbf{Ours}} & \textbf{Rel-dep} & \textbf{Term} & \textbf{CKA} & \textbf{CmdV} & \textbf{HOT} & \textbf{MLP} \\
\midrule\endhead
\bottomrule\endfoot
\multicolumn{8}{l}{\textbf{Source: Llama-3.2-1B}} \\
Llama-3.2-3B & \textbf{89.1}\,{\tiny$\pm$1.1} & 2.3\,{\tiny$\pm$0.4} & -- & 50.5\,{\tiny$\pm$6.6} & \underline{83.0} & 63.0 & 79.4\,{\tiny$\pm$9.0} \\
Llama-3.1-8B & \textbf{84.0}\,{\tiny$\pm$1.3} & 2.4\,{\tiny$\pm$0.2} & -- & 57.9\,{\tiny$\pm$14.9} & 71.0 & 12.0 & \underline{79.9}\,{\tiny$\pm$6.8} \\
Qwen2.5-1.5B & 20.4\,{\tiny$\pm$15.1} & \textbf{51.8}\,{\tiny$\pm$4.6} & -- & 42.4\,{\tiny$\pm$6.2} & 28.5 & 2.5 & \underline{43.8}\,{\tiny$\pm$12.2} \\
Qwen2.5-3B & 28.6\,{\tiny$\pm$11.4} & 2.5\,{\tiny$\pm$0.2} & -- & \textbf{38.7}\,{\tiny$\pm$8.4} & 23.5 & 0.0 & \underline{35.4}\,{\tiny$\pm$12.8} \\
Qwen2.5-7B & 31.7\,{\tiny$\pm$6.1} & 61.3\,{\tiny$\pm$4.4} & -- & 14.4\,{\tiny$\pm$3.4} & \textbf{73.5} & 4.5 & \underline{65.2}\,{\tiny$\pm$10.2} \\
gemma-2-2b & \underline{83.8}\,{\tiny$\pm$3.5} & 80.0\,{\tiny$\pm$13.0} & -- & 53.8\,{\tiny$\pm$13.1} & 51.0 & \textbf{84.5} & 59.8\,{\tiny$\pm$7.2} \\
\midrule
\multicolumn{8}{l}{\textbf{Source: Llama-3.2-3B}} \\
\end{longtable}\endgroup
\begingroup\fontsize{4.8pt}{5.8pt}\selectfont\setlength{\tabcolsep}{1pt}\renewcommand{\arraystretch}{1.05}
\setlength\LTleft{0pt}\setlength\LTright{0pt}\begin{longtable}{@{\extracolsep{\fill}}lccccccc}
\caption{Refusal-steering transfer on \textbf{JailbreakBench} --- \textbf{best-target}: per-pair test ASR (\%, $\uparrow$), steering coefficient the target model's own native steering coefficient (chosen on the target's native validation, no transfer-side tuning). Each cell is the mean $\pm$ std over the pair's seeds/runs (std blank for the deterministic CommandV/HOT). Methods: Ours $=$ our method, Rel-dep $=$ Relative depth, Term $=$ Terminal, CmdV $=$ CommandV. Best method per row in \textbf{bold}, second \underline{underlined}; the last two rows count, per column, how many pairs each method won (1st) or came 2nd on. Rows grouped by source (donor) LLM; \,--\, $=$ run not yet available (and, for Terminal, every smaller$\to$bigger pair, where the terminal map is degenerate).}\label{tab:steering-main-tgt-jailbreakbench}\\
\toprule
Target & \textbf{\textbf{Ours}} & \textbf{Rel-dep} & \textbf{Term} & \textbf{CKA} & \textbf{CmdV} & \textbf{HOT} & \textbf{MLP} \\
\midrule\endfirsthead
\multicolumn{8}{c}{\tablename\ \thetable{} -- continued}\\ \toprule
Target & \textbf{\textbf{Ours}} & \textbf{Rel-dep} & \textbf{Term} & \textbf{CKA} & \textbf{CmdV} & \textbf{HOT} & \textbf{MLP} \\
\midrule\endhead
\bottomrule\endfoot
\multicolumn{8}{l}{\textbf{Source: Llama-3.2-1B}} \\
Llama-3.2-3B & \underline{83.6}\,{\tiny$\pm$1.6} & 1.7\,{\tiny$\pm$0.5} & -- & 45.3\,{\tiny$\pm$11.2} & \textbf{86.0} & 55.0 & 75.2\,{\tiny$\pm$9.2} \\
Llama-3.1-8B & \textbf{80.2}\,{\tiny$\pm$3.2} & 1.3\,{\tiny$\pm$0.5} & -- & 47.7\,{\tiny$\pm$15.3} & 65.0 & 9.0 & \underline{74.7}\,{\tiny$\pm$8.1} \\
Qwen2.5-1.5B & 20.0\,{\tiny$\pm$14.1} & \textbf{48.9}\,{\tiny$\pm$6.8} & -- & \underline{40.4}\,{\tiny$\pm$6.8} & 25.0 & 1.0 & 35.2\,{\tiny$\pm$15.9} \\
Qwen2.5-3B & 24.7\,{\tiny$\pm$15.2} & 1.4\,{\tiny$\pm$0.5} & -- & \textbf{40.3}\,{\tiny$\pm$5.8} & 12.0 & 1.0 & \underline{26.4}\,{\tiny$\pm$14.7} \\
Qwen2.5-7B & 30.0\,{\tiny$\pm$7.0} & 49.6\,{\tiny$\pm$6.6} & -- & 9.2\,{\tiny$\pm$2.4} & \textbf{64.0} & 1.0 & \underline{59.0}\,{\tiny$\pm$9.2} \\
gemma-2-2b & \underline{81.7}\,{\tiny$\pm$4.7} & 73.7\,{\tiny$\pm$14.2} & -- & 48.0\,{\tiny$\pm$13.9} & 47.0 & \textbf{92.0} & 57.4\,{\tiny$\pm$5.2} \\
\midrule
\multicolumn{8}{l}{\textbf{Source: Llama-3.2-3B}} \\
\end{longtable}\endgroup
\twocolumn

Table~\ref{tab:steering-coef} lists the best-by-val chosen coefficient per
(pair, method); negative values suppress the refusal direction.

\onecolumn
\begingroup\footnotesize\setlength{\tabcolsep}{5pt}\renewcommand{\arraystretch}{1.15}
\setlength\LTleft{0pt}\setlength\LTright{0pt}\begin{longtable}{@{\extracolsep{\fill}}lccccccc}
\caption{Best-by-val chosen steering coefficient per (pair, method): the coefficient with the highest average validation ASR (\texttt{refusal\_val}), used for the \textbf{val} table (Table~\ref{tab:steering-main-val}). Negative coefficients suppress the refusal direction. Rows grouped by source (donor) LLM; \,--\, = not yet available (and, for Terminal, every smaller$\to$bigger pair, where the terminal map is degenerate).}\label{tab:steering-coef}\\
\toprule
Target & \textbf{\textbf{Ours}} & \textbf{Rel-dep} & \textbf{Term} & \textbf{CKA} & \textbf{CmdV} & \textbf{HOT} & \textbf{MLP} \\
\midrule\endfirsthead
\multicolumn{8}{c}{\tablename\ \thetable{} -- continued}\\ \toprule
Target & \textbf{\textbf{Ours}} & \textbf{Rel-dep} & \textbf{Term} & \textbf{CKA} & \textbf{CmdV} & \textbf{HOT} & \textbf{MLP} \\
\midrule\endhead
\bottomrule\endfoot
\multicolumn{8}{l}{\textbf{Source: Llama-3.2-1B}} \\
Llama-3.2-3B & -10 & -0.5 & -- & -4 & -6 & -10 & -10 \\
Llama-3.1-8B & -8 & -0.5 & -- & -6 & -2 & -8 & -8 \\
Qwen2.5-1.5B & 6 & 8 & -- & -8 & 1.5 & -0.5 & 8 \\
Qwen2.5-3B & -8 & -0.5 & -- & 6 & 2 & -0.5 & -8 \\
Qwen2.5-7B & -4 & -1 & -- & -6 & -1 & -8 & -8 \\
gemma-2-2b & -6 & -8 & -- & -8 & -1.5 & -4 & -2 \\
\midrule
\multicolumn{8}{l}{\textbf{Source: Llama-3.2-3B}} \\
\end{longtable}\endgroup
\twocolumn

\subsection{Extended Quantitative Analysis}
\label{sec:appendix-analysis}

\subsubsection{Additional Results -- Per-Pair Performance Difference (MATCHA $-$ Relative Depth)}
\label{sec:appendix-heatmap}
Results are in \Cref{fig:wk-vs-linear-recall1,fig:wk-vs-linear-jaccard5,fig:wk-vs-linear-tj5}.

\begin{figure*}[htbp]
\centering
\begin{minipage}[t]{0.49\linewidth}
\centering
\includegraphics[width=\linewidth]{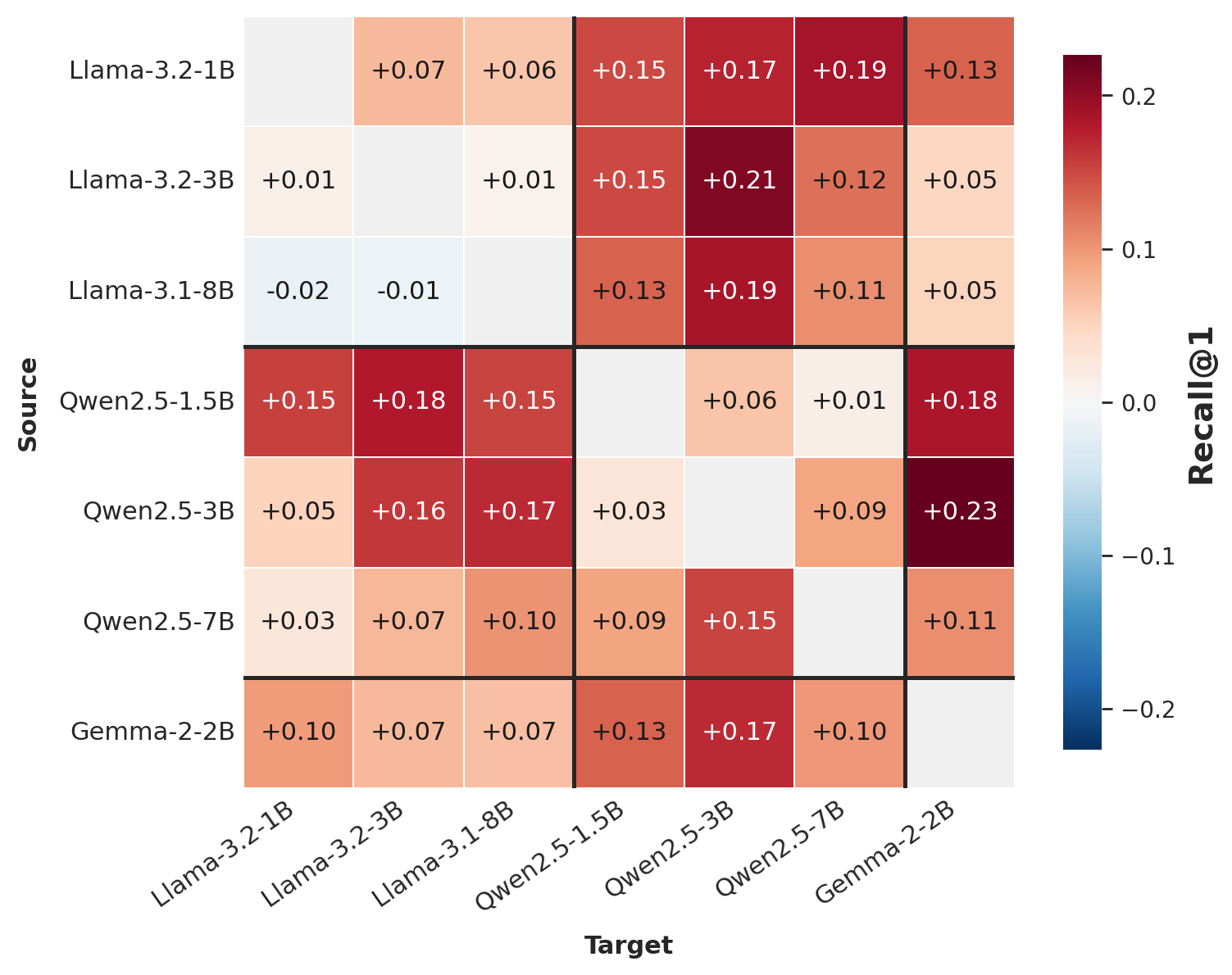}
\caption{Per-pair \textbf{Recall@1} difference, MATCHA $-$ Relative depth.}
\label{fig:wk-vs-linear-recall1}
\end{minipage}\hfill
\begin{minipage}[t]{0.49\linewidth}
\centering
\includegraphics[width=\linewidth]{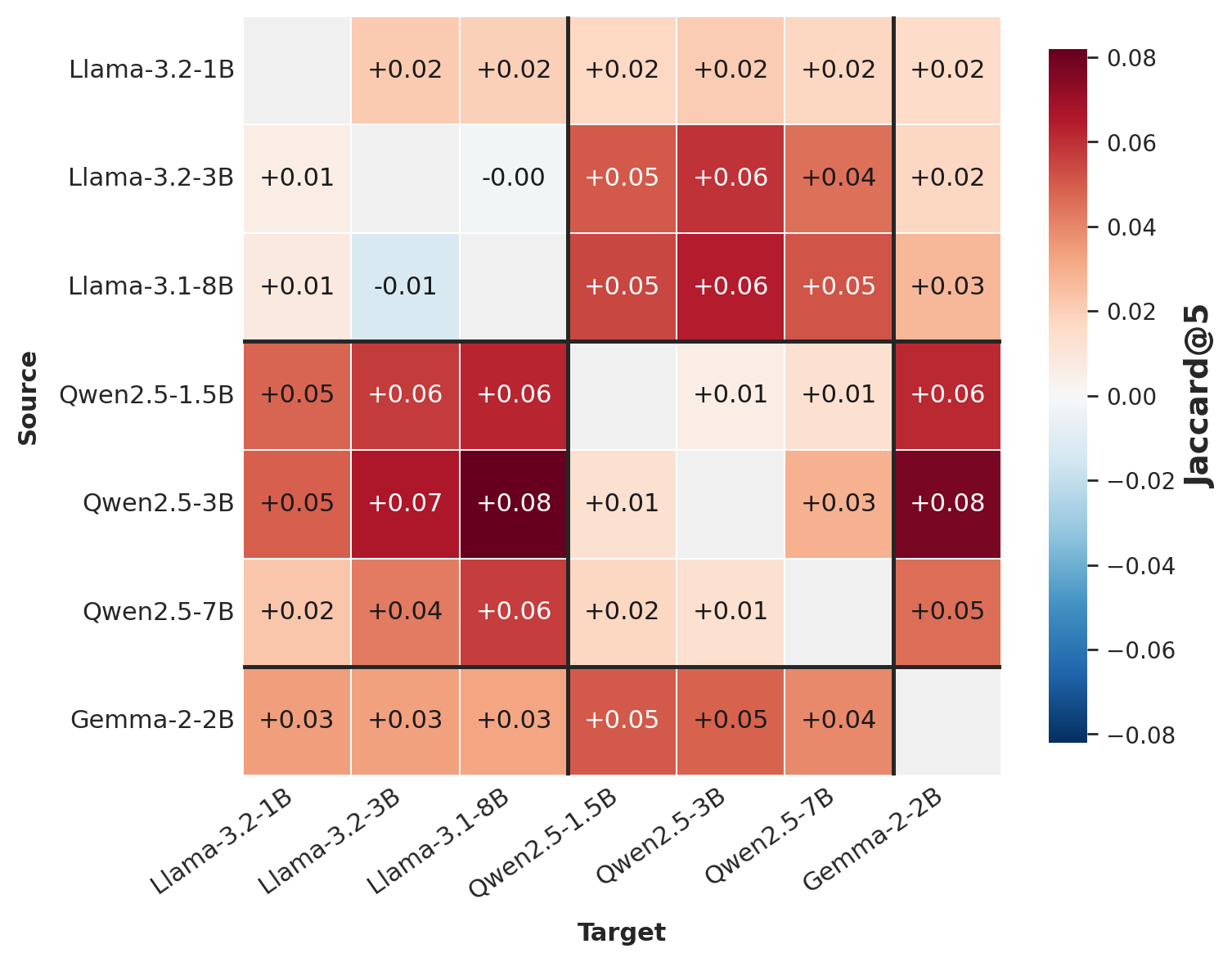}
\caption{Per-pair \textbf{Jaccard@5} difference, MATCHA $-$ Relative depth. Axes
as in Figure~\ref{fig:wk-vs-linear-recall1}.}
\label{fig:wk-vs-linear-jaccard5}
\end{minipage}
\end{figure*}

\begin{figure*}[htbp]
\centering
\includegraphics[width=0.49\linewidth]{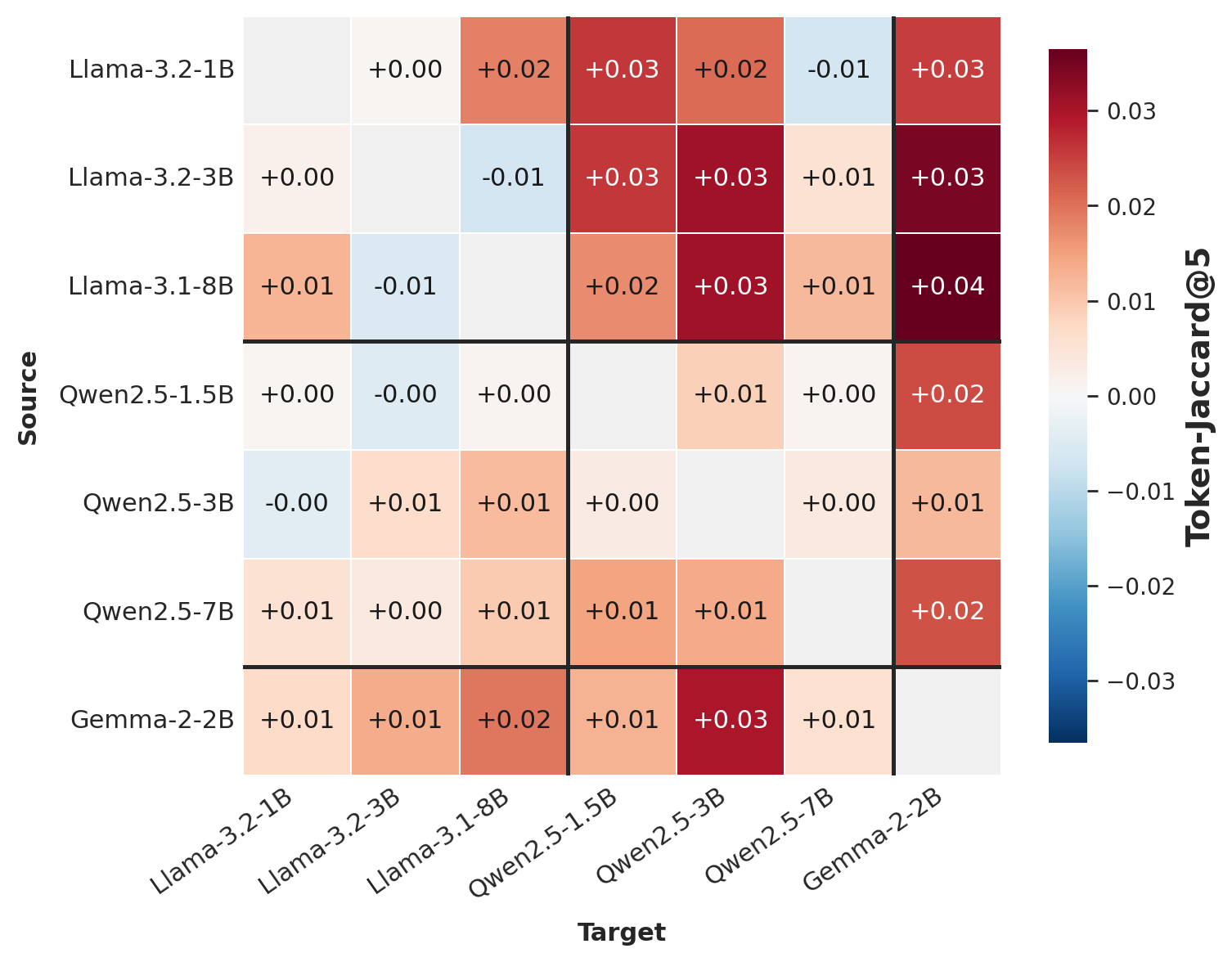}
\caption{Per-pair \textbf{Token-Jaccard@5} difference, MATCHA $-$ Relative depth.
Axes as in Figure~\ref{fig:wk-vs-linear-recall1}.}
\label{fig:wk-vs-linear-tj5}
\end{figure*}

\FloatBarrier

\RenewDocumentEnvironment{figure}{O{!htbp}+b}
  {%
    \MATCHAorigfigure[#1]
    \centering
    \begin{minipage}{0.80\textwidth}
      \MATCHAsetlocalwidth
      \centering
      #2
    \end{minipage}
    \MATCHAendorigfigure
  }
  {}

\RenewDocumentEnvironment{figure*}{O{!htbp}+b}
  {%
    \MATCHAorigfigure[#1]
    \centering
    \begin{minipage}{1\textwidth}
      \MATCHAsetlocalwidth
      \centering
      #2
    \end{minipage}
    \MATCHAendorigfigure
  }
  {}

%
%
%
%
%
%

\providecommand{\softmax}{\operatorname{softmax}}
\providecommand{\rank}{\operatorname{rank}}

\makeatletter
\@ifundefined{proposition}{%
  \theoremstyle{plain}%
  \newtheorem{proposition}{Proposition}%
  \newtheorem{lemma}[proposition]{Lemma}%
  \newtheorem{corollary}[proposition]{Corollary}%
  \theoremstyle{definition}%
  \newtheorem{assumption}[proposition]{Assumption}%
  \theoremstyle{remark}%
  \newtheorem{remark}[proposition]{Remark}%
}{}
\makeatother

\section{Expressivity of the layer map}
\label{app:expressivity}
\paragraph{Overview.} Two questions matter: does the architecture subsume the fixed one-to-one maps of prior
work, and how much of the space of \emph{soft} correspondences is reachable? In this subsection we show: (i) every one-to-one map
is approximable to arbitrary accuracy at attention width $d=2$
(\Cref{prop:dyn-onehot}), and (ii) every soft
correspondence is realized \emph{exactly}, assuming $d\ge\min(L_t,L_s)$
(\Cref{prop:dyn-interior}).

\paragraph{Notation.}
Let $\{ e_1,\dots,e_n \}$ be the standard basis of $\mathbb R^{n}$, and let 
\[
  \|\mathbf M\|_{1,\infty}\;:=\;\max_{1\le\ell\le a}\|\mathbf M_\ell\|_1
  \;=\;\max_{\ell}\sum_{j=1}^{b}\lvert \mathbf M_{\ell,j}\rvert
\]
be the largest $\ell_1$ norm
of any single row, so that bounding it constrains every target layer rather than the
average one. Throughout, $S:=QK(X)^\top/\sqrt d$ is the \emph{pre-softmax} logit matrix,
$\alpha=\softmax(S)$ the row-wise layer map of \Cref{eq:map}, and $x_j(X)$ the $j$-th row
of the normalized source stack $X$. Each row of $\alpha$ is a distribution over source layers,
so every layer map lies in the set $\Delta$ of row-stochastic matrices; the two families
we care about occupy different parts of it.

\paragraph{One-to-one maps: the boundary of $\Delta$.}
Given a layer assignment $\pi:[L_t]\to[L_s]$, let $P_\pi\in\{0,1\}^{L_t\times L_s}$ have
rows $(P_\pi)_\ell=e_{\pi(\ell)}^{\top}$: target layer $\ell$ draws on source layer
$\pi(\ell)$ alone. Every fixed correspondence used in prior work --- relative-depth,
terminal, identity --- is of this form. These matrices lie on the
boundary of $\Delta$ and are never attained exactly, since the softmax assigns strictly
positive weight to every source layer. This costs nothing in practice: the strictly
positive row-stochastic matrices are dense in $\Delta$,\footnote{Every point of $\Delta$,
vertices included, is a limit of strictly positive row-stochastic matrices --- exactness
fails but $\varepsilon$-approximation does not.} and by \Cref{lem:margin} the approach to
the boundary is exponentially fast in the logit scale. We therefore state these results
with a tolerance $\varepsilon>0$.

\paragraph{Soft maps: the interior of $\Delta$.}
The interior $\Delta^{\circ}\subset\Delta$ consists of the row-stochastic matrices with
strictly positive entries, in which each target layer draws on several source layers at
once --- the many-to-many correspondences our parameterization is meant to permit and
which the learned maps of \Cref{eq:map} in fact exhibit. Here the situation is the
reverse: $\Delta^{\circ}$ is exactly the image of the softmax, so these maps are
representable \emph{exactly}, and the binding constraint is not the tolerance but the
attention width $d$.

Before proving the main results, we state a lemma regarding the conversion of logit gaps
into layer maps via the softmax: if in every row of $S$ the designated source layer leads
the runner-up by a margin $m$, then the resulting $\alpha$ is within $2(L_s-1)e^{-m}$ of
the corresponding one-hot matrix $P_\pi$ in $\|\cdot\|_{1,\infty}$. This gives us a way to
control, through $m$ alone, the error incurred in approximating a one-to-one
correspondence.

\begin{lemma}[Logit margin $\Rightarrow$ near-one-hot map]
\label{lem:margin}
Let $\pi:[L_t]\to[L_s]$ be a layer assignment, let $S\in\mathbb R^{L_t\times L_s}$, and
let $m>0$ be a \emph{margin} for $\pi$ in $S$:
\begin{equation}\label{eq:margin}
  S_{\ell,\pi(\ell)}\;\ge\;S_{\ell,j}+m
  \qquad\text{for all }\ell\in[L_t],\;j\neq\pi(\ell).
\end{equation}
That is, in every row the designated source layer $\pi(\ell)$ is the strict maximizer,
leading the runner-up by at least $m$. Then
\begin{equation}\label{eq:margin-bound}
  \big\|\softmax(S)-P_\pi\big\|_{1,\infty}\;\le\;2(L_s-1)\,e^{-m}.
\end{equation}
\end{lemma}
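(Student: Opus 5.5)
Since $\|\cdot\|_{1,\infty}$ is a maximum over rows, I will fix a single target layer $\ell$, bound $\|\softmax(S)_\ell - e_{\pi(\ell)}^\top\|_1$ by $2(L_s-1)e^{-m}$, and then take the maximum over $\ell$. Write $s_j := S_{\ell,j}$, $k:=\pi(\ell)$ and $p_j := e^{s_j}/\sum_i e^{s_i}$ for the softmax row. The $\ell_1$ distance to the one-hot vector splits into the deficit at the designated entry and the mass on the others:
\[
  \|p - e_k\|_1 = (1-p_k) + \sum_{j\neq k} p_j = 2\sum_{j\neq k} p_j ,
\]
using $\sum_j p_j = 1$. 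This step produces the factor $2$ in \eqref{eq:margin-bound}.

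Next I will bound each off-target probability using the margin. For $j\neq k$,
\[
  p_j = \frac{e^{s_j}}{\sum_i e^{s_i}} \le \frac{e^{s_j}}{e^{s_k}} = e^{-(s_k-s_j)} \le e^{-m},
\]
where the first inequality drops all terms of the denominator except $e^{s_k}$, and the last uses \eqref{eq:margin}. Summing over the $L_s-1$ indices $j\neq k$ gives $\|p-e_k\|_1 \le 2(L_s-1)e^{-m}$. This bound is uniform in $\ell$, so it also controls the row maximum, which yields \eqref{eq:margin-bound}.

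There is no substantive obstacle here. The one point to handle carefully is the identity $1-p_k=\sum_{j\neq k}p_j$. It relies on the rows of the softmax summing exactly to one, and it is what makes the deficit term equal the off-target mass rather than having to be bounded separately. One could sharpen the bound to $2(L_s-1)e^{-m}/(1+(L_s-1)e^{-m})$ by keeping the full denominator, but the cruder estimate above is exactly what the statement asks for.
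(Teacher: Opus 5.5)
Your proof is correct and follows essentially the same route as the paper. Both arguments reduce the row-wise $\ell_1$ distance to $2\sum_{j\neq\pi(\ell)}\alpha_{\ell,j}$ and bound each off-target weight by $e^{-m}$: the paper does this via the ratio $\alpha_{\ell,j}/\alpha_{\ell,\pi(\ell)}\le e^{-m}$ together with $\alpha_{\ell,\pi(\ell)}\le 1$, while you keep only $e^{S_{\ell,\pi(\ell)}}$ in the denominator, which is the same inequality (and your sharpened bound $2(L_s-1)e^{-m}/(1+(L_s-1)e^{-m})$ is also valid).
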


\begin{proof}
Fix $\ell$, abbreviate $p=\pi(\ell)$ and $\alpha_\ell=\softmax(S_\ell)$. For $j\neq p$
the shared normalizer cancels, so by \eqref{eq:margin}
\[
  \frac{\alpha_{\ell,j}}{\alpha_{\ell,p}}
  \;=\;e^{\,S_{\ell,j}-S_{\ell,p}}\;\le\;e^{-m}.
\]

Since $\alpha_\ell$ is a probability vector, $\lvert\alpha_{\ell,p}-1\rvert=1-\alpha_{\ell,p}$
and $\lvert\alpha_{\ell,j}\rvert=\alpha_{\ell,j}$ for $j\neq p$, so
\begin{align}
  \|\alpha_\ell-e_p\|_1
  &\;\overset{(a)}{=}\;\sum_{j=1}^{L_s}\big\lvert\alpha_{\ell,j}-(e_p)_j\big\rvert \notag\\
  &\;\overset{(b)}{=}\;\lvert\alpha_{\ell,p}-1\rvert
      \;+\!\!\sum_{j\in[L_s]\setminus\{p\}}\!\!\lvert\alpha_{\ell,j}\rvert \notag\\
  &\;\overset{(c)}{=}\;\big(1-\alpha_{\ell,p}\big)
      \;+\!\!\sum_{j\in[L_s]\setminus\{p\}}\!\!\alpha_{\ell,j} \notag\\
  &\;\overset{(d)}{=}\;2\!\!\sum_{j\in[L_s]\setminus\{p\}}\!\!\alpha_{\ell,j} \notag\\
  &\;\overset{(e)}{\le}\;2\!\!\sum_{j\in[L_s]\setminus\{p\}}\!\! e^{-m}\alpha_{\ell,p} \notag\\
  &\;\overset{(f)}{=}\;2(L_s-1)\,e^{-m}\,\alpha_{\ell,p} \notag\\
  &\;\overset{(g)}{\le}\;2(L_s-1)\,e^{-m},
\end{align}
where
\begin{enumerate}[label=(\alph*),itemsep=1pt,topsep=3pt,leftmargin=*]
  \item is the definition of the $\ell_1$ norm;
  \item splits off the coordinate $j=p$, at which $(e_p)_p=1$, from the remaining
        $j\neq p$, at which $(e_p)_j=0$;
  \item uses that $\alpha_\ell$ is a probability vector, so $\alpha_{\ell,p}\le1$ and
        $\alpha_{\ell,j}\ge0$;
  \item uses $1-\alpha_{\ell,p}=\sum_{j\neq p}\alpha_{\ell,j}$, again because the entries
        of $\alpha_\ell$ sum to one;
  \item multiplies the ratio bound $\alpha_{\ell,j}\le e^{-m}\alpha_{\ell,p}$ through by
        $\alpha_{\ell,p}>0$ and sums;
  \item counts the $L_s-1$ identical summands; and
  \item uses $\alpha_{\ell,p}\le1$ once more.
\end{enumerate}
This completes the proof.
\end{proof}

\paragraph{Reading the bound.}
Setting the right-hand side of \eqref{eq:margin-bound} to $\varepsilon$ and solving for
$m$,
\begin{align}
  &2(L_s-1)e^{-m}\!\le\!\varepsilon
  \iff
  e^{-m}\;\le\;\frac{\varepsilon}{2(L_s-1)}
   \\
  &\iff m\;\ge\;\log\!\frac{2(L_s-1)}{\varepsilon},
\end{align}
so a margin of this size suffices for $\varepsilon$-accuracy.

We now turn to prove that one-to-one maps are contained in our family.

\subsubsection{One-to-one correspondences are contained in our family}

The keys are computed from the input, $K(X)=XW_k$, so the only thing we choose is the projection $W_k$,
and the keys it produces vary with the prompt. Two difficulties follow. First, we can not guarantee a particular geometry --- whether the keys end up in distinct directions
is a fact about the activations, not something we impose. Second, the conclusion must
hold \emph{uniformly}: since the alignment is evaluated on many prompts, a single $W_k$
must induce the correct layer map for every source stack $X$ it may encounter, not merely for one. 
What makes this possible is that we do not need the keys to be well separated in any absolute sense; we need only that, within each stack, the projection scores each layer
above the others of that same stack. If some direction $w_m$ responds more strongly to
layer $m$'s activation than to any other layer's, then a query aligned with $w_m$ selects
layer $m$ --- exactly as $q_\ell$ selected $k_{\pi(\ell)}$ before. In other words, the
construction goes through provided the source stack carries layer identity in a linearly
readable form, with a margin that does not degrade across inputs. Since this is a property
of the models rather than of our architecture, we state it as an assumption.

\begin{figure*}[t]
\centering
\begin{tikzpicture}[scale=0.88, every node/.style={font=\scriptsize}]

\draw[fill=gray!8,draw=black,line width=0.7pt] (0,0) rectangle (2.4,-2.0);
\foreach \y in {-0.5,-1.0,-1.5}{\draw[gray!55] (0,\y) -- (2.4,\y);}
\node[font=\small] at (1.2,0.30) {$X$};
\node[anchor=east] at (-0.18,-1.0) {$L_s$};
\node at (1.2,-3.15) {$D_s$};
\node[font=\tiny,gray!65] at (1.2,-0.25) {$x_1(X)$};
\node[font=\tiny,gray!65] at (1.2,-1.75) {$x_{L_s}(X)$};

\node[font=\large] at (3.15,-1.0) {$\times$};

\begin{scope}[xshift=4.1cm]
  \draw[fill=blue!14,draw=black,line width=0.7pt] (0,0.3) rectangle (1.6,-2.3);
  \draw[fill=gray!14,draw=black,line width=0.7pt] (1.6,0.3) rectangle (2.4,-2.3);
  \foreach \x in {0.4,0.8,1.2,1.6,2.0}{\draw[gray!55] (\x,0.3) -- (\x,-2.3);}
  \node[font=\small] at (1.2,0.60) {$W_k$};
  \node[anchor=east] at (-0.18,-1.0) {$D_s$};
  \draw[decorate,decoration={brace,mirror,amplitude=4pt},blue!70!black]
        (0,-2.42) -- (1.6,-2.42)
        node[midway,below,yshift=-3pt,font=\tiny,blue!70!black] {$w_1,\dots,w_{L_s}$};
  \node at (1.2,-3.15) {$d$};
\end{scope}

\node[font=\large] at (7.05,-1.0) {$=$};

\begin{scope}[xshift=8.0cm]
  \foreach \r in {1,2,3,4}{
    \foreach \j in {1,2,3,4}{
      \draw[fill=blue!8,draw=gray!50] ({0.4*\r-0.4},{-0.5*\j+0.5})
            rectangle ({0.4*\r},{-0.5*\j});
    }
  }
  \foreach \r in {5,6}{
    \foreach \j in {1,2,3,4}{
      \draw[fill=gray!16,draw=gray!50] ({0.4*\r-0.4},{-0.5*\j+0.5})
            rectangle ({0.4*\r},{-0.5*\j});
    }
  }
  \foreach \d in {1,2,3,4}{
    \draw[fill=blue!55,draw=gray!50] ({0.4*\d-0.4},{-0.5*\d+0.5})
          rectangle ({0.4*\d},{-0.5*\d});
  }
  \draw[draw=black,line width=0.7pt] (0,0) rectangle (2.4,-2.0);

  \draw[decorate,decoration={brace,amplitude=4pt},blue!70!black]
        (0,0.20) -- (1.6,0.20)
        node[midway,above,yshift=2pt,font=\tiny,blue!70!black] {$L_s$ detector columns};
  \draw[decorate,decoration={brace,mirror,amplitude=4pt},gray!65]
        (1.6,-2.14) -- (2.4,-2.14)
        node[midway,below,yshift=-3pt,font=\tiny,gray!65] {unconstrained};

  \node[anchor=east] at (-0.18,-1.0) {$L_s$};
  \node at (1.2,-3.15) {$d$};
\end{scope}

\end{tikzpicture}
\caption{\Cref{ass:margin} as a condition on the keys. Entry $(j,r)$ of $XW_k$ is the score
$\langle w_r,x_j(X)\rangle$ that detector $r$ assigns to source layer $j$. Reading down any
of the leading $L_s$ columns, the entry at $j=r$ (dark) must lead every other by $\gamma$,
uniformly over $X\in\mathcal X$; off-diagonal magnitudes and the remaining $d-L_s$ columns
are unconstrained.}
\label{fig:assumption}
\end{figure*}
\begin{assumption}[Linearly decodable layer identity]\label{ass:margin}
Let $\mathcal X\subseteq\mathbb R^{L_s\times D_s}$ be the set of normalized source stacks.
There exist $d\ge L_s$, a projection $W_k\in\mathbb R^{D_s\times d}$ with columns
$w_1,\dots,w_d$, and $\gamma>0$ such that the first $L_s$ columns act as per-layer
detectors:
\begin{align}
  &\langle w_m,x_m(X)\rangle\;\ge\;\langle w_m,x_j(X)\rangle+\gamma
  \\
  &\text{for all }X\in\mathcal X,\;m\in[L_s],\;j\in[L_s]\setminus\{m\} \nonumber.
\end{align}
The remaining $d-L_s$ columns are unconstrained.
\end{assumption}

A visualization of this assumption is given in \Cref{fig:assumption}.

\paragraph{Remark.} Should the assumption fail, it can be enforced by a minor modification:
adding a learned per-layer embedding to the keys, $K(X)=XW_k+P$, satisfies it trivially
with $W_k=0$ and $P$ the matrix whose $j$-th row is $e_j$, so where the activations do not
identify their own layer this can be supplied by parameters instead.
%
\begin{figure*}[t]
\centering
\providecommand{\mcell}[4]{%
  \node[draw=gray!55,line width=0.3pt,minimum width=0.50cm,minimum height=0.45cm,
        inner sep=0pt,anchor=north west,fill=#4,font=\tiny] at (#1,#2) {#3};}
\resizebox{\textwidth}{!}{%
\begin{tikzpicture}[every node/.style={font=\scriptsize}]

\node[font=\small] at (1.5,0.62) {$Q/(c\sqrt d)$};
\mcell{0.0}{-0.675}{$0$}{blue!8} \mcell{0.5}{-0.675}{$0$}{blue!8}
\mcell{1.0}{-0.675}{$1$}{blue!55}\mcell{1.5}{-0.675}{$0$}{blue!8}
\mcell{0.0}{-1.125}{$0$}{blue!8} \mcell{0.5}{-1.125}{$1$}{blue!55}
\mcell{1.0}{-1.125}{$0$}{blue!8} \mcell{1.5}{-1.125}{$0$}{blue!8}
\mcell{0.0}{-1.575}{$0$}{blue!8} \mcell{0.5}{-1.575}{$0$}{blue!8}
\mcell{1.0}{-1.575}{$0$}{blue!8} \mcell{1.5}{-1.575}{$1$}{blue!55}
\draw[fill=gray!16,draw=gray!55] (2.0,-0.675) rectangle (3.0,-2.025);
\node[font=\tiny,gray!60] at (2.5,-1.35) {$\mathbf 0$};
\draw[draw=black,line width=0.7pt] (0,-0.675) rectangle (3.0,-2.025);
\draw[decorate,decoration={brace,amplitude=3pt},blue!70!black]
      (0,-0.58) -- (2.0,-0.58)
      node[midway,above,yshift=1pt,font=\tiny,blue!70!black] {$L_s$};
\node[font=\tiny,gray!70] at (1.5,-2.34) {$L_t\times d$};

\node[font=\large] at (3.40,-1.35) {$\times$};

\begin{scope}[xshift=3.8cm]
  \node[font=\small] at (1.0,0.62) {$(XW_k)^{\!\top}$};
  \mcell{0.0}{0.00}{2.0}{blue!55}\mcell{0.5}{0.00}{0.6}{blue!8}
  \mcell{1.0}{0.00}{0.2}{blue!8} \mcell{1.5}{0.00}{0.1}{blue!8}
  \mcell{0.0}{-0.45}{0.7}{blue!8} \mcell{0.5}{-0.45}{2.1}{blue!55}
  \mcell{1.0}{-0.45}{0.8}{blue!8} \mcell{1.5}{-0.45}{0.3}{blue!8}
  \mcell{0.0}{-0.90}{0.3}{blue!8} \mcell{0.5}{-0.90}{0.9}{blue!8}
  \mcell{1.0}{-0.90}{1.9}{blue!55}\mcell{1.5}{-0.90}{0.6}{blue!8}
  \mcell{0.0}{-1.35}{0.1}{blue!8} \mcell{0.5}{-1.35}{0.2}{blue!8}
  \mcell{1.0}{-1.35}{0.7}{blue!8} \mcell{1.5}{-1.35}{2.2}{blue!55}
  \draw[fill=gray!16,draw=gray!55] (0,-1.8) rectangle (2.0,-2.7);
  \node[font=\tiny,gray!60] at (1.0,-2.25) {unconstrained};
  \draw[draw=black,line width=0.7pt] (0,0) rectangle (2.0,-2.7);
  \foreach \r in {2,3,4}{
    \draw[orange!85!black,line width=1.0pt]
          (0,{-0.45*(\r-1)}) rectangle (2.0,{-0.45*\r});
  }
  \node[font=\tiny,gray!70] at (1.0,-3.02) {$d\times L_s$};
\end{scope}

\node[font=\large] at (6.20,-1.35) {$=$};

\begin{scope}[xshift=6.6cm]
  \node[font=\small] at (1.0,0.62) {$S/c$};
  \mcell{0.0}{-0.675}{0.3}{blue!8} \mcell{0.5}{-0.675}{0.9}{blue!8}
  \mcell{1.0}{-0.675}{1.9}{blue!55}\mcell{1.5}{-0.675}{0.6}{blue!8}
  \mcell{0.0}{-1.125}{0.7}{blue!8} \mcell{0.5}{-1.125}{2.1}{blue!55}
  \mcell{1.0}{-1.125}{0.8}{blue!8} \mcell{1.5}{-1.125}{0.3}{blue!8}
  \mcell{0.0}{-1.575}{0.1}{blue!8} \mcell{0.5}{-1.575}{0.2}{blue!8}
  \mcell{1.0}{-1.575}{0.7}{blue!8} \mcell{1.5}{-1.575}{2.2}{blue!55}
  \draw[orange!85!black,line width=1.0pt] (0,-0.675) rectangle (2.0,-2.025);
  \node[font=\tiny,gray!70] at (1.0,-2.34) {$L_t\times L_s$};
\end{scope}

\draw[->,very thick] (8.85,-1.35) -- (9.80,-1.35)
      node[midway,above,font=\tiny] {$\softmax$};

\begin{scope}[xshift=10.1cm]
  \node[font=\small] at (1.0,0.62) {$\alpha$};
  \node[font=\tiny,gray!70] at (1.0,0.24) {$c=3$};
  \mcell{0.0}{-0.675}{0.01}{gray!6} \mcell{0.5}{-0.675}{0.05}{gray!6}
  \mcell{1.0}{-0.675}{0.93}{blue!35}\mcell{1.5}{-0.675}{0.02}{gray!6}
  \mcell{0.0}{-1.125}{0.01}{gray!6} \mcell{0.5}{-1.125}{0.96}{blue!35}
  \mcell{1.0}{-1.125}{0.02}{gray!6} \mcell{1.5}{-1.125}{0.00}{gray!6}
  \mcell{0.0}{-1.575}{0.00}{gray!6} \mcell{0.5}{-1.575}{0.00}{gray!6}
  \mcell{1.0}{-1.575}{0.01}{gray!6} \mcell{1.5}{-1.575}{0.98}{blue!35}
  \draw[draw=black,line width=0.7pt] (0,-0.675) rectangle (2.0,-2.025);
  \node[font=\tiny,gray!70] at (1.0,-2.34) {$L_t\times L_s$};
\end{scope}

\node[font=\large] at (12.50,-1.35) {$\approx$};

\begin{scope}[xshift=12.9cm]
  \node[font=\small] at (1.0,0.62) {$P_\pi$};
  \mcell{0.0}{-0.675}{$0$}{gray!6} \mcell{0.5}{-0.675}{$0$}{gray!6}
  \mcell{1.0}{-0.675}{$1$}{blue!55}\mcell{1.5}{-0.675}{$0$}{gray!6}
  \mcell{0.0}{-1.125}{$0$}{gray!6} \mcell{0.5}{-1.125}{$1$}{blue!55}
  \mcell{1.0}{-1.125}{$0$}{gray!6} \mcell{1.5}{-1.125}{$0$}{gray!6}
  \mcell{0.0}{-1.575}{$0$}{gray!6} \mcell{0.5}{-1.575}{$0$}{gray!6}
  \mcell{1.0}{-1.575}{$0$}{gray!6} \mcell{1.5}{-1.575}{$1$}{blue!55}
  \draw[draw=black,line width=0.7pt] (0,-0.675) rectangle (2.0,-2.025);
  \node[font=\tiny,gray!70] at (1.0,-2.34) {$L_t\times L_s$};
\end{scope}

\end{tikzpicture}%
}
\caption{\Cref{prop:dyn-onehot} on a concrete instance: $L_t=3$, $L_s=4$, and $\pi(1)=3$,
$\pi(2)=2$, $\pi(3)=4$. Each query is one-hot at $\pi(\ell)$ and vanishes on the last
$d-L_s$ coordinates, so the product annihilates the unconstrained rows of
$(XW_k)^{\!\top}$ and copies its row $\pi(\ell)$ into row $\ell$ of $S$ (orange). By
\Cref{ass:margin} each such row is largest at its own layer (dark), so after the softmax
every row of $\alpha$ concentrates on $\pi(\ell)$. With $\gamma=1.0$ and $c=3$,
\Cref{lem:margin} gives $\|\alpha-P_\pi\|_{1,\infty}\le2(L_s-1)e^{-3}\approx0.30$; larger
$c$ drives $\alpha\to P_\pi$ exponentially with $W_k$ unchanged. Note that detector $1$ is
selected by no target layer, and that $\pi$ need not be injective.}
\label{fig:dyn-construction}
\end{figure*}
\begin{proposition}[One-to-one]\label{prop:dyn-onehot}
Under Assumption~\ref{ass:margin}, for every $\pi$ and every $\varepsilon>0$ there exist
$Q$ and $W_k$ such that
$\sup_{X\in\mathcal X}\|\alpha(X)-P_\pi\|_{1,\infty}\le\varepsilon$, where
$\alpha(X)=\softmax\!\big(Q(XW_k)^\top/\sqrt d\big)$.
\end{proposition}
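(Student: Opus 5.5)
The plan is to take $W_k$ directly from Assumption~\ref{ass:margin} and build $Q$ by hand, so that each target layer's query reads out exactly the detector column of its assigned source layer. This is the construction pictured in Figure~\ref{fig:dyn-construction}. Once the logits are in place, Lemma~\ref{lem:margin} converts the detector margin into a bound on $\|\alpha(X)-P_\pi\|_{1,\infty}$.

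\textbf{Construction.} Fix $\pi$ and $\varepsilon>0$, and let $d\ge L_s$, $W_k$ (with columns $w_1,\dots,w_d$) and $\gamma>0$ be as in Assumption~\ref{ass:margin}. For a scale $c>0$ to be chosen later, set the $\ell$-th row of $Q$ to
\[
  q_\ell \;=\; c\sqrt d\, e_{\pi(\ell)}^{\top}\in\mathbb R^{1\times d},
\]
which is one-hot at coordinate $\pi(\ell)\le L_s$ and zero on the unconstrained coordinates $L_s+1,\dots,d$. Then, for every $X\in\mathcal X$,
\[
  S_{\ell,j}(X)\;=\;\frac{q_\ell\,(XW_k)^{\top}_{:,j}}{\sqrt d}\;=\;c\,\langle w_{\pi(\ell)},x_j(X)\rangle .
\]
The point is that row $\ell$ of $S$ is $c$ times the responses of detector $\pi(\ell)$ across all source layers of that same stack. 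Non-injective $\pi$ needs no extra care: two target layers that share a source layer simply get identical query rows.

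\textbf{Margin and conclusion.} Apply Assumption~\ref{ass:margin} with $m=\pi(\ell)$. For every $j\neq\pi(\ell)$ this gives $S_{\ell,\pi(\ell)}\ge S_{\ell,j}+c\gamma$, so \eqref{eq:margin} holds with margin $c\gamma$ for every row and every $X$. Lemma~\ref{lem:margin} then yields
\[
  \|\alpha(X)-P_\pi\|_{1,\infty}\;\le\;2(L_s-1)e^{-c\gamma}.
\]
This bound does not depend on $X$, so it also bounds the supremum over $\mathcal X$. Choosing $c\ge\gamma^{-1}\log\!\big(2(L_s-1)/\varepsilon\big)$, which is the computation in ``Reading the bound,'' makes the right-hand side at most $\varepsilon$. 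If $L_s=1$, then $\alpha=P_\pi$ trivially.

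\textbf{Main obstacle.} The only delicate point is uniformity over $X$. A margin that held for each $X$ separately would not be enough, because $Q$ and $W_k$ must be fixed once for all inputs. This is exactly why the assumption imposes a single $\gamma$ uniformly over $\mathcal X$. With that uniform margin the argument is mechanical. The remaining check is that the zero query entries on coordinates beyond $L_s$ really cancel the unconstrained key columns, so that nothing other than detector $\pi(\ell)$ enters row $\ell$ of $S$.
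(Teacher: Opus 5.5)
Your proposal is correct and follows the paper's proof step for step: you take $W_k$ from Assumption~\ref{ass:margin}, set $q_\ell=c\sqrt d\,e_{\pi(\ell)}$, reduce row $\ell$ of $S$ to $c\langle w_{\pi(\ell)},x_j(X)\rangle$, obtain the uniform margin $c\gamma$, and apply Lemma~\ref{lem:margin} with $c\ge\gamma^{-1}\log\!\big(2(L_s-1)/\varepsilon\big)$. Your separate handling of $L_s=1$, where that logarithm is undefined, is a small addition the paper leaves implicit.
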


\begin{proof}
Let $W_k$ be the projection supplied by \Cref{ass:margin}, with columns
$w_1,\dots,w_d$ and margin $\gamma>0$, and take to be as follows,
$q_\ell=c\sqrt d\,e_{\pi(\ell)}$ with $c>0$ free. The keys are determined by the input: the $j$-th row of $XW_k$ is,
\begin{align}
  (XW_k)_j&\;=\;\big(\langle w_1,x_j(X)\rangle,\;\dots,\;\langle w_d,x_j(X)\rangle\big)  \nonumber\\
  &\;\in\;\mathbb R^{d}, \nonumber
\end{align}
that is, the $r$-th coordinate of the $j$-th key is the score that detector $w_r$ assigns
to source layer $j$. Fix any $X\in\mathcal X$. Then
\begin{align}
  S_{\ell,j}(X)
  &\;\overset{(a)}{=}\;\frac{\big\langle q_\ell,\,(XW_k)_j\big\rangle}{\sqrt d} \notag\\
  &\;\overset{(b)}{=}\;\frac{1}{\sqrt d}\sum_{r=1}^{d}(q_\ell)_r\,\big(XW_k\big)_{j,r}
    \notag\\
  &\;\overset{(c)}{=}\;\frac{1}{\sqrt d}\,(q_\ell)_{\pi(\ell)}\,
      \big(XW_k\big)_{j,\pi(\ell)} \notag\\
  &\;\overset{(d)}{=}\;\frac{1}{\sqrt d}\cdot c\sqrt d\cdot
      \big(XW_k\big)_{j,\pi(\ell)} \notag\\
  &\;\overset{(e)}{=}\;c\,\big\langle w_{\pi(\ell)},\,x_j(X)\big\rangle,
\end{align}
where
\begin{enumerate}[label=(\alph*),itemsep=1pt,topsep=3pt,leftmargin=*]
  \item is the definition of the logit matrix, $S(X)=Q\,K(X)^\top/\sqrt d$ with
        $K(X)=XW_k$, read entrywise;
  \item expands the inner product in coordinates of $\mathbb R^{d}$;
  \item drops all but one term: $q_\ell=c\sqrt d\,e_{\pi(\ell)}$ is supported on the single
        coordinate $r=\pi(\ell)$, so $(q_\ell)_r=0$ for $r\neq\pi(\ell)$. Note
        $\pi(\ell)\le L_s\le d$, so this coordinate exists and is one of the $L_s$
        detector columns constrained by \Cref{ass:margin};
  \item substitutes $(q_\ell)_{\pi(\ell)}=c\sqrt d$ and cancels against $1/\sqrt d$; and
  \item identifies the surviving coordinate, $\big(XW_k\big)_{j,\pi(\ell)}
        =\langle w_{\pi(\ell)},x_j(X)\rangle$, by the description of $(XW_k)_j$ above.
\end{enumerate}
 
Row $\ell$ of the logit matrix is therefore, up to the positive factor $c$, the vector of
scores that the single detector $w_{\pi(\ell)}$ assigns to each of the source layers of
$X$. This is exactly the quantity Assumption \ref{ass:margin} controls: applying it with
$m=\pi(\ell)$ gives, for every $j\neq\pi(\ell)$,
\begin{align}
  &S_{\ell,\pi(\ell)}(X)-S_{\ell,j}(X) \\
  &\;\overset{(f)}{=}\;c\Big[\big\langle w_{\pi(\ell)},x_{\pi(\ell)}(X)\big\rangle
      -\big\langle w_{\pi(\ell)},x_j(X)\big\rangle\Big]\\
  &\;\overset{(g)}{\ge}\;c\,\gamma,
\end{align}
where $(f)$ substitutes the expression just derived at $j$ and at $\pi(\ell)$, and $(g)$
applies the assumed margin and uses $c>0$. Crucially, $\gamma$ is uniform over
$\mathcal X$ by hypothesis, so the bound holds for every $X$ with the same constant --- a
single choice of $(Q,W_k)$ works for all inputs, which is what the supremum in the
statement requires.
 
Thus every row of $S(X)$ has margin $m\ge c\gamma$ for $\pi$, and \Cref{lem:margin} gives
\begin{align}
  &\big\|\alpha(X)-P_\pi\big\|_{1,\infty}\;\le\;2(L_s-1)\,e^{-c\gamma} \\
 &\text{for every }X\in\mathcal X \nonumber.
\end{align}
The right-hand side does not depend on $X$, so it also bounds the supremum. Choosing
\[
  c\;\ge\;\frac{1}{\gamma}\,\log\!\frac{2(L_s-1)}{\varepsilon}
\]
makes it at most $\varepsilon$, which proves the claim.
\end{proof}
A visualization of a specific example for \Cref{prop:dyn-onehot} is given in \Cref{fig:dyn-construction}.

Finally, we state the concluding corollary, stating that the one-to-one correspondences assumed by prior work all lie in the closure of the layer maps our attention can produce, and are therefore special cases of what we can learn rather than
alternatives to it.

\begin{corollary}[Containment of fixed one-to-one layer maps]
\label{cor:contained}
Suppose that \Cref{ass:margin} holds and let $d \ge 2$. Then, for every one-to-one layer assignment $\pi$ and every $\varepsilon>0$,
\[
    \sup_{X \in \mathcal{X}}
    \left\lVert \alpha(X)-P_{\pi} \right\rVert_{1,\infty}
    \le \varepsilon.
\]
\end{corollary}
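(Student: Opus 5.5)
The corollary should follow almost immediately from \Cref{prop:dyn-onehot}, so the plan is mainly to check that its hypotheses are met and that the one-to-one maps of prior work fall under it. Fix a one-to-one assignment $\pi:[L_t]\to[L_s]$, for instance relative-depth $\pi(\ell)=\lfloor L_s\ell/L_t\rfloor$ or terminal $\pi(\ell)=L_s-L_t+\ell$. Fix also $\varepsilon>0$. \Cref{ass:margin} supplies a width $d\ge L_s$, a projection $W_k$ whose first $L_s$ columns are per-layer detectors, and a uniform margin $\gamma>0$. I will apply \Cref{prop:dyn-onehot} to this $\pi$ with the same $W_k$ and the queries $q_\ell=c\sqrt d\,e_{\pi(\ell)}$. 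The proof of that proposition shows that each logit row has margin at least $c\gamma$ for $\pi$, uniformly in $X\in\mathcal X$. \Cref{lem:margin} then gives the bound $2(L_s-1)e^{-c\gamma}$, and choosing $c\ge\gamma^{-1}\log\bigl(2(L_s-1)/\varepsilon\bigr)$ yields $\sup_{X\in\mathcal X}\|\alpha(X)-P_\pi\|_{1,\infty}\le\varepsilon$.

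Next I will explain why the hypothesis $d\ge2$ costs nothing. There are two cases. If $L_s\ge2$, \Cref{ass:margin} already forces $d\ge L_s\ge2$, so the hypothesis is redundant and the argument above applies unchanged. If $L_s=1$, every row of $\alpha(X)$ is a softmax over a single entry and therefore equals $1$. Since $P_\pi$ is then the all-ones column, $\alpha(X)=P_\pi$ exactly for every $X$ and every choice of $Q$ and $W_k$. I will also point out that injectivity of $\pi$ is never used, so the statement covers non-injective assignments as well.

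The step I expect to be the main obstacle is matching the overview's claim that width $d=2$ suffices, which is stronger than what \Cref{ass:margin} (requiring $d\ge L_s$) directly supports. If the corollary is meant literally at $d=2$ with $L_s>2$, I would try a different construction. Place the keys on a circle: ask for a two-column $W_k$ such that $x_j(X)W_k$ lies within a small angular tolerance of $(\cos\theta_j,\sin\theta_j)$, with distinct angles $\theta_j$. Take $q_\ell=c\sqrt d\,(\cos\theta_{\pi(\ell)},\sin\theta_{\pi(\ell)})$. The key at angle $\theta_{\pi(\ell)}$ then maximizes the inner product with a gap controlled by $1-\cos(\min_{j\ne j'}|\theta_j-\theta_{j'}|)$ minus the tolerance, and \Cref{lem:margin} finishes as before. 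This route needs its own analogue of \Cref{ass:margin}, namely keys clustered near distinct points of the circle uniformly over $\mathcal X$. Failing that, the Remark's learned positional term $P$ with rows on the circle provides it. Under the assumption as stated, however, I will rely on the direct reduction to \Cref{prop:dyn-onehot}.
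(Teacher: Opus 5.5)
Your proposal is correct and follows the paper's proof, which likewise invokes \Cref{prop:dyn-onehot} to obtain $Q$ and $W_k$ with the uniform bound and then passes to the supremum; your side observations are accurate (the hypothesis $d\ge 2$ is redundant under \Cref{ass:margin} once $L_s\ge 2$, the case $L_s=1$ gives exact equality, and injectivity of $\pi$ is never used). The circle-key construction for a literal $d=2$ is not needed here, but if you pursue it you must control the key norms as well as their angles, since otherwise a long key at a nearby angle can win the argmax.
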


\begin{proof}

Fix $\pi$ and $\varepsilon>0$. \Cref{prop:dyn-onehot} supplies
$Q$ and $W_k$ with $\|\alpha(X)-P_\pi\|_{1,\infty}\le\varepsilon$ for every
$X\in\mathcal X$, hence for the supremum. Equality again fails for the same reason.
\end{proof}

\subsubsection{Arbitrary soft correspondences}

One-to-one maps are the boundary of $\Delta$. We now characterize the interior, where a
rank constraint --- absent from the results above --- becomes the binding one.
To establish this result, we require a stronger assumption than Assumption~\ref{ass:margin}: reproducing an interior target requires matching the logit \emph{values}, not merely their ordering.


\begin{assumption}[Exactly decodable layer identity]\label{ass:exact}
There exist $u_1,\dots,u_{L_s}\in\mathbb R^{D_s}$ with
\begin{align}
  & \langle u_m,x_j(X)\rangle\;=\;\mathbf 1\{j=m\} \nonumber \\
  & \text{for all }X\in\mathcal X,\;m,j\in[L_s] \nonumber.
\end{align}
\end{assumption}

\paragraph{Remark.} Should the assumption fail, it can be enforced by the same modification
as before: adding a learned per-layer embedding to the keys, $K(X)=XW_k+P$, satisfies it
with $W_k=0$ and $P$ the matrix whose $j$-th row is $e_j$. 

Where \Cref{ass:margin} constrained only the \emph{ordering} of the detector scores, this
assumption fixes their \emph{values}: reading layer $j$'s activation with the direction
$u_m$ must return exactly $1$ when $j=m$ and exactly $0$ otherwise. Collecting the
directions into $U:=[\,u_1\;\cdots\;u_{L_s}\,]\in\mathbb R^{D_s\times L_s}$, the condition
says simply
\[
  X U\;=\;I_{L_s}\qquad\text{for every }X\in\mathcal X,
\]
that is, the same identity matrix for every input (\Cref{fig:assumption-exact}). The
strengthening is what the interior case requires: a one-to-one target is determined by which
logit is largest, so an ordering suffices, whereas a soft target
$\alpha^\star\in\Delta^{\circ}$ is determined by the logit values themselves, and these can
only be dialled in if the scores are known exactly rather than merely ranked.

\begin{figure}[t]
\centering
\providecommand{\mcell}[4]{%
  \node[draw=gray!55,line width=0.3pt,minimum width=0.50cm,minimum height=0.45cm,
        inner sep=0pt,anchor=north west,fill=#4,font=\tiny] at (#1,#2) {#3};}
\resizebox{\columnwidth}{!}{%
\begin{tikzpicture}[every node/.style={font=\scriptsize}]

\node[font=\small] at (1.2,0.28) {$X$};
\draw[fill=gray!8,draw=black,line width=0.7pt] (0,-0.30) rectangle (2.4,-2.10);
\foreach \y in {-0.75,-1.20,-1.65}{\draw[gray!55] (0,\y) -- (2.4,\y);}
\node[font=\tiny,gray!65] at (1.2,-0.525) {$x_1(X)$};
\node[font=\tiny,gray!65] at (1.2,-1.875) {$x_{L_s}(X)$};
\node[font=\tiny,gray!70] at (1.2,-2.42) {$L_s\times D_s$};

\node[font=\large] at (2.85,-1.20) {$\times$};

\begin{scope}[xshift=3.3cm]
  \node[font=\small] at (1.0,0.28) {$U$};
  \draw[fill=blue!12,draw=black,line width=0.7pt] (0,0) rectangle (2.0,-2.40);
  \foreach \x in {0.5,1.0,1.5}{\draw[gray!55] (\x,0) -- (\x,-2.40);}
  \node[font=\tiny,blue!70!black] at (0.25,-2.62) {$u_1$};
  \node[font=\tiny,blue!70!black] at (1.75,-2.62) {$u_{L_s}$};
  \node[font=\tiny,gray!70] at (1.0,-2.98) {$D_s\times L_s$};
\end{scope}

\node[font=\large] at (5.75,-1.20) {$=$};

\begin{scope}[xshift=6.2cm]
  \node[font=\small] at (1.0,0.28) {$I_{L_s}$};
  \mcell{0.0}{-0.30}{$1$}{blue!55}\mcell{0.5}{-0.30}{$0$}{gray!6}
  \mcell{1.0}{-0.30}{$0$}{gray!6} \mcell{1.5}{-0.30}{$0$}{gray!6}
  \mcell{0.0}{-0.75}{$0$}{gray!6} \mcell{0.5}{-0.75}{$1$}{blue!55}
  \mcell{1.0}{-0.75}{$0$}{gray!6} \mcell{1.5}{-0.75}{$0$}{gray!6}
  \mcell{0.0}{-1.20}{$0$}{gray!6} \mcell{0.5}{-1.20}{$0$}{gray!6}
  \mcell{1.0}{-1.20}{$1$}{blue!55}\mcell{1.5}{-1.20}{$0$}{gray!6}
  \mcell{0.0}{-1.65}{$0$}{gray!6} \mcell{0.5}{-1.65}{$0$}{gray!6}
  \mcell{1.0}{-1.65}{$0$}{gray!6} \mcell{1.5}{-1.65}{$1$}{blue!55}
  \draw[draw=black,line width=0.7pt] (0,-0.30) rectangle (2.0,-2.10);
  \node[font=\tiny,gray!70] at (1.0,-2.42) {$L_s\times L_s$};
\end{scope}

\node[font=\small,red!75!black] at (4.10,-3.45) {for \emph{every} $X\in\mathcal X$};

\end{tikzpicture}%
}
\caption{\Cref{ass:exact} as a condition on the source stacks. Reading each layer's
activation with each of the $L_s$ decoding directions must return exactly one on the
matching layer and exactly zero elsewhere --- that is, $XU=I_{L_s}$, with the same identity
matrix for every input. This is strictly stronger than \Cref{ass:margin}
(\Cref{fig:assumption}), which constrains only the ordering within each column and leaves
all magnitudes free.}
\label{fig:assumption-exact}
\end{figure}

\begin{proposition}[Soft map]\label{prop:dyn-interior}
Under \Cref{ass:exact}, if $d\ge\min(L_t,L_s)$ then every $\alpha^\star\in\Delta^{\circ}$ is
realized exactly and input-independently: there exist $Q$ and $W_k$ with
$\alpha(X)=\alpha^\star$ for all $X\in\mathcal X$.
\end{proposition}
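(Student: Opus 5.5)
The plan is to reduce the problem to a matrix factorization of a target logit matrix. First, I would fix the target logits: since $\alpha^\star\in\Delta^{\circ}$ has strictly positive entries, set $S^\star_{\ell,j}:=\log\alpha^\star_{\ell,j}$. Row-wise softmax is invariant to adding a constant to each row, and $\softmax(S^\star)_\ell=\alpha^\star_\ell/\sum_j\alpha^\star_{\ell,j}=\alpha^\star_\ell$ because each row of $\alpha^\star$ sums to one. So it suffices to produce $S(X)=S^\star+c\mathbf 1^\top$ for some $c\in\mathbb R^{L_t}$, with $S(X)$ independent of $X$. The temperature is absorbed into the scale, and I take $c=0$.

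Next, I would use \Cref{ass:exact} to make the keys input-independent. Choose $W_k:=U R$ for some $R\in\mathbb R^{L_s\times d}$ to be determined. Then $K(X)=XW_k=(XU)R=I_{L_s}R=R$ for every $X\in\mathcal X$. The keys are now a free, constant $L_s\times d$ matrix, and the logits become $S=QR^\top/\sqrt d$ with $Q\in\mathbb R^{L_t\times d}$ and $R$ both arbitrary. The problem is therefore to factor $\sqrt d\,S^\star$ as $QR^\top$ with inner dimension $d$.

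For the factorization, $\rank(S^\star)\le\min(L_t,L_s)\le d$, so a rank factorization exists. I would split into two cases.

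\textbf{Case $L_s\le L_t$.} Take $R=[\,I_{L_s}\;\;\mathbf 0\,]\in\mathbb R^{L_s\times d}$ and $Q=\sqrt d\,[\,S^\star\;\;\mathbf 0\,]$. Then $QR^\top=\sqrt d\,S^\star$.

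\textbf{Case $L_t<L_s$.} Take $Q=\sqrt d\,[\,I_{L_t}\;\;\mathbf 0\,]$ and $R=[\,S^{\star\top}\;\;\mathbf 0\,]$, padded with zero columns up to width $d$. Then again $QR^\top=\sqrt d\,S^\star$.

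In both cases $\alpha(X)=\softmax(S^\star)=\alpha^\star$ for all $X$, which is exact and input-independent.

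I expect the main subtlety to be the key-collapsing step $XU=I$. This is precisely why the stronger exact assumption is needed rather than \Cref{ass:margin}: the margin assumption only controls the ordering of scores, not their values, so it cannot pin down the logit values that an interior target requires. The factorization itself is routine. I would also remark that the bound on $d$ is essentially tight, because with $d<\rank(S^\star)$ and some choice of row shifts $c$ the factorization can fail.
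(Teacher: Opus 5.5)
Your proof is correct and follows essentially the paper's route: take $\Lambda=\log\alpha^\star$, use $XU=I_{L_s}$ to make the keys input-independent, and factor through width $d$ using that the relevant rank is at most $\min(L_t,L_s)$. Your explicit padded-identity factorization with $W_k=UR$ simply replaces the paper's thin SVD of $\sqrt d\,M=\sqrt d\,\Lambda U^\top$, whose right factor is itself of the form $UR$.

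The closing tightness aside is not needed. As phrased (failure for \emph{some} shift $c$) it would not establish tightness, since you would need failure for \emph{every} shift. In fact, subtracting from each row of $\Lambda$ its first entry gives logits of rank at most $L_s-1$, so width $d\ge\min(L_t,L_s-1)$ already suffices.
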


\begin{proof}
The target is reached by realizing the logit matrix
$\Lambda:=\log\alpha^\star$, which satisfies $\softmax(\Lambda)=\alpha^\star$. The challenge here is to show that $\Lambda$ can be produced by \emph{input-dependent}
keys, i.e.\ that a single pair $(Q,W_k)$ yields $S(X)=\Lambda$ for every $X\in\mathcal X$.

\emph{Step 1: a direction realizing each row of $\Lambda$.} Let
$M\in\mathbb R^{L_t\times D_s}$ have rows
\[
  M_\ell\;:=\;\sum_{m=1}^{L_s}\Lambda_{\ell,m}\,u_m ,
\]
a linear combination of the decoding directions of \Cref{ass:exact} weighted by the desired
logits. Reading any source activation with $M_\ell$ returns the corresponding logit:
$\langle M_\ell,x_j(X)\rangle=\Lambda_{\ell,j}$, as verified in Step~3 below.

\emph{Step 2: factoring $M$ through width $d$.} Every row of $M$ lies in
$\operatorname{span}\{u_1,\dots,u_{L_s}\}$, a subspace of dimension at most $L_s$, and $M$
has $L_t$ rows, so
\[
  \rank(M)\;\le\;\min(L_t,L_s)\;\le\;d .
\]
Writing $r:=\rank(M)\le d$ and taking a thin singular value decomposition
$\sqrt d\,M=U_M\Sigma V_M^\top$ with $U_M\in\mathbb R^{L_t\times r}$,
$V_M\in\mathbb R^{D_s\times r}$, set $Q:=[\,U_M\Sigma\;\;\mathbf 0\,]\in\mathbb R^{L_t\times d}$
and $W_k:=[\,V_M\;\;\mathbf 0\,]\in\mathbb R^{D_s\times d}$, padding with $d-r$ zero columns.
The padded coordinates contribute nothing to the product, so
\begin{align}
  & QW_k^\top\;=\;U_M\Sigma V_M^\top\;=\;\sqrt d\,M, \nonumber \\
  & \text{i.e.}\qquad
  W_k\,q_\ell\;=\;\sqrt d\,M_\ell \ \ \text{for each }\ell \nonumber .
\end{align}

\emph{Step 3: the logits are $\Lambda$, for every input.} Fix any $X\in\mathcal X$. Then
\begin{align}
  S_{\ell,j}(X)
  &\;\overset{(a)}{=}\;\frac{\big\langle q_\ell,\,(XW_k)_j\big\rangle}{\sqrt d} \notag\\
  &\;\overset{(b)}{=}\;\frac{\big\langle q_\ell,\,W_k^\top x_j(X)\big\rangle}{\sqrt d}
    \notag\\
  &\;\overset{(c)}{=}\;\frac{\big\langle W_k\,q_\ell,\,x_j(X)\big\rangle}{\sqrt d} \notag\\
  &\;\overset{(d)}{=}\;\frac{\big\langle \sqrt d\,M_\ell,\,x_j(X)\big\rangle}{\sqrt d}
    \notag\\
  &\;\overset{(e)}{=}\;\big\langle M_\ell,\,x_j(X)\big\rangle \notag\\
  &\;\overset{(f)}{=}\;\Big\langle \sum_{m=1}^{L_s}\Lambda_{\ell,m}u_m,\;x_j(X)\Big\rangle
    \notag\\
  &\;\overset{(g)}{=}\;\sum_{m=1}^{L_s}\Lambda_{\ell,m}\big\langle u_m,x_j(X)\big\rangle
    \notag\\
  &\;\overset{(h)}{=}\;\sum_{m=1}^{L_s}\Lambda_{\ell,m}\,\mathbf 1\{j=m\} \notag\\
  &\;\overset{(i)}{=}\;\Lambda_{\ell,j},
\end{align}
where
\begin{enumerate}[label=(\alph*),itemsep=1pt,topsep=3pt,leftmargin=*]
  \item is the definition of the logit matrix, $S(X)=Q\,K(X)^\top/\sqrt d$ with
        $K(X)=XW_k$, read entrywise;
  \item writes the $j$-th row of $XW_k$ as $W_k^\top x_j(X)$;
  \item moves $W_k$ across the inner product, $\langle a,W_k^\top b\rangle=\langle W_k a,b\rangle$;
  \item substitutes $W_k q_\ell=\sqrt d\,M_\ell$ from Step~2;
  \item cancels $\sqrt d$ against $1/\sqrt d$;
  \item substitutes the definition of $M_\ell$ from Step~1;
  \item uses bilinearity of the inner product;
  \item applies \Cref{ass:exact}, which fixes each score at exactly $\mathbf 1\{j=m\}$; and
  \item evaluates the sum, in which only the term $m=j$ survives.
\end{enumerate}
The right-hand side does not depend on $X$, so $S(X)=\Lambda$ for every $X\in\mathcal X$ and
therefore $\alpha(X)=\softmax(\Lambda)=\alpha^\star$ throughout.
\end{proof}

\begin{figure*}[t]
\centering
\providecommand{\scell}[4]{%
  \node[draw=gray!55,line width=0.3pt,minimum width=0.58cm,minimum height=0.45cm,
        inner sep=0pt,anchor=north west,fill=#4,font=\tiny] at (#1,#2) {#3};}
\resizebox{\textwidth}{!}{%
\begin{tikzpicture}[every node/.style={font=\scriptsize}]

\node[font=\small] at (1.16,0.30) {$Q/\sqrt d=\Lambda$};
\scell{0.00}{-0.525}{-0.69}{blue!8}\scell{0.58}{-0.525}{-1.20}{blue!8}
\scell{1.16}{-0.525}{-1.90}{blue!8}\scell{1.74}{-0.525}{-3.00}{blue!8}
\scell{0.00}{-0.975}{-2.30}{blue!8}\scell{0.58}{-0.975}{-0.51}{blue!8}
\scell{1.16}{-0.975}{-1.61}{blue!8}\scell{1.74}{-0.975}{-2.30}{blue!8}
\scell{0.00}{-1.425}{-3.00}{blue!8}\scell{0.58}{-1.425}{-1.90}{blue!8}
\scell{1.16}{-1.425}{-1.20}{blue!8}\scell{1.74}{-1.425}{-0.69}{blue!8}
\draw[draw=black,line width=0.7pt] (0,-0.525) rectangle (2.32,-1.875);
\node[font=\tiny,gray!70] at (1.16,-2.18) {$L_t\times L_s$};

\node[font=\large] at (2.70,-1.20) {$\times$};

\begin{scope}[xshift=3.08cm]
  \node[font=\small] at (1.16,0.30) {$(XW_k)^{\!\top}=I_{L_s}$};
  \scell{0.00}{-0.30}{$1$}{blue!55}\scell{0.58}{-0.30}{$0$}{gray!6}
  \scell{1.16}{-0.30}{$0$}{gray!6} \scell{1.74}{-0.30}{$0$}{gray!6}
  \scell{0.00}{-0.75}{$0$}{gray!6} \scell{0.58}{-0.75}{$1$}{blue!55}
  \scell{1.16}{-0.75}{$0$}{gray!6} \scell{1.74}{-0.75}{$0$}{gray!6}
  \scell{0.00}{-1.20}{$0$}{gray!6} \scell{0.58}{-1.20}{$0$}{gray!6}
  \scell{1.16}{-1.20}{$1$}{blue!55}\scell{1.74}{-1.20}{$0$}{gray!6}
  \scell{0.00}{-1.65}{$0$}{gray!6} \scell{0.58}{-1.65}{$0$}{gray!6}
  \scell{1.16}{-1.65}{$0$}{gray!6} \scell{1.74}{-1.65}{$1$}{blue!55}
  \draw[draw=black,line width=0.7pt] (0,-0.30) rectangle (2.32,-2.10);
  \node[font=\tiny,gray!70] at (1.16,-2.40) {$L_s\times L_s$};
  \node[font=\tiny,red!75!black] at (1.16,-2.76) {same for \emph{every} $X$};
\end{scope}

\node[font=\large] at (5.78,-1.20) {$=$};

\begin{scope}[xshift=6.16cm]
  \node[font=\small] at (1.16,0.30) {$S(X)=\Lambda$};
  \scell{0.00}{-0.525}{-0.69}{blue!8}\scell{0.58}{-0.525}{-1.20}{blue!8}
  \scell{1.16}{-0.525}{-1.90}{blue!8}\scell{1.74}{-0.525}{-3.00}{blue!8}
  \scell{0.00}{-0.975}{-2.30}{blue!8}\scell{0.58}{-0.975}{-0.51}{blue!8}
  \scell{1.16}{-0.975}{-1.61}{blue!8}\scell{1.74}{-0.975}{-2.30}{blue!8}
  \scell{0.00}{-1.425}{-3.00}{blue!8}\scell{0.58}{-1.425}{-1.90}{blue!8}
  \scell{1.16}{-1.425}{-1.20}{blue!8}\scell{1.74}{-1.425}{-0.69}{blue!8}
  \draw[draw=black,line width=0.7pt] (0,-0.525) rectangle (2.32,-1.875);
  \node[font=\tiny,gray!70] at (1.16,-2.18) {$L_t\times L_s$};
\end{scope}

\draw[->,very thick] (8.86,-1.20) -- (9.80,-1.20)
      node[midway,above,font=\tiny] {$\softmax$};

\begin{scope}[xshift=10.10cm]
  \node[font=\small] at (1.16,0.30) {$\alpha=\alpha^\star$};
  \scell{0.00}{-0.525}{0.50}{blue!50}\scell{0.58}{-0.525}{0.30}{blue!30}
  \scell{1.16}{-0.525}{0.15}{blue!15}\scell{1.74}{-0.525}{0.05}{blue!5}
  \scell{0.00}{-0.975}{0.10}{blue!10}\scell{0.58}{-0.975}{0.60}{blue!60}
  \scell{1.16}{-0.975}{0.20}{blue!20}\scell{1.74}{-0.975}{0.10}{blue!10}
  \scell{0.00}{-1.425}{0.05}{blue!5} \scell{0.58}{-1.425}{0.15}{blue!15}
  \scell{1.16}{-1.425}{0.30}{blue!30}\scell{1.74}{-1.425}{0.50}{blue!50}
  \draw[draw=black,line width=0.7pt] (0,-0.525) rectangle (2.32,-1.875);
  \node[font=\tiny,gray!70] at (1.16,-2.18) {$L_t\times L_s$};
\end{scope}

\end{tikzpicture}%
}
\caption{\Cref{prop:dyn-interior} on a concrete instance, with $L_t=3$, $L_s=4$ and
$d=L_s$, for which the construction takes the transparent form $W_k=U$ and
$Q=\sqrt d\,\Lambda$. \Cref{ass:exact} makes the key matrix $XW_k=XU$ equal to the identity
for \emph{every} source stack, so the input drops out of the product entirely and the logits
are $\Lambda=\log\alpha^\star$ regardless of $X$; the softmax then returns $\alpha^\star$
itself. Unlike the one-to-one case, the target here is soft --- no row concentrates on a
single source layer --- and is reached exactly rather than in the limit of a large logit
scale. For general $d\ge\min(L_t,L_s)$ the same argument runs with $Q$ and $W_k$ obtained
from a rank factorization of $\sqrt d\,M$, as in Step~2 of the proof.}
\label{fig:dyn-interior}
\end{figure*}

\FloatBarrier
\end{document}